\crefname{assumption}{assumption}{assumptions}
\newcommand{\inner}[1]{\left\langle #1\right\rangle}
\newcommand{\innerA}[1]{\left\langle #1\right\rangle_{\cA}}
\newcommand{\sahk}{(s_h^k,a_h^k)}
\newcommand{\infPh}[1]{\inf_{P_{#1}(\cdot|s,a)\in \cU_{#1}^\rho(s,a;\bmu_{#1}^0)}}
\newcommand{\infPP}{\inf_{P\in \cU^\rho(P^0)}}
\newcommand{\vA}{\text{vol}(\cA)}
\newcommand{\piref}{\pi^{\text{ref}}}
\newcommand{\alg}{DR-RPO }
\newcommand{\algo}{DR-RPO}
\DeclareMathOperator*{\arginf}{arg\,inf}
\begin{document}

\title{Policy Regularized Distributionally Robust Markov Decision Processes with Linear Function Approximation}

\author{
Jingwen Gu\thanks{Cornell University}~\footnotemark[4] \qquad
Yiting He\thanks{University of Science and Technology of China}~\footnotemark[4] \qquad
Zhishuai Liu\thanks{Duke University}~\thanks{Equal contribution} \qquad
Pan Xu\footnotemark[3]~\thanks{Correspondence to \texttt{pan.xu@duke.edu}} 
}

\maketitle

\begin{abstract}
Decision-making under distribution shift is a central challenge in reinforcement learning (RL), where training and deployment environments differ. We study this problem through the lens of robust Markov decision processes (RMDPs), which optimize performance against adversarial transition dynamics. Our focus is the online setting, where the agent has only limited interaction with the environment, making sample efficiency and exploration especially critical. Policy optimization, despite its success in standard RL, remains theoretically and empirically underexplored in robust RL. To bridge this gap, we propose \textbf{D}istributionally \textbf{R}obust \textbf{R}egularized \textbf{P}olicy \textbf{O}ptimization algorithm (DR-RPO), a model-free online policy optimization method that learns robust policies with sublinear regret. To enable tractable optimization within the softmax policy class, DR-RPO incorporates reference-policy regularization, yielding RMDP variants that are doubly constrained in both transitions and policies. To scale to large state-action spaces, we adopt the $d$-rectangular linear MDP formulation and combine linear function approximation with an upper confidence bonus for optimistic exploration. We provide theoretical guarantees showing that policy optimization can achieve polynomial suboptimality bounds and sample efficiency in robust RL, matching the performance of value-based approaches. Finally, empirical results across diverse domains corroborate our theory and demonstrate the robustness of DR-RPO.

\end{abstract}

\section{INTRODUCTION}
The deployment of reinforcement learning (RL) \citep{Sutton1998} in real-world systems faces a critical challenge: the lack of knowledge about the transition kernel in the true environment. To address this problem, off-dynamics RL \citep{koos2012transferability,eysenbach2021offdynamicsreinforcementlearningtraining, wang2024return} formulates a paradigm which trains the policy in a source domain and then deploy it to the target domain. The source domain can be chosen to be a simulator, such that the transition kernel is known and accessible. However, subtle discrepancies in dynamics between the source and target domains can significantly undermine the performance of policies when deployed for inference. This problem is known as the sim-to-real gap \citep{Jakobi1995NoiseRealityGap,mouret2013crossingrealitygapshort}. To overcome this hurdle, policies trained from the source domain need to be robust against perturbations in transitions so as to generalize over a set of possible target domains.

The framework of robust Markov decision processes (RMDPs) has gained increasing momentum as a potential solution to the sim-to-real gap. RMDPs account for the value under various target transitions close to the nominal transition, and come in two major formulations: the distributionally robust Markov decision process (DRMDP) \citep{satia1973MDP,nilim2005robustcontrol,iyengar2005robust} and the robust regularized Markov decision process (RRMDP) \citep{yang2023robustmarkovdecisionprocesses,zhang2024softrobustmdpsrisksensitive}. 
DRMDP constructs an uncertainty set of distinct target domain transitions around the nominal transition, and ensures robustness by optimizing policies against the worst-case transition in this uncertainty set. In contrast, RRMDP replaces the uncertainty set with a regularization term on the discrepancy between the nominal and perturbed transitions. To tackle large state-action spaces in many real-world applications, recent advances in robust online reinforcement learning  \citep{liu2024distributionally,liu2024upperlowerboundsdistributionally} have established the first provably efficient algorithms for DRMDPs with linear function approximation. A similar algorithm has also been devised for RRMDPs, but for the offline setting \citep{tang2025robust}. The essence of these algorithms is to maintain an optimistic estimation of the optimal robust Q-function, and take its greedy policy as the optimal policy estimation. Although these approaches enjoy polynomial sample complexity and robust guarantees, they fail in cases with high-dimensional or continuous action spaces, as getting the greedy policy is computationally intractable.

Policy optimization is one of the common approaches to overcome this hurdle. Distinct from the aforementioned greedy value iteration methods, policy optimization is another dominant paradigm that has achieved empirical success and extensive theoretical analysis in non-robust RL settings. In practice, policy optimization methods such as policy gradient \citep{agarwal2020theorypolicygradientmethods,sham2001npg}, actor–critic \citep{barto1983neuronlike,konda1999actorcritic}, and trust-region algorithms \citep{schulman2017trustregionpolicyoptimization,schulman2017proximalpolicyoptimizationalgorithms} are widely deployed. In particular, policy optimization algorithms are well suited for settings with large and continuous action spaces. Furthermore, unlike greedy updates that select deterministic actions, many practical domains require stochasticity, either to facilitate exploration, to enable continuous control, or to resist adversarial exploitation. Robust RL frameworks that naturally incorporate stochastic policies are therefore an important complement to existing deterministic methods. In addition, many policy optimization methods impose regularization against a reference policy, a design that has substantial practical appeal; policy regularization enables warm-starting and fine-tuning from existing policies and is also suitable for safety-relevant application settings such as healthcare \citep{gottesman2019healthcare,pmlr-v136-killian20a} and autonomous driving \citep{shalevshwartz2016safemultiagentreinforcementlearning,kahn2017uncertaintyawarereinforcementlearningcollision}, where it is advisable to remain close to established safe baselines. These benefits are not only pronounced in standard RL settings, but are also ideal for robust RL, leading to the open question:

\emph{Is it possible to design provably efficient policy optimization algorithms for online RMDPs with linear function approximation?}

In this work, we provide an affirmative answer by designing the first policy optimization algorithm for two formulations of online RMDPs, the policy-regularized $d$-rectangular DRMDP and RRMDP, and providing corresponding theoretical guarantees.

\textbf{Our main contributions} are summarized as follows:
\begin{itemize}[nosep,leftmargin=*]
\item We introduce the policy-regularized $d$-rectangular DRMDP and RRMDP frameworks, which ensure robustness by constraining both transition and policy. We verify that dynamic programming principles hold under these settings, facilitating algorithm design and theoretical analysis.

\item We develop an online algorithm, \textbf{D}istributionally \textbf{R}obust \textbf{R}egularized  \textbf{P}olicy \textbf{O}ptimization algorithm (DR-RPO), for RMDPs with function approximation. \alg is a model-free method with soft policy update, which imposes an Upper Confidence Bound (UCB) bonus to facilitate exploration. We provide specific implementations for policy-regularized $d$-rectangular DRMDPs and RRMDPs. To our knowledge, this is the first such result.

\item We prove an average suboptimality upper bound for \alg in the order of $\tilde{\cO}(d^2H^2/\sqrt{K})$, which is consistent with existing greedy value-iteration based methods in RMDPs.

\item We conduct numerical experiments to demonstrate the robustness performance of \alg on a simulated linear MDP environment.
\end{itemize}

\paragraph{Notations:} We denote $\Delta(\cS)$ as the set of probability measures over some set $\cS$. For any number $H\in\ZZ_{+}$, we denote $[H]$ as the set of $\{1,2,\cdots, H\}$. For any function $V:\cS\rightarrow \RR$, we denote $[\PP_hV](s,a) = \EE_{s'\sim P_h(\cdot|s,a)}[V(s')]$ as the expectation of $V$ with respect to the transition kernel $P_h$, and $[V(s)]_{\alpha}=\min\{V(s),\alpha\}$, given a scalar $\alpha>0$, as the truncated value of $V$. For a vector $\bx$, we denote $x_j$ as its $j$-th entry. And we denote $[x_i]_{i\in [d]}$ as a vector with the $i$-th entry being $x_i$. For a matrix $A$, denote $\lambda_i(A)$ as the $i$-th eigenvalue of $A$. For two matrices $A$ and $B$, we denote $A\preceq B$ as the fact that $B-A$ is a positive semidefinite matrix.
For any function $f:\cS\rightarrow\RR$, we denote $\|f\|_{\infty}=\sup_{s\in\cS}f(s)$. 
Given %
$P,Q\in\Delta(\cS)$, %
the total variation divergence of $P$ and $Q$ is defined as $D(P\Vert Q)=1/2\int_{\cS}|P(s)-Q(s)|ds$.

\section{RELATED WORK}

\paragraph{Robust MDPs}
DRMDP and RRMDP are two specific instantiations of the RMDP framework. DRMDP was first proposed by the foundational works of \citet{iyengar2005robust} and \citet{nilim2005robustcontrol}. Subsequent works \citep{zhou2021finitesample,yang2022theoreticalunderstandingsrobustmarkov,panaganti2022robustreinforcementlearningusing,shi2023curious,xu2023improvedsamplecomplexity} studied the DRMDP with the assumption of a generative model. Additional structures have since been added to the DRMDP framework, the $s$-rectangular uncertainty set \citep{behzadian2021srectangular}, the $d$-rectangular uncertainty set \citep{ma2023distributionallyrobustofflinereinforcement} and the linear mixture uncertainty set \citep{liu2025linear}. These assumptions on the uncertainty set structure enabled recent progress in online DRMDP \citep{liu2024distributionally,lu2024distributionallyrobustreinforcementlearning,liu2024upperlowerboundsdistributionally}. In particular, \citet{liu2024distributionally} achieved provably efficient exploration in online $d$-rectangular DRMDP with sublinear regret, and \citet{liu2024upperlowerboundsdistributionally} further proved upper and lower regret bounds for online learning in DRMDP by utilizing variance-weighted regressions. Several recent works \citep{wang2024samplecomplexityofflinedistributionally,liu2024minimax} have also studied offline $d$-rectangular DRMDP under various coverage assumptions and achieved comparable results in tractable and efficient algorithms.

A separate line of work studies RRMDP \citep{yang2023robustmarkovdecisionprocesses,zhang2024softrobustmdpsrisksensitive}, which replaces the uncertainty set in the DRMDP framework with a penalty term, thus imposing a softer regularization on distribution shift. Similarly to $d$-rectangular DRMDP, the $d$-rectangular RRMDP formulation has also been studied by \citet{tang2025robust}, which proposed a near-optimal algorithm for offline RRMDP with linear function approximation. As for the online setting, \citet{he2025sample} achieved sublinear regret for tabular RRMDP with general $f$-divergence, while \citet{panaganti2024modelfreerobustphidivergencereinforcement} proposed a hybrid algorithm to solve RRMDP with both online and offline data.
    
\paragraph{Exploration in Linear MDP}
Our work falls within the category of optimistic exploration in linear MDPs  \citep{jin2020,wang2021provably,wang2022sampleefficientreinforcementlearninglinearlyparameterized,sherman2024rateoptimalpolicyoptimizationlinear,cassel2024warmup}. The seminal work of \citet{jin2020} proposed the LSVI-UCB algorithm, which utilizes linear function approximation and an upper confidence bonus to achieve online learning on the linear MDP with a $\tilde{\cO}(\sqrt{d^3H^4K})$ regret bound. There has also been a surge of research in provably efficient exploration with policy optimization on linear MDPs. In particular, \citet{agarwal2020pcpgpolicycoverdirected} introduces a policy cover and a binary exploration bonus to enable optimistic exploration in an NPG-style algorithm. Subsequently, \citet{liu2023optimisticnaturalpolicygradient} developed the Optimistic NPG algorithm, which bears strong similarity to LSVI-UCB and adopts the same Hoeffding-style upper confidence bonus, meanwhile replacing the value iteration update with a soft policy update. In addition, \citet{cai2020provably} studied a similar NPG-based online algorithm on the linear mixture MDP, a slightly different setting than the linear MDP. So far, work in policy optimization with optimistic exploration has been limited in the non-robust setting, and it remains unknown whether similar algorithms can be derived for RMDPs, and whether comparable regret bounds can be attained.

\section{PRELIMINARIES}

We consider a finite horizon Markov decision process $\text{MDP}(\cS,\cA,H,P,r)$ with state and action spaces $\cS,\cA$, horizon $H$, transition kernel $P=\{P_h\}_{h=1}^H$, and reward functions $r=\{r_h\}_{h=1}^H$ where each $r_h:\cS\times\cA\rightarrow[0,1]$.
Without loss of generality, we assume that the action space $\cA$ is measurable and bounded, in the sense that $\int_{\cA}1 da = \vA < \infty$, where $da$ is the counting measure for discrete actions space and Lebesgue measure for continuous space.
Further, we assume a linear structure on transition kernels. 

\begin{assumption}\label{assumption:linearMDP} 
\citep{jin2020} Let $\bm{\phi}:\cS\times\cA\rightarrow\RR^d$ be a known feature mapping that satisfies $\phi_i(s,a)\geq0,\sum_{i=1}^d\phi_i(s,a)=1,\forall (s,a)\in\cS\times\cA$. We assume transition kernels and reward functions are both linear in this mapping, i.e. there exist unknown probability measures $\{\bmu_h(\cdot)\}_{h=1}^H\in\bigotimes_{h=1}^H\Delta(\cS)^d$ and known vectors $\{\bm{\theta}_h\}_{h=1}^H$ such that $\forall (s,a,h)\in\cS\times\cA\times[H],P_h(\cdot|s,a)=\inner{\bm{\phi}(s,a),\bmu_h(\cdot)}, r_h(s,a)=\inner{\bm{\phi}(s,a),\bm{\theta}_h}$. We also assume $\|\bm{\theta}_h\|\leq\sqrt{d},\forall h\in[H]$. 
\end{assumption}

\subsection{The $d$-Rectangular Linear DRMDP} \label{sec:pre_linearDRMDP}
To define the finite horizon $d$-rectangular DRMDP, we first introduce the $d$-rectangular uncertainty set following the definition in \citet{liu2024distributionally}.
\begin{definition}\label{def:uncertainty_set}
($d$-rectangular uncertainty set) Given the factor distributions $\{\bmu_h\}_{h=1}^H$ and feature $\bphi(\cdot,\cdot)$ of the linear MDP and uncertainty level $\rho>0$, for each $\mu_h$ we construct an uncertainty ball $\cU^\rho_{h,i}(\mu_{h,i}^0)=\{\mu\in\Delta(\mathcal{S}):D(\mu\|\mu_{h,i}^0)\leq\rho\}$, where $D(\cdot\|\cdot)$ denotes the Total Variation divergence. Then for all $(s,a)\in\cS\times\cA$, we define $\cU^\rho_{h}(s,a;\bmu_h^0)=\{\inner{\bphi(s,a),\bmu}:\mu_i\in\cU^\rho_{h,i}(\mu_{h,i}^0),\forall i\in[d]\}$. Subsequently, we denote $\cU^\rho_{h}(P_h^0)=\bigotimes_{(s,a)\in\cS\times\cA}\cU^\rho_{h}(s,a;\bmu_h^0)$ and $\cU^\rho(P^0)=\bigotimes_{h\in[H]}\cU^\rho_{h}(P_h^0)$.
\end{definition}
Then a  finite horizon $d$-rectangular DRMDP is defined by the tuple $\text{DRMDP}(\cS, \cA, H, P^0, \cU^\rho(P^0), r)$, where $P^0=\{P_h^0\}_{h=1}^H$ is the nominal transition kernel with $d$-rectangular uncertainty set $\cU^\rho(P^0)$.
The nominal transition $P^0$ and reward function $r$ satisfy the linear structure defined in \Cref{assumption:linearMDP}.

Under the $d$-rectangular DRMDP setting, let $V_h^{\pi,P}$ and $Q_h^{\pi,P}$ denote the value function and Q-function under transition kernel $P$. For any policy $\pi$ and time step $h\in[H]$, we define the robust value function $V_h^{\pi,\rho}:\cS\rightarrow\RR$ as $V_h^{\pi,\rho}(s)=\infPP V_h^{\pi,P}(s)$
and robust Q-function $Q_h^{\pi,\rho}:\cS\times\cA\rightarrow\RR$ as $Q_h^{\pi,\rho}(s,a)=\infPP Q_h^{\pi,P}(s,a)$. 
Furthermore, \citet{liu2024distributionally} showed that the robust value and $Q$-functions satisfy the following robust Bellman equation:
\begin{align}
Q_h^{\pi,\rho}(s,a)&=r_h(s,a)+\infPh{h}[\PP_hV_{h+1}^{\pi,\rho}](s,a) ,\label{eq:bellman_drmdp_q0}\\
V_h^{\pi,\rho}(s)&=\EE_{a\sim\pi(\cdot|s)}\big[Q_h^{\pi,\rho}(s,a)\big].\label{eq:bellman_drmdp_v0}
\end{align}%

\subsection{The $d$-Rectangular Linear RRMDP}

A finite horizon $d$-rectangular RRMDP is defined by the tuple $\text{RRMDP}(\cS, \cA, H, P^0, r, \sigma, R)$, where $\sigma$ is the regularization parameter and $R$ is the penalty on distribution shift. In this paper, we set $R$ to be the Total Variation distance $D(\cdot\|\cdot)$. The nominal transition $P^0$ and reward function $r$ satisfy the linear structure defined in \Cref{assumption:linearMDP}. Contrary to the DRMDP, which specifies an uncertainty set for target transitions, the RRMDP accounts for all probability measures over the state space as possible perturbed transitions, and regularize on the distance between the nominal and perturbed transitions with a penalty term. In particular, for any policy $\pi$ and time step $h\in[H]$, we can define the robust value function 
$V_h^{\pi,\sigma}:\cS\rightarrow\RR$ and robust $Q$-function $Q_h^{\pi,\sigma}:\cS\times\cA\rightarrow\RR$:
\begin{align*}
V_h^{\pi,\sigma}&(s)=\inf_{\bmu_t\in\Delta(\cS)^d,P_t=\inner{\bphi,\bmu_t},h\leq t\leq H}\EE_{P}\Bigg[\sum_{t=h}^H r_t(s_t,a_t)+\sigma\inner{\bm{\bphi}(s_t,a_t),D(\bmu_t\|\bmu_t^0)} \Bigg|s_h=s,\pi\Bigg],\\
Q_h^{\pi,\sigma}&(s,a)=\inf_{\bmu_t\in\Delta(\cS)^d,P_t=\inner{\bphi,\bmu_t},h\leq t\leq H}\EE_{P}\Bigg[\sum_{t=h}^H
r_t(s_t,a_t)+\sigma\inner{\bm{\phi}(s_t,a_t),D(\bmu_t\|\bmu_t^0)}\Bigg|s_h=s,a_h=a,\pi\Bigg].
\end{align*}
\citet{tang2025robust} show that the robust value and Q-functions satisfy the robust Bellman equations below:
\begin{align*}
Q_h^{\pi,\sigma}&(s,a)= r_h(s,a)+\inf_{\bmu_h\in\Delta(\cS)^d,P_h=\inner{\bm{\phi},\bmu_h}} \Big[\EE_{s'\sim P_h(\cdot|s,a)}[V_{h+1}^{\pi,\sigma}(s')]+\sigma\inner{\bm{\phi}(s,a),D(\bmu_h\|\bmu_h^0)}\Big],\\
V_h^{\pi,\sigma}&(s)=\EE_{a\sim\pi(\cdot|s)}\big[Q_h^{\pi,\sigma}(s,a)\big].
\end{align*}

\subsection{Policy-Regularized $d$-Rectangular Linear DRMDPs and RRMDPs}

On top of the constraint on transition kernels in $d$-rectangular DRMDP and RRMDP, we impose a second regularization on the policy such that it stays close to a reference policy $\piref$. This is inspired by the policy-regularized MDP framework, which modifies the optimality equations themselves, producing soft Bellman operators in which the log-sum-exp over Q-values replaces the hard max. Numerous existing literature have explored policy-regularized value functions with similar formulations. For instance, \citet{schulman2018equivalencepolicygradientssoft} study the Boltzmann policy with value function $v_{\bm{\theta}}=\EE_{a\sim\pi_{q_{\bm{\theta}}}^{\mathcal{B}}}[q_{\bm{\theta}}(a)]-\tau D_{KL}[\pi_{q_{\bm{\theta}}}^{\mathcal{B}}\Vert\bar{\pi}]$ where $\bar{\pi}$ is a static reference policy. \citet{haarnoja2017reinforcementlearningdeepenergybased} and \citet{fox2017tamingnoisereinforcementlearning} also discuss similar Boltzmann policy updates with the form $\pi(a|s)\propto\exp(\beta f(s,a))$ where $f$ is some function designed to approximate the Q-value. 

Policy regularization has several profound consequences. First, it produces stochastic optimal policies of the form $\pi(a|s)\propto\piref(a|s)\exp(\eta Q(s,a))$, which balance the exploitation of high-value actions with adherence to a prior policy $\piref$. Moreover, soft policies are more suitable for applications with continuous action space.
Second, it connects RL to probabilistic inference: the KL term can be seen as a log-prior over trajectories, and the reward as a log-likelihood \citep{Kappen_2012,levine2018reinforcementlearningcontrolprobabilistic}. This makes it possible to apply inference algorithms to control problems and to incorporate domain knowledge via the prior $\pi_0$. Third, the softened value function has better contraction properties in the presence of approximation errors, improving stability in practice \citep{fox2017tamingnoisereinforcementlearning,neu2017unifiedviewentropyregularizedmarkov,chaudhuri2019RMDP,vieillard2020munchausenreinforcementlearning,zhang2023global}. 
Based on the $d$-rectangular linear DRMDP framework, we define the robust policy-regularized value function and $Q$-function as 
\begin{align}
&\tilde{V}_h^{\pi,\rho}(s)=\infPP\EE_{P}\Bigg[\sum_{t=h}^H\big(r_t(s_t,a_t)-\frac{1}{\eta}D_{KL}[\pi_t(\cdot|s_t)\Vert\piref_t(\cdot|s_t)]\big)\Bigg|s_h=s,\pi\Bigg],\label{eq:drmdp_v}\\
&\tilde{Q}_h^{\pi,\rho}(s,a)=r_h(s,a)+\infPP\EE_{P}\Bigg[\sum_{t=h+1}^Hr_t(s_t,a_t)-\frac{1}{\eta}D_{KL}[\pi_t(\cdot|s_t)\Vert\piref_t(\cdot|s_t)]\Bigg|(s_h,a_h)=(s,a),\pi\Bigg].\label{eq:drmdp_q}
\end{align}%
Similarly, we define the robust policy-regularized value function and $Q$ function under the $d$-rectangular linear RRMDP framework as 
\begin{align}
&\tilde{V}_h^{\pi,\sigma}(s)=\inf_{\bmu_t\in\Delta(\cS)^d,P_t=\inner{\bphi,\bmu_t},h\leq t\leq H}\bigg[\sum_{t=h}^H\big(r_t(s_t,a_t)+\sigma\inner{\bm{\phi}(s_t,a_t),D(\bmu_t\|\bmu_t^0)}\label{eq:rrmdp_v}\\
&\quad-\frac{1}{\eta}D_{KL}[\pi_t(\cdot|s_t)\Vert\piref_t(\cdot|s_t)]\big)\bigg|s_h=s,\pi\bigg],\notag\\
&\tilde{Q}_h^{\pi,\sigma}(s,a)=r_h(s,a) 
\quad+\inf_{\bmu_t\in\Delta(\cS)^d,P_t=\inner{\bphi,\bmu_h},h+1\leq t\leq H}\Bigg[\sigma\inner{\bphi(s,a),D(\bmu_h\|\bmu_h^0)} \label{eq:rrmdp_q}\\
&\quad+\EE_{P}\bigg[\sum_{t=h+1}^H\Big(r_t(s_t,a_t)+\sigma\inner{\bm{\phi}(s_t,a_t),D(\bmu_t\|\bmu_t^0)}-\frac{1}{\eta}D_{KL}[\pi_t(\cdot|s_t)\Vert\piref_t(\cdot|s_t)]\Big)\bigg|s_h=s,a_h=a,\pi\bigg]\Bigg].\notag
\end{align}
Next we show that the robust policy-regularized Bellman equations hold for policy-regularized robust value functions defined above.
\begin{proposition}\label{proposition:robust_bellman_kl_drmdp}
(DRMDP) For any policy $\pi$ and any $(s,a,h)\in\cS\times\cA\times[H]$, we have
\begin{align*}
\tilde{Q}_h^{\pi,\rho}(s)=&r_h(s,a)+\infPh{h}\EE_{s'\sim P_h(\cdot|s,a)}\tilde{V}_{h+1}^{\pi,\rho}(s'),\\
\tilde{V}_h^{\pi,\rho}(s)=&\inner{\tilde{Q}_h^{\pi,\rho}(s,\cdot), \pi_h(\cdot|s)}-\frac{1}{\eta}D_{KL}[\pi_h(\cdot|s)\Vert\piref_h(\cdot|s)].
\end{align*}
\end{proposition}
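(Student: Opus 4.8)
The plan is to establish both identities by a one-step unrolling of the regularized return, where the essential structural ingredient is the $d$-rectangularity of the uncertainty set $\cU^\rho(P^0)=\bigotimes_{h\in[H]}\cU^\rho_{h}(P_h^0)$ from \Cref{def:uncertainty_set}. Because this set is a product over time steps and, within each step, over state-action pairs, the joint infimum over the full kernel $P$ decouples into independent per-step, per-$(s,a)$ infima. This decoupling is precisely what makes a robust Bellman recursion valid, so it is the property I would invoke repeatedly.

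For the $Q$-equation, I would begin from the definition of $\tilde Q_h^{\pi,\rho}$ in \eqref{eq:drmdp_q}, in which $r_h(s,a)$ is already separated from the conditional expectation of the regularized return over $t\ge h+1$. Applying the tower rule and the Markov property, I would condition on $s_{h+1}$ and split $\infPP$ into an outer infimum over $P_h(\cdot|s,a)\in\cU^\rho_h(s,a;\bmu_h^0)$ and an inner infimum over the remaining kernels $P_{h+1:H}$. Rectangularity then lets me pull the inner infimum inside the expectation $\EE_{s'\sim P_h(\cdot|s,a)}[\cdot]$, since the worst-case continuation kernel may be chosen independently for each realized next state $s'$; moreover the objective for $t\ge h+1$ does not depend on $P_{1:h}$, so this inner quantity equals $\inf_{P_{h+1:H}}\EE_P[\sum_{t=h+1}^H(r_t-\tfrac{1}{\eta}D_{KL})\mid s_{h+1}=s',\pi]=\tilde V_{h+1}^{\pi,\rho}(s')$ by \eqref{eq:drmdp_v}. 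This yields $\tilde Q_h^{\pi,\rho}(s,a)=r_h(s,a)+\infPh{h}\EE_{s'\sim P_h(\cdot|s,a)}\tilde V_{h+1}^{\pi,\rho}(s')$.

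For the $V$-equation, I would start from \eqref{eq:drmdp_v} and isolate the step-$h$ contribution $r_h(s,a_h)-\tfrac{1}{\eta}D_{KL}[\pi_h(\cdot|s)\Vert\piref_h(\cdot|s)]$. The KL penalty at step $h$ depends only on the fixed state $s$, not on $a_h$ nor on any transition kernel, so it factors out of both $\infPP$ and the expectation $\EE_{a_h\sim\pi_h(\cdot|s)}$ as the additive constant $-\tfrac{1}{\eta}D_{KL}[\pi_h(\cdot|s)\Vert\piref_h(\cdot|s)]$. For the remaining term I would use the $(s,a)$-rectangularity within step $h$ to interchange $\infPP$ with $\EE_{a_h\sim\pi_h(\cdot|s)}$: since $\cU^\rho_h(P_h^0)$ is a product over state-action pairs, the minimizing kernel can be selected pointwise for each action $a_h$, so $\inf_P\EE_{a_h}[\cdot]=\EE_{a_h}[\inf_P(\cdot)]$. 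The inner quantity $r_h(s,a_h)+\inf_P\EE_P[\sum_{t\ge h+1}(r_t-\tfrac{1}{\eta}D_{KL})\mid (s_h,a_h)=(s,a_h),\pi]$ is exactly $\tilde Q_h^{\pi,\rho}(s,a_h)$, giving $\tilde V_h^{\pi,\rho}(s)=\inner{\tilde Q_h^{\pi,\rho}(s,\cdot),\pi_h(\cdot|s)}-\tfrac{1}{\eta}D_{KL}[\pi_h(\cdot|s)\Vert\piref_h(\cdot|s)]$.

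The main obstacle I anticipate is rigorously justifying the two interchanges of infimum and expectation. Both rest on the rectangular product structure of $\cU^\rho(P^0)$, which guarantees that a worst-case kernel can be assembled from componentwise minimizers indexed by $(h,s,a)$. Making this precise requires a measurable-selection argument to ensure the pointwise minimizers combine into an admissible kernel lying in $\cU^\rho(P^0)$, together with the observation that the truncated regularized returns are uniformly bounded (so the infima are well-defined and the interchange incurs no gap). Once this decomposition is secured, the remainder is routine dynamic-programming bookkeeping.
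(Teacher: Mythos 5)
Your plan is correct in substance and rests on the same mathematical engine as the paper's proof --- the rectangular product structure of $\cU^\rho(P^0)$ is what permits the decoupling in both cases --- but you organize it differently: you attempt a direct one-step unrolling with two interchanges of infimum and expectation, whereas the paper proves a strengthened statement by backward induction on $h$, namely that there exist kernels $\{\bar{P}_t^\pi\}$ attaining the infima such that $\tilde{V}_h^{\pi,\rho}=\tilde{V}_h^{\pi,\{\bar{P}_t^\pi\}}$ simultaneously for all states, then shows a chain of inequalities (obtained by plugging in $\bar{P}_t^\pi$ for the future steps) collapses to equality against the original definition. The obstacle you correctly flag is in fact the crux: to pull $\inf_{P_{h+1:H}}$ inside $\EE_{s'\sim P_h}$ (and likewise inside $\EE_{a_h\sim\pi_h}$) you need a \emph{single} admissible continuation kernel that is simultaneously optimal for every realized $s'$ (and every $a_h$), not merely per-$s'$ minimizers; gluing the pointwise minimizers into one kernel and verifying it is uniformly optimal is most naturally done by exactly the backward induction with the strengthened hypothesis that the paper uses, so your ``measurable-selection argument'' is not a routine afterthought but is where essentially all of the proof lives. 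If you carry it out, your write-up and the paper's will coincide up to presentation; as stated, your proposal leaves that uniform-optimality claim asserted rather than proved. One further small point: the existence (not just approximation) of the minimizers is justified in the paper by closedness of the balls $\cU^\rho_{h,i}(\mu^0_{h,i})$ together with the fact that the factors are shared across $(s,a)$, which you should invoke explicitly rather than only boundedness of the returns.
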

\begin{proposition}\label{proposition:robust_bellman_kl_rrmdp}
(RRMDP) For any policy $\pi$ and any $(s,a,h)\in\cS\times\cA\times[H]$, we have that
\begin{align*}
&\tilde{Q}_h^{\pi,\sigma}(s,a)=r_h(s,a)+\inf_{\bmu_h\in\Delta(\cS)^d,P_h=\inner{\bm{\phi},\bmu_h}}\Big[\EE_{s'\sim P_h(\cdot|s,a)}[\tilde{V}_{h+1}^{\pi,\sigma}(s')]+\sigma\inner{\bm{\phi}(s,a),D(\bmu_h\|\bmu_h^0)}\Big],\\
&\tilde{V}_h^{\pi,\sigma}(s)=\EE_{a\sim\pi(\cdot|s)}\big[\tilde{Q}_h^{\pi,\sigma}(s,a)\big]-\frac{1}{\eta}D_{KL}[\pi_h(\cdot|s)\Vert\piref_h(\cdot|s)].
\end{align*}
\end{proposition}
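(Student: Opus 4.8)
The plan is to prove both identities by unfolding the definitions in \eqref{eq:rrmdp_v}--\eqref{eq:rrmdp_q}, peeling off the time-$h$ contribution, and justifying an interchange of infimum and expectation that rests on the $d$-rectangular structure. I would argue the whole statement by backward induction on $h$, taking $\tilde{V}_{H+1}^{\pi,\sigma}\equiv 0$ as the base case and maintaining as an inductive hypothesis that $\tilde{V}_{t+1}^{\pi,\sigma}$ is a well-defined function and that the worst-case factor distributions at every time step downstream of $h$ are state-independent.

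For the $Q$-equation I would split the infimum over the whole sequence $\{\bmu_t\}_{t=h}^H$ as $\inf_{\bmu_h}\inf_{\{\bmu_t\}_{t\ge h+1}}$. The time-$h$ penalty $\sigma\inner{\bphi(s,a),D(\bmu_h\|\bmu_h^0)}$ depends only on $\bmu_h$, and by the tower property the remaining expectation factors as $\EE_{s'\sim P_h(\cdot|s,a)}\big[G(s';\{\bmu_t\}_{t\ge h+1})\big]$, where $P_h(\cdot|s,a)=\inner{\bphi(s,a),\bmu_h}$ and $G(s';\cdot)=\EE_P[\sum_{t=h+1}^H(\cdots)\mid s_{h+1}=s']$ no longer depends on $\bmu_h$. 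The identity then reduces to showing $\inf_{\{\bmu_t\}_{t\ge h+1}}\EE_{s'\sim P_h(\cdot|s,a)}[G(s';\cdot)] = \EE_{s'\sim P_h(\cdot|s,a)}[\tilde{V}_{h+1}^{\pi,\sigma}(s')]$, i.e.\ an interchange of $\inf$ and $\EE_{s'}$, after which taking $\inf_{\bmu_h}$ recovers the claimed form.

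For the interchange, one direction ($\inf\EE\ge\EE\inf$) is automatic. For the reverse I would invoke the key structural fact. Because $[\PP_t V](s,a)=\sum_i\phi_i(s,a)\EE_{s'\sim\mu_{t,i}}[V(s')]$ and the TV penalty separates as $\sigma\sum_i\phi_i(s,a)D(\mu_{t,i}\|\mu_{t,i}^0)$, with $\phi_i\ge0$ and the feasible region being the product set $\Delta(\cS)^d$, the single-step minimization decomposes:
\[\inf_{\bmu_t}\sum_i\phi_i(s,a)\big[\EE_{s'\sim\mu_{t,i}}[V(s')]+\sigma D(\mu_{t,i}\|\mu_{t,i}^0)\big]=\sum_i\phi_i(s,a)\inf_{\mu_{t,i}}\big[\EE_{s'\sim\mu_{t,i}}[V(s')]+\sigma D(\mu_{t,i}\|\mu_{t,i}^0)\big],\]
and crucially each inner infimum depends only on the downstream value $V=\tilde{V}_{t+1}^{\pi,\sigma}$ and the factor index $i$, not on $(s,a)$. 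Hence the worst-case factors are state-independent: choosing, for each $i$, an $\varepsilon$-optimal $\mu_{t,i}^\star$ yields a single sequence $\{\bmu_t^\star\}_{t\ge h+1}$ that is (nearly) optimal for $\tilde{V}_{h+1}^{\pi,\sigma}(s')$ simultaneously at every $s'$. Averaging over $s'\sim P_h(\cdot|s,a)$ and using $\sum_i\phi_i(s,a)=1$ from \Cref{assumption:linearMDP} gives the $\le$ direction, completing the interchange.

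For the $V$-equation I would proceed analogously: the time-$h$ term $-\tfrac{1}{\eta}D_{KL}[\pi_h(\cdot|s)\|\piref_h(\cdot|s)]$ is deterministic given $s$ and transition-independent, so it pulls outside both the infimum and the expectation over $a_h\sim\pi_h(\cdot|s)$. The same state-independence of the worst-case factors lets me interchange $\inf_{\{\bmu_t\}_{t\ge h}}$ with $\EE_{a_h\sim\pi_h(\cdot|s)}$, turning the inner object into $\tilde{Q}_h^{\pi,\sigma}(s,a_h)$ and yielding $\tilde{V}_h^{\pi,\sigma}(s)=\EE_{a\sim\pi(\cdot|s)}[\tilde{Q}_h^{\pi,\sigma}(s,a)]-\tfrac{1}{\eta}D_{KL}[\pi_h(\cdot|s)\|\piref_h(\cdot|s)]$. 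The main obstacle is the interchange step: its soundness depends entirely on the fact that the minimizing factor distributions do not depend on the current state-action, which holds here precisely because of $d$-rectangularity together with the per-factor separability of the TV regularizer; attainment concerns are handled by passing to $\varepsilon$-optimal selections rather than exact minimizers.
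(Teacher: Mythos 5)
Your proposal is correct and follows essentially the same route as the paper's own proof: backward induction with a strengthened inductive hypothesis that the worst-case factor distributions downstream of $h$ are state-independent (a consequence of $d$-rectangularity and the per-factor separability of the TV penalty), which is exactly what licenses exchanging the infimum with the expectations over $s'$ and $a_h$. The only cosmetic difference is that you pass to $\varepsilon$-optimal selections where the paper invokes exact attainment of the arginf via closedness of $\Delta(\cS)^d$.
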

For all $(s,a,h)\in\cS\times\cA\times[H]$, we define the policy-regularized optimal value function as $\tilde{V}_h^{*,\rho}(s)=\sup_{\pi\in\Pi}\tilde{V}_h^{\pi,\rho}(s)$ for DRMDP and $\tilde{V}_h^{*,\sigma}(s)=\sup_{\pi\in\Pi}\tilde{V}_h^{\pi,\sigma}(s)$ for RRMDP; similarly, we define $\tilde{Q}_h^{*,\rho}(s,a)=\sup_{\pi\in\Pi}\tilde{Q}_h^{\pi,\rho}(s,a)$ for DRMDP and $\tilde{Q}_h^{*,\sigma}(s,a)=\sup_{\pi\in\Pi}\tilde{Q}_h^{\pi,\sigma}(s,a)$ for RRMDP. It is well-known that there exists a stationary and deterministic policy that achieves the optimal value function for a standard MDP \citep{Sutton1998}; likewise, a stationary and deterministic optimal policy also exists for $d$-rectangular linear DRMDP \citep{liu2024distributionally} and $d$-rectangular linear RRMDP \citep{tang2025robust}. In contrast, we show that under policy regularization, the optimal policies have a particular closed form. 

\begin{proposition}\label{proposition:optimal_policy_drmdp}
(Optimal policy under DRMDP) Under the policy-regularized $d$-rectangular linear DRMDP, there exists a policy $\pi^*$ 
such that $\tilde{V}^{\pi^*,\rho}_h(s)=\tilde{V}^{*,\rho}_h(s)$ and $\tilde{Q}^{\pi^*,\rho}_h(s,a)=\tilde{Q}^{*,\rho}_h(s,a), \forall (s,a,h)\in\cS\times\cA\times[H]$, where $\pi^*$ is defined by
\begin{align*}
\pi_h^*(a|s)=\frac{1}{Z_h(s)}\piref_h(a|s)\exp\big(\eta\tilde{Q}^{*,\rho}_{h}(s,a)\big),
\end{align*}
where $Z_h(s)=\int_\cA\piref_h(a|s)\exp\big(\eta\tilde{Q}^{*,\rho}_{h}(s,a)\big)da$ is the partition function.
\end{proposition}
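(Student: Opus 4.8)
The plan is to prove the statement by backward induction on $h$, reducing the optimization over the full policy to a sequence of one-step, pointwise-in-state concave programs over $\Delta(\cA)$, each of which is solved in closed form by the Gibbs variational principle.

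First I would record the structural fact that drives everything. By the robust Bellman equation of Proposition~\ref{proposition:robust_bellman_kl_drmdp}, $\tilde Q_h^{\pi,\rho}(s,a)=r_h(s,a)+\infPh{h}\EE_{s'\sim P_h(\cdot|s,a)}\tilde V_{h+1}^{\pi,\rho}(s')$, so $\tilde Q_h^{\pi,\rho}$ depends on $\pi$ only through the future policies $(\pi_{h+1},\dots,\pi_H)$ and never on $\pi_h$; unrolling the definition~\eqref{eq:drmdp_v} shows that $\tilde V_{h+1}^{\pi,\rho}$ likewise depends only on $(\pi_{h+1},\dots,\pi_H)$. Hence, once the future policies are fixed, the second Bellman identity $\tilde V_h^{\pi,\rho}(s)=\inner{\tilde Q_h^{\pi,\rho}(s,\cdot),\pi_h(\cdot|s)}-\frac{1}{\eta}D_{KL}[\pi_h(\cdot|s)\Vert\piref_h(\cdot|s)]$ displays $\tilde V_h^{\pi,\rho}(s)$ as a function of the single distribution $\pi_h(\cdot|s)$ alone.

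Next I would run the induction. The base case $h=H+1$ is immediate from $\tilde V_{H+1}\equiv 0$. For the inductive step, assume the formula at levels $>h$ yields policies $\pi^*_{h+1:H}$ attaining $\tilde V_t^{*,\rho}$ and $\tilde Q_t^{*,\rho}$ for all $t>h$. Committing to these makes $\tilde Q_h^{\pi,\rho}(s,\cdot)=\tilde Q_h^{*,\rho}(s,\cdot)$ independent of $\pi_h$, so maximizing $\tilde V_h^{\pi,\rho}(s)$ reduces, for each $s$, to the concave program $\sup_{p\in\Delta(\cA)}\inner{\tilde Q_h^{*,\rho}(s,\cdot),p}-\frac{1}{\eta}D_{KL}[p\Vert\piref_h(\cdot|s)]$. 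By the Gibbs variational principle (equivalently a Lagrangian/KKT computation on the simplex), the unique maximizer is $p^*(a)\propto\piref_h(a|s)\exp(\eta\tilde Q_h^{*,\rho}(s,a))$, i.e.\ exactly $\pi_h^*(\cdot|s)$, with optimal value $\frac{1}{\eta}\log Z_h(s)$. To upgrade this local optimization to global optimality I would invoke monotonicity of the one-step robust operator $V\mapsto\infPh{h}\EE_{s'\sim P_h(\cdot|s,a)}[V(s')]$: replacing any future policy by the optimal one can only raise $\tilde V_{h+1}^{\pi,\rho}$ pointwise, hence raise $\tilde Q_h^{\pi,\rho}$ and therefore the objective above, so nothing is lost by fixing $\pi^*_{h+1:H}$ before optimizing $\pi_h$. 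This closes the induction and simultaneously yields $\tilde Q_h^{*,\rho}(s,a)=r_h(s,a)+\infPh{h}\EE_{s'\sim P_h(\cdot|s,a)}[\tilde V_{h+1}^{*,\rho}(s')]=\sup_\pi\tilde Q_h^{\pi,\rho}(s,a)$.

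The main obstacle is not the closed-form computation, which is a routine Gibbs/KKT argument, but the decoupling justification: rigorously arguing that maximizing the full-horizon objective is equivalent to the backward, pointwise-in-$(h,s)$ scheme. The two ingredients that make this go through are (i) the independence of $\tilde Q_h^{\pi,\rho}$ from $\pi_h$ and (ii) the monotonicity of the robust Bellman operator; I would take care to verify that monotonicity survives the inner $\inf$ over the $d$-rectangular uncertainty set, which it does, since $\infPh{h}$ of a pointwise-larger integrand is pointwise-larger. A secondary technical point is confirming that $Z_h(s)$ is finite and positive: this follows from $0\le\tilde Q_h^{*,\rho}\le H$ (rewards lie in $[0,1]$ and the KL regularizer only decreases value) together with $\int_\cA 1\,da=\vA<\infty$, which also guarantees the supremum is attained even when $\cA$ is continuous.
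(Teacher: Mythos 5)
Your proposal is correct and takes essentially the same route as the paper's own proof: both argue by backward induction, use the robust Bellman equation of \Cref{proposition:robust_bellman_kl_drmdp} together with monotonicity of the robust backup $V\mapsto\inf_{P_h}\EE_{P_h}[V]$ to pass from $\tilde{V}_{h+1}^{\pi,\rho}$ to $\tilde{V}_{h+1}^{*,\rho}$, and then invoke the Gibbs variational principle (\Cref{propositon:optimization_kl}) to identify the per-state maximizer as the softmax policy. Your explicit remarks on the decoupling of $\tilde{Q}_h^{\pi,\rho}$ from $\pi_h$ and on the finiteness and positivity of $Z_h(s)$ are sensible sanity checks but do not alter the argument.
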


\begin{proposition}\label{proposition:optimal_policy_rrmdp}
(Optimal policy under  RRMDP) Under the policy-regularized $d$-rectangular linear RRMDP, there exists a policy $\pi^*$ such that $\tilde{V}^{\pi^*,\sigma}_h(s)=\tilde{V}^{*,\sigma}_h(s)$ and $\tilde{Q}^{\pi^*,\sigma}_h(s,a)=\tilde{Q}^{*,\sigma}_h(s,a), \forall (s,a,h)\in\cS\times\cA\times[H]$, where $\pi^*$ is defined by
\begin{align*}
\pi_h^*(a|s)=\frac{1}{Z_h(s)}\piref_h(a|s)\exp\big(\eta\tilde{Q}^{*,\sigma}_{h}(s,a)\big),
\end{align*}
where $Z_h(s)=\int_\cA\piref_h(a|s)\exp\big(\eta\tilde{Q}^{*,\rho}_{h}(s,a)\big)da$.
\end{proposition}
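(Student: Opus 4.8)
The plan is to prove the statement by backward induction on $h$, mirroring the DRMDP argument in Proposition~\ref{proposition:optimal_policy_drmdp} but with the regularized robust Bellman equations of Proposition~\ref{proposition:robust_bellman_kl_rrmdp} in place of the constrained ones. The base case takes $\tilde{V}_{H+1}^{\pi,\sigma}\equiv 0$ for every $\pi$, which is trivially optimal. For the inductive step I assume a policy $\pi_{h+1}^*,\dots,\pi_H^*$ has been constructed that attains $\tilde{V}_t^{*,\sigma}$ for all $t\geq h+1$, and I aim to build $\pi_h^*$ of the claimed Gibbs form so that the augmented policy attains $\tilde{V}_h^{*,\sigma}$.

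The crucial structural observation is that $\tilde{Q}_h^{\pi,\sigma}(s,a)$ depends on $\pi$ only through the future components $\pi_{h+1},\dots,\pi_H$ (since we condition on $a_h=a$), and it enters the value through the per-step identity
\[
\tilde{V}_h^{\pi,\sigma}(s)=\EE_{a\sim\pi_h(\cdot|s)}\big[\tilde{Q}_h^{\pi,\sigma}(s,a)\big]-\frac{1}{\eta}D_{KL}[\pi_h(\cdot|s)\Vert\piref_h(\cdot|s)].
\]
This decouples the optimization: the expected-$Q$ term is driven up by choosing the future policy optimally, while the KL penalty depends only on $\pi_h$. To make the decoupling rigorous I would establish monotonicity of the one-step regularized robust Bellman operator, i.e. $V_1\leq V_2$ pointwise implies
\[
\inf_{\bmu_h}\big[\EE_{s'\sim P_h(\cdot|s,a)}V_1(s')+\sigma\inner{\bm{\phi}(s,a),D(\bmu_h\|\bmu_h^0)}\big]\leq\inf_{\bmu_h}\big[\EE_{s'\sim P_h(\cdot|s,a)}V_2(s')+\sigma\inner{\bm{\phi}(s,a),D(\bmu_h\|\bmu_h^0)}\big].
\]
This is immediate, since for each fixed $\bmu_h$ the expectation is monotone in the integrand while the penalty term is unchanged, and taking the infimum preserves the inequality. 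Combined with monotonicity of $\EE_{a\sim\pi_h}[\cdot]$, this yields $\tilde{Q}_h^{\pi,\sigma}(s,a)\leq\tilde{Q}_h^{*,\sigma}(s,a)$ for every $\pi$, with equality attained by the inductive optimal future policy, and hence reduces the problem to
\[
\sup_\pi\tilde{V}_h^{\pi,\sigma}(s)=\sup_{\pi_h}\Big[\EE_{a\sim\pi_h(\cdot|s)}\big[\tilde{Q}_h^{*,\sigma}(s,a)\big]-\frac{1}{\eta}D_{KL}[\pi_h(\cdot|s)\Vert\piref_h(\cdot|s)]\Big].
\]

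The remaining step is a Gibbs variational argument applied pointwise in $s$. Fixing $Q(\cdot):=\tilde{Q}_h^{*,\sigma}(s,\cdot)$ and reference $\piref_h(\cdot|s)$, I introduce $\pi_h^*$ of the stated form with normalizer $Z_h(s)=\int_\cA\piref_h(a|s)\exp(\eta\tilde{Q}_h^{*,\sigma}(s,a))\,da$ and expand the nonnegative quantity $D_{KL}[\pi_h(\cdot|s)\Vert\pi_h^*(\cdot|s)]\geq 0$. Since $\log\pi_h^*(a|s)=\log\piref_h(a|s)+\eta Q(a)-\log Z_h(s)$, this expansion rearranges exactly into
\[
\EE_{a\sim\pi_h(\cdot|s)}\big[\tilde{Q}_h^{*,\sigma}(s,a)\big]-\frac{1}{\eta}D_{KL}[\pi_h(\cdot|s)\Vert\piref_h(\cdot|s)]\leq\frac{1}{\eta}\log Z_h(s),
\]
with equality if and only if $\pi_h=\pi_h^*$. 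Hence $\pi_h^*$ is the unique maximizer, $\tilde{V}_h^{*,\sigma}(s)=\frac{1}{\eta}\log Z_h(s)$, and the augmented policy attains the optimum, closing the induction.

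I would flag the decoupling via monotonicity as the main subtlety: one must confirm that the infimum over $\bmu_h$ arising from the RRMDP regularization term interacts with the expectation exactly as the infimum over the uncertainty set does in the DRMDP case, so that maximizing current-step value and maximizing future value separate cleanly. Once this monotonicity is in hand, the argument is otherwise identical in form to Proposition~\ref{proposition:optimal_policy_drmdp}, and the concluding Gibbs computation is standard.
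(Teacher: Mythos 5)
Your proposal is correct and follows essentially the same route as the paper's proof: backward induction, monotonicity of the regularized robust Bellman operator to replace $\tilde{V}_{h+1}^{\pi,\sigma}$ by $\tilde{V}_{h+1}^{*,\sigma}=\tilde{V}_{h+1}^{\hat\pi,\sigma}$ inside the infimum, and then the KL-regularized Gibbs maximization at step $h$. The only cosmetic difference is that you prove the Gibbs variational step by expanding $D_{KL}[\pi_h\Vert\pi_h^*]\geq 0$, whereas the paper invokes \Cref{propositon:optimization_kl} as a black box.
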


\begin{remark}
 Due to the regularization against a reference policy, $\pi^*$ may not be deterministic under the policy-regularized setting. In particular, unless $\piref$ is deterministic, the closed forms in \Cref{proposition:optimal_policy_drmdp} and \Cref{proposition:optimal_policy_rrmdp} imply that $\pi^*$ is necessarily stochastic.
\end{remark}
The agent's goal is to learn the optimal policy within $K$ episodes of online interaction with the source domain, minimizing average suboptimality. In episode $k$, it starts at initial state $s_1^k$ and follows policy $\pi^k$. The average suboptimality is defined as $\text{AveSubopt}(K)=1/K\sum_{k=1}^K(\tilde{V}_1^{*,\rho}-\tilde{V}_1^{\pi^k,\rho})$ for DRMDP and $\text{AveSubopt}(K)=1/K\sum_{k=1}^K(\tilde{V}_1^{*,\sigma}-\tilde{V}_1^{\pi^k,\sigma})$ for RRMDP.

\section{ALGORITHM DESIGN}

In this section, we develop \algo, a value iteration based algorithm for online policy-regularized RMDPs. On a high level, \alg maintains an estimate of the robust value and $Q$-functions via linear approximation with online data, and ensures that the estimate is optimistic by incorporating a Upper Confidence Bound (UCB)-style bonus. In each episode, it updates the current policy using a softmax update rule, constraining the policy to stay near the reference policy. To motivate the algorithm design, we first discuss the linearity of the robust $Q$-function. 

\subsection{Linearity of the Robust $Q$-Function under the $d$-Rectangular Linear DRMDP}

It is known that the optimization problem under the $d$-rectangular linear DRMDP and the TV uncertainty set has a dual formulation as follows:
\begin{proposition}\label{proposition:strong_duality}(\citet{shi2023curious}, Lemma 4) 
Consider any probability measure $\mu^0\in\Delta{(\cS)}$, any fixed uncertainty level $\rho$, the uncertainty set $\cU^\rho(\mu^0)=\{\mu:\mu\in\Delta{(\cS)},D_{TV}(\mu\|\mu^0)\leq\rho\}$, and any function $V:\cS\rightarrow[0,H]$, one has that
\begin{align*}
\inf_{\mu\in\cU^\rho(\mu^0)}\EE_{s\sim\mu}V(s)=&\max_{\alpha\in[V_{\min},V_{\max}]}\Big\{\EE_{s\sim\mu^0}[V(s)]_\alpha-\rho(\alpha-\min_{s'}[V(s')]_\alpha)\Big\},
\end{align*}
where $[V(s)]_\alpha=\min\{V(s),\alpha\}$,$V_{\min}=\min_sV(s)$, and $V_{\max}=\max_sV(s)$.
\end{proposition}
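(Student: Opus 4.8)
The plan is to read the left-hand side as an infinite-dimensional linear program in the decision variable $\mu$, whose feasible set is the intersection of the simplex $\Delta(\cS)$ with the total-variation ball $\{\mu:D_{TV}(\mu\|\mu^0)\le\rho\}$, and to prove the two inequalities separately. I would first dispatch the easy ``$\ge$'' (weak-duality) direction by an elementary truncation-and-budget argument. Fix any feasible $\mu$ and any $\alpha$. Since $V\ge[V]_\alpha$ pointwise we have $\EE_{s\sim\mu}V(s)\ge\EE_{s\sim\mu}[V(s)]_\alpha$, and because $\int(d\mu-d\mu^0)=0$ I may subtract the constant $\alpha$ to write $\EE_{s\sim\mu}[V(s)]_\alpha-\EE_{s\sim\mu^0}[V(s)]_\alpha=\int([V]_\alpha-\alpha)\,(d\mu-d\mu^0)$. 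Setting $g=\alpha-[V]_\alpha$, one has $0\le g\le\alpha-\min_{s}[V(s)]_\alpha$, so using $\int g\,(d\mu-d\mu^0)\le\|g\|_\infty\int(d\mu-d\mu^0)_+=\|g\|_\infty\,D_{TV}(\mu\|\mu^0)\le(\alpha-\min_{s}[V(s)]_\alpha)\rho$ (the equality because $\int(d\mu-d\mu^0)_+=\int(d\mu-d\mu^0)_-=D_{TV}$) yields $\EE_{s\sim\mu}V(s)\ge\EE_{s\sim\mu^0}[V(s)]_\alpha-\rho(\alpha-\min_{s}[V(s)]_\alpha)$. Taking the infimum over feasible $\mu$ and the supremum over $\alpha$ gives the ``$\ge$'' inequality.

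For the reverse ``$\le$'' direction I would exhibit an explicit worst-case distribution attaining the bound. Let $\alpha^\star$ be an optimizer of the scalar problem on the right; I expect it to satisfy the quantile condition $\mu^0(\{V>\alpha^\star\})\le\rho\le\mu^0(\{V\ge\alpha^\star\})$. I then construct $\mu^\star$ by removing a total mass $\rho$ from $\mu^0$ on the highest-$V$ states (all mass on $\{V>\alpha^\star\}$, plus a partial amount on $\{V=\alpha^\star\}$ so the deficit is exactly $\rho$) and depositing it on a minimizer $s_{\min}\in\arg\min_s V(s)$. This $\mu^\star$ is nonnegative, is a probability measure, and has $D_{TV}(\mu^\star\|\mu^0)=\rho$, hence is feasible. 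A direct computation using $\EE_{s\sim\mu^0}[V(s)]_{\alpha^\star}=\EE_{s\sim\mu^0}V(s)-\int_{\{V>\alpha^\star\}}(V-\alpha^\star)\,d\mu^0$ should show that $\EE_{s\sim\mu^\star}V(s)$ equals the dual value at $\alpha^\star$ (here $\min_s[V(s)]_{\alpha^\star}=V_{\min}$ since $\alpha^\star\ge V_{\min}$), closing the gap. As an alternative route, since $\rho>0$ makes $\mu^0$ itself strictly feasible, Slater's condition holds and one may instead invoke strong duality for the convex program and reduce the Lagrangian dual to the stated one-dimensional form.

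Finally I would justify the two framing details. Restricting $\alpha$ to $[V_{\min},V_{\max}]$ is without loss: for $\alpha>V_{\max}$ one has $[V]_\alpha=V$ and the objective $\EE_{s\sim\mu^0}V(s)-\rho(\alpha-V_{\min})$ only decreases in $\alpha$, while for $\alpha<V_{\min}$ one has $[V]_\alpha\equiv\alpha$ and the objective equals $\alpha$, so the supremum is never improved outside the interval. Because the dual objective is continuous (indeed concave) in $\alpha$ on this compact interval, the supremum is attained and may be written as a maximum. I expect the main obstacle to be the reverse direction: verifying that $\mu^\star$ is simultaneously feasible and exactly optimal requires careful mass accounting at the boundary level set $\{V=\alpha^\star\}$ and attention to whether $\cS$ is finite or continuous (the use of $\min_s$ and $\max_s$ suggests a discrete state space, but the integral form of $D_{TV}$ must still be handled). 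If one instead routes through abstract convex duality, the effort shifts to checking the constraint qualification and collapsing the general dual into the clean scalar maximization over $\alpha$.
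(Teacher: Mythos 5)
The paper does not prove this proposition; it is imported verbatim as Lemma~4 of \citet{shi2023curious}, so there is no in-paper argument to compare against. Your proof is essentially the standard derivation of this TV duality and is correct in outline: the ``$\ge$'' direction via truncation plus the budget bound $\int g\,(d\mu-d\mu^0)\le\|g\|_\infty\int(d\mu-d\mu^0)_+=\|g\|_\infty D_{TV}(\mu\|\mu^0)$ (valid because $\mu,\mu^0$ are both probability measures, so the positive and negative parts of $d\mu-d\mu^0$ each integrate to $D_{TV}$ under the paper's $\tfrac12\int|\cdot|$ convention), and the ``$\le$'' direction by transporting mass $\rho$ from the upper level set $\{V>\alpha^\star\}$ (plus a partial amount on $\{V=\alpha^\star\}$) down to a minimizer of $V$. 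Your quantile condition $\mu^0(\{V>\alpha^\star\})\le\rho\le\mu^0(\{V\ge\alpha^\star\})$ is exactly the first-order condition for the concave scalar objective $f(\alpha)=\EE_{\mu^0}[V]_\alpha-\rho(\alpha-V_{\min})$ on $[V_{\min},V_{\max}]$, and a direct computation confirms $\EE_{\mu^\star}V=f(\alpha^\star)$. Two details you flag deserve to be made explicit if you write this out: (i) the degenerate case $\rho\ge\mu^0(\{V>V_{\min}\})$, where $\alpha^\star=V_{\min}$ and the construction collapses to moving all non-minimal mass to $s_{\min}$, still matching $f(V_{\min})=V_{\min}$; and (ii) attainment of $\min_s V$, which in this paper is guaranteed by the fail-state assumption ($V(s^\dagger)=0$). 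Your interval-restriction argument for $\alpha$ is also correct. The Slater/convex-duality alternative you mention would work but is heavier than needed here.
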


Solving $\min_{s'}[V(s')]_\alpha$ is computationally expensive when $\cS$ is large, especially when $\min_{s'}[V(s')]_\alpha$ is not convex with respect to $\cS$. To circumvent this issue, we introduce the fail-state assumption to zero-out $\min_{s'}[V(s')]_\alpha$ below, following the same definition from \citet{panaganti2022robustreinforcementlearningusing}.
\begin{assumption}\label{assumption:failstate}
\cite[Assumption 3]{panaganti2022robustreinforcementlearningusing}
There exists a fail state $s^\dagger\in\cS$ such that $\forall(a,h)\in\cA\times[H], r_h(s^\dagger,a)=0, P_h^0(s^\dagger|s^\dagger,a)=1$.
\end{assumption} 

\begin{remark}\label{remark:failstate}
\Cref{assumption:failstate} implies that for all $\alpha\in[0,H]$, $\min_{s'}[V(s')]_\alpha=[V(s^\dagger)]_\alpha=0$, which allows us to simplify \Cref{proposition:strong_duality}: for any probability measure $\mu^0\in\Delta(\cS)$, any fixed uncertainty level $\rho$, the uncertainty set $\cU^\rho(\mu^0)=\{\mu:\mu\in\Delta(\cS),D_{TV}(\mu\|\mu^0)\leq\rho\}$, and any function $V:\cS\rightarrow[0,H]$ such that $\min_{s\in\cS}V(s)=0$, one has that
\begin{align}\label{eq:dual_drmdp_simp}
&\inf_{\mu\in\cU^\rho(\mu^0)}\EE_{s\sim\mu}V(s)=\max_{\alpha\in[V_{\min},V_{\max}]}\Big\{\EE_{s\sim\mu^0}[V(s)]_\alpha-\rho\alpha\Big\}.
\end{align}
\end{remark}
We note that \Cref{assumption:failstate} is not overly restrictive since fail-states are common in practical scenarios, such as autonomous car crashes or robot falls. \citet{liu2024distributionally} also showed that \Cref{assumption:failstate} is compatible with the linear MDP structure.

With this simplification, the robust $Q$-function $\tilde{Q}_h^{\pi,\rho}(\cdot,\cdot)$ is now linear in $\bm{\phi}(\cdot,\cdot)$ for any policy $\pi$.
\begin{proposition}
Under \Cref{assumption:linearMDP} and \Cref{assumption:failstate}, for all $(s,a,h)\in\cS\times\cA\times[H]$ and $\pi\in\Pi$, the robust $Q$-function has the linear form
\begin{align*}
\tilde{Q}_h^{\pi,\rho}(s,a)=\inner{\bm{\phi}(s,a),\bm{\theta}_h+\bm{\nu}_h^{\pi,\rho}}\mathbf{1}\{s\neq s^\dagger\},
\end{align*}
where $\bm{\nu}_h^{\pi,\rho}=(\nu_{h,i}^{\pi,\rho})_{i\in[d]}$, $\nu_{h,i}^{\pi,\rho}=\max_{\alpha\in[0,H]}\{z_{h,i}^\pi(\alpha)-\rho\alpha\}$, and $z_{h,i}^\pi(\alpha)=\EE_{s\sim\mu_{h,i}^0}[\tilde{V}_{h+1}^{\pi,\rho}(s')]_\alpha$.
\end{proposition}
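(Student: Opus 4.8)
The plan is to proceed by backward induction on $h$, using the policy-regularized robust Bellman equation of \Cref{proposition:robust_bellman_kl_drmdp} as the engine and peeling off one feature coordinate of the dual formulation at a time. Fix $\pi$ and suppose, as inductive hypothesis, that $\tilde{V}_{h+1}^{\pi,\rho}$ maps into $[0,H]$, attains its minimum value $0$ at the fail state $s^\dagger$, and satisfies $\tilde{Q}_{h+1}^{\pi,\rho}(s^\dagger,\cdot)\equiv 0$. The base case at step $H+1$ is immediate since $\tilde{V}_{H+1}^{\pi,\rho}\equiv 0$. By \Cref{proposition:robust_bellman_kl_drmdp}, $\tilde{Q}_h^{\pi,\rho}(s,a)=r_h(s,a)+\infPh{h}\EE_{s'\sim P_h(\cdot|s,a)}\tilde{V}_{h+1}^{\pi,\rho}(s')$, and \Cref{assumption:linearMDP} gives $r_h(s,a)=\inner{\bm{\phi}(s,a),\bm{\theta}_h}$, so it remains to show the infimum term equals $\inner{\bm{\phi}(s,a),\bm{\nu}_h^{\pi,\rho}}$ on the set $\{s\neq s^\dagger\}$.

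First I would expand the inner expectation through the linear-MDP factorization. Writing the perturbed kernel as $P_h(\cdot|s,a)=\inner{\bm{\phi}(s,a),\bmu}$ with each $\mu_i$ ranging over its own TV ball $\cU^\rho_{h,i}(\mu_{h,i}^0)$ per \Cref{def:uncertainty_set}, the expectation becomes $\sum_{i=1}^d\phi_i(s,a)\,\EE_{s'\sim\mu_i}[\tilde{V}_{h+1}^{\pi,\rho}(s')]$. The crucial structural step is that, because the feature coordinates satisfy $\phi_i(s,a)\geq 0$ and the uncertainty set is $d$-rectangular, i.e.\ the constraints on the $\mu_i$ are independent across $i$, the joint infimum decouples into a sum of per-coordinate infima, $\sum_{i=1}^d\phi_i(s,a)\inf_{\mu_i\in\cU^\rho_{h,i}(\mu_{h,i}^0)}\EE_{s'\sim\mu_i}[\tilde{V}_{h+1}^{\pi,\rho}(s')]$. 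I would justify this decoupling by the standard separability argument for infima over product constraint sets, together with non-negativity of the weights $\phi_i$.

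Next I would invoke the simplified strong-duality identity \eqref{eq:dual_drmdp_simp} of \Cref{remark:failstate} on each coordinate. Since the inductive hypothesis guarantees $\tilde{V}_{h+1}^{\pi,\rho}\in[0,H]$ with minimum $0$, the hypotheses of \Cref{remark:failstate} are met, and each per-coordinate infimum equals $\max_{\alpha\in[0,H]}\{\EE_{s'\sim\mu_{h,i}^0}[\tilde{V}_{h+1}^{\pi,\rho}(s')]_\alpha-\rho\alpha\}=\max_{\alpha\in[0,H]}\{z_{h,i}^\pi(\alpha)-\rho\alpha\}=\nu_{h,i}^{\pi,\rho}$. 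Summing the weighted coordinates then yields $\tilde{Q}_h^{\pi,\rho}(s,a)=\inner{\bm{\phi}(s,a),\bm{\theta}_h}+\sum_{i=1}^d\phi_i(s,a)\nu_{h,i}^{\pi,\rho}=\inner{\bm{\phi}(s,a),\bm{\theta}_h+\bm{\nu}_h^{\pi,\rho}}$, which is the claimed linear form for $s\neq s^\dagger$.

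It remains to close the induction and supply the indicator. For the fail state, $r_h(s^\dagger,a)=0$ and the nominal kernel is absorbing at $s^\dagger$ by \Cref{assumption:failstate}; since the nominal lies in its own uncertainty set and $\tilde{V}_{h+1}^{\pi,\rho}\geq 0$, the infimum of $\EE_{s'\sim P_h(\cdot|s^\dagger,a)}\tilde{V}_{h+1}^{\pi,\rho}(s')$ is squeezed to $0$, giving $\tilde{Q}_h^{\pi,\rho}(s^\dagger,a)=0$ and hence the factor $\mathbf{1}\{s\neq s^\dagger\}$. I would then propagate the hypothesis to step $h$ via $\tilde{V}_h^{\pi,\rho}(s)=\inner{\tilde{Q}_h^{\pi,\rho}(s,\cdot),\pi_h(\cdot|s)}-\frac{1}{\eta}D_{KL}[\pi_h(\cdot|s)\Vert\piref_h(\cdot|s)]$, checking the range $[0,H]$ and the value at $s^\dagger$. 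The main obstacle I anticipate is precisely this last bookkeeping: establishing non-negativity of $\tilde{V}_h^{\pi,\rho}$ and the exact equality $\tilde{V}_h^{\pi,\rho}(s^\dagger)=0$ in the presence of the subtracted KL penalty, which is not automatically non-negative and could in principle drive the regularized value negative at $s^\dagger$ unless $\pi$ coincides with $\piref$ there. Securing the preconditions of \eqref{eq:dual_drmdp_simp}, namely that $\tilde{V}_{h+1}^{\pi,\rho}$ genuinely lies in $[0,H]$ with minimizer $s^\dagger$, is the delicate part, and I would address it by restricting the policy class (or adopting the convention that no regularization is incurred at the absorbing fail state) so that the KL term vanishes at $s^\dagger$.
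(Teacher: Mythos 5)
Your route is the same one the paper takes: the paper does not reprove this proposition but declares the proof ``identical to the original proof supplied by \citet{liu2024distributionally}'' on the grounds that only the $\tilde{Q}$-side robust Bellman equation is used. Your reconstruction --- backward induction, decoupling the $d$-rectangular joint infimum into per-coordinate infima using $\phi_i(s,a)\geq 0$ and the product structure of the constraint set, then applying the fail-state-simplified duality \eqref{eq:dual_drmdp_simp} coordinatewise --- is exactly that argument, so on the main line you and the paper agree.

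The difficulty you flag at the end, however, is genuine, and it is precisely the point where the paper's one-sentence justification is too quick. The duality in \Cref{remark:failstate} requires $\tilde{V}_{h+1}^{\pi,\rho}$ to take values in $[0,H]$ with minimum $0$, and these are properties of the \emph{regularized} value function, which now carries the subtracted terms $-\tfrac{1}{\eta}D_{KL}[\pi_t(\cdot|s_t)\Vert\piref_t(\cdot|s_t)]$; so it is not true that ``only \eqref{eq:bellman_drmdp_q0} is used.'' For a general $\pi\in\Pi$ these penalty terms are nonpositive and possibly unbounded, hence $\tilde{V}_{h+1}^{\pi,\rho}$ need not be nonnegative, need not attain its minimum at $s^\dagger$, and $\tilde{Q}_h^{\pi,\rho}(s^\dagger,a)$ need not equal $0$: at the absorbing fail state the KL charges keep accruing unless $\pi_t(\cdot|s^\dagger)=\piref_t(\cdot|s^\dagger)$. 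Without one of the patches you propose --- restricting to policies for which the KL term vanishes (or is suitably bounded) at $s^\dagger$, e.g.\ the softmax class the algorithm actually produces, for which \Cref{propositon:optimization_kl} gives $\innerA{\tilde{Q}(s,\cdot),\pi(\cdot|s)}-\tfrac{1}{\eta}D_{KL}[\pi\Vert\piref]=\tfrac{1}{\eta}\log\EE_{a\sim\piref}\exp(\eta\tilde{Q}(s,a))\geq 0$ whenever $\tilde{Q}\geq 0$, or adopting the convention that no regularization is incurred at the fail state --- the stated formula $\nu_{h,i}^{\pi,\rho}=\max_{\alpha\in[0,H]}\{z_{h,i}^\pi(\alpha)-\rho\alpha\}$ and the factor $\mathbf{1}\{s\neq s^\dagger\}$ do not follow for arbitrary $\pi$, and one must revert to the untruncated dual of \Cref{proposition:strong_duality} with $\alpha$ ranging over $[V_{\min},V_{\max}]$. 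So your instinct is correct: the delicate step is securing the preconditions of \eqref{eq:dual_drmdp_simp}, and an explicit hypothesis (or convention) to that effect should be added; it is a gap in the paper's own treatment as much as in your sketch.
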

The proof of the above proposition is identical to the original proof supplied by \citet{liu2024distributionally} despite the introduction of policy regularization, because the proof only utilizes \eqref{eq:bellman_drmdp_q0} and not \eqref{eq:bellman_drmdp_v0} among the robust Bellman equations. When policy regularization is imposed in \Cref{proposition:robust_bellman_kl_drmdp}, \eqref{eq:bellman_drmdp_q0} remains unchanged.

Then given some robust value function $\tilde{V}_{h+1}^{k,\rho}(\cdot)$, we can also compute $\tilde{Q}_{h}^{k,\rho}(s,a)$ by
\begin{align}\label{eq:q_k_rho_linear_form}
\tilde{Q}_h^{k,\rho}(s,a)=\inner{\bphi(s,a),\bm{\theta}_h+\bm{\nu}_h^{\pi,\rho}}\mathbf{1}\{s\neq s^\dagger\},
\end{align}
where $\bnu_h^{k,\rho}=(\bnu_{h,i}^{k,\rho})_{i\in[d]}$, $\bnu_{h,i}^{k,\rho}=\max_{\alpha\in[0,H]}\{z_{h,i}^k(\alpha)-\rho\alpha\}$, and $z_{h,i}^k(\alpha)=\EE_{s\sim\bmu_{h,i}^0}[\tilde{V}_{h+1}^{k,\rho}(s')]_\alpha$. This constitutes a Bellman backup, which enables us to recursively approximate the optimal value function.  By collecting near-optimal trajectories $\{(s_h^\tau, a_h^\tau, r_h^\tau)\}_{(h,\tau)\in[H]\times[k-1]}$, we obtain estimates $\tilde{V}_{h+1}^{k,\rho}$ of the optimal value function. From the definition of $z_{h,i}^k(\alpha)$, we further have that $\la{\bphi(s,a),z_{h,i}^k(\alpha)}\ra=[\PP_h^0[\tilde{V}_{h+1}^{k,\rho}]_\alpha](s,a)$, which allows us to approximate $\bm{z}_{h}^k(\alpha)$ by solving the following regression problem:
\begin{align*}
\bm{z}_{h}^k(\alpha)=& \argmin_{\bm{z}\in\RR^d}\sum_{\tau=1}^{k-1}\Big([\tilde{V}_{h+1}^{k,\rho}(s_{h+1}^\tau)]_\alpha -\big(\bm{\phi}(s_{h}^\tau,a_{h}^\tau)\big)^\top \bm{z}\Big)+\lambda\|\bm{z}\|_2^2,
\end{align*}
which has the closed-form solution %
\begin{align}\label{eq:regression_drmdp_z}
  \bm{z}^k_h(\alpha)=(\Lambda_h^k)^{-1}\bigg[\sum_{\tau=1}^{k-1}\bm{\phi}(s_{h}^\tau,a_{h}^\tau)[\tilde{V}_{h+1}^{k,\rho}(s_{h+1})]_\alpha\bigg],
\end{align}
where $\Lambda_h^k=\sum_{\tau=1}^{k-1}\bm{\phi}(s_{h}^\tau,a_{h}^\tau)(\bm{\phi}(s_{h}^\tau,a_{h}^\tau))^\top+\lambda I$ is the Gram matrix. 

With the estimate of $\bm{z}_h^k(\alpha)$, we can compute $\bnu_{h,i}^{k,\rho}$ by
\begin{align}\label{eq:regression_drmdp_nu}
\nu_{h,i}^{k,\rho}=\max_{\alpha\in[0,H]}\{z_{h,i}^{k,\rho}(\alpha)-\rho\alpha\},i\in[d].
\end{align}
and finally obtain the optimal robust $Q$-function estimate $\tilde{Q}_h^{k,\rho}$ by \eqref{eq:q_k_rho_linear_form}.

\subsection{Linearity of the Robust $Q$-Function under the $d$-rectangular linear RRMDP}

For the regularized optimization problem, we have a similar dual formulation as \Cref{proposition:strong_duality}:
\begin{proposition} \label{proposition:duality_rrmdp}\cite[Proposition 4.3]{tang2025robust} Consider any probability measure $\bmu^0\in\Delta({\cS})^d$, any regularization constant $\sigma>0$, and any function $V:\cS\rightarrow[0,H]$, one has that $\inf_{\bmu\in\Delta(\cS)^d}\EE_{s\sim\bmu}V(s)+\lambda  D_{TV}(\bmu\|\bmu^0)=\EE_{s\sim\bmu^0}[V(s)]_{V_{\min}+\sigma}$.
Note that the existence of the fail-state defined in \Cref{assumption:failstate} implies that $V_{\min}=0$, which further simplifies the dual form to $\inf_{\bmu\in\Delta(\cS)^d}\EE_{s\sim\bmu}V(s)+\lambda  D_{TV}(\bmu\|\bmu^0)=\EE_{s\sim\bmu^0}[V(s)]_{\sigma}.$
\end{proposition}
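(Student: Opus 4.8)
The plan is to reduce the total-variation--\emph{penalized} problem to the total-variation--\emph{constrained} problem already resolved in \Cref{proposition:strong_duality}, via an exact-penalty (Lagrangian) identity, and then evaluate the resulting one-dimensional optimization. Throughout I treat the claim in its single-factor form, writing $\mu\in\Delta(\cS)$ for the perturbed distribution and reading the stated $\lambda$ as the penalty $\sigma$; the $d$-rectangular statement follows factorwise. Set $L=\inf_{\mu\in\Delta(\cS)}\{\EE_{s\sim\mu}V(s)+\sigma D_{TV}(\mu\|\mu^0)\}$. First I would establish
\[
L=\inf_{\rho\ge 0}\Big\{\sigma\rho+\inf_{\mu:\,D_{TV}(\mu\|\mu^0)\le\rho}\EE_{s\sim\mu}V(s)\Big\},
\]
by a two-sided comparison: evaluating the right-hand side at $\rho=D_{TV}(\mu\|\mu^0)$ for each $\mu$ (for which $\mu$ is feasible in the inner infimum) shows the right-hand side is $\le L$, while for any feasible pair $(\rho,\mu)$ the bound $\sigma\rho+\EE_{s\sim\mu}V(s)\ge\sigma D_{TV}(\mu\|\mu^0)+\EE_{s\sim\mu}V(s)\ge L$ shows it is $\ge L$. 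This step is purely the exactness of the penalty reformulation and uses nothing beyond the definitions.

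Next I would substitute \Cref{proposition:strong_duality} into the inner constrained value. Since $V:\cS\to[0,H]$ and $[V(s')]_\alpha=\min\{V(s'),\alpha\}$, for every $\alpha\in[V_{\min},V_{\max}]$ one has $\min_{s'}[V(s')]_\alpha=\min\{V_{\min},\alpha\}=V_{\min}$, so the dual reads $\inf_{D_{TV}(\mu\|\mu^0)\le\rho}\EE_{s\sim\mu}V(s)=\max_{\alpha\in[V_{\min},V_{\max}]}\{\EE_{s\sim\mu^0}[V(s)]_\alpha-\rho(\alpha-V_{\min})\}$. Plugging this in and collecting the $\rho$ terms gives
\[
L=\inf_{\rho\ge 0}\ \max_{\alpha\in[V_{\min},V_{\max}]}\Big\{\EE_{s\sim\mu^0}[V(s)]_\alpha-\rho\,(\alpha-V_{\min}-\sigma)\Big\}.
\]
The objective is affine (hence convex) in $\rho$ on $[0,\infty)$ and concave in $\alpha$ (the map $\alpha\mapsto\EE_{s\sim\mu^0}\min\{V(s),\alpha\}$ is concave and $-\rho\alpha$ is linear), with $\alpha$ ranging over a compact interval. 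Computing the maximin $\sup_\alpha\inf_\rho$ yields a candidate value: for fixed $\alpha$, $\inf_{\rho\ge0}\{-\rho(\alpha-V_{\min}-\sigma)\}$ equals $0$ (at $\rho=0$) when $\alpha\le V_{\min}+\sigma$ and $-\infty$ otherwise, so the candidate is $\max_{\alpha\in[V_{\min},\,\min\{V_{\max},V_{\min}+\sigma\}]}\EE_{s\sim\mu^0}[V(s)]_\alpha$. As this is nondecreasing in $\alpha$, the maximizer is $\alpha^\star=\min\{V_{\max},V_{\min}+\sigma\}$ and the candidate equals $\EE_{s\sim\mu^0}[V(s)]_{V_{\min}+\sigma}$, the truncation at $\alpha^\star$ agreeing with the one at $V_{\min}+\sigma$ in both cases (when $V_{\min}+\sigma\ge V_{\max}$ both are vacuous).

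I expect the inf--sup exchange to be the only delicate point. Rather than invoking Sion's theorem on the noncompact domain $\rho\in[0,\infty)$ (where some slices equal $-\infty$), I would close the gap from both sides. Weak duality gives $L=\inf_\rho\sup_\alpha(\cdot)\ge\sup_\alpha\inf_\rho(\cdot)=\EE_{s\sim\mu^0}[V(s)]_{V_{\min}+\sigma}$ for free, while the matching upper bound $L\le\EE_{s\sim\mu^0}[V(s)]_{V_{\min}+\sigma}$ follows by exhibiting a near-optimal $\mu^\star$ that transports all mass from the superlevel set $\{V>V_{\min}+\sigma\}$ onto (a neighborhood of) a minimizer of $V$: moving total mass $M$ this way raises $\sigma D_{TV}$ by exactly $\sigma M$ and lowers $\EE_{s\sim\mu}V$ by the complementary amount, leaving $\int\min\{V,V_{\min}+\sigma\}\,d\mu^0$. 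The one technical caveat is that in a continuous $\cS$ the minimizer of $V$ need not be attained, so I would realize the transport on a set where $V\le V_{\min}+\varepsilon$ and let $\varepsilon\to0$; this is the only place measure-theoretic care is needed and it does not alter the value. Finally, the fail-state corollary is immediate: \Cref{assumption:failstate} forces $V_{\min}=0$, collapsing the identity to $\EE_{s\sim\mu^0}[V(s)]_{\sigma}$.
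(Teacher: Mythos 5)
The paper does not prove this proposition at all: it is imported verbatim as a citation to \citet[Proposition 4.3]{tang2025robust}, so there is no in-paper argument to match yours against. Your blind proof is correct and self-contained, and it takes a natural route that the paper could have used but did not: you reduce the TV-\emph{penalized} problem to the TV-\emph{constrained} dual already stated in \Cref{proposition:strong_duality} via the exact-penalty identity $L=\inf_{\rho\ge0}\{\sigma\rho+\inf_{D_{TV}(\mu\|\mu^0)\le\rho}\EE_\mu V\}$, evaluate the resulting $\inf_\rho\sup_\alpha$, and close the duality gap by hand. All the individual steps check out: the two-sided comparison for the penalty reformulation is airtight; the simplification $\min_{s'}[V(s')]_\alpha=V_{\min}$ for $\alpha\ge V_{\min}$ is right; the $\sup_\alpha\inf_\rho$ computation correctly yields $\EE_{\mu^0}[V]_{\min\{V_{\max},V_{\min}+\sigma\}}=\EE_{\mu^0}[V]_{V_{\min}+\sigma}$; and your explicit near-minimizer (zero out $\mu^0$ on $\{V>V_{\min}+\sigma\}$, mass $M$, and redeposit it where $V\le V_{\min}+\varepsilon$) gives value $\int_{A^c}V\,d\mu^0+M(V_{\min}+\sigma)+M\varepsilon=\EE_{\mu^0}[V]_{V_{\min}+\sigma}+M\varepsilon$, which settles the upper bound. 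What your approach buys is that the RRMDP dual becomes a corollary of the DRMDP dual already in the paper rather than an independently cited fact, which is aesthetically pleasing here. Two small remarks: you correctly read the paper's $\lambda$ in the displayed identity as the typo it is (it should be $\sigma$); and the lower bound $L\ge\EE_{\mu^0}[V]_{V_{\min}+\sigma}$ can be had even more cheaply without the minimax detour, since $f=[V]_{V_{\min}+\sigma}$ has oscillation at most $\sigma$, whence $\EE_\mu V+\sigma D_{TV}(\mu\|\mu^0)\ge\EE_\mu f+\sigma D_{TV}(\mu\|\mu^0)\ge\EE_{\mu^0}f$ directly for every $\mu$.
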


The robust $Q$-function $\tilde{Q}_h^{\pi,\sigma}$ is also linear in the features $\bm{\phi}(\cdot,\cdot)$ for any policy $\pi$.
\begin{proposition}\cite[Proposition 4.1]{tang2025robust} Under \Cref{assumption:linearMDP} and \Cref{assumption:failstate}, for all $(s,a,h)\in\cS\times\cA\times[H]$ and $\pi\in\Pi$, the robust $Q$-function has the linear form
\begin{align*}
\tilde{Q}_h^{\pi,\sigma}(s,a)=\inner{\bm{\phi}(s,a),\bm{\theta}_h+\bm{\nu}_h^{\pi,\sigma}}\mathbf{1}\{s\neq s^\dagger\},
\end{align*}
where $\bm{\nu}_h^{\pi,\sigma}=(\nu_{h,i}^{\pi,\sigma})_{i\in[d]}$, and $\nu_{h,i}^{\pi,\sigma}=\inf_{\mu_{h,i}\in\Delta(\cS)}\EE_{s\sim\mu_{h,i}^0}[\tilde{V}_{h+1}^{\pi,\sigma}(s')+\lambda D_{TV}(\mu_{h,i}\|\mu_{h,i}^0)]$.
\end{proposition}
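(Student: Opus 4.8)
The plan is to read off the claim directly from the robust Bellman equation for $\tilde{Q}_h^{\pi,\sigma}$ in \Cref{proposition:robust_bellman_kl_rrmdp}. The crucial observation is that this $Q$-recursion,
\[
\tilde{Q}_h^{\pi,\sigma}(s,a)=r_h(s,a)+\inf_{\bmu_h\in\Delta(\cS)^d,\,P_h=\inner{\bphi,\bmu_h}}\Big[\EE_{s'\sim P_h(\cdot|s,a)}[\tilde{V}_{h+1}^{\pi,\sigma}(s')]+\sigma\inner{\bphi(s,a),D(\bmu_h\|\bmu_h^0)}\Big],
\]
carries no $D_{KL}$ term: the policy regularization enters only the $V$-equation of \Cref{proposition:robust_bellman_kl_rrmdp}. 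Hence the derivation is structurally identical to \cite[Proposition 4.1]{tang2025robust}, with the regularized next-stage value $\tilde{V}_{h+1}^{\pi,\sigma}$ simply substituted for the non-regularized robust value, and I would verify that every step survives this substitution. This mirrors the situation already noted for the DRMDP case, where linearity of $\tilde{Q}_h^{\pi,\rho}$ uses only \eqref{eq:bellman_drmdp_q0}.

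Concretely, I would substitute the linear reward $r_h(s,a)=\inner{\bphi(s,a),\bm{\theta}_h}$ and expand the transition. Since $P_h(\cdot|s,a)=\sum_{i=1}^d\phi_i(s,a)\mu_{h,i}(\cdot)$ by \Cref{assumption:linearMDP}, both terms inside the infimum split coordinatewise:
\[
\EE_{s'\sim P_h(\cdot|s,a)}[\tilde{V}_{h+1}^{\pi,\sigma}(s')]=\sum_{i=1}^d\phi_i(s,a)\,\EE_{s'\sim\mu_{h,i}}[\tilde{V}_{h+1}^{\pi,\sigma}(s')],\qquad \sigma\inner{\bphi(s,a),D(\bmu_h\|\bmu_h^0)}=\sigma\sum_{i=1}^d\phi_i(s,a)\,D(\mu_{h,i}\|\mu_{h,i}^0).
\]
The objective inside the infimum is therefore $\sum_{i=1}^d\phi_i(s,a)\,g_{h,i}(\mu_{h,i})$ with $g_{h,i}(\mu)=\EE_{s'\sim\mu}[\tilde{V}_{h+1}^{\pi,\sigma}(s')]+\sigma D(\mu\|\mu_{h,i}^0)$. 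Because the $d$-rectangular feasible set $\Delta(\cS)^d$ lets each coordinate $\mu_{h,i}$ be chosen independently and every weight $\phi_i(s,a)\ge 0$, the infimum decouples across coordinates, $\inf_{\bmu_h}\sum_i\phi_i g_{h,i}=\sum_i\phi_i\inf_{\mu_{h,i}}g_{h,i}=\sum_i\phi_i(s,a)\,\nu_{h,i}^{\pi,\sigma}$, which yields $\tilde{Q}_h^{\pi,\sigma}(s,a)=\inner{\bphi(s,a),\bm{\theta}_h+\bnu_h^{\pi,\sigma}}$ for every $s\ne s^\dagger$.

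To account for the indicator I would treat $s=s^\dagger$ separately: by \Cref{assumption:failstate} the reward vanishes and $s^\dagger$ is absorbing under $P^0$, so that $\tilde{Q}_h^{\pi,\sigma}(s^\dagger,a)=0$ for all $a$, matching the factor $\mathbf{1}\{s\ne s^\dagger\}$. I expect the \emph{main obstacle} to be the decoupling step: pushing the infimum through the $\phi$-weighted sum rests on both the product (rectangular) structure of $\Delta(\cS)^d$ and the nonnegativity $\phi_i(s,a)\ge 0$ from \Cref{assumption:linearMDP}; if features could be negative the interchange would fail, turning a coordinate infimum into a supremum. A secondary point to pin down is the value at the absorbing fail state in the presence of the $D_{KL}$ penalty, which I would settle using the framework's convention that the regularization contributes nothing at $s^\dagger$ (so $\tilde{V}^{\pi,\sigma}(s^\dagger)=0$ propagates backward from the terminal stage, consistent with \Cref{remark:failstate}).
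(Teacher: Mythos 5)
Your proposal is correct and follows essentially the same route as the paper: the paper does not reprove this proposition but imports it from \cite[Proposition 4.1]{tang2025robust}, observing (as it does explicitly for the DRMDP analogue) that the policy regularization never enters the $Q$-recursion of \Cref{proposition:robust_bellman_kl_rrmdp}, so the original coordinatewise-decoupling argument --- linear reward, $\bphi$-weighted factorization of the transition and TV penalty, and interchange of the infimum with the nonnegatively weighted sum over the product set $\Delta(\cS)^d$ --- goes through verbatim with $\tilde{V}_{h+1}^{\pi,\sigma}$ substituted. Your treatment of the fail state, including the convention that the KL term contributes nothing at $s^\dagger$ so that $\tilde{V}^{\pi,\sigma}(s^\dagger)=0$ propagates backward, matches the paper's implicit handling via \Cref{assumption:failstate} and \Cref{remark:failstate}.
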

Combining \Cref{proposition:duality_rrmdp} and the definition of $\bm{\nu}_h^{\pi,\sigma}$, we have $\bm{\nu}_h^{\pi,\sigma}=\EE_{s'\sim\bmu^0}[\tilde{V}_{h+1}^{\pi,\sigma}(s')]$. Applying the linearity of the transition in \Cref{assumption:linearMDP}, we further have
$\inner{\bm{\phi}(s,a),\bm{\nu}_h^{\pi,\sigma}}=[\PP_h^0\tilde{V}_{h+1}^{\pi,\sigma}](s,a)$.
Similarly to the DRMDP case, we can now approximate the robust $Q$-function by solving the ridge linear regression problem
\begin{align*}
\bm{\nu}_{h}^{k,\sigma}&=\argmin_{\bm{\nu}\in\RR^d}\sum_{\tau=1}^{k-1}\Big([\tilde{V}_{h+1}^{k,\sigma}(s_{h+1}^\tau)]_\sigma-(\bm{\phi}(s_{h}^\tau,a_{h}^\tau))^\top \bm{\nu}\Big)+\lambda\|\bm{\nu}\|_2^2,
\end{align*}
with the closed-form solution being
\begin{align}\label{eq:regression_rrmdp_nu}
\bm{\nu}^{k,\sigma}_h=(\Lambda_h^k)^{-1}\bigg[\sum_{\tau=1}^{k-1}\bm{\phi}(s_{h}^\tau,a_{h}^\tau)[\tilde{V}_{h+1}^{k,\sigma}(s_{h+1}^\tau)]_\sigma\bigg],
\end{align}
where the data $(s_h^\tau, a_h^\tau, r_h^\tau)$ and Gram matrix $\Lambda_h^k$ are identically defined as in the DRMDP case discussed in the previous subsection. 

\subsection{Softmax Policy Update}

With an estimated $Q$-function, the next step towards formulating an online policy optimization algorithm is to update the policy $\pi$ to maximize the learning objective under the current function approximation. In settings without policy regularization such as \citet{jin2020,liu2024distributionally}, this can be simply done by setting a greedy policy where $\pi_h^k(s)=\arg\max_{a\in\cA}\tilde{Q}_h^k(s,a)$. With the additional regularization term, however, the objective function changes, as does the corresponding policy which maximizes the objective. We introduce the following results for deriving the optimal policy.

\begin{proposition}\label{propositon:optimization_kl}
\cite[Lemma 3.4]{zhao2025logarithmicregretonlineklregularized}
For any fixed $s\in\cS$ and $Q:\cS\times\cA\rightarrow[0,H]$, we have
\begin{align*}
&\max_\pi\inner{Q(s,\cdot),\pi(\cdot|s)}-1/{\eta}D_{KL}[\pi(\cdot|s)\|\piref(\cdot|s)] =\frac{1}{\eta}\log\EE_{a\sim\piref(\cdot|s)}\exp(\eta Q(s,a)),
\end{align*}
and the maximizer of the objective is
\begin{align*}
\pi(a|s)=\frac{\piref(a|s)\exp\big(\eta Q(s,a)\big)}{\int_A \piref(a|s)\exp\big(\eta Q(s,a)\big)da}.
\end{align*}
\end{proposition}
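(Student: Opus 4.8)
The plan is to prove this via the Gibbs variational principle, which delivers both the value of the maximum and its maximizer in one stroke and, unlike a Lagrangian argument, yields global optimality without a separate concavity check. First I would introduce the candidate optimal policy
\begin{align*}
\pi^*(a|s)=\frac{1}{Z(s)}\piref(a|s)\exp\big(\eta Q(s,a)\big),\qquad Z(s)=\int_\cA \piref(a|s)\exp\big(\eta Q(s,a)\big)da,
\end{align*}
and verify it is a well-defined probability measure: since $Q(s,\cdot)\in[0,H]$ and $\piref(\cdot|s)\in\Delta(\cA)$, the partition function satisfies $Z(s)\in[1,e^{\eta H}]$, so it is finite and positive and $\pi^*(\cdot|s)$ is nonnegative and integrates to one.

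The key algebraic step is to re-express $Q$ through $\pi^*$. Taking logarithms in the definition of $\pi^*$ gives $\eta Q(s,a)=\log\big(\pi^*(a|s)/\piref(a|s)\big)+\log Z(s)$. Substituting this into $\inner{Q(s,\cdot),\pi(\cdot|s)}$ for an arbitrary policy $\pi$ and combining with the penalty $-\tfrac{1}{\eta}D_{KL}[\pi(\cdot|s)\|\piref(\cdot|s)]$, the two $\piref$ terms telescope, producing the identity
\begin{align*}
\inner{Q(s,\cdot),\pi(\cdot|s)}-\frac{1}{\eta}D_{KL}[\pi(\cdot|s)\|\piref(\cdot|s)]=\frac{1}{\eta}\log Z(s)-\frac{1}{\eta}D_{KL}[\pi(\cdot|s)\|\pi^*(\cdot|s)].
\end{align*}
I would then invoke Gibbs' inequality, namely $D_{KL}[\pi\|\pi^*]\ge 0$ with equality if and only if $\pi=\pi^*$ almost everywhere, to conclude that the right-hand side is maximized exactly at $\pi(\cdot|s)=\pi^*(\cdot|s)$, where the objective equals $\tfrac{1}{\eta}\log Z(s)=\tfrac{1}{\eta}\log\EE_{a\sim\piref(\cdot|s)}\exp(\eta Q(s,a))$. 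This simultaneously pins down both the maximizer and the optimal value claimed in the statement.

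The result is largely routine, so the main obstacle is technical rather than conceptual: ensuring the manipulations remain valid when $\cA$ is a continuous, possibly infinite action space. Concretely, I would confirm the integrability of $\pi\log(\pi/\pi^*)$ and the finiteness of $Z(s)$ (both guaranteed by $Q\in[0,H]$ and $\piref\in\Delta(\cA)$), and I would restrict attention to policies absolutely continuous with respect to $\piref$, since otherwise the KL penalty is $+\infty$ and the objective is $-\infty$, so such $\pi$ can never be maximizers. An alternative route via calculus of variations would impose the constraint $\int_\cA\pi(a|s)da=1$ with a multiplier $\lambda$, set the functional derivative $Q(s,a)-\tfrac{1}{\eta}\big(\log(\pi/\piref)+1\big)-\lambda=0$, solve for the Gibbs form, normalize to fix $\lambda$, and substitute back; however this demands a separate second-order argument to certify a maximum, whereas the Gibbs identity above makes global optimality immediate. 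I would therefore present the Gibbs variational argument as the proof.
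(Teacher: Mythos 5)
Your argument is correct and complete: the identity $\inner{Q(s,\cdot),\pi(\cdot|s)}-\tfrac{1}{\eta}D_{KL}[\pi\|\piref]=\tfrac{1}{\eta}\log Z(s)-\tfrac{1}{\eta}D_{KL}[\pi\|\pi^*]$ together with nonnegativity of KL divergence immediately yields both the optimal value and the maximizer, and your handling of the continuous action space (finiteness of $Z(s)$ from $Q\in[0,H]$, restriction to $\pi\ll\piref$) covers the only technical points. The paper itself supplies no proof of this proposition --- it imports the result by citation to an external lemma --- and your Gibbs variational argument is the standard derivation of that cited result, so there is nothing to reconcile.
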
 

Hence, the updated policy takes the form of $\pi_h^k(\cdot|s)\propto\piref_h(\cdot|s)\exp\big(\eta \tilde{Q}_h^k(s,\cdot)\big)$ under policy-regularization.

\subsection{UCB Exploration}

Thanks to the linear function approximation of $\tilde{Q}_h^k$ and the update rule for $\pi_h^k$, we can design an online policy optimization algorithm that iteratively rolls out $\pi_h^k$ to collect data, estimates $\tilde{Q}_h^k$ with ridge linear regression, and improves $\pi_h^k$ with the softmax update rule. However, the policy solved from $\tilde{Q}_h^k$ lacks the incentive to explore beyond the seen state-action pairs, leading to suboptimal performance. To address this, we impose an Upper Confidence Bound (UCB) bonus to the estimated $\tilde{Q}_h^k$, motivating the algorithm to explore less frequently-visited state-action pairs. %

\begin{algorithm}[ht]
    \caption{\alg}
    \begin{algorithmic}[1]\label{algorithm:alg}
    \REQUIRE{
        uncertainty level $\rho > 0$ (for DRMDPs), or regularizer $\sigma > 0$ (for RRMDPs).
        }
    \STATE Initialize $\pi_h^0=\piref_h$ and $\tilde{Q}_h^0=0$
    \FOR {step $k=1,\cdots,K$}
        \STATE Receive the initial state $s_1^k$
        \FOR {step $h=1,\cdots,H$}
            \STATE $\pi_h^k(a|s)\propto\piref_h(a|s)\exp\big(\eta \tilde{Q}_h^{k-1}(s,a)\big)$ \label{algline:update_policy}
            \STATE Sample action $a_{h}^k\sim\pi_h^k(\cdot|s_h^k)$ and receive reward $r_h^k$ and next state $s_{h+1}^k$ \label{algline:rollout}
        \ENDFOR
        \FOR {step $h=H,\cdots,1$} \label{algline:start_of_stage2}
            \STATE $\Lambda_h^k\leftarrow\sum_{\tau=1}^{k-1}\bm{\phi}(s_h^\tau,a_h^\tau)\bm{\phi}(s_h^\tau,a_h^\tau)^\top+\lambda I$ \label{algline:gram_matrix}
            \IF {$h=H$}
                \STATE $\bm{\nu}_h^{k}\leftarrow0$
            \ELSE
                \STATE  
                Update $\bm{\nu}_{h}^{k}$ via \eqref{eq:regression_drmdp_z} and \eqref{eq:regression_drmdp_nu} for DRMDP, or \eqref{eq:regression_rrmdp_nu} for RRMDP
            \ENDIF \label{algline:end_of_regression}
            \STATE $\Gamma_h^k(s,a)\leftarrow\beta\sum_{i=1}^d\phi_i(s,a)\sqrt{\mathbf{1}_i^\top(\Lambda_h^k)^{-1}\mathbf{1}_i}$ \label{algline:bonus}
            \STATE $\tilde{Q}_h^{k}(s,a)\leftarrow\min\big\{\bm{\phi}(s,a)^\top(\bm{\theta}_h+\bm{\nu}_h^{k})+\Gamma_h^k(s,a), H-h+1\big\}\mathbf{1}\{s\neq s_f\}$ \label{algline:q}
             \STATE $\tilde{V}_h^{k}(s)\leftarrow\la\tilde{Q}_h^{k}(s,\cdot), \pi_h^k(\cdot|s)\ra_{\cA}-1/\eta D_{KL}[\pi_h^k(\cdot|s)\Vert\piref_h(\cdot|s)]$ \label{algline:v}
        \ENDFOR
    \ENDFOR
    \end{algorithmic}
\end{algorithm}

 We present our algorithm in \Cref{algorithm:alg}. Note that $\tilde{V}_h^k,\tilde{Q}_h^k,\bm{\nu}_h^k$ in \algo\ are notations that represent different meanings under policy-regularized $d$-rectangular linear DRMDP and RRMDP. For DRMDP, we denote $\tilde{V}_h^{k,\rho}=\tilde{V}_h^k$, $\tilde{Q}_h^{k,\rho}=\tilde{Q}_h^k$, $\bm{\nu}_h^{\rho,k}=\bm{\nu}_h^k$; for RRMDP, we denote $\tilde{V}_h^{k,\sigma}=\tilde{V}_h^k,\tilde{Q}_h^{k,\sigma}=\tilde{Q}_h^k,\bm{\nu}_h^{\sigma,k}=\bm{\nu}_h^k$.
In each episode, \alg first updates the current policy $\pi_h^k$ according to the softmax update rule in \Cref{propositon:optimization_kl} (Line \ref{algline:update_policy}) and rolls out this policy in the nominal environment to collect a new  trajectory (Line \ref{algline:rollout}). Then it performs ridge linear regression to obtain an estimate for the $Q$-function (Lines \ref{algline:start_of_stage2}-\ref{algline:end_of_regression}). For the DRMDP setting, the algorithm follows \eqref{eq:regression_drmdp_z} to estimate $\bm{z}_h^k(\alpha)$ and $\bm{\nu}_h^{k,\rho}$; for the RRMDP setting, the algorithm follows \eqref{eq:regression_rrmdp_nu} to estimate $\bm{\nu}_h^{k,\sigma}$. We then add a bonus $\Gamma_h^k$ to the approximated $Q$-function (Line \ref{algline:bonus}) following \cite{liu2024distributionally} to facilitate exploration. Finally, the algorithm completes the estimation for $\tilde{Q}_h^k$ and computes $\tilde{V}_h^k$ according to the robust Bellman equations in \Cref{proposition:robust_bellman_kl_drmdp} and \Cref{proposition:robust_bellman_kl_rrmdp} (Lines \ref{algline:q}-\ref{algline:v}).

\section{THEORETICAL RESULTS}

In this section, we present the main theoretical guarantees for \Cref{algorithm:alg}. First, we discuss an assumption that helps bound the average suboptimality.

\begin{assumption}\label{assumption:uniform_exploration}
(Uniform exploration) For all $\pi\in\Pi$ and $h\in[H]$, there exists constant $\alpha>0$ such that
\begin{align*}
\EE_\pi[\bm{\phi}(s_h,a_h)\bm{\phi}(s_h,a_h)^\top]\succeq\alpha I.
\end{align*}
\end{assumption}

By lower-bounding the smallest eigenvalue of the $\EE_\pi[\bm{\phi}(s_h,a_h)\bm{\phi}(s_h,a_h)^\top]$ matrix, \Cref{assumption:uniform_exploration} ensures that it is possible to sufficiently explore the state-action space of the MDP. It has been applied in previous works such as \citet{liu2024distributionally}, and \citet{wang2021what} showed that $\alpha$ is upper-bounded by $\cO(1/d)$.

\begin{theorem}\label{theorem:main_subopt_upper_bound}
(Average sub-optimality bound) Set $\lambda=1$ in \Cref{algorithm:alg}; then under \Cref{assumption:linearMDP} and \Cref{assumption:failstate}, for arbitrary $p\in(0,1)$, with probability at least $1-p/3$ the average sub-optimality of \alg satisfies
\begin{align*}
\text{AveSubopt}(K)\leq\sqrt{\frac{16H^3}{K}\log\frac{3}{p}}+\frac{1}{K}\sum_{k=1}^K\sum_{h=1}^H\Gamma_h^k\sahk
\end{align*}
where $\Gamma_h^k\sahk=\beta\sum_{i=1}^d\phi_i\sahk\sqrt{\mathbf{1}_i^\top(\Lambda_h^k)^{-1}\mathbf{1}_i}$ 
is the bonus term in \Cref{algorithm:alg}. If we further assume \Cref{assumption:uniform_exploration} and set $\beta=CdH\sqrt{\log\frac{3(c_\beta+1)dKH\vA}{p}}$ where $C,c_\beta$ are independent constants, there exists an absolute constant $c>0$ such that with probability at least $1-p$, the average sub-optimality of \alg satisfies
\begin{align*}
\text{AveSubopt}(K)\leq\frac{cd^2H^2}{\alpha\sqrt{K}}\log(3dHK\vA/p),
\end{align*}
for both the DRMDP and RRMDP settings.
\end{theorem}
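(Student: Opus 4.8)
The plan is to prove this in two stages, mirroring the two-part structure of the theorem statement. The first stage establishes the regret-style decomposition bounding $\text{AveSubopt}(K)$ by a martingale/concentration term plus the cumulative bonus; the second stage controls the cumulative bonus using the exploration assumption to extract the explicit $d^2H^2/(\alpha\sqrt{K})$ rate.

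For the first stage, I would start from the standard value-difference identity. Using the robust Bellman equations in \Cref{proposition:robust_bellman_kl_drmdp} (and its RRMDP analogue), together with the softmax update rule in \Cref{propositon:optimization_kl}, I would show that the policy $\pi^k_h\propto\piref_h\exp(\eta\tilde Q^{k-1}_h)$ is the exact maximizer of the regularized one-step objective against the estimated $Q$-function, so the per-step regularized advantage of $\pi^*$ over $\pi^k$ is nonpositive up to the estimation error between $\tilde Q^k_h$ and $\tilde Q^{*}_h$. The key quantitative ingredient is \emph{optimism}: I would prove $\tilde Q^k_h(s,a)\ge \tilde Q^{*}_h(s,a)$ for all $(s,a,h,k)$ with high probability, by induction on $h$ from $H$ down to $1$, using the UCB bonus $\Gamma^k_h$ of Line~\ref{algline:bonus} to dominate the ridge-regression error in the linear estimate $\bm{\phi}^\top(\bm\theta_h+\bm\nu^k_h)$. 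This requires a self-normalized concentration bound on the regression residuals (over a covering net in the dual variable $\alpha\in[0,H]$ for the DRMDP case, and directly for RRMDP), which is exactly where the choice $\beta=CdH\sqrt{\log(\cdot)}$ enters. Telescoping the value differences across $h\in[H]$ and episodes $k\in[K]$, the optimistic overestimate is paid back through the bonus terms, while the rollout randomness (sampling $a^k_h\sim\pi^k_h$ and $s^k_{h+1}\sim P^0_h$) accumulates into a bounded martingale-difference sequence; an Azuma–Hoeffding bound with range $H$ over $KH$ steps yields the $\sqrt{16H^3/K\,\log(3/p)}$ term after dividing by $K$. This gives the first displayed inequality.

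For the second stage, I would bound $\frac1K\sum_{k,h}\Gamma^k_h\sahk$. Writing $\Gamma^k_h\sahk=\beta\sum_{i=1}^d\phi_i\sahk\sqrt{\mathbf 1_i^\top(\Lambda^k_h)^{-1}\mathbf 1_i}$, I would bound each $\sqrt{\mathbf 1_i^\top(\Lambda^k_h)^{-1}\mathbf 1_i}\le \sqrt{\lambda_{\min}(\Lambda^k_h)^{-1}}=\lambda_{\min}(\Lambda^k_h)^{-1/2}$. \Cref{assumption:uniform_exploration} forces $\EE_\pi[\bm\phi\bm\phi^\top]\succeq\alpha I$, so by a matrix concentration argument (matrix Azuma/Bernstein) the empirical Gram matrix satisfies $\Lambda^k_h\succeq c\,\alpha k\,I$ for $k$ large, hence $\lambda_{\min}(\Lambda^k_h)^{-1/2}\lesssim (\alpha k)^{-1/2}$ with high probability. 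Combined with $\sum_i\phi_i\sahk=1$ from \Cref{assumption:linearMDP} and summing over $h\in[H]$, each episode contributes $O(\beta H/\sqrt{\alpha k})$; then $\frac1K\sum_{k=1}^K 1/\sqrt{k}\lesssim 1/\sqrt{K}$. Substituting $\beta\asymp dH\sqrt{\log(\cdot)}$ collapses everything to $O\!\big(d^2H^2/(\alpha\sqrt K)\cdot\log(3dHK\vA/p)\big)$, which dominates the $\sqrt{H^3/K}$ term and gives the final bound. A union bound over the three high-probability events (optimism, the Azuma term, and the matrix concentration) at level $p/3$ each yields overall probability $1-p$.

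\textbf{The main obstacle} I expect is establishing optimism uniformly in the dual variable for the DRMDP case: because $\bm\nu^{k,\rho}_{h}$ involves an inner maximization $\max_{\alpha\in[0,H]}\{z^k_{h,i}(\alpha)-\rho\alpha\}$, the regression target $[\tilde V^k_{h+1}]_\alpha$ depends on $\alpha$, so the self-normalized concentration must hold \emph{simultaneously} over all $\alpha$; this demands a uniform covering argument over both $\alpha$ and the induced value-function class, and the log-covering-number of the truncated soft value functions must be shown to scale correctly so that it is absorbed into the $\log(\cdot)$ factor in $\beta$. The presence of the partition function $Z_h(s)$ and the KL term in $\tilde V^k_h$ (Line~\ref{algline:v}) complicates the covering-number bound relative to the non-robust greedy case of \citet{liu2024distributionally}, and verifying that the soft value class has comparable complexity is the delicate technical point. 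Everything else is a careful but routine adaptation of the LSVI-UCB and Optimistic-NPG analyses to the doubly-regularized robust setting.
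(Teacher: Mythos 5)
Your proposal follows essentially the same route as the paper's proof: optimism of $\tilde{Q}_h^k$ established by backward induction with the UCB bonus dominating the self-normalized regression error (uniformly over the dual variable $\alpha$ and a covering of the soft value class), a regret decomposition into bounded martingale differences controlled by Azuma--Hoeffding plus twice the cumulative bonus, and a bound on the cumulative bonus under the uniform-exploration assumption. The only cosmetic difference is that you sketch the Gram-matrix concentration argument ($\Lambda_h^k\succeq c\alpha k I$) explicitly, whereas the paper delegates that step to a cited corollary of \citet{liu2024distributionally}.
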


\begin{remark}
Under \Cref{assumption:uniform_exploration}, \Cref{theorem:main_subopt_upper_bound} implies that the average suboptimality bound of \alg is $\tilde{\cO}(\sqrt{d^4H^4/K})$ up to logarithmic terms if $\alpha=\cO(1/d)$. This is consistent with the $\tilde{\cO}(\sqrt{d^4H^4/K})$ bound achieved by the DR-LSVI-UCB algorithm in \citet{liu2024distributionally}, which is the online greedy value-iteration counterpart of \alg in the $d$-rectangular linear DRMDP. The main difference lies in the logarithmic terms, where \alg pays an additional factor of $\log\vA$ due to the softmax policy update rule. A different previous work, \citet{blanchet2023doublepessimismprovablyefficient}, also arrives at a similar $\tilde{\cO}(\sqrt{d^4H^4/c^\dagger K})$ suboptimality bound with their P$^2$MPO algorithm on offline DRMDPs, where $c^\dagger$ pertains to dataset coverage. As for the RRMDP setting, the offline algorithm R$^2$PVI developed by \citet{tang2025robust} similarly achieves the $\tilde{\cO}(\sqrt{d^4H^4/K})$ bound under TV-regularization on the transition kernel. \alg is thus on par with existing literature on both online and offline DRMDPs and RRMDPs in terms of regret, but enjoys the additional benefits of being a policy optimization algorithm including policy stochasticity and regularization against a reference policy.
\end{remark}

\section{NUMERICAL EXPERIMENTS} \label{sec:exp}

To empirically test the performance of \algo, we compare the policies trained by \algo\ and various baselines on two experimental environments: a simulated off-dynamics linear MDP, and the American put option problem. We implemented several baselines for comparison against \algo. This includes LSVI-UCB \citep{jin2020} and DR-LSVI-UCB \citep{liu2024distributionally}, which serve as the non-robust and robust greedy value-iteration counterpart of \alg respectively. We also include OPPO \citep{cai2020provably} and Optimistic NPG \citep{liu2023optimisticnaturalpolicygradient} as baselines for non-robust policy optimization baselines. We note that \citet{cai2020provably} only gave theoretical guarantees for OPPO under linear mixture MDP, but their algorithm can be modified to apply to linear MDP.

\subsection{Simulated Linear MDP}

The off-dynamics linear MDP satisfies \Cref{assumption:linearMDP} with $d=4$ and has a state space $\cS=\{s_1,s_2,s_3,s_4,s_5\}$, an action space $\cA=\{-1,1\}^4$, and horizon $H=3$. The MDP is parameterized by a vector $\bxi\in\RR^4$, and we construct a feature $\bphi(\cdot,\cdot)$ which depends on $\inner{\bxi,a}$. Among the states, $s_1$ is the initial state, $s_4$ is a fail-state, and $s_5$ is an absorbing state with positive reward. $s_1$ can transition into $s_2,s_4$, and $s_5$, $s_2$ can transition into $s_3,s_4,s_5$, and $s_3$ can transition into $s_4$ and $s_5$. We adversarially select the source and test transition kernels such that in the source domain, the optimal first action is always $a=\{1,1,1,1\}$, whereas in the test domain, the optimal first action is always $a=\{-1,-1,-1,-1\}$. We defer more details on the MDP construction to \Cref{appendix:exp}.

We conduct experiments under different $\rho$ values and set $\|\bxi\|_1=0.3$. For all policy optimization algorithms, i.e. \algo, OPPO, and Optimistic NPG, we choose the uniform policy as the reference policy. For further discussion on the choice of reference policies, see \Cref{appendix:exp}. As shown in Figure \ref{fig:sim_mdp}, 
\alg is more robust to distribution shifts than LSVI-UCB, OPPO, and Optimistic NPG, especially with high perturbation levels. \alg is also on par with or slightly more robust than DR-LSVI-UCB. In addition, \alg substantially outperforms the reference policy, indicating significant policy optimization. We defer experiment details and ablation studies to \Cref{appendix:exp}.

In \Cref{fig:rrmdp}, we also demonstrate the robustness of \alg on RRMDP with different regularization parameters $\sigma\in\{0.1,1.0,2.0\}$. With lower $\sigma$ values, the $\inner{\bphi(s,a),\sigma D(\bmu(\cdot)\|\bmu^0(\cdot))}$ imposes a smaller penalty, and therefore the robust value function considers transitions that are farther from the nominal kernel, resulting in stronger robustness. Under all values of $\sigma$, \alg consistently outperforms LSVI-UCB, OPPO, and Optimistic NPG, and as the value of $\sigma$ decreases, it gradually approaches the performance of the DR-LSVI-UCB baseline under uncertainty level $\rho=0.3$.

\begin{figure*}[t]
    \centering
    \subfigure[$\|\bxi\|_1=0.3$,$\rho=0.1$]{\includegraphics[scale=0.3]{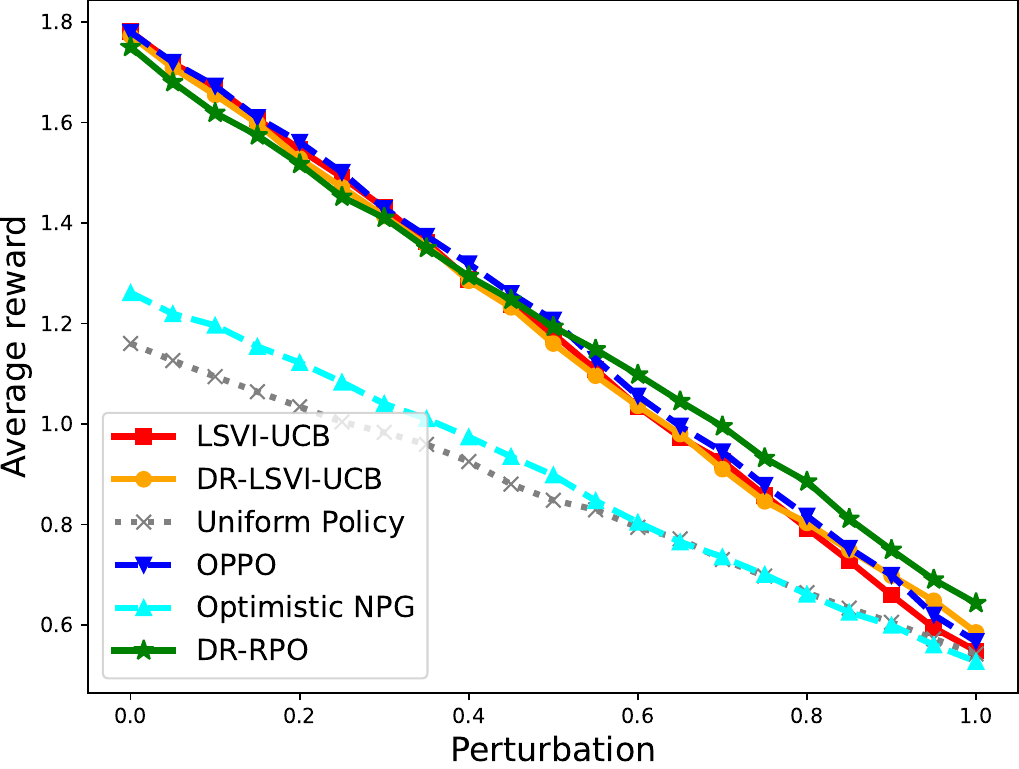}}
    \subfigure[$\|\bxi\|_1=0.3$,$\rho=0.3$]{\includegraphics[scale=0.3]{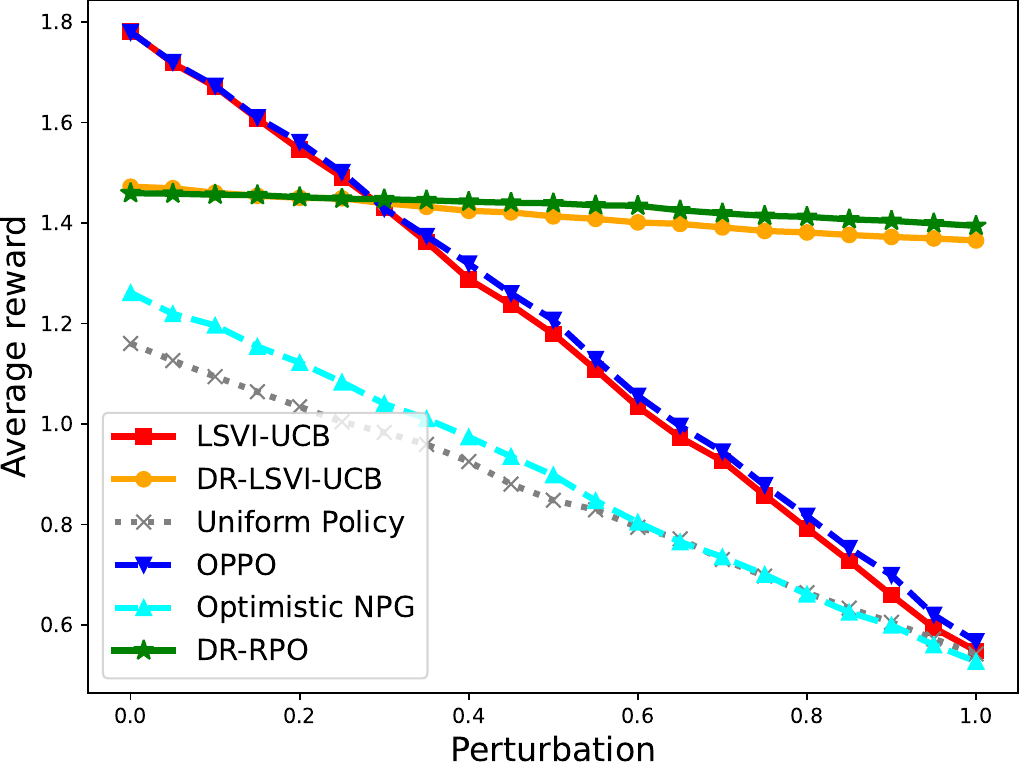}}
    \subfigure[$\|\bxi\|_1=0.3$,$\rho=0.5$]{\includegraphics[scale=0.3]{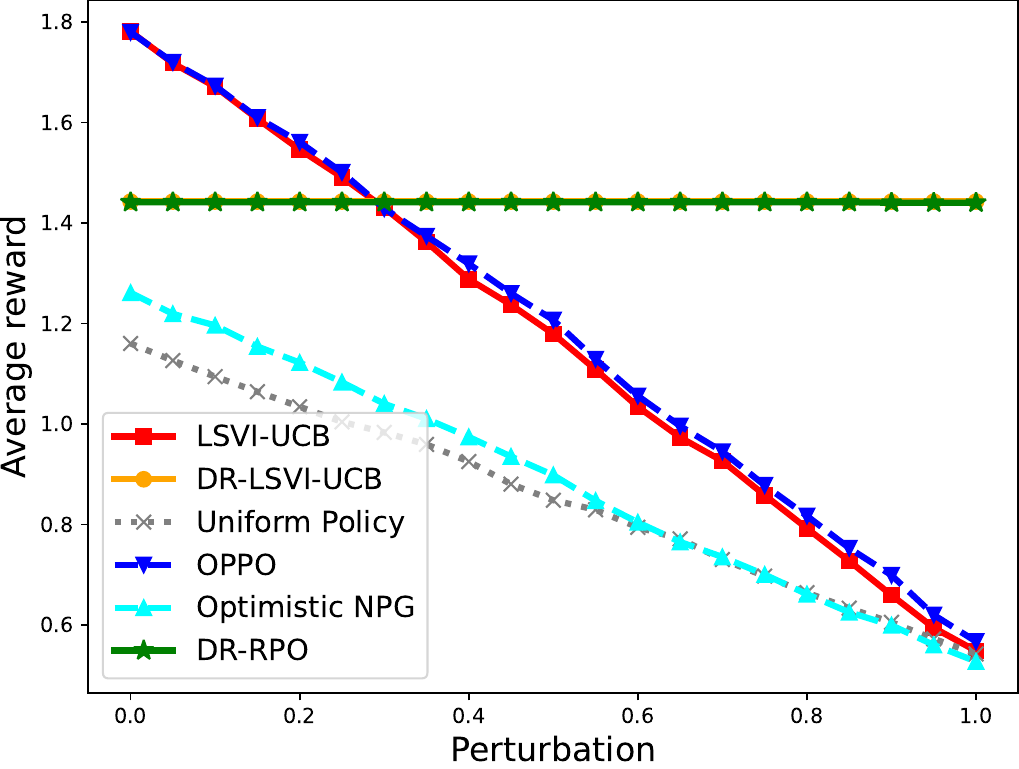}}
    \caption{Experimental results for off-dynamics linear DRMDP.}
    \label{fig:sim_mdp}
\end{figure*}
\begin{figure*}[!h]
    \centering
    \subfigure[$\sigma=0.1$]{\includegraphics[scale=0.3]{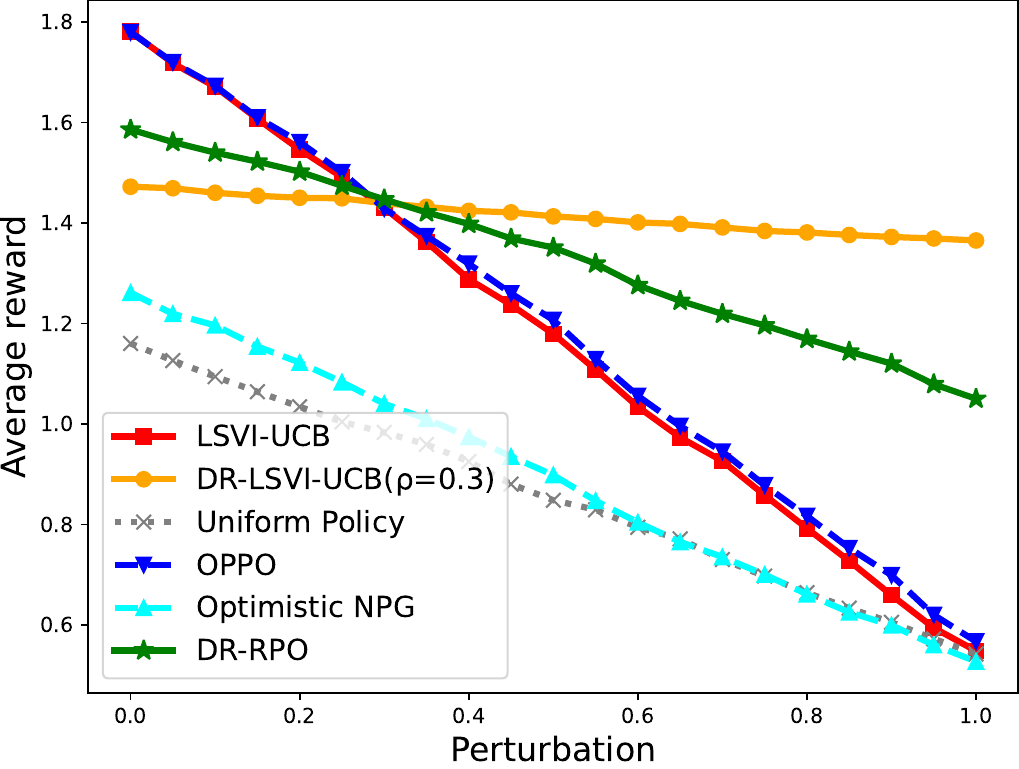}}
    \subfigure[$\sigma=1.0$]{\includegraphics[scale=0.3]{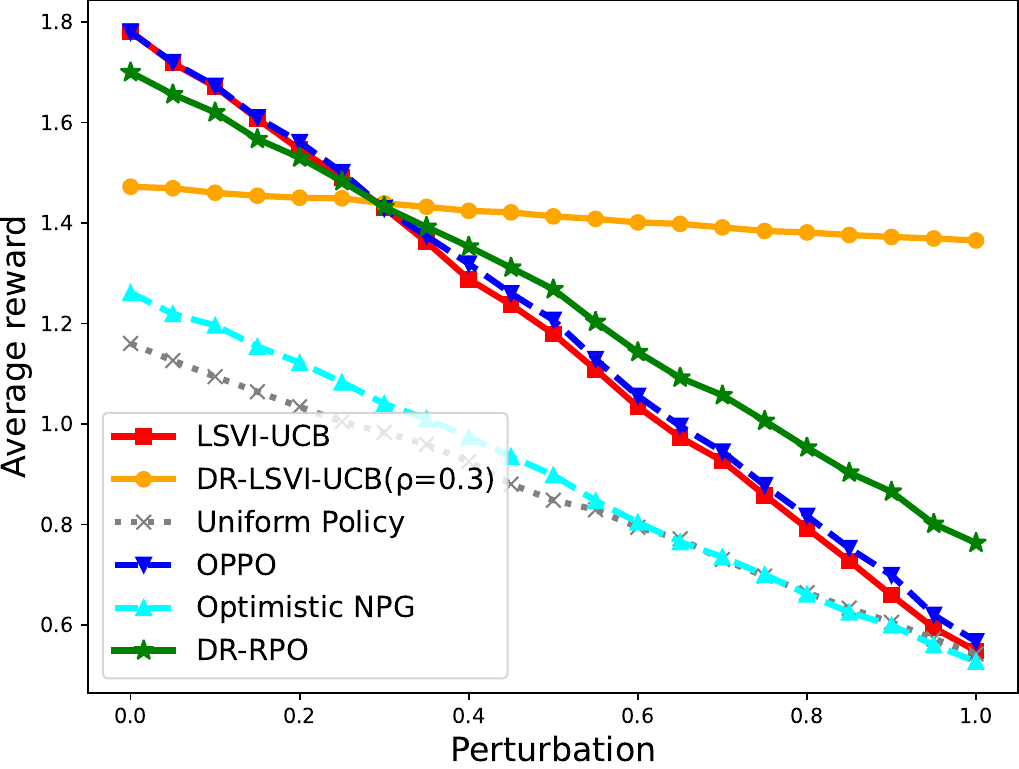}}
    \subfigure[$\sigma=2.0$]{\includegraphics[scale=0.3]{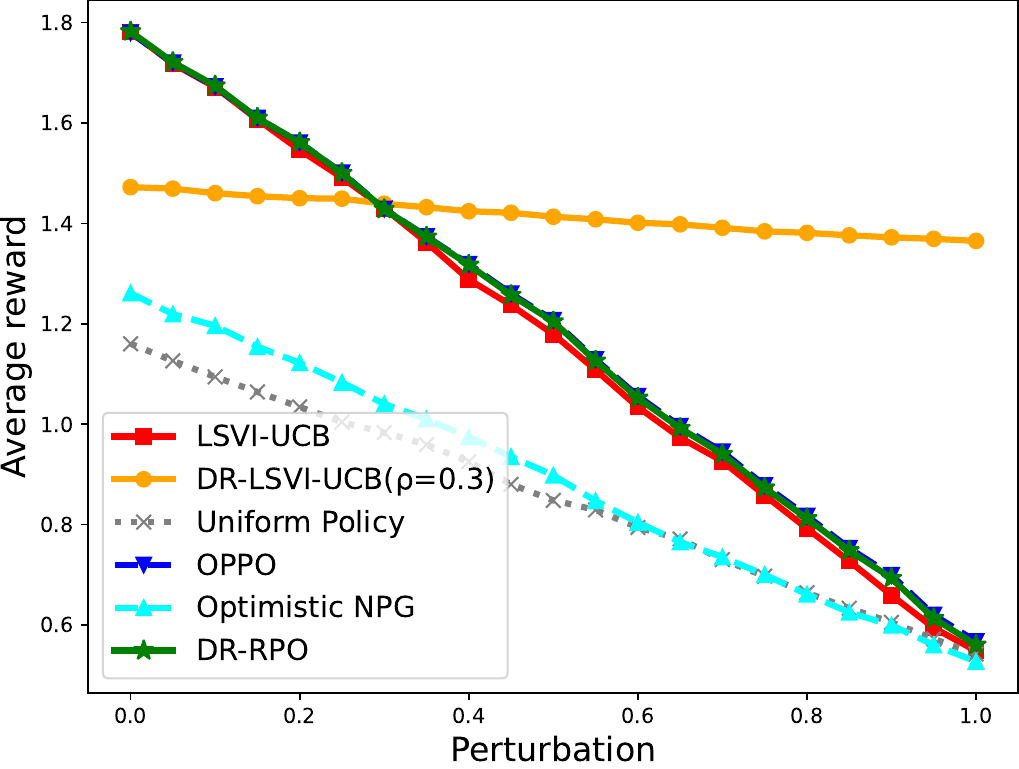}}
    \caption{Experimental results for off-dynamics linear RRMDP.}
    \label{fig:rrmdp}
\end{figure*}

\subsection{American Put Option}

The American Put Option \citep{ma2023distributionallyrobustofflinereinforcement} environment models the stochastic fluctuations of the price of a product, and in each step the agent chooses whether to exercise the option or not, with the objective being to exercise the option at the lowest price. The RMDP has a continuous state space $\cS=\RR^+$ to represent the price and two actions $\cA=\{0,1\}$, where $a=0$ exercises the option while $a=0$ does not. Once $a=0$ is chosen, the trajectory terminates and receives reward $r=\max\{0,100-s_h\}$; otherwise, it transitions to the next state until it reaches the horizon of 10 steps. The price is controlled by a hyperparameter $p\in(0,1)$, which represents the probability that the price increases at each step. The next-state price $s_{h+1}$ follows the distribution
\begin{align*}
s_{h+1}=\left\{\begin{array}{ll}
1.02s_h&\text{w.p. } p,\\
0.98s_h&\text{w.p. } 1-p.
\end{array}\right.
\end{align*}
We follow the same parameters as \citet{ma2023distributionallyrobustofflinereinforcement}; namely, we use the feature mapping
\begin{align*}
\bphi(s,0)=[0,\cdots,0,\max\{0,100-s\}]^\top,\qquad\bphi(s,1)=[\varphi_1(s),\cdots,\varphi_d(s),0]^\top,
\end{align*}
where $\varphi_i(s)=\max\{0,1-|s-s_i|/\Delta\}$. $\{s_i\}_{i=1}^d$ are a sequence of anchor states with $s_1=80$, $s_{i+1}-s_i=\Delta$ and $\Delta=60/d$. 

To construct distribution shift, we set the price-up probability $p=0.5$ in the source domain and sample $p\in[0.15,0.85]$ in the target domain. We then run experiments by training \alg and all baselines using a similar setup as in \Cref{sec:sim_mdp}, i.e. with the uniform policy as reference, step size $\eta=100$, and $K=100$ training episodes. The results on DRMDP and RRMDP are shown in \Cref{fig:apo_drmdp} and \Cref{fig:apo_rrmdp} respectively. On both DRMDP and RRMDP and on all values of $\rho,\sigma$, \alg exhibits more robust performance than the uniform reference policy, LSVI-UCB, and is as robust as DR-LSVI-UCB when the perturbation level is high.

\begin{figure*}[!h]
    \centering
    \subfigure[$\rho=0.1$]{\includegraphics[scale=0.3]{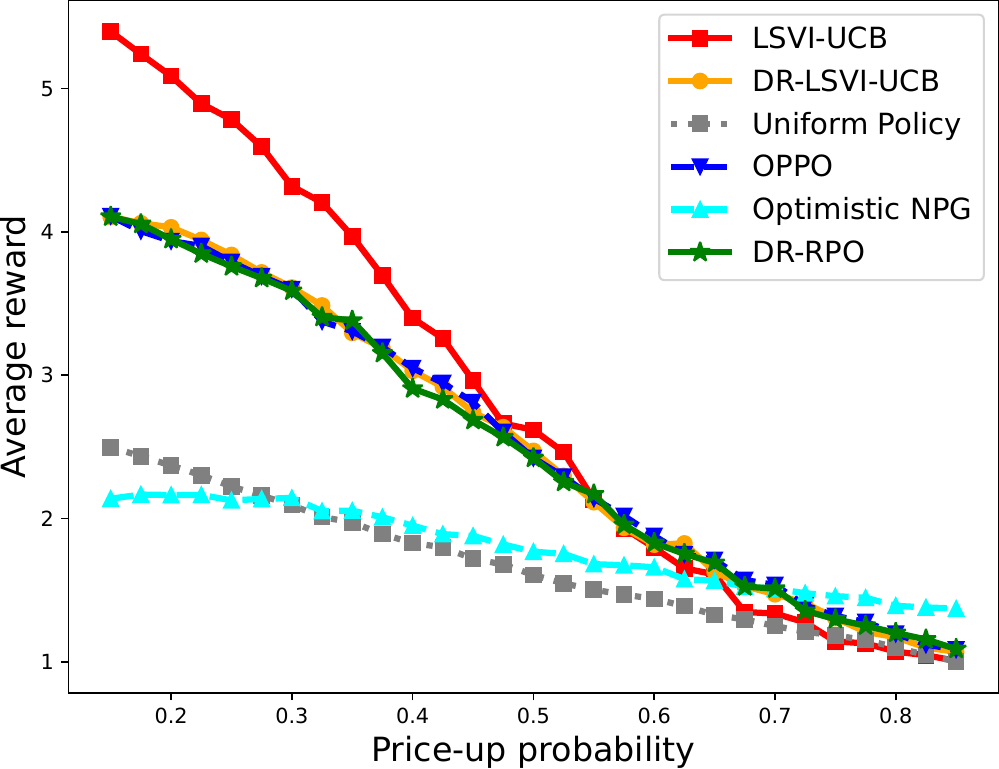}}
    \subfigure[$\rho=0.3$]{\includegraphics[scale=0.3]{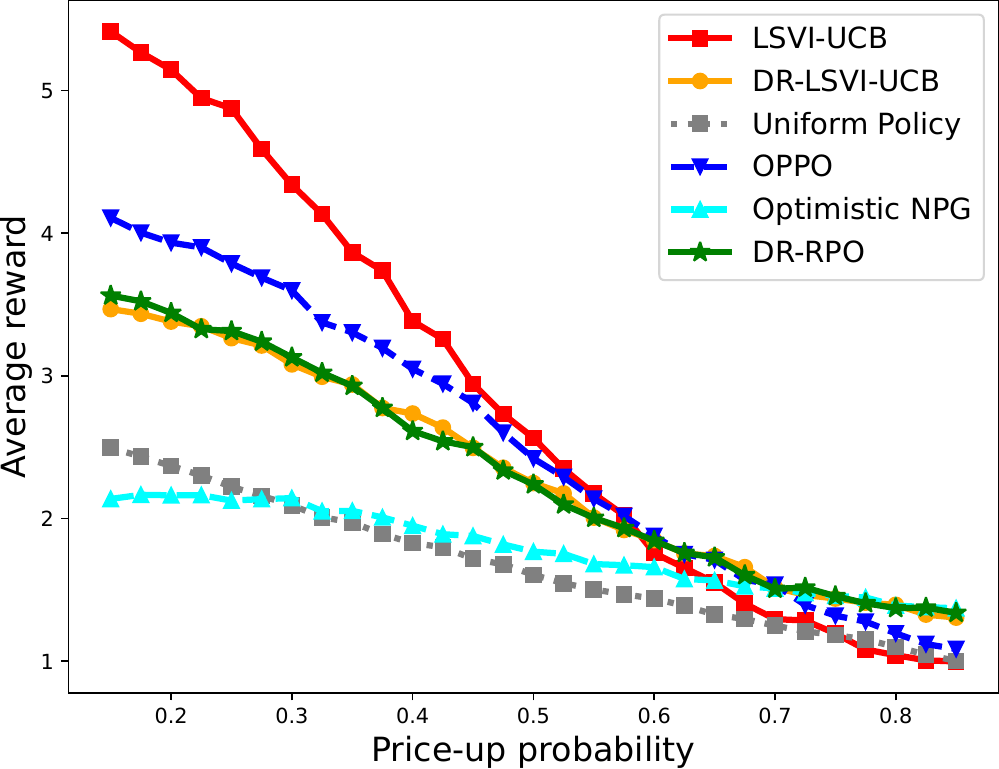}}
    \subfigure[$\rho=0.5$]{\includegraphics[scale=0.3]{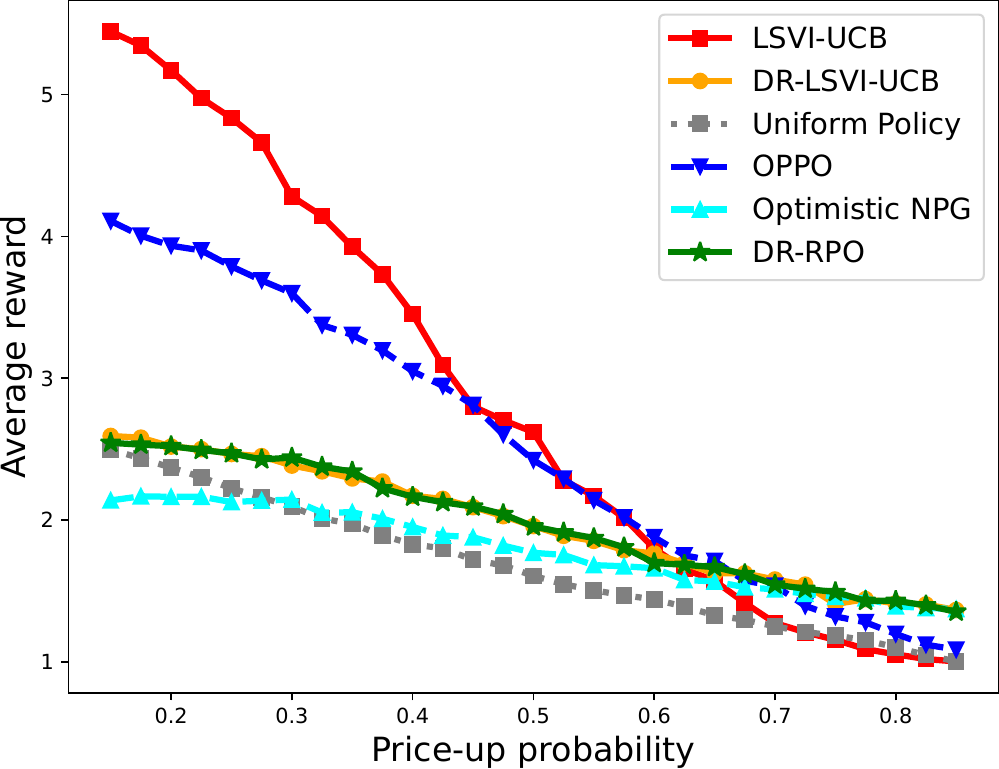}}
    \caption{American Put Option experiments on DRMDP.}
    \label{fig:apo_drmdp}
\end{figure*}
\begin{figure*}[!h]
    \centering
    \subfigure[$\sigma=0.1$]{\includegraphics[scale=0.3]{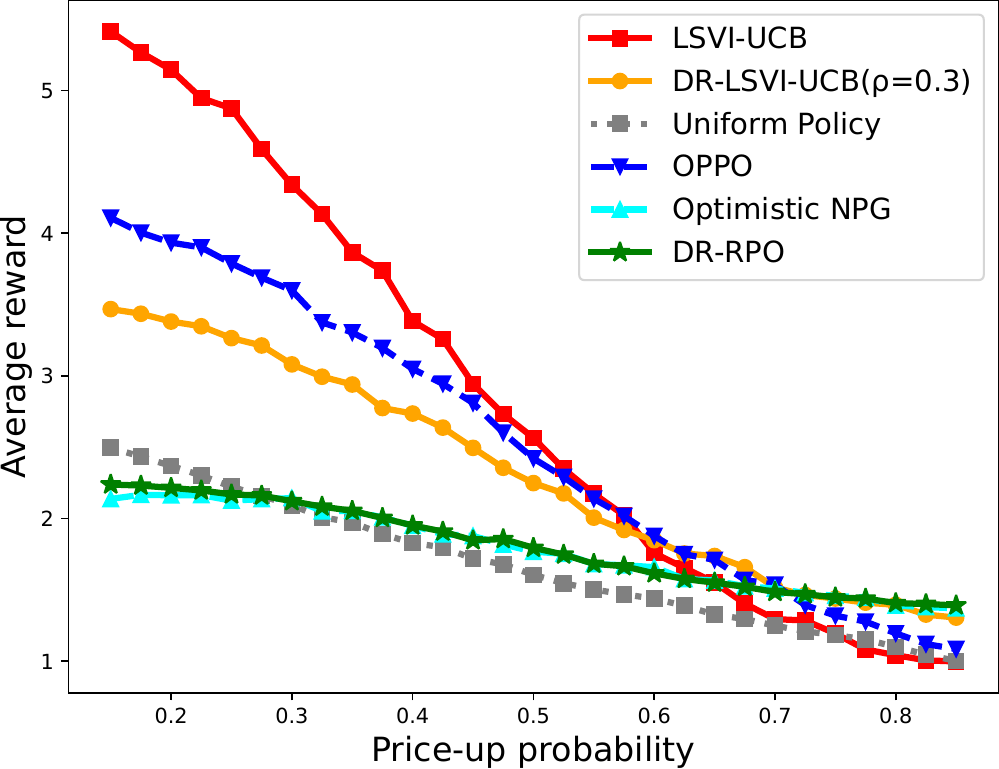}}
    \subfigure[$\sigma=1.0$]{\includegraphics[scale=0.3]{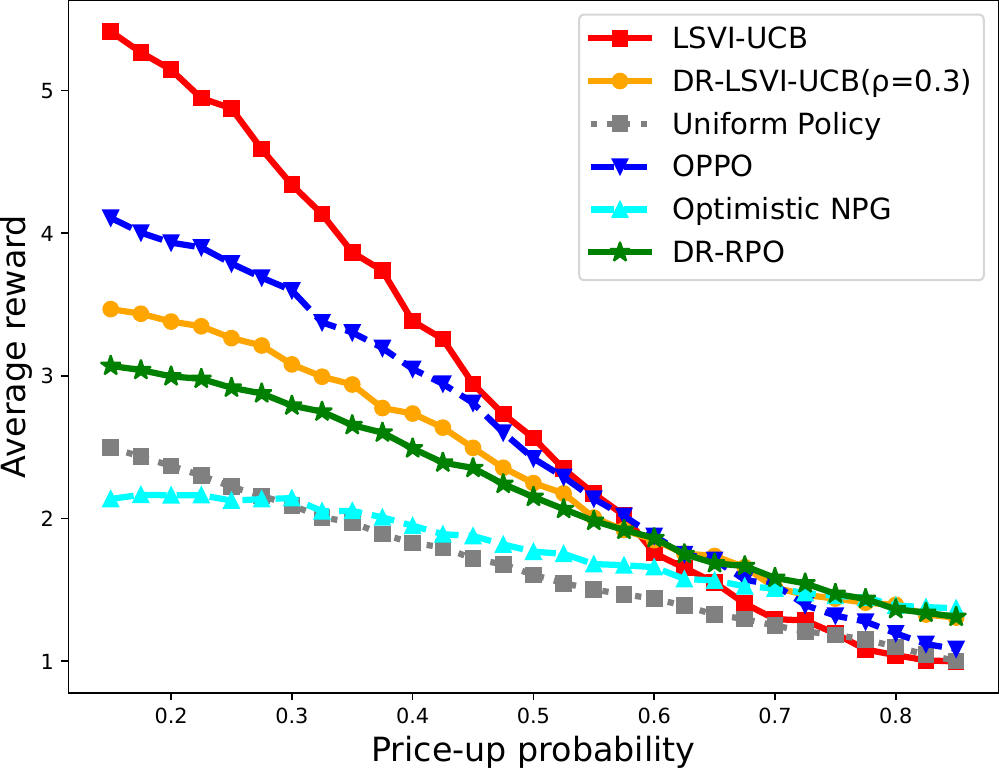}}
    \subfigure[$\sigma=2.0$]{\includegraphics[scale=0.3]{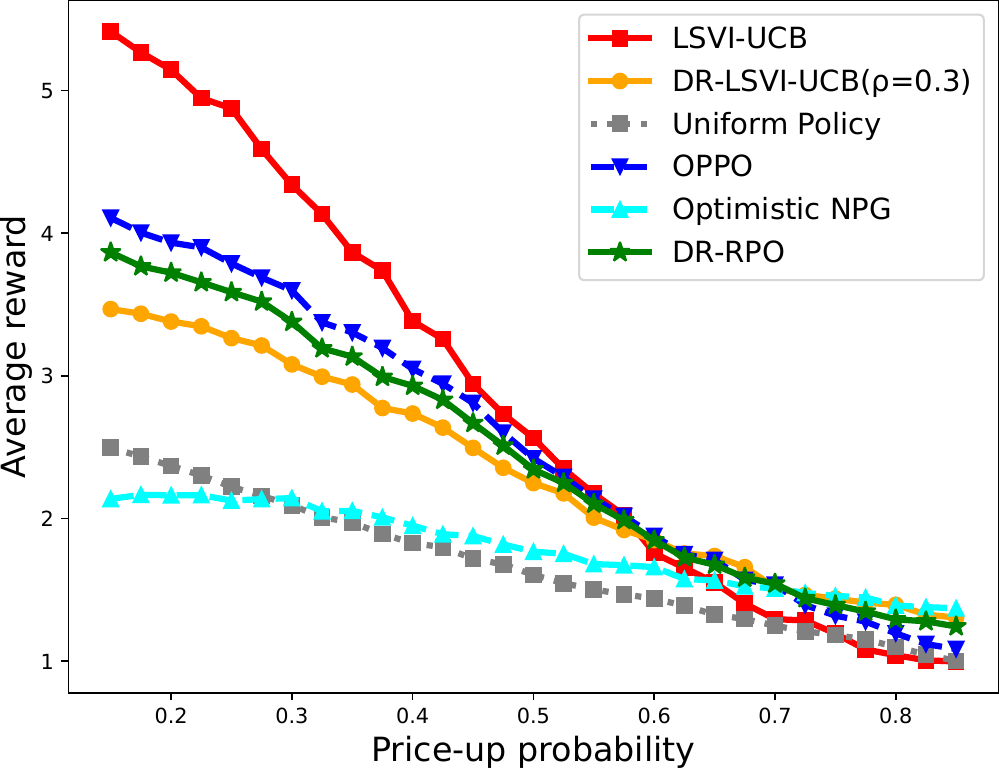}}
    \caption{American Put Option experiments on RRMDP.}
    \label{fig:apo_rrmdp}
\end{figure*}

\section{CONCLUSION}

We studied off-dynamics reinforcement learning for robust Markov decision processes with linear function approximation. We introduced policy-regularized RMDPs, which constrain the transition and policy, enabling stochastic and non-stationary policy optimization within a softmax policy class. We proposed \algo, a model-free algorithm that optimistically explores RMDPs. We studied \alg under $d$-rectangular linear DRMDPs and RRMDPs. Theoretical analysis shows \alg learns a robust policy with a sublinear regret comparable to greedy value-iteration. Numerical experiments validate its performance on diverse environments.

\appendix

\section{Experiment Details and Additional Results} \label{appendix:exp}

\subsection{Off-dynamics Linear MDP} \label{sec:sim_mdp}

In this section, we elaborate on the off-dynamics linear MDP in our numerical experiments. The construction of this MDP follows from \citet{liu2024distributionally}, and we restate it below. 

The MDP has 5 states $\cS=\{s_1,s_2,s_3,s_4,s_5\}$, action space $\cA=\{-1,1\}^4$, and horizon $H=3$, with hyperparameters $\zeta,p,q\in(0,1)$ and $\bxi\in\RR^4$. It satisfies \Cref{assumption:linearMDP} with dimension $d=4$. We construct the feature mapping $\bphi$ as follows:
\begin{align*}
\bphi(s_1,a)&=(1-\zeta-\inner{\bxi,a},0,0,\zeta+\inner{\bxi,a})^\top,\\
\bphi(s_2,a)&=(0,1-\zeta-\inner{\bxi,a},0,\zeta+\inner{\bxi,a})^\top,\\
\bphi(s_3,a)&=(0,0,1-\zeta-\inner{\bxi,a},\zeta+\inner{\bxi,a})^\top,\\
\bphi(s_4,a)&=(0,0,1,0)^\top,\\
\bphi(s_5,a)&=(0,0,0,1)^\top.
\end{align*}
The reward parameters are set to be
\begin{align*}
\btheta_1=(0,0,0,0)^\top,\qquad \btheta_2=\btheta_3=(0,0,0,1)^\top.
\end{align*}
We further define the factor distributions $\bmu_1,\bmu_2$ to set the transition kernels for the MDP. To construct an off-dynamics setting, we define the factor distributions differently between the source and target domains. On the source domain, we define
\begin{align*}
\bmu_1=\bmu_2=((1-p)\delta_{s_1}+p\delta_{s_4},(1-p)\delta_{s_3}+p\delta_{s_4},\delta_{s_4},\delta_{s_5})^\top,
\end{align*}
where $\delta_s$ is the Dirac measure with positive probability mass on $s$. On the target domain, $\bmu_2$ remains unchanged while $\bmu_1$ is altered, resulting in a perturbed factor distribution $\bmu_1'$:
\begin{align*}
\bmu_1'=(\delta_{s_2},\delta_{s_3},\delta_{s_4},(1-q)\delta_{s_5}+q\delta_{s_4}).
\end{align*}
The transitions between the source and target domains are therefore different, as illustrated in \Cref{fig:mdp}. In particular, the distribution shift increases as the hyperparameter $q$ increases, and hence we refer to $q$ as the perturbation level. 

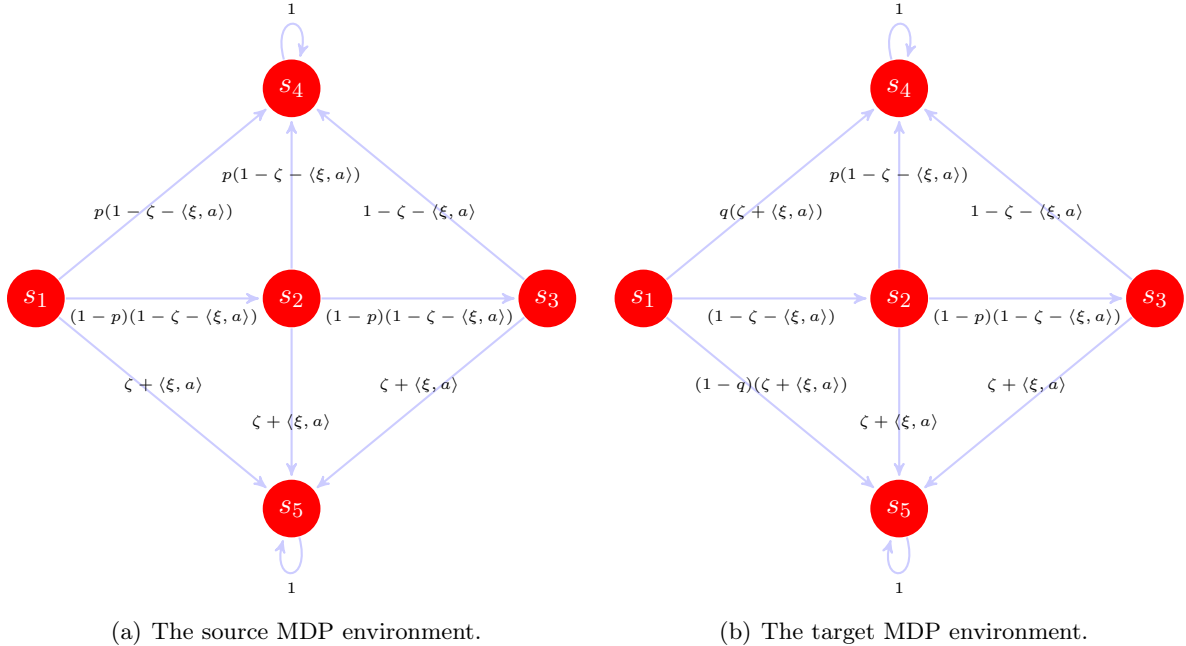
\begin{figure*}[ht]
    \centering
    \subfigure[The source MDP environment.]{
        \begin{tikzpicture}[->,>=stealth',shorten >=1pt,auto,node distance=3.4cm,thick]
            \tikzstyle{every state}=[fill=red,draw=none,text=white,minimum size=0.5cm]
            \node[state] (S1) {$s_1$};
            \node[state] (S2) [right of=S1] {$s_2$};
            \node[state] (S3) [right of=S2] {$s_3$};
            \node[state] (S4) [above=2cm of S2] {$s_4$};
            \node[state] (S5) [below=2cm of S2] {$s_5$};
            
            \path   (S1) edge[draw=blue!20] node[below] {\tiny $(1-p)(1-\zeta-\la\xi,a\ra)$} (S2)
                         edge[draw=blue!20] node[below] {\tiny$p(1-\zeta-\la\xi,a\ra)$} (S4)
                         edge[draw=blue!20] node[above] {\tiny$\zeta+\la\xi,a\ra$} (S5)
                    (S2) edge[draw=blue!20] node[below] {\tiny$(1-p)(1-\zeta-\la\xi,a\ra)$} (S3)
                         edge[draw=blue!20] node[above] {\tiny$p(1-\zeta-\la\xi,a\ra)$} (S4)
                         edge[draw=blue!20] node[below] {\tiny$\zeta+\la\xi,a\ra$} (S5)
                    (S3) edge[draw=blue!20] node[below] {\tiny$1-\zeta-\la\xi,a\ra$} (S4)
                         edge[draw=blue!20] node[above] {\tiny$\zeta+\la\xi,a\ra$} (S5)
                    (S4) edge[draw=blue!20] [loop above] node {\tiny 1} (S4)
                    (S5) edge[draw=blue!20] [loop below] node {\tiny 1} (S5);
        \end{tikzpicture}
        \label{fig:mdp_5states}
    }
    \subfigure[The target MDP environment.]{
        \begin{tikzpicture}[->,>=stealth',shorten >=1pt,auto,node distance=3.4cm,thick]
            \tikzstyle{every state}=[fill=red,draw=none,text=white,minimum size=0.5cm]
            \node[state] (S1) {$s_1$};
            \node[state] (S2) [right of=S1] {$s_2$};
            \node[state] (S3) [right of=S2] {$s_3$};
            \node[state] (S4) [above=2cm of S2] {$s_4$};
            \node[state] (S5) [below=2cm of S2] {$s_5$};
            
            \path   (S1) edge[draw=blue!20] node[below] {\tiny$(1-\zeta-\la\xi,a\ra)$} (S2)
                         edge[draw=blue!20] node[below] {\tiny$q(\zeta+\la\xi,a\ra)$} (S4)
                         edge[draw=blue!20] node[above] {\tiny$(1-q)(\zeta+\la\xi,a\ra)$} (S5)
                    (S2) edge[draw=blue!20] node[below] {\tiny$(1-p)(1-\zeta-\la\xi,a\ra)$} (S3)
                         edge[draw=blue!20] node[above] {\tiny$p(1-\zeta-\la\xi,a\ra)$} (S4)
                         edge[draw=blue!20] node[below] {\tiny$\zeta+\la\xi,a\ra$} (S5)
                    (S3) edge[draw=blue!20] node[below] {\tiny$1-\zeta-\la\xi,a\ra$} (S4)
                         edge[draw=blue!20] node[above] {\tiny$\zeta+\la\xi,a\ra$} (S5)
                    (S4) edge[draw=blue!20] [loop above] node {\tiny 1} (S4)
                    (S5) edge[draw=blue!20] [loop below] node {\tiny 1} (S5);
        \end{tikzpicture}
        \label{fig:perturbed_mdp}
    }
    \caption{The source and the target linear MDP environments. The value on each arrow represents the transition probability. }
    \label{fig:mdp}
\end{figure*}

\paragraph{Parameter settings} In our experiments on \alg and on all baselines, we train over $K=100$ episodes. For Optimistic NPG, which has a more complex sampling scheme, we re-sample every $m=10$ episodes and collect a batch of $N=30$ trajectories, divided into disjoint sub-batches of 10 trajectories for each time-step. For more details on the sampling schedule of Optimistic NPG, please refer to \citet{liu2023optimisticnaturalpolicygradient}.

We sample different perturbation levels $q\in[0,1]$, and set the hyperparameters $\zeta=0.3,p=0.001$, and $\bxi=(\|\bxi\|_1/4,\|\bxi\|_1/4,\|\bxi\|_1/4,\|\bxi\|_1/4)$. Unless otherwise noted, we set $\|\bxi\|_1=0.3$ and $\eta=100$, where $\eta$ is the policy optimization step-size parameter. As mentioned in \Cref{sec:exp}, we use the uniform policy as the reference policy. In subsequent sections, we conduct ablation studies with different values of $\|\bxi\|_1$, $\eta$, and $\piref$.

\subsection{Ablation Studies}

\paragraph{Factor distributions.} The RMDP is parameterized by the vector $\bxi$, and perturbing its norm results in different factor distributions, which in turn lead to different transition kernels. Hence, we studied the behavior of \alg with different factor distributions by selecting $\|\bxi\|_1\in\{0.1,0.2,0.3\}$. The results are shown in \Cref{fig:ablation_xi}. We observe that across different values of $\|\bxi\|_1$, \alg consistently outperforms the uniform policy and Optimistic NPG, and is either on par with or exceeds the performance of LSVI-UCB, DR-LSVI-UCB, and OPPO. With larger $\|\bxi\|_1$, the source domain deviates further from the target domain, and the robustness of \alg becomes more pronounced compared with non-robust baselines.

\begin{figure*}[!h]
    \centering
    \subfigure[$\|\bxi\|_1=0.1$,$\rho=0.1$]{\includegraphics[scale=0.3]{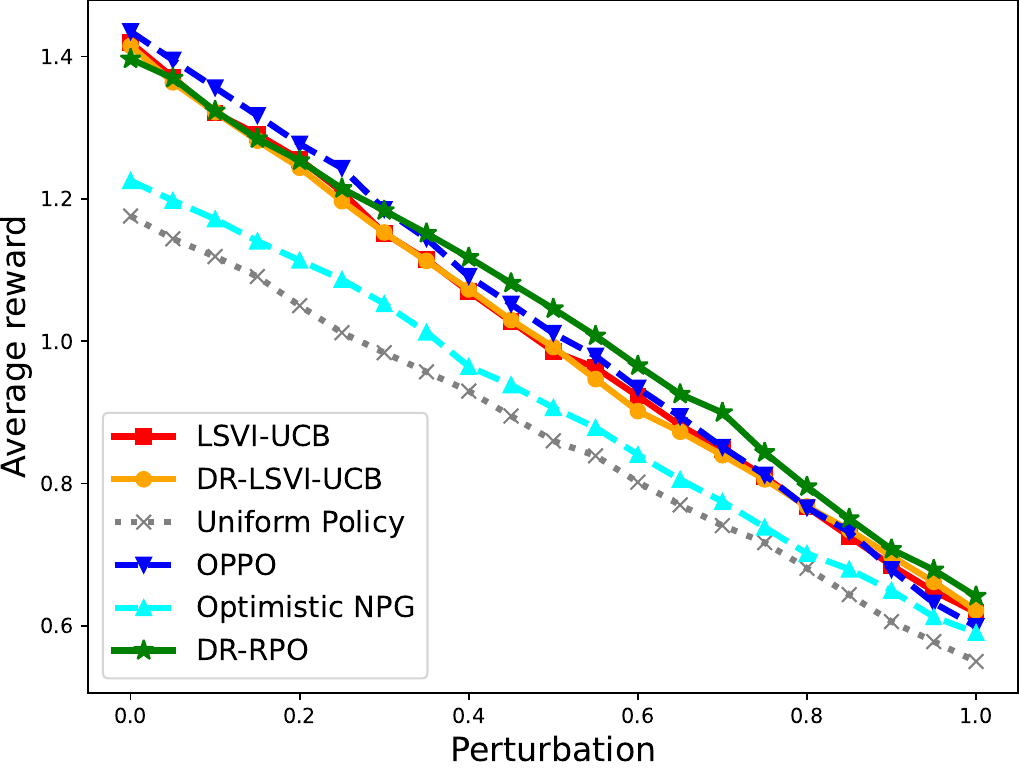}}
    \subfigure[$\|\bxi\|_1=0.1$,$\rho=0.3$]{\includegraphics[scale=0.3]{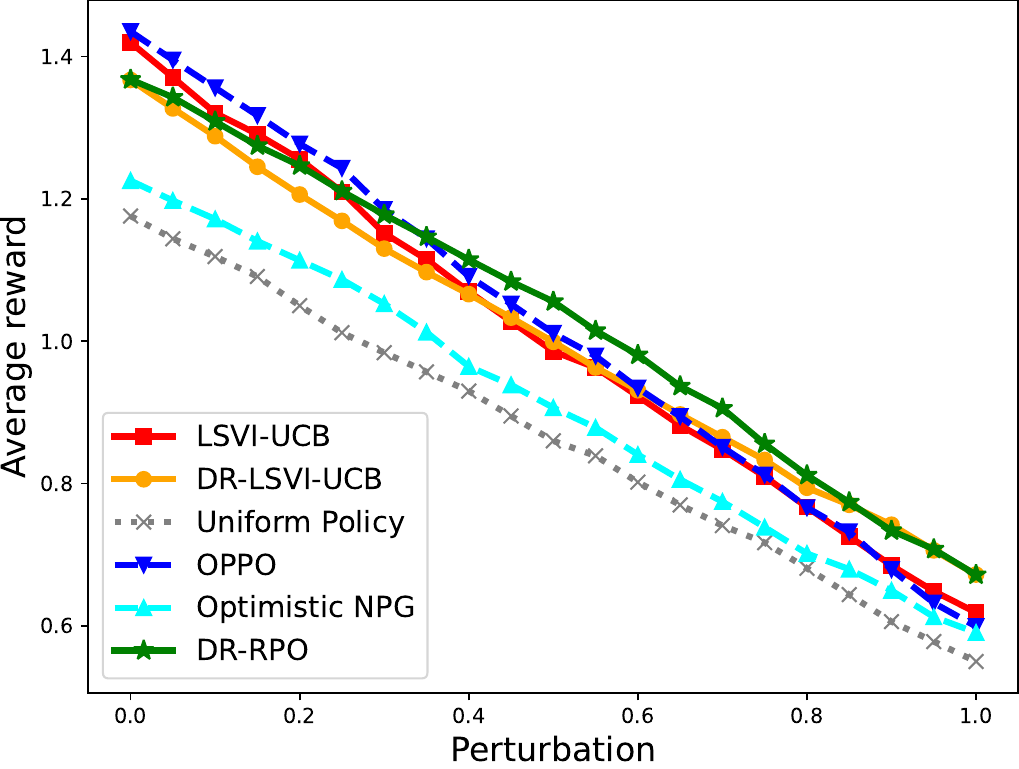}}
    \subfigure[$\|\bxi\|_1=0.1$,$\rho=0.5$]{\includegraphics[scale=0.3]{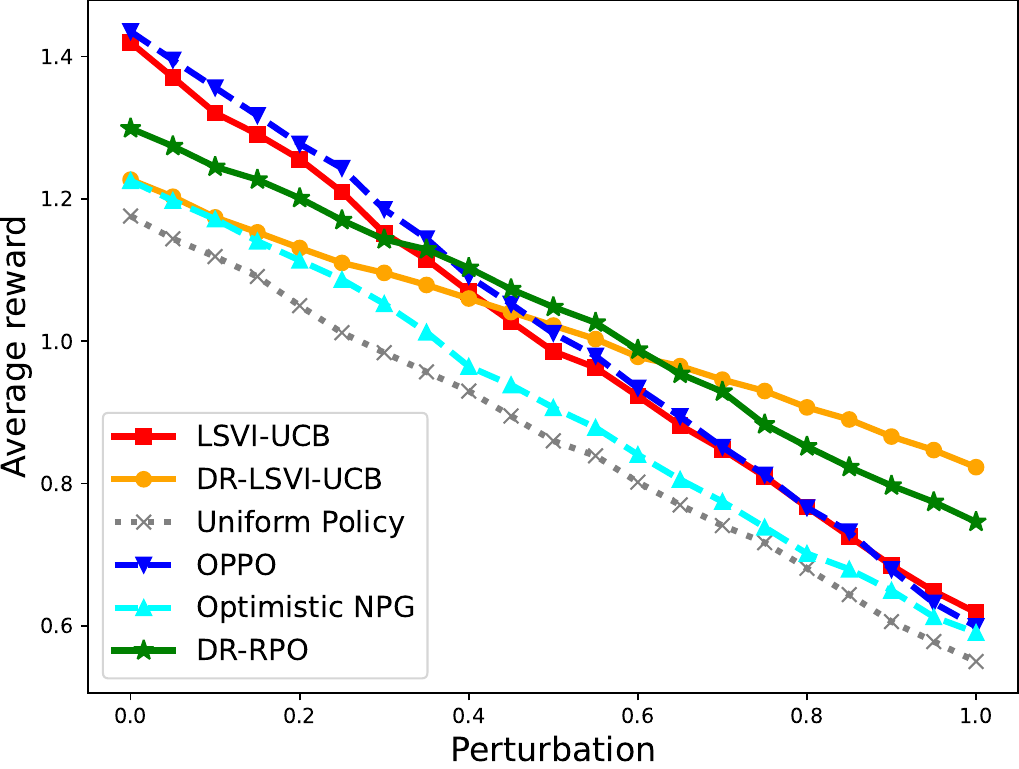}}
    \subfigure[$\|\bxi\|_1=0.2$,$\rho=0.1$]{\includegraphics[scale=0.3]{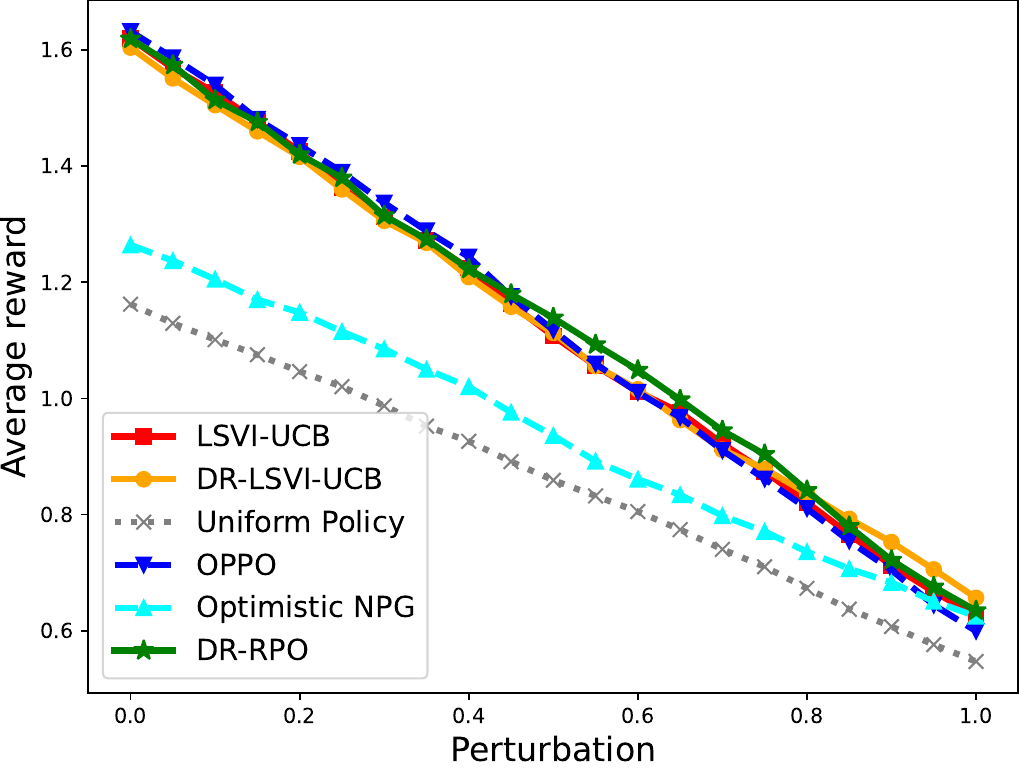}}
    \subfigure[$\|\bxi\|_1=0.2$,$\rho=0.3$]{\includegraphics[scale=0.3]{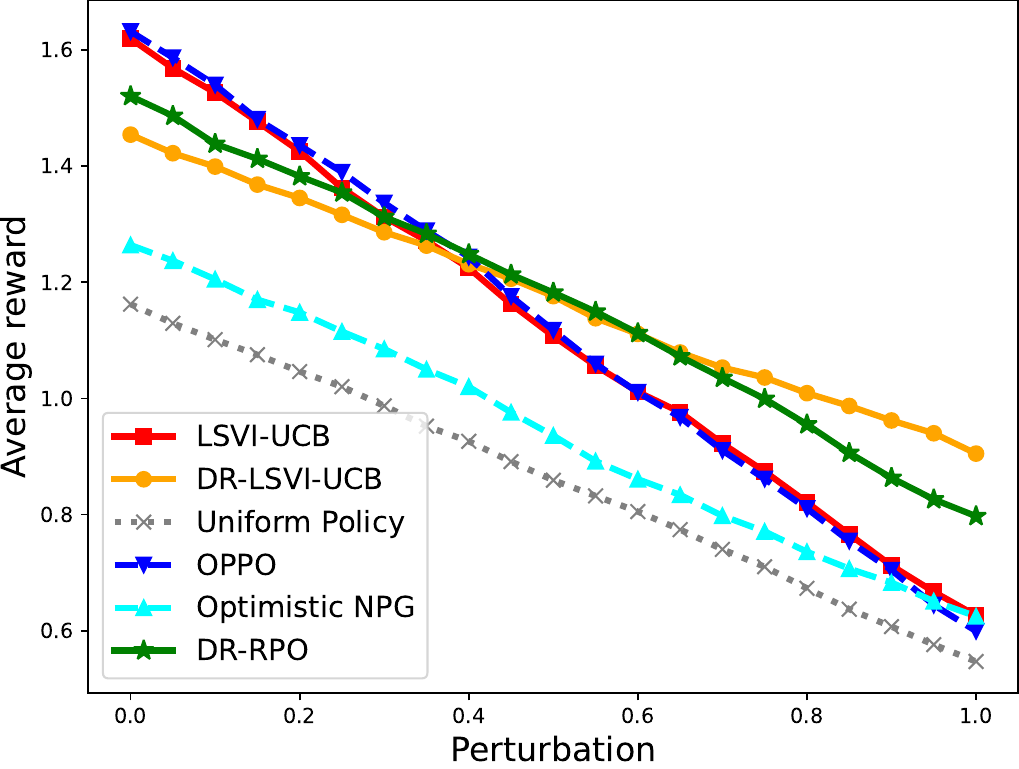}}
    \subfigure[$\|\bxi\|_1=0.2$,$\rho=0.5$]{\includegraphics[scale=0.3]{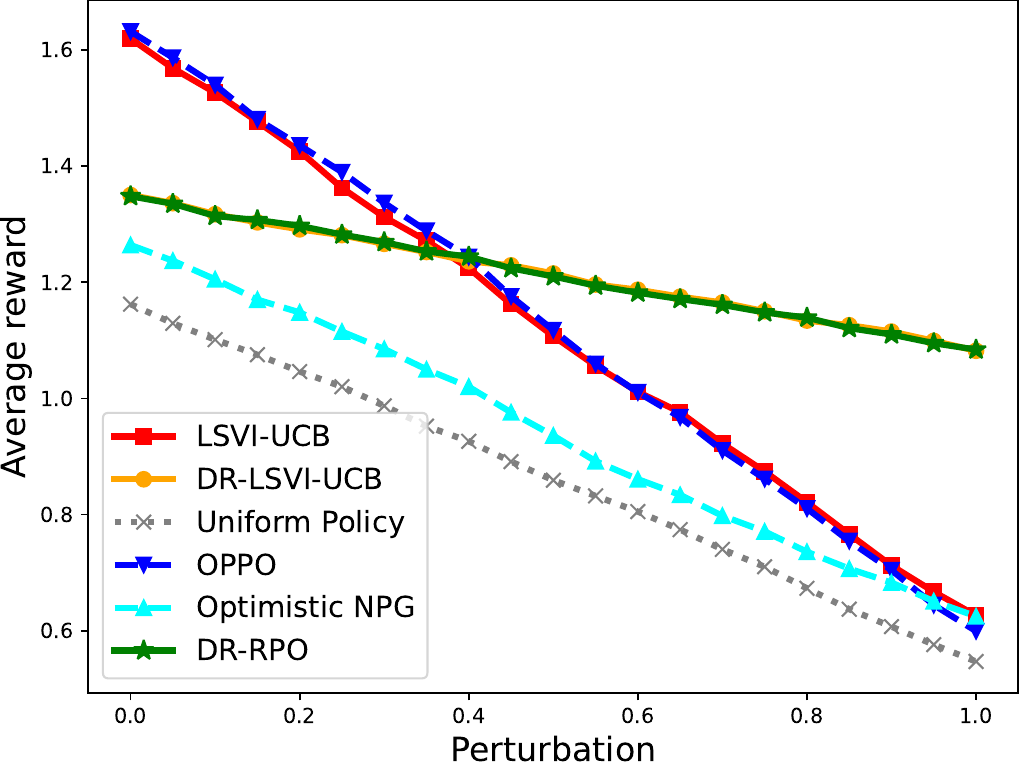}}
    \subfigure[$\|\bxi\|_1=0.3$,$\rho=0.1$]{\includegraphics[scale=0.3]{figures/sim_mdp_delta=0.3_xi=0.3_rho=1.pdf}}
    \subfigure[$\|\bxi\|_1=0.3$,$\rho=0.3$]{\includegraphics[scale=0.3]{figures/sim_mdp_delta=0.3_xi=0.3_rho=3.pdf}}
    \subfigure[$\|\bxi\|_1=0.3$,$\rho=0.5$]{\includegraphics[scale=0.3]{figures/sim_mdp_delta=0.3_xi=0.3_rho=5.pdf}}
    \caption{Ablation on factor distributions.}
    \label{fig:ablation_xi}
\end{figure*}

\paragraph{Reference policy.} Different reference policies can affect the performance of policy optimization algorithms. As discussed in \citet{liu2024distributionally}, the RMDP design results in different optimal policies in the source and target domains. In the source domain, $(1,1,1,1)$ is the optimal first action; in the target domain, as $q$ increases, the $(-1,-1,-1,-1)$ action gradually gains advantage and eventually becomes the optimal first action. 
Therefore, in addition to the uniform reference policy studied in \Cref{sec:exp}, which we denote by $\pi^{0}$, we introduce two additional reference policies: $\pi^1$, which always chooses the $(-1,-1,-1,-1)$ action in the first step and stays uniformly random in the second step; and $\pi^2$, which assigns 0.5 probability to both the $(1,1,1,1)$ action and the $(-1,-1,-1,-1)$ action in the first step, and then stays uniformly random in the second step. $\pi^0$ represents a completely neutral reference policy, $\pi^1$ is biased towards the optimal policy in the target domain, and $\pi^2$ lies in-between.

As shown in \Cref{fig:ablation_piref}, more robust reference policies generally improves the robustness of \alg on target domains. This improvement is more pronounced when the uncertainty level $\rho$ is low, and becomes much more marginal when $\rho$ reaches 0.3 or 0.5. Regardless of the reference policy, \alg still consistently outperforms Optimistic NPG, OPPO, and LSVI-UCB in terms of robustness, and is on par with DR-LSVI-UCB.

\begin{figure*}[!h]
    \centering
    \subfigure[$\rho=0.1$]{\includegraphics[scale=0.275]{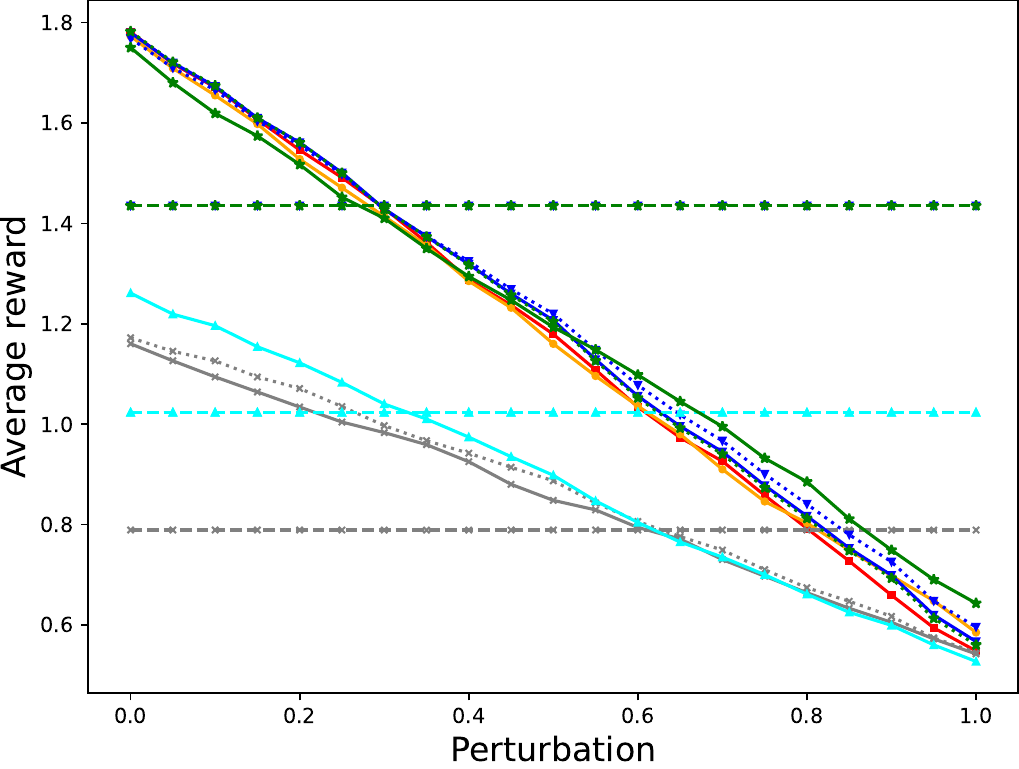}}
    \subfigure[$\rho=0.3$]{\includegraphics[scale=0.275]{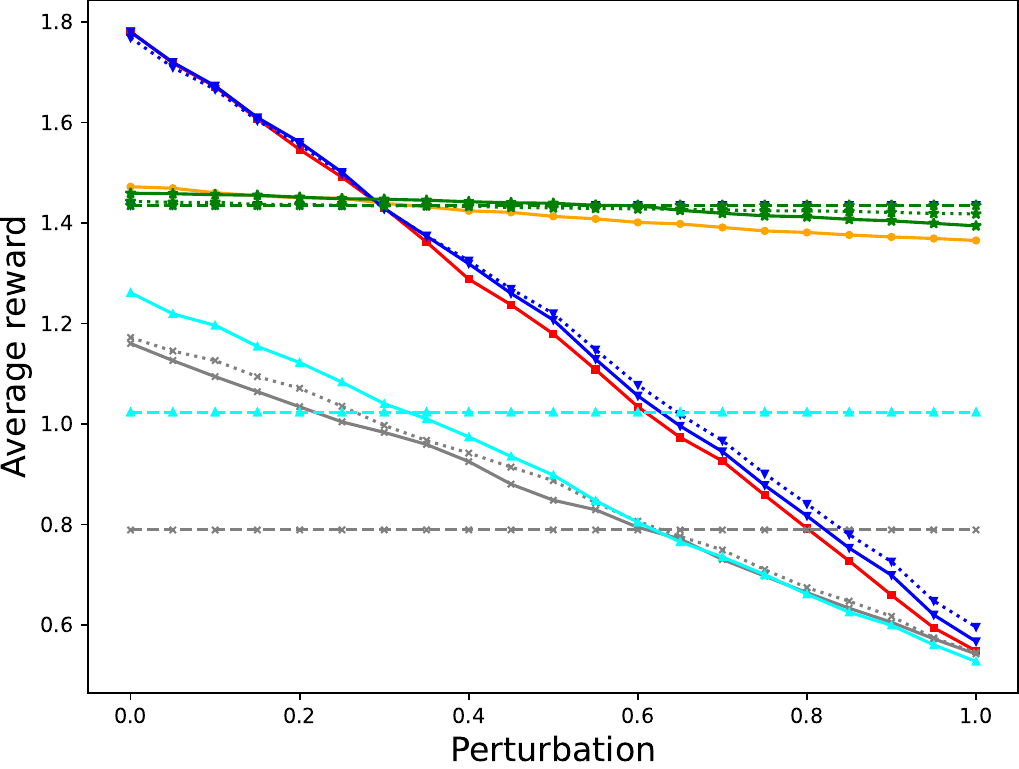}}
    \subfigure[$\rho=0.5$]{\includegraphics[scale=0.275]{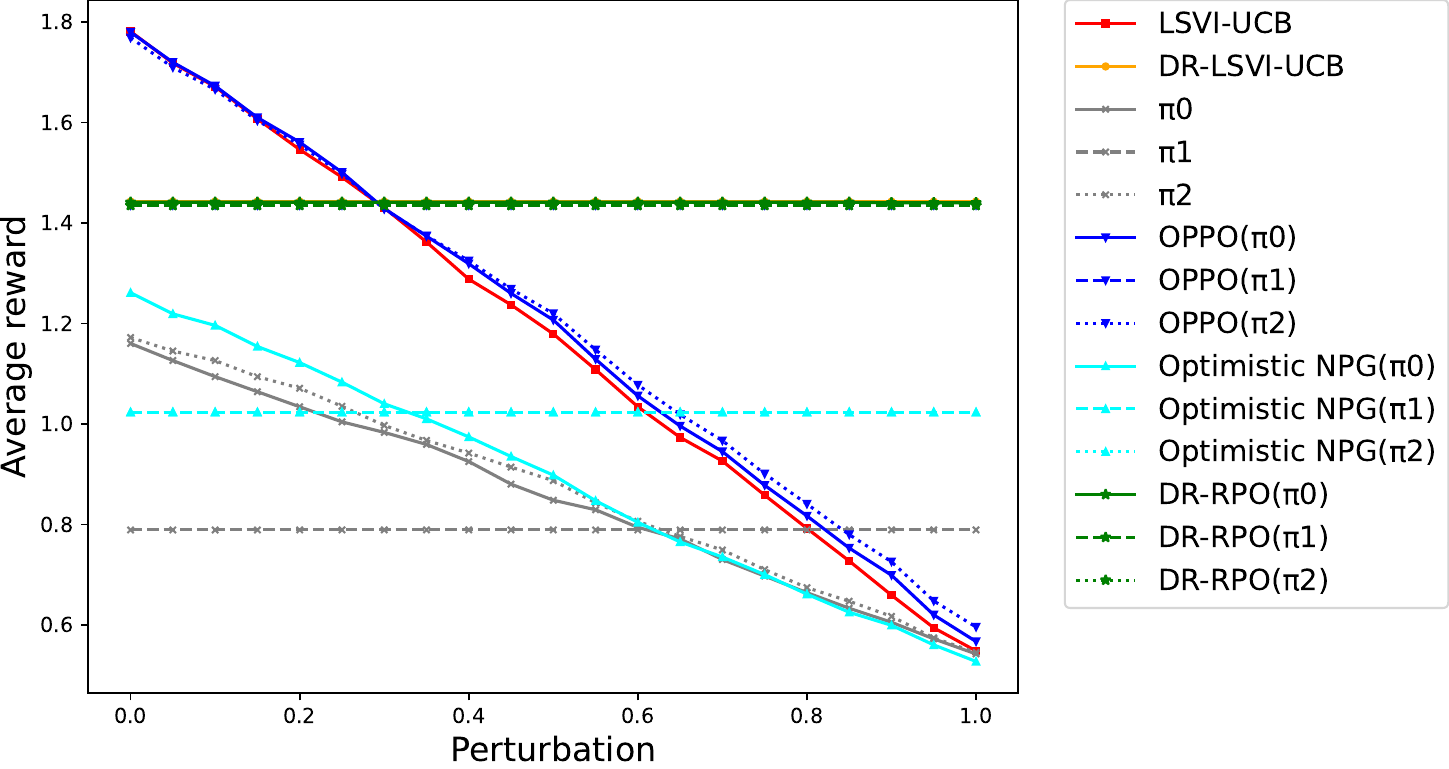}}
    \caption{Ablation on reference policies.}
    \label{fig:ablation_piref}
\end{figure*}

\paragraph{Step size.} The $\eta$ parameter controls the step size of policy improvement and the degree to which the value function penalizes discrepancy from the reference policy. As shown in \Cref{fig:ablation_eta}, with higher $\eta$, \alg deviates further from the reference policy and achieves better performance. Under all selected values of $\eta$, \alg is more robust than OPPO, Optimistic NPG, and LSVI-UCB, and as $\eta$ increases, \alg gradually approaches and even surpasses the performance of DR-LSVI-UCB.

\begin{figure*}[!h]
    \centering
    \subfigure[$\rho=0.1$]{\includegraphics[scale=0.27]{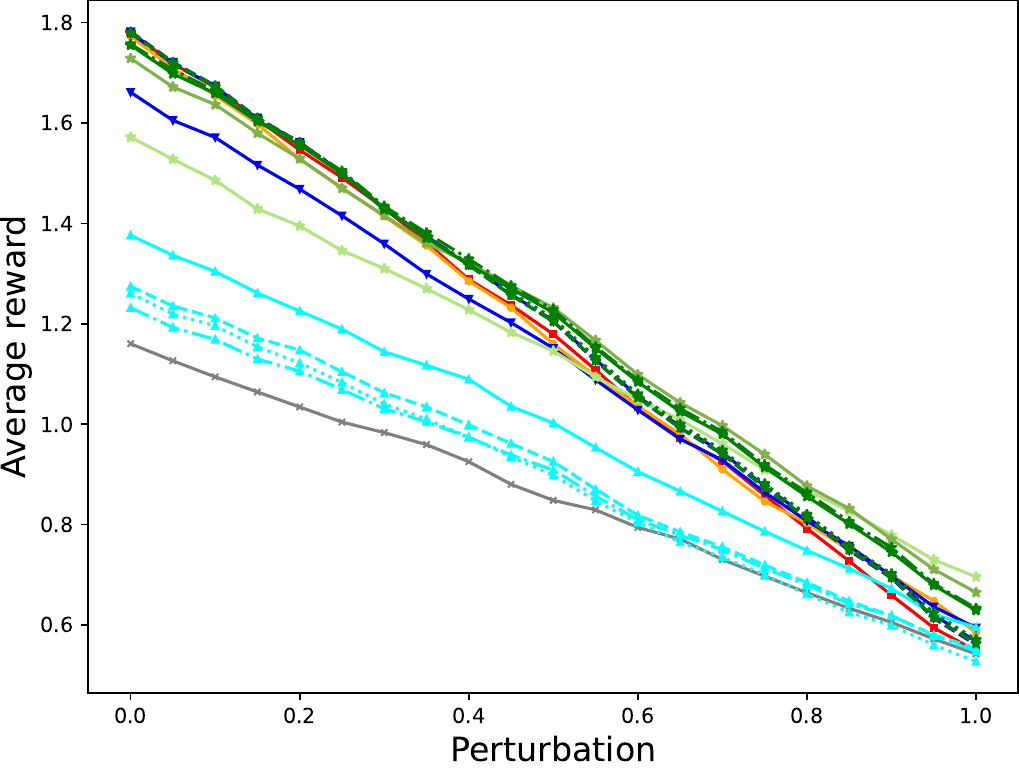}}
    \subfigure[$\rho=0.3$]{\includegraphics[scale=0.27]{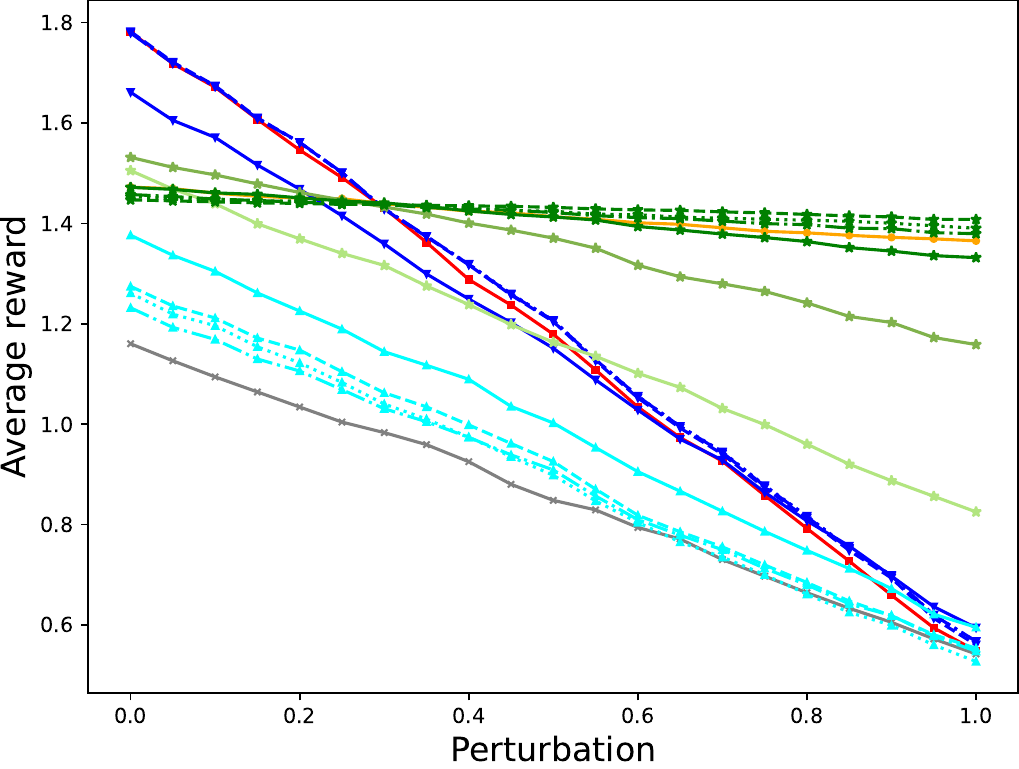}}
    \subfigure[$\rho=0.5$]{\includegraphics[scale=0.27]{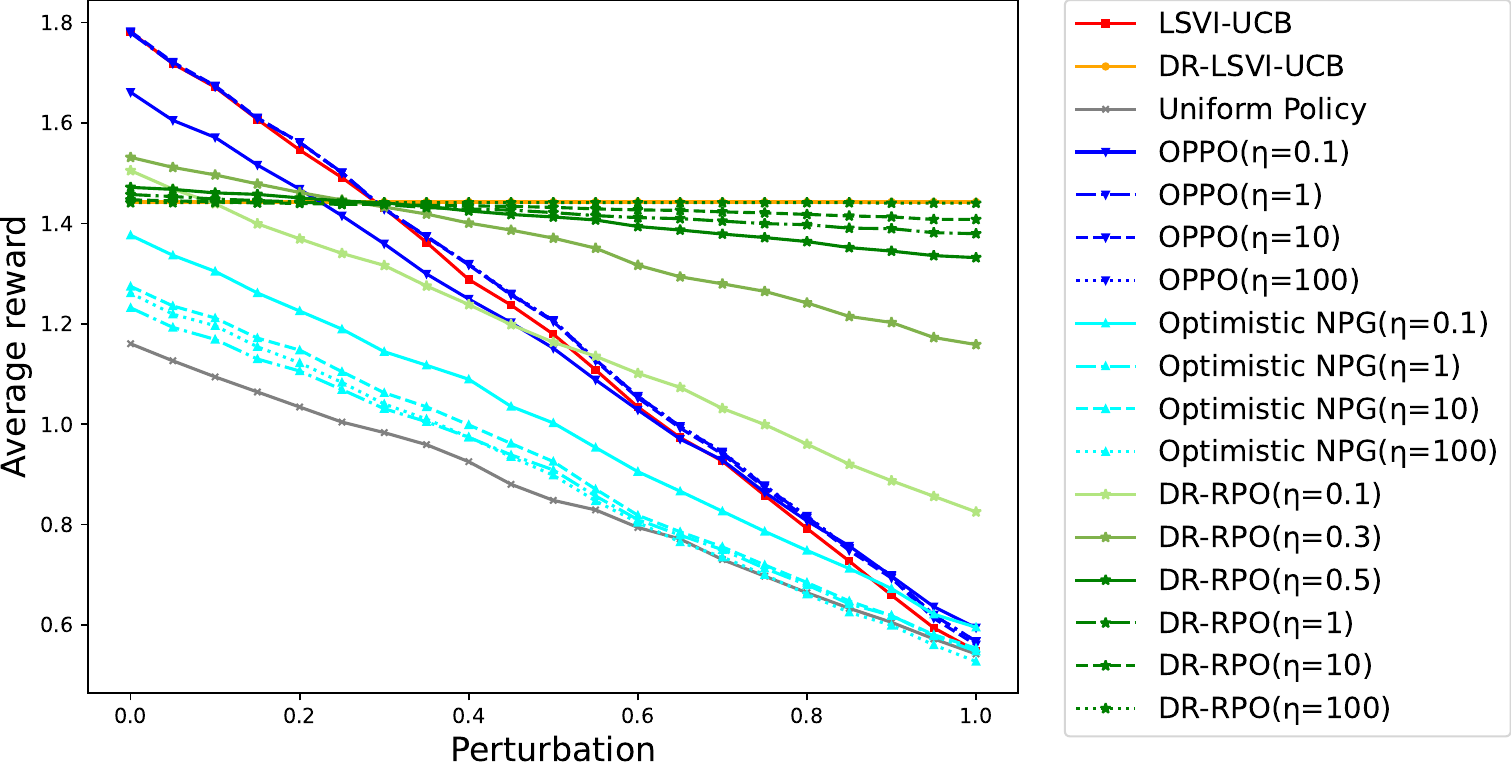}}
    \caption{Ablation on step size.}
    \label{fig:ablation_eta}
\end{figure*}

\section{Dynamic Programming Principles}

\subsection{Proof of \Cref{proposition:robust_bellman_kl_drmdp}}
To prove the robust Bellman equation for the policy-regularized $d$-rectangular linear DRMDP, we consider the following stronger proposition: there exists transition kernels $\bar{P}^\pi=\{\bar{P}_h^\pi\}_{h=1}^H$ where $\bar{P}_h^\pi\in\cU_h^{\rho}(P_h^0)$ such that the robust Bellman equations hold, i.e.
\begin{align}
\tilde{Q}_h^{\pi,\rho}(s)&=r_h(s,a)+\inf_{P_h(\cdot|s,a)\in\cU_h^\rho(s,a,;\bmu_h^0)}\EE_{s'\sim P_h(\cdot|s,a)}\tilde{V}_{h+1}^{\pi,\rho}(s'),\label{eq:bellman_drmdp_q}\\
\tilde{V}_h^{\pi,\rho}(s)&=\innerA{\tilde{Q}_h^{\pi,\rho}(s,\cdot), \pi_h(\cdot|s)}-\frac{1}{\eta}D_{KL}[\pi_h(\cdot|s)\Vert\piref_h(\cdot|s)],\label{eq:bellman_drmdp_v}
\end{align}
and the robust value functions are equal to the value functions under transitions $\bar{P}^\pi$:
\begin{align}
\tilde{Q}_h^{\pi,\rho}(s)&=\tilde{Q}_h^{\pi,\{\bar{P}_t^\pi\}_{t=1}^H}(s),\label{eq:pbar_drmdp_q}\\
\tilde{V}_h^{\pi,\rho}(s)&=\tilde{V}_h^{\pi,\{\bar{P}_t^\pi\}_{t=1}^H}(s).\label{eq:pbar_drmdp_v}
\end{align}
\begin{proof}
We prove this stronger proposition by induction. At the last stage $H$, the base case holds trivially since there is no transition involved. If the inductive hypothesis holds for step $h+1$, then there exist transition kernels $\{\bar{P}_t^\pi\}_{t=h+1}^H$ such that
\begin{align} \label{eq:drmdp_bellman_proof_ih}
\tilde{V}_{h+1}^{\pi,\rho}(s)=\tilde{V}_{h+1}^{\pi,\{\bar{P}_t^\pi\}_{t=h+1}^H}(s).
\end{align}
By definition of $\tilde{Q}^{\pi,\rho}_h$ in \eqref{eq:drmdp_q}, for any $(s,a)\in\cS\times\cA$, we have
\begin{align}
&\tilde{Q}^{\pi,\rho}_h(s,a) \notag\\
=&r_h(s,a)+\infPP\EE_{\{P_t\}_{t=h}^H}\Bigg[\sum_{t=h+1}^H\bigg(r_t(s_t,a_t)-\frac{1}{\eta}D_{KL}[\pi_t(\cdot|s_t)\Vert\piref_t(\cdot|s_t)]\bigg)\Bigg|s_h=s,a_h=a,\pi\Bigg]\\
=&r_h(s,a)+\inf_{P_t\in\cU_t^\rho(P_t^0),h\leq t\leq H}\EE_{\{P_t\}_{t=h}^H}\Bigg[\sum_{t=h+1}^H\bigg(r_t(s_t,a_t)-\frac{1}{\eta}D_{KL}[\pi_t(\cdot|s_t)\Vert\piref_t(\cdot|s_t)]\bigg)\Bigg|s_h=s,a_h=a,\pi\Bigg]\\
=&r_h(s,a)+\inf_{P_t\in\cU_t^\rho(P_t^0),h\leq t\leq H}\int_{\cS}P_h(ds'|s,a) \notag\\
&\qquad\EE_{\{P_t\}_{t=h+1}^H}\Bigg[\sum_{t=h+1}^H\bigg(r_t(s_t,a_t)-\frac{1}{\eta}D_{KL}[\pi_t(\cdot|s_t)\Vert\piref_t(\cdot|s_t)]\bigg)\Bigg|s_{h+1}=s',\pi\Bigg]\\
\leq&r_h(s,a)+\inf_{P_h(\cdot|s,a)\in\cU_h^\rho(s,a;\bmu_h^0)}\int_{\cS}P_h(ds'|s,a) \notag\\
&\qquad\EE_{\{\bar{P}_t\}_{t=h+1}^H}\Bigg[\sum_{t=h+1}^H\bigg(r_t(s_t,a_t)-\frac{1}{\eta}D_{KL}[\pi_t(\cdot|s_t)\Vert\piref_t(\cdot|s_t)]\bigg)\Bigg|s_{h+1}=s',\pi\Bigg]. \label{eq:drmdp_bellman_proof_leq0}
\end{align}
Since the uncertainty sets $\{\cU_h^\rho(s,a;\bmu_h^0)\}_{(s,a)\in\cS\times\cA}$ are closed and $\{\cU_{h,i}^{\rho}\}_{h,i=1}^{H,d}$ are independent from the state-action pair $(s,a)$ under $d$-rectangular linear DRMDP, there exists a distribution $\bar{P}_h^\pi$ that attains the infimum, i.e.
\begin{align}
\bar{P}_h^\pi=&\arginf_{P_h(\cdot|s,a)\in\cU_h^\rho(s,a;\bmu_h^0),h\leq t\leq H}\int_{\cS}P_h(ds'|s,a) \label{eq:drmdp_bellman_proof_arginf}\\
&\qquad\EE_{\{\bar{P}_t\}_{t=h+1}^H}\Bigg[\sum_{t=h+1}^H\bigg(r_t(s_t,a_t)-\frac{1}{\eta}D_{KL}[\pi_t(\cdot|s_t)\Vert\piref_t(\cdot|s_t)]\bigg)\Bigg|s_{h+1}=s',\pi\Bigg]. \notag
\end{align}

Using definition of the policy-regularized value function $\tilde{V}_h^{\pi,P}$ in \Cref{sec:pre_linearDRMDP} and combining \eqref{eq:drmdp_bellman_proof_ih}\eqref{eq:drmdp_bellman_proof_leq0}, we have
\begin{align}
&\tilde{Q}^{\pi,\rho}_h(s,a)\notag\\
\leq&r_h(s,a)+\inf_{P_h(\cdot|s,a)\in\cU_h^\rho(s,a;\bmu_h^0)}\int_{\cS}P_h(ds'|s,a)\tilde{V}_{h+1}^{\pi,\{\bar{P}_t\}_{t=h+1}^H}(s') \label{eq:drmdp_bellman_proof_leq_restate}\\
=&r_h(s,a)+\inf_{P_h(\cdot|s,a)\in\cU_h^\rho(s,a;\bmu_h^0)}\int_{\cS}P_h(ds'|s,a)\tilde{V}_{h+1}^{\pi,\rho}(s')\label{eq:drmdp_bellman_proof_use_ih}\\
=&r_h(s,a)+\inf_{P_h(\cdot|s,a)\in\cU_h^\rho(s,a;\bmu_h^0)}\int_{\cS}P_h(ds'|s,a)\inf_{P_t(\cdot|s,a)\in\cU_t^\rho(P_t^0),h+1\leq t\leq H}\tilde{V}_{h+1}^{\pi,\{P_t\}_{t=h+1}^H}(s') \label{eq:drmdp_bellman_proof_use_v_def}\\
=&r_h(s,a)+\inf_{P_t(\cdot|s,a)\in\cU_t^\rho(P_t^0),h\leq t\leq H}\int_{\cS}P_h(ds'|s,a)\tilde{V}_{h+1}^{\pi,\{P_t\}_{t=h+1}^H}(s'),\label{eq:drmdp_bellman_proof_leq}
\end{align}
where \eqref{eq:drmdp_bellman_proof_use_ih} follows from the inductive hypothesis \eqref{eq:drmdp_bellman_proof_ih}, and \eqref{eq:drmdp_bellman_proof_use_v_def} is due to the definition of $\tilde{V}_h^{\pi,\rho}$ in \eqref{eq:drmdp_v}. Recall the definition of $\tilde{Q}_h^{\pi,\rho}$ from \eqref{eq:drmdp_q}:
\begin{align}
\tilde{Q}_h^{\pi,\rho}(s,a)
=&r_h(s,a)+\infPP\EE_{P}\Bigg[\sum_{t=h+1}^H\bigg(r_t(s_t,a_t)-\frac{1}{\eta}D_{KL}[\pi_t(\cdot|s_t)\Vert\piref_t(\cdot|s_t)]\bigg)\Bigg|s_h=s,a_h=a,\pi\Bigg]\notag\\
=&r_h(s,a)+\infPP\EE_{P}\Big[\tilde{V}_{h+1}^{\pi,P}\Big|s_h=s,a_h=a,\pi\Big] \label{eq:drmdp_bellman_proof_use_v_def1}\\
=&r_h(s,a)+\inf_{P_t(\cdot|s,a)\in\cU_t^\rho(P_t^0),h\leq t\leq H}\int_{\cS}P_h(ds'|s,a)\tilde{V}_{h+1}^{\pi,\{P_t\}_{t=h+1}^H}(s'), \label{eq:drmdp_bellman_proof_eq}
\end{align}
where \eqref{eq:drmdp_bellman_proof_use_v_def1} follows the definition of the policy-regularized value function $\tilde{V}_h^{\pi,P}$. Note that \eqref{eq:drmdp_bellman_proof_leq} and \eqref{eq:drmdp_bellman_proof_eq} are identical, so the inequality in \eqref{eq:drmdp_bellman_proof_leq0} is actually an equality. Therefore, from \eqref{eq:drmdp_bellman_proof_use_ih} we get
\begin{align}
\tilde{Q}^{\pi,\rho}_h(s,a)=&r_h(s,a)+\inf_{P_h(\cdot|s,a)\in\cU_h^\rho(s,a;\bmu_h^0)}\int_{\cS}P_h(ds'|s,a)\tilde{V}_{h+1}^{\pi,\rho}(s') \notag\\
=&r_h(s,a)+\inf_{P_h(\cdot|s,a)\in\cU_h^\rho(s,a,;\bmu_h^0)}\EE_{s'\sim P_h(\cdot|s,a)}\tilde{V}_{h+1}^{\pi,\rho}(s'), \notag 
\end{align}
which implies that \eqref{eq:bellman_drmdp_q} holds at step $h$.

On the other hand, now that \eqref{eq:drmdp_bellman_proof_leq0} is an equality, we can combine it with \eqref{eq:drmdp_bellman_proof_arginf} and get
\begin{align}
&\tilde{Q}^{\pi,\rho}_h(s,a)\notag\\
=&r_h(s,a)+\int_{\cS}\bar{P}_h(ds'|s,a)\EE_{\{\bar{P}_t\}_{t=h+1}^H}\Bigg[\sum_{t=h+1}^H\bigg(r_t(s_t,a_t)-\frac{1}{\eta}D_{KL}[\pi_t(\cdot|s_t)\Vert\piref_t(\cdot|s_t)]\bigg)\Bigg|s_{h+1}=s',\pi\Bigg] \notag\\
=&r_h(s,a)+\EE_{\{\bar{P}_t\}_{t=h}^H}\Bigg[\sum_{t=h+1}^H\bigg(r_t(s_t,a_t)-\frac{1}{\eta}D_{KL}[\pi_t(\cdot|s_t)\Vert\piref_t(\cdot|s_t)]\bigg)\Bigg|s_h=s,a_h=a,\pi\Bigg] \label{eq:drmdp_bellman_proof_q_pbar_alt}\\
=&\tilde{Q}^{\pi,\{\bar{P}_t\}_{t=h}^H}_h(s,a), \notag
\end{align}
which implies that \eqref{eq:pbar_drmdp_q} holds at step $h$.

Next we prove the statements regarding $\tilde{V}_h^{\pi,\rho}$. By the definition of $\tilde{V}_h^{\pi,\rho}$ in \eqref{eq:drmdp_v}, we have
\begin{align}
&\tilde{V}_h^{\pi,\rho}(s) \notag\\
=&\inf_{P_t\in\cU_t^\rho(P_t^0),h\leq t\leq H}\EE_{\{P_t\}_{t=h}^H}\Bigg[\sum_{t=h}^H\bigg(r_t(s_t,a_t)-\frac{1}{\eta}D_{KL}[\pi_t(\cdot|s_t)\Vert\piref_t(\cdot|s_t)]\bigg)\Bigg|s_h=s,\pi\Bigg] \notag\\
=&\inf_{P_t\in\cU_t^\rho(P_t^0),h\leq t\leq H}\int_{a\in\cA}\pi_h(a|s)\EE_{\{P_t\}_{t=h}^H}\Bigg[\sum_{t=h}^H\bigg(r_t(s_t,a_t) \notag\\
&\qquad\qquad\qquad-\frac{1}{\eta}D_{KL}[\pi_t(\cdot|s_t)\Vert\piref_t(\cdot|s_t)]\bigg)\Bigg|s_h=s,a_h=a,\pi\Bigg]da \notag\\
\leq&\int_{a\in\cA}\pi_h(a|s)\EE_{\{\bar{P}_t\}_{t=h}^H}\Bigg[\sum_{t=h}^H\bigg(r_t(s_t,a_t)-\frac{1}{\eta}D_{KL}[\pi_t(\cdot|s_t)\Vert\piref_t(\cdot|s_t)]\bigg)\Bigg|s_h=s,a_h=a,\pi\Bigg]da  \label{eq:drmdp_bellman_proof_v_leq}\\
=&\int_{a\in\cA}\pi_h(a|s)\Bigg[r_h(s_h,a_h)-\frac{1}{\eta}D_{KL}[\pi_h(\cdot|s)\Vert\piref_h(\cdot|s)] \notag\\
&\qquad\qquad\qquad+\EE_{\{\bar{P}_t\}_{t=h}^H}\Bigg[\sum_{t=h+1}^H\bigg(r_t(s_t,a_t)-\frac{1}{\eta}D_{KL}[\pi_t(\cdot|s_t)\Vert\piref_t(\cdot|s_t)]\bigg)\Bigg|s_h=s,a_h=a,\pi\Bigg]\Bigg]da \notag\\
=&\int_{a\in\cA}\pi_h(a|s)\Bigg[r_h(s_h,a_h)+\EE_{\{\bar{P}_t\}_{t=h}^H}\Bigg[\sum_{t=h+1}^H\bigg(r_t(s_t,a_t) \notag\\
&\qquad\qquad\qquad-\frac{1}{\eta}D_{KL}[\pi_t(\cdot|s_t)\Vert\piref_t(\cdot|s_t)]\bigg)\Bigg|s_h=s,a_h=a,\pi\Bigg]\Bigg]da-\frac{1}{\eta}D_{KL}[\pi_h(\cdot|s)\Vert\piref_h(\cdot|s)] \notag\\
=&\int_{a\in\cA}\pi_h(a|s)\tilde{Q}^{\pi,\rho}_h(s,a)da-\frac{1}{\eta}D_{KL}[\pi_h(\cdot|s)\Vert\piref_h(\cdot|s)]\label{eq:drmdp_bellman_proof_use_q_pbar}\\
=&\int_{a\in\cA}\pi_h(a|s)\Bigg[r_h(s,a)+\infPP\EE_{P}\Bigg[\sum_{t=h+1}^H\bigg(r_t(s_t,a_t) \label{eq:drmdp_bellman_proof_v_use_v_def}\\
&\qquad\qquad\qquad-\frac{1}{\eta}D_{KL}[\pi_t(\cdot|s_t)\Vert\piref_t(\cdot|s_t)]\bigg)\Bigg|s_h=s,a_h=a,\pi\Bigg]\Bigg]da-\frac{1}{\eta}D_{KL}[\pi_h(\cdot|s)\Vert\piref_h(\cdot|s)] \notag\\
=&\infPP\EE_{P}\int_{a\in\cA}\pi_h(a|s)\Bigg[\sum_{t=h}^H\bigg(r_t(s_t,a_t)-\frac{1}{\eta}D_{KL}[\pi_t(\cdot|s_t)\Vert\piref_t(\cdot|s_t)]\bigg)\Bigg|s_h=s,a_h=a,\pi\Bigg]da \notag\\
=&\infPP\EE_{P}\Bigg[\sum_{t=h}^H\bigg(r_t(s_t,a_t)-\frac{1}{\eta}D_{KL}[\pi_t(\cdot|s_t)\Vert\piref_t(\cdot|s_t)]\bigg)\Bigg|s_h=s,\pi\Bigg],\label{eq:drmdp_bellman_proof_v_eq}
\end{align}
where \eqref{eq:drmdp_bellman_proof_use_q_pbar} follows from \eqref{eq:drmdp_bellman_proof_q_pbar_alt} and \eqref{eq:drmdp_bellman_proof_v_use_v_def} is due to the definition of $\tilde{Q}^{\pi,\rho}_h(s,a)$ in \eqref{eq:drmdp_q}. Note that \eqref{eq:drmdp_bellman_proof_v_eq} is identical to \eqref{eq:drmdp_v}, so the inequality in \eqref{eq:drmdp_bellman_proof_v_leq} is actually an equality. Therefore, from \eqref{eq:drmdp_bellman_proof_v_leq} we get
\begin{align}
\tilde{V}_h^{\pi,\rho}(s)&=\int_{a\in\cA}\pi_h(a|s)\tilde{Q}^{\pi,\rho}_h(s,a)da-\frac{1}{\eta}D_{KL}[\pi_h(\cdot|s)\Vert\piref_h(\cdot|s)] \label{eq:drmdp_bellman_proof_v_expand}\\
&=\innerA{\tilde{Q}_h^{\pi,\rho}(s,\cdot), \pi_h(\cdot|s)}-\frac{1}{\eta}D_{KL}[\pi_h(\cdot|s)\Vert\piref_h(\cdot|s)],\notag
\end{align}
which implies that \eqref{eq:pbar_drmdp_v} holds at step $h$.

On the other hand, combining \eqref{eq:drmdp_bellman_proof_q_pbar_alt} and \eqref{eq:drmdp_bellman_proof_v_expand}, we get
\begin{align}
\tilde{V}_h^{\pi,\rho}(s)=&\int_{a\in\cA}\pi_h(a|s)\Bigg[r_h(s,a)+\EE_{\{\bar{P}_t\}_{t=h}^H}\Bigg[\sum_{t=h+1}^H\bigg(r_t(s_t,a_t) \notag\\
&\qquad\qquad-\frac{1}{\eta}D_{KL}[\pi_t(\cdot|s_t)\Vert\piref_t(\cdot|s_t)]\bigg)\Bigg|s_h=s,a_h=a,\pi\Bigg]\Bigg]da-\frac{1}{\eta}D_{KL}[\pi_h(\cdot|s)\Vert\piref_h(\cdot|s)] \notag\\
=&\int_{a\in\cA}\pi_h(a|s)\Bigg[r_h(s,a)-\frac{1}{\eta}D_{KL}[\pi_h(\cdot|s)\Vert\piref_h(\cdot|s)] \notag\\
&\qquad\qquad+\EE_{\{\bar{P}_t\}_{t=h}^H}\Bigg[\sum_{t=h+1}^H\bigg(r_t(s_t,a_t)-\frac{1}{\eta}D_{KL}[\pi_t(\cdot|s_t)\Vert\piref_t(\cdot|s_t)]\bigg)\Bigg|s_h=s,a_h=a,\pi\Bigg]\Bigg]da \notag\\
=&\int_{a\in\cA}\pi_h(a|s)\Bigg[\EE_{\{\bar{P}_t\}_{t=h}^H}\Bigg[\sum_{t=h}^H\bigg(r_t(s_t,a_t)-\frac{1}{\eta}D_{KL}[\pi_t(\cdot|s_t)\Vert\piref_t(\cdot|s_t)]\bigg)\Bigg|s_h=s,a_h=a,\pi\Bigg]\Bigg]da \notag\\
=&\EE_{\{\bar{P}_t\}_{t=h}^H}\Bigg[\sum_{t=h}^H\bigg(r_t(s_t,a_t)-\frac{1}{\eta}D_{KL}[\pi_t(\cdot|s_t)\Vert\piref_t(\cdot|s_t)]\bigg)\Bigg|s_h=s,\pi\Bigg] \notag\\
=&\tilde{V}_h^{\pi,\{\bar{P}_t^\pi\}_{t=1}^H}(s), \notag
\end{align}
which implies that \eqref{eq:pbar_drmdp_q} holds at step $h$. Hence by induction, we finish the proof of \Cref{proposition:robust_bellman_kl_drmdp}.
\end{proof}

\subsection{Proof of \Cref{proposition:robust_bellman_kl_rrmdp}}
To prove the robust Bellman equation for the policy-regularized $d$-rectangular linear RRMDP, we consider the following stronger proposition: there exists transition kernels $\bar{\bmu}=\{\bar{\bmu}_h\}_{h=1}^H, P_h^\pi=\inner{\bm{\phi},\bmu_t}$ such that the robust Bellman equations hold, i.e.
\begin{align}
\tilde{Q}_h^{\pi,\sigma}(s)=&r_h(s,a)+\inf_{\bmu_h\in\Delta(\cS)^d,P_h=\inner{\bm{\phi},\bmu_h}}\Big[\EE_{s'\sim P_h(\cdot|s,a)}[V_{h+1}^{\pi,\sigma}(s')]+\sigma\inner{\bm{\phi}(s,a),D(\bmu_h\|\bmu_h^0)}\Big],\label{eq:bellman_rrmdp_q}\\
\tilde{V}_h^{\pi,\sigma}(s)=&\innerA{\tilde{Q}_h^{\pi,\sigma}(s,\cdot), \pi_h(\cdot|s)}-\frac{1}{\eta}D_{KL}[\pi_h(\cdot|s)\Vert\piref_h(\cdot|s)],\label{eq:bellman_rrmdp_v}
\end{align}
and the robust value functions are equal to the value functions under transitions $\bar{P}^\pi$:
\begin{align}
\tilde{Q}_h^{\pi,\sigma}(s)&=\tilde{Q}_h^{\pi,\{\bar{P}_t^\pi\}_{t=1}^H}(s),\label{eq:pbar_rrmdp_q}\\
\tilde{V}_h^{\pi,\sigma}(s)&=\tilde{V}_h^{\pi,\{\bar{P}_t^\pi\}_{t=1}^H}(s).\label{eq:pbar_rrmdp_v}
\end{align}
\begin{proof}
We prove this stronger proposition by induction. At the last stage $H$, the base case holds trivially since there is no transition involved. If the inductive hypothesis holds for step $h+1$, then there exist transition kernels $\{\bar{P}_t^\pi\}_{t=h+1}^H$ such that
\begin{align} \label{eq:rrmdp_bellman_proof_ih}
\tilde{V}_{h+1}^{\pi,\sigma}(s)=\tilde{V}_{h+1}^{\pi,\{\bar{P}_t^\pi\}_{t=h+1}^H}(s).
\end{align}
By definition of $\tilde{Q}^{\pi,\sigma}_h$ in \eqref{eq:rrmdp_q}, for any $(s,a)\in\cS\times\cA$, we have
\begin{align}
&\tilde{Q}^{\pi,\sigma}_h(s,a) \\
&=r_h(s,a)+\inf_{\bmu_h\in\Delta(\cS)^d,P_h=\inner{\bm{\phi},\bmu_h}}\sigma\inner{\bm{\phi}(s,a),D(\bmu_h\|\bmu_h^0)}+\EE_{P}\Bigg[\sum_{t=h+1}^H\bigg(r_t(s_t,a_t) \notag\\
&\qquad+\sigma\inner{\bm{\phi}(s_t,a_t),D(\bmu_t\|\bmu_t^0)}-\frac{1}{\eta}D_{KL}[\pi_t(\cdot|s_t)\Vert\piref_t(\cdot|s_t)]\bigg)\Bigg|s_h=s,a_h=a,\pi\Bigg] \notag\\
&=r_h(s,a)+\inf_{\bmu_t\in\Delta(\cS)^d,P_t=\inner{\bm{\phi},\bmu_t},h\leq t\leq H}\sigma\inner{\bm{\phi}(s,a),D(\bmu_h\|\bmu_h^0)}+\EE_{\{P_t\}_{t=h}^H}\Bigg[\sum_{t=h+1}^H\bigg(r_t(s_t,a_t) \notag\\
&\qquad+\sigma\inner{\bm{\phi}(s_t,a_t),D(\bmu_t\|\bmu_t^0)}-\frac{1}{\eta}D_{KL}[\pi_t(\cdot|s_t)\Vert\piref_t(\cdot|s_t)]\bigg)\Bigg|s_h=s,a_h=a,\pi\Bigg]\\
&=r_h(s,a)+\inf_{\bmu_t\in\Delta(\cS)^d,P_t=\inner{\bm{\phi},\bmu_t},h\leq t\leq H}\sigma\inner{\bm{\phi}(s,a),D(\bmu_h\|\bmu_h^0)}+\int_{\cS}P_h(ds'|s,a)\EE_{\{P_t\}_{t=h+1}^H}\Bigg[ \notag\\
&\qquad\sum_{t=h+1}^H\bigg(r_t(s_t,a_t)+\sigma\inner{\bm{\phi}(s_t,a_t),D(\bmu_t\|\bmu_t^0)}-\frac{1}{\eta}D_{KL}[\pi_t(\cdot|s_t)\Vert\piref_t(\cdot|s_t)]\bigg)\Bigg|s_{h+1}=s',\pi\Bigg]\\
&\leq r_h(s,a)+\inf_{\bmu_h\in\Delta(\cS)^d,P_h=\inner{\bm{\phi},\bmu_h}}\sigma\inner{\bm{\phi}(s,a),D(\bmu_h\|\bmu_h^0)}+\int_{\cS}P_h(ds'|s,a)\EE_{\{\bar{P}_t\}_{t=h+1}^H}\Bigg[\sum_{t=h+1}^H\bigg(r_t(s_t,a_t) \notag\\
&\qquad+\sigma\inner{\bphi(s_t,a_t),D(\bmu_t\|\bmu_t^0)}-\frac{1}{\eta}D_{KL}[\pi_t(\cdot|s_t)\|\piref_t(\cdot|s_t)]\bigg)\Bigg|s_{h+1}=s',\pi\Bigg]. \label{eq:rrmdp_bellman_proof_leq0}
\end{align}
Since $\Delta(\cS)^d$ is closed, there exists a transition kernel $\bar{\bmu}_h, \bar{P}^\pi_h=\inner{\bm{\phi},\bmu_h}$ that attains the infimum, i.e.
\begin{align}
\bar{P}_h^\pi=&\arginf_{\bmu_h\in\Delta(\cS)^d,P_h=\inner{\bm{\phi},\bmu_h}}\sigma\inner{\bm{\phi}(s,a),D(\bmu_h\|\bmu_h^0)}+\int_{\cS}P_h(ds'|s,a)\EE_{\{\bar{P}_t\}_{t=h+1}^H}\Bigg[\sum_{t=h+1}^H\bigg(r_t(s_t,a_t) \notag\\
&\qquad+\sigma\inner{\bm{\phi}(s_t,a_t),D(\bmu_t\|\bmu_t^0)}-\frac{1}{\eta}D_{KL}[\pi_t(\cdot|s_t)\Vert\piref_t(\cdot|s_t)]\bigg)\Bigg|s_{h+1}=s',\pi\Bigg]. \label{eq:rrmdp_bellman_proof_arginf}
\end{align}

Using definition of the policy-regularized value function $\tilde{V}_h^{\pi,P}$ and combining \eqref{eq:rrmdp_bellman_proof_ih}\eqref{eq:rrmdp_bellman_proof_leq0}, we have
\begin{align}
\tilde{Q}^{\pi,\sigma}_h(s,a)\leq&r_h(s,a)+\inf_{\bmu_h\in\Delta(\cS)^d,P_h=\inner{\bm{\phi},\bmu_h}}\sigma\inner{\bm{\phi}(s,a),D(\bmu_h\|\bmu_h^0)}+\int_{\cS}P_h(ds'|s,a)\tilde{V}_{h+1}^{\pi,\{\bar{P}_t\}_{t=h+1}^H}(s') \label{eq:rrmdp_bellman_proof_leq_restate}\\
=&r_h(s,a)+\inf_{\bmu_h\in\Delta(\cS)^d,P_h=\inner{\bm{\phi},\bmu_h}}\sigma\inner{\bm{\phi}(s,a),D(\bmu_h\|\bmu_h^0)}+\int_{\cS}P_h(ds'|s,a)\tilde{V}_{h+1}^{\pi,\sigma}(s')\label{eq:rrmdp_bellman_proof_use_ih}\\
=&r_h(s,a)+\inf_{\bmu_h\in\Delta(\cS)^d,P_h=\inner{\bm{\phi},\bmu_h}}\sigma\inner{\bm{\phi}(s,a),D(\bmu_h\|\bmu_h^0)} \notag\\
&\qquad+\int_{\cS}P_h(ds'|s,a)\inf_{\bmu_t\in\Delta(\cS)^d,P_t=\inner{\bm{\phi},\bmu_t},h+1\leq t\leq H}\tilde{V}_{h+1}^{\pi,\{P_t\}_{t=h+1}^H}(s') \label{eq:rrmdp_bellman_proof_use_v_def}\\
=&r_h(s,a)+\inf_{\bmu_t\in\Delta(\cS)^d,P_t=\inner{\bm{\phi},\bmu_t}, h\leq t\leq H}\sigma\inner{\bm{\phi}(s,a),D(\bmu_h\|\bmu_h^0)}+\int_{\cS}P_h(ds'|s,a)\tilde{V}_{h+1}^{\pi,\{P_t\}_{t=h+1}^H}(s'),\label{eq:rrmdp_bellman_proof_leq}
\end{align}
where \eqref{eq:rrmdp_bellman_proof_use_ih} follows from the inductive hypothesis \eqref{eq:rrmdp_bellman_proof_ih}, and \eqref{eq:rrmdp_bellman_proof_use_v_def} is due to the definition of $\tilde{V}_h^{\pi,\sigma}$ in \eqref{eq:rrmdp_v}. Recall the definition of $\tilde{Q}_h^{\pi,\sigma}$ from \eqref{eq:rrmdp_q}:
\begin{align}
&\tilde{Q}_h^{\pi,\sigma}(s,a)\notag\\
=&r_h(s,a)+\inf_{\bmu\in\Delta(\cS)^d,P=\inner{\bm{\phi},\bmu}}\EE_{P}\Bigg[\sigma\inner{\bm{\phi}(s,a),D(\bmu_h\|\bmu_h^0)}+\sum_{t=h+1}^H\bigg(r_t(s_t,a_t) \notag\\
&\qquad+\sigma\inner{\bm{\phi}(s_t,a_t),D(\bmu_t\|\bmu_t^0)}-\frac{1}{\eta}D_{KL}[\pi_t(\cdot|s_t)\Vert\piref_t(\cdot|s_t)]\bigg)\Bigg|s_h=s,a_h=a,\pi\Bigg]\\
=&r_h(s,a)+\inf_{\bmu\in\Delta(\cS)^d,P=\inner{\bm{\phi},\bmu}}\sigma\inner{\bm{\phi}(s,a),D(\bmu_h\|\bmu_h^0)}+\EE_{P}\Big[\tilde{V}_{h+1}^{\pi,P}\Big|s_h=s,a_h=a,\pi\Big] \label{eq:rrmdp_bellman_proof_use_v_def1}\\
=&r_h(s,a)+\inf_{\bmu_t\in\Delta(\cS)^d,P_t=\inner{\bm{\phi},\bmu_t},h\leq t\leq H}\sigma\inner{\bm{\phi}(s,a),D(\bmu_h\|\bmu_h^0)}+\int_{\cS}P_h(ds'|s,a)\tilde{V}_{h+1}^{\pi,\{P_t\}_{t=h+1}^H}(s'), \label{eq:rrmdp_bellman_proof_eq}
\end{align}
where \eqref{eq:rrmdp_bellman_proof_use_v_def1} follows the definition of the policy-regularized value function $\tilde{V}_h^{\pi,P}$. Note that \eqref{eq:rrmdp_bellman_proof_leq} and \eqref{eq:rrmdp_bellman_proof_eq} are identical, so the inequality in \eqref{eq:rrmdp_bellman_proof_leq0} is actually an equality. Therefore, from \eqref{eq:rrmdp_bellman_proof_use_ih} we get
\begin{align}
\tilde{Q}^{\pi,\sigma}_h(s,a)=&r_h(s,a)+\inf_{\bmu_h\in\Delta(\cS)^d,P_h=\inner{\bm{\phi},\bmu_h}}\sigma\inner{\bm{\phi}(s,a),D(\bmu_h\|\bmu_h^0)}+\int_{\cS}P_h(ds'|s,a)\tilde{V}_{h+1}^{\pi,\sigma}(s') \notag\\
=&r_h(s,a)+\inf_{\bmu_h\in\Delta(\cS)^d,P_h=\inner{\bm{\phi},\bmu_h}}\sigma\inner{\bm{\phi}(s,a),D(\bmu_h\|\bmu_h^0)}+\EE_{s'\sim P_h(\cdot|s,a)}\tilde{V}_{h+1}^{\pi,\sigma}(s') \notag\\
=&r_h(s,a)+\inf_{\bmu_h\in\Delta(\cS)^d,P_h=\inner{\bm{\phi},\bmu_h}}\Big[\EE_{s'\sim P_h(\cdot|s,a)}[V_{h+1}^{\pi,\sigma}(s')]+\sigma\inner{\bm{\phi}(s,a),D(\bmu_h\|\bmu_h^0)}\Big], \notag
\end{align}
which implies that \eqref{eq:bellman_rrmdp_q} holds at step $h$.

On the other hand, now that \eqref{eq:rrmdp_bellman_proof_leq0} is an equality, we can combine it with \eqref{eq:rrmdp_bellman_proof_arginf} and get
\begin{align}
\tilde{Q}^{\pi,\sigma}_h(s,a)=&r_h(s,a)+\sigma\inner{\bm{\phi}(s,a),D(\bmu_h\|\bmu_h^0)}+\int_{\cS}\bar{P}_h(ds'|s,a)\EE_{\{\bar{P}_t\}_{t=h+1}^H}\Bigg[\sum_{t=h+1}^H\bigg(r_t(s_t,a_t) \notag\\
&\qquad+\sigma\inner{\bm{\phi}(s_t,a_t),D(\bmu_t\|\bmu_t^0)}-\frac{1}{\eta}D_{KL}[\pi_t(\cdot|s_t)\Vert\piref_t(\cdot|s_t)]\bigg)\Bigg|s_{h+1}=s',\pi\Bigg] \notag\\
=&r_h(s,a)+\sigma\inner{\bm{\phi}(s,a),D(\bmu_h\|\bmu_h^0)}+\EE_{\{\bar{P}_t\}_{t=h}^H}\Bigg[\sum_{t=h+1}^H\bigg(r_t(s_t,a_t) \notag\\
&\qquad+\sigma\inner{\bm{\phi}(s_t,a_t),D(\bmu_t\|\bmu_t^0)}-\frac{1}{\eta}D_{KL}[\pi_t(\cdot|s_t)\Vert\piref_t(\cdot|s_t)]\bigg)\Bigg|s_h=s,a_h=a,\pi\Bigg] \label{eq:rrmdp_bellman_proof_q_pbar_alt}\\
=&\tilde{Q}^{\pi,\{\bar{P}_t\}_{t=h}^H}_h(s,a), \notag
\end{align}
which implies that \eqref{eq:pbar_rrmdp_q} holds at step $h$.

Next we prove the statements regarding $\tilde{V}_h^{\pi,\sigma}$. By the definition of $\tilde{V}_h^{\pi,\sigma}$ in \eqref{eq:rrmdp_v}, we have
\begin{align}
&\tilde{V}_h^{\pi,\sigma}(s) \\
=&\inf_{\bmu_h\in\Delta(\cS)^d,P_h=\inner{\bm{\phi},\bmu_h},h\leq t\leq H}\EE_{\{P_t\}_{t=h}^H}\Bigg[\sum_{t=h}^H\bigg(r_t(s_t,a_t)+\sigma\inner{\bm{\phi}(s_t,a_t),D(\bmu_t\|\bmu_t^0)} \notag\\
&\qquad -\frac{1}{\eta}D_{KL}[\pi_t(\cdot|s_t)\Vert\piref_t(\cdot|s_t)]\bigg)\Bigg|s_h=s,\pi\Bigg] \notag\\
=&\inf_{\bmu_h\in\Delta(\cS)^d,P_h=\inner{\bm{\phi},\bmu_h},h\leq t\leq H}\int_{a\in\cA}\pi_h(a|s)\EE_{\{P_t\}_{t=h}^H}\Bigg[\sum_{t=h}^H\bigg(r_t(s_t,a_t)+\sigma\inner{\bm{\phi}(s_t,a_t),D(\bmu_t\|\bmu_t^0)} \notag\\
&\qquad -\frac{1}{\eta}D_{KL}[\pi_t(\cdot|s_t)\Vert\piref_t(\cdot|s_t)]\bigg)\Bigg|s_h=s,a_h=a,\pi\Bigg]da \notag\\
\leq&\int_{a\in\cA}\pi_h(a|s)\EE_{\{\bar{P}_t\}_{t=h}^H}\Bigg[\sum_{t=h}^H\bigg(r_t(s_t,a_t)+\sigma\inner{\bm{\phi}(s_t,a_t),D(\bmu_t\|\bmu_t^0)} \notag\\
&\qquad -\frac{1}{\eta}D_{KL}[\pi_t(\cdot|s_t)\Vert\piref_t(\cdot|s_t)]\bigg)\Bigg|s_h=s,a_h=a,\pi\Bigg]da  \label{eq:rrmdp_bellman_proof_v_leq}\\
=&\int_{a\in\cA}\pi_h(a|s)\Bigg[r_h(s,a)+\sigma\inner{\bm{\phi}(s,a),D(\bmu_h\|\bmu_h^0)}-\frac{1}{\eta}D_{KL}[\pi_h(\cdot|s)\Vert\piref_h(\cdot|s)]+\EE_{\{\bar{P}_t\}_{t=h}^H}\Bigg[ \notag\\
&\qquad \sum_{t=h+1}^H\bigg(r_t(s_t,a_t)+\sigma\inner{\bm{\phi}(s_t,a_t),D(\bmu_t\|\bmu_t^0)}-\frac{1}{\eta}D_{KL}[\pi_t(\cdot|s_t)\Vert\piref_t(\cdot|s_t)]\bigg)\Bigg|s_h=s,a_h=a,\pi\Bigg]\Bigg]da \notag\\
=&\int_{a\in\cA}\pi_h(a|s)\Bigg[r_h(s,a)+\sigma\inner{\bm{\phi}(s,a),D(\bmu_h\|\bmu_h^0)}+\EE_{\{\bar{P}_t\}_{t=h}^H}\Bigg[\sum_{t=h+1}^H\bigg(r_t(s_t,a_t)+\sigma\la\bm{\phi}(s_t,a_t), \notag\\
&\qquad D(\bmu_t\|\bmu_t^0)\ra-\frac{1}{\eta}D_{KL}[\pi_t(\cdot|s_t)\Vert\piref_t(\cdot|s_t)]\bigg)\Bigg|s_h=s,a_h=a,\pi\Bigg]\Bigg]da-\frac{1}{\eta}D_{KL}[\pi_h(\cdot|s)\Vert\piref_h(\cdot|s)] \notag\\
=&\int_{a\in\cA}\pi_h(a|s)\tilde{Q}^{\pi,\sigma}_h(s,a)da-\frac{1}{\eta}D_{KL}[\pi_h(\cdot|s)\Vert\piref_h(\cdot|s)]\label{eq:rrmdp_bellman_proof_use_q_pbar}\\
=&\int_{a\in\cA}\pi_h(a|s)\Bigg[r_h(s,a)+\sigma\inner{\bm{\phi}(s,a),D(\bmu_h\|\bmu_h^0)}+\inf_{\bmu\in\Delta(\cS)^d,P=\inner{\bm{\phi},\bmu}}\EE_{P}\Bigg[\sum_{t=h+1}^H\bigg(r_t(s_t,a_t)+\sigma\la\bm{\phi}(s_t,a_t), \notag\\
&D(\bmu_t\|\bmu_t^0)\ra-\frac{1}{\eta}D_{KL}[\pi_t(\cdot|s_t)\Vert\piref_t(\cdot|s_t)]\bigg)\Bigg|s_h=s,a_h=a,\pi\Bigg]\Bigg]da -\frac{1}{\eta}D_{KL}[\pi_h(\cdot|s)\Vert\piref_h(\cdot|s)] \label{eq:rrmdp_bellman_proof_v_use_v_def}\\
=&\inf_{\bmu\in\Delta(\cS)^d,P=\inner{\bm{\phi},\bmu}}\EE_{P}\int_{a\in\cA}\pi_h(a|s)\Bigg[\sum_{t=h}^H\bigg(r_t(s_t,a_t)-\frac{1}{\eta}D_{KL}[\pi_t(\cdot|s_t)\Vert\piref_t(\cdot|s_t)]\bigg)\Bigg|s_h=s,a_h=a,\pi\Bigg]da \notag\\
=&\inf_{\bmu\in\Delta(\cS)^d,P=\inner{\bm{\phi},\bmu}}\EE_{P}\Bigg[\sum_{t=h}^H\bigg(r_t(s_t,a_t)-\frac{1}{\eta}D_{KL}[\pi_t(\cdot|s_t)\Vert\piref_t(\cdot|s_t)]\bigg)\Bigg|s_h=s,\pi\Bigg],\label{eq:rrmdp_bellman_proof_v_eq}
\end{align}
where \eqref{eq:rrmdp_bellman_proof_use_q_pbar} follows from \eqref{eq:rrmdp_bellman_proof_q_pbar_alt} and \eqref{eq:rrmdp_bellman_proof_v_use_v_def} is due to the definition of $\tilde{Q}^{\pi,\sigma}_h(s,a)$ in \eqref{eq:rrmdp_q}. Note that \eqref{eq:rrmdp_bellman_proof_v_eq} is identical to \eqref{eq:rrmdp_v}, so the inequality in \eqref{eq:rrmdp_bellman_proof_v_leq} is actually an equality. Therefore, from \eqref{eq:rrmdp_bellman_proof_v_leq} we get
\begin{align}
\tilde{V}_h^{\pi,\sigma}(s)&=\int_{a\in\cA}\pi_h(a|s)\tilde{Q}^{\pi,\sigma}_h(s,a)da-\frac{1}{\eta}D_{KL}[\pi_h(\cdot|s)\Vert\piref_h(\cdot|s)] \label{eq:rrmdp_bellman_proof_v_expand}\\
&=\innerA{\tilde{Q}_h^{\pi,\sigma}(s,\cdot), \pi_h(\cdot|s)}-\frac{1}{\eta}D_{KL}[\pi_h(\cdot|s)\Vert\piref_h(\cdot|s)],\notag
\end{align}
which implies that \eqref{eq:pbar_rrmdp_v} holds at step $h$.

On the other hand, combining \eqref{eq:rrmdp_bellman_proof_q_pbar_alt} and \eqref{eq:rrmdp_bellman_proof_v_expand}, we get
\begin{align}
&\tilde{V}_h^{\pi,\sigma}(s)\\ 
=&\int_{a\in\cA}\pi_h(a|s)\Bigg[r_h(s,a)+\sigma\inner{\bm{\phi}(s,a),D(\bmu_h\|\bmu_h^0)}+\EE_{\{\bar{P}_t\}_{t=h}^H}\Bigg[\sum_{t=h+1}^H\bigg(r_t(s_t,a_t)+\sigma\la\bm{\phi}(s_t,a_t), \notag\\
&\qquad D(\bmu_t\|\bmu_t^0)\ra-\frac{1}{\eta}D_{KL}[\pi_t(\cdot|s_t)\Vert\piref_t(\cdot|s_t)]\bigg)\Bigg|s_h=s,a_h=a,\pi\Bigg]\Bigg]da -\frac{1}{\eta}D_{KL}[\pi_h(\cdot|s)\Vert\piref_h(\cdot|s)] \notag\\
=&\int_{a\in\cA}\pi_h(a|s)\Bigg[r_h(s,a)+\sigma\inner{\bm{\phi}(s,a),D(\bmu_h\|\bmu_h^0)}-\frac{1}{\eta}D_{KL}[\pi_h(\cdot|s)\Vert\piref_h(\cdot|s)]+\EE_{\{\bar{P}_t\}_{t=h}^H}\Bigg[ \notag\\
&\qquad \sum_{t=h+1}^H\bigg(r_t(s_t,a_t)+\sigma\inner{\bm{\phi}(s_t,a_t),D(\bmu_t\|\bmu_t^0)}-\frac{1}{\eta}D_{KL}[\pi_t(\cdot|s_t)\Vert\piref_t(\cdot|s_t)]\bigg)\Bigg|s_h=s,a_h=a,\pi\Bigg]\Bigg]da \notag\\
=&\int_{a\in\cA}\pi_h(a|s)\Bigg[\EE_{\{\bar{P}_t\}_{t=h}^H}\Bigg[\sum_{t=h}^H\bigg(r_t(s_t,a_t)+\sigma\inner{\bm{\phi}(s_t,a_t),D(\bmu_t\|\bmu_t^0)} \notag\\
&\qquad -\frac{1}{\eta}D_{KL}[\pi_t(\cdot|s_t)\Vert\piref_t(\cdot|s_t)]\bigg)\Bigg|s_h=s,a_h=a,\pi\Bigg]\Bigg]da \notag\\
=&\EE_{\{\bar{P}_t\}_{t=h}^H}\Bigg[\sum_{t=h}^H\bigg(r_t(s_t,a_t)+\sigma\inner{\bm{\phi}(s_t,a_t),D(\bmu_t\|\bmu_t^0)}-\frac{1}{\eta}D_{KL}[\pi_t(\cdot|s_t)\Vert\piref_t(\cdot|s_t)]\bigg)\Bigg|s_h=s,\pi\Bigg] \notag\\
=&\tilde{V}_h^{\pi,\{\bar{P}_t^\pi\}_{t=1}^H}(s), \notag
\end{align}
which implies that \eqref{eq:pbar_rrmdp_q} holds at step $h$. Hence by induction, we finish the proof of \Cref{proposition:robust_bellman_kl_rrmdp}.
\end{proof}

\subsection{Proof of \Cref{proposition:optimal_policy_drmdp}}

Next, we show that there exists an optimal policy under policy-regularized $d$-rectangular DRMDP.

\begin{proof}
We define a policy $\hat{\pi}=\{\hat{\pi}\}_{h=1}^H$ such that for all $(s,a,h)\in\cS\times\cA\times[H]$,
\begin{align}
\hat{\pi}(a|s)=\frac{1}{Z_h(s)}\piref_h(a|s)\exp\Big(\eta\tilde{Q}^{*,\rho}_{h}(s,a)\Big),
\end{align}
where $Z_h(s)=\int_{a\in\cA}\piref_h(a|s)\exp\Big(\eta\tilde{Q}^{*,\rho}_{h}(s,a)\Big)da$ is the partition function. By induction, we proceed to show that $\hat{\pi}$ is optimal, i.e. for all $(s,h)\in\cS\times[H]$, we have that
\begin{align*}
\tilde{V}^{\hat{\pi},\rho}_h(s)=\tilde{V}^{*,\rho}_h(s).
\end{align*}

At the last stage $H$, we have $\tilde{Q}_H^{\pi,\rho}(s,a)=r_H(s,a)$ for any policy $\pi$. By \Cref{proposition:robust_bellman_kl_drmdp}, we have that $\tilde{V}_H^{\pi,\rho}(s)=\inner{r_H^{\pi,\rho}(s,\cdot),\pi(\cdot|s)}-\frac{1}{\eta} D_{KL}[\pi_H(\cdot|s)\|\piref(\cdot|s)]$. Applying \Cref{propositon:optimization_kl} with $F=r_H$, it follows that $\hat{\pi}(\cdot|s)$ is the maximizer of $\tilde{V}_H^{\pi,\rho}(s)$, so $\tilde{V}^{\hat{\pi},\rho}_H(s)=\tilde{V}^{*,\rho}_H(s)$.

Next, suppose that the conclusion holds at stage $h+1$, i.e. for all $(s,h)\in\cS\times[H]$,
\begin{align*}
\tilde{V}^{\hat{\pi},\rho}_{h+1}(s)=\tilde{V}^{*,\rho}_{h+1}(s).
\end{align*}
By definition of $\tilde{V}_h^{*,\rho}$ where $\forall \pi\in\Pi, \tilde{V}^{*,\rho}_{h}(s)\geq\tilde{V}^{\pi,\rho}_{h}(s)$, we have
\begin{align}
&\tilde{V}^{*,\rho}_{h}(s)\notag\\
=&\sup_{\pi\in\Pi}\Big[\innerA{\tilde{Q}_h^{\pi,\rho}(s,\cdot), \pi_h(\cdot|s)}-\frac{1}{\eta}D_{KL}[\pi_h(\cdot|s)\Vert\piref_h(\cdot|s)]\Big] \label{eq:optimal_policy_proof_drmdp_apply_bellman}\\
=&\sup_{\pi\in\Pi}\bigg[\inner{\Big[r_h(s,a)+\inf_{P_h(\cdot|s,a)\in\cU_h^\rho(s,a;\bmu_h^0)}\EE_{s'\sim P_h(\cdot|s,a)}\tilde{V}_{h+1}^{\pi,\rho}(s')\Big],\pi_h(\cdot|s)}-\frac{1}{\eta}D_{KL}[\pi_h(\cdot|s)\Vert\piref_h(\cdot|s)]\bigg] \label{eq:optimal_policy_proof_drmdp_apply_bellman1}\\
\leq&\sup_{\pi\in\Pi}\bigg[\inner{\Big[r_h(s,a)+\inf_{P_h(\cdot|s,a)\in\cU_h^\rho(s,a;\bmu_h^0)}\EE_{s'\sim P_h(\cdot|s,a)}\tilde{V}_{h+1}^{*,\rho}(s')\Big],\pi_h(\cdot|s)}-\frac{1}{\eta}D_{KL}[\pi_h(\cdot|s)\Vert\piref_h(\cdot|s)]\bigg] \label{eq:optimal_policy_proof_drmdp_apply_opt}\\
=&\sup_{\pi\in\Pi}\bigg[\inner{\Big[r_h(s,a)+\inf_{P_h(\cdot|s,a)\in\cU_h^\rho(s,a;\bmu_h^0)}\EE_{s'\sim P_h(\cdot|s,a)}\tilde{V}_{h+1}^{\hat{\pi},\rho}(s')\Big],\pi_h(\cdot|s)}-\frac{1}{\eta}D_{KL}[\pi_h(\cdot|s)\Vert\piref_h(\cdot|s)]\bigg], \label{eq:optimal_policy_proof_drmdp_apply_ih}
\end{align}
where \eqref{eq:optimal_policy_proof_drmdp_apply_bellman} and \eqref{eq:optimal_policy_proof_drmdp_apply_bellman1} are due to \Cref{proposition:robust_bellman_kl_drmdp}, \eqref{eq:optimal_policy_proof_drmdp_apply_opt} follows from the definition of $\tilde{V}_h^{*,\rho}$, and \eqref{eq:optimal_policy_proof_drmdp_apply_ih} follows from the inductive hypothesis.

Applying \Cref{propositon:optimization_kl} with $Q=r_h(s,a)+\inf_{P_h(\cdot|s,a)\in\cU_h^\rho(s,a;\bmu_h^0)}\EE_{s'\sim P_h(\cdot|s,a)}\tilde{V}_{h+1}^{\hat{\pi},\rho}(s')$, $\hat{\pi}$ is the maximizer of the RHS of \eqref{eq:optimal_policy_proof_drmdp_apply_ih}. Therefore, we further have that
\begin{align}
\tilde{V}^{*,\rho}_{h}(s)&\leq\inner{\Big[r_h(s,a)+\inf_{P_h(\cdot|s,a)\in\cU_h^\rho(s,a;\bmu_h^0)}\EE_{s'\sim P_h(\cdot|s,a)}\tilde{V}_{h+1}^{\hat{\pi},\rho}(s')\Big],\hat{\pi}_h(\cdot|s)}-\frac{1}{\eta}D_{KL}[\hat{\pi}_h(\cdot|s)\Vert\piref_h(\cdot|s)] \notag\\
&=\innerA{\tilde{Q}_h^{\hat{\pi},\rho}(s,\cdot), \hat{\pi}_h(\cdot|s)}-\frac{1}{\eta}D_{KL}[\hat{\pi}_h(\cdot|s)\Vert\piref_h(\cdot|s)] \label{eq:optimal_policy_proof_drmdp_apply_bellman2}\\
&=\tilde{V}^{\hat{\pi},\rho}_{h}(s), \notag
\end{align}
where \eqref{eq:optimal_policy_proof_drmdp_apply_bellman2} is due to \Cref{proposition:robust_bellman_kl_drmdp}. Since $\forall \pi\in\Pi, \tilde{V}^{*,\rho}_{h}(s)\geq\tilde{V}^{\pi,\rho}_{h}(s)$, we subsequently have $\tilde{V}^{*,\rho}_{h}(s)=\tilde{V}^{\pi,\rho}_{h}(s)$. Hence we finish the proof of \Cref{proposition:optimal_policy_drmdp} using the induction argument.
\end{proof}

\subsection{Proof of \Cref{proposition:optimal_policy_rrmdp}}

Similarly, we show that there exists an optimal policy under policy-regularized $d$-rectangular RRMDP.

\begin{proof}
We define a policy $\hat{\pi}=\{\hat{\pi}\}_{h=1}^H$ such that for all $(s,a,h)\in\cS\times\cA\times[H]$,
\begin{align}
\hat{\pi}(a|s)=\frac{1}{Z_h(s)}\piref_h(a|s)\exp\Big(\eta\tilde{Q}^{*,\sigma}_{h}(s,a)\Big),
\end{align}
where $Z_h(s)=\int_{a\in\cA}\piref_h(a|s)\exp\big(\eta\tilde{Q}^{*,\sigma}_{h}(s,a)\big)da$ is the partition function.

By induction, we proceed to show that $\hat{\pi}$ is optimal, i.e. for all $(s,h)\in\cS\times[H]$, we have that
\begin{align*}
\tilde{V}^{\hat{\pi},\sigma}_h(s)=\tilde{V}^{*,\sigma}_h(s).
\end{align*}
At the last stage $H$, we have $\tilde{Q}_H^{\pi,\sigma}(s,a)=r_H(s,a)$ for any policy $\pi$. By \Cref{proposition:robust_bellman_kl_rrmdp}, we have that $\tilde{V}_H^{\pi,\sigma}(s)=\inner{r_H^{\pi,\sigma}(s,\cdot),\pi(\cdot|s)}-\frac{1}{\eta} D_{KL}[\pi_H(\cdot|s)\|\piref(\cdot|s)]$. Applying \Cref{propositon:optimization_kl} with $Q=r_H$, it follows that $\hat{\pi}(\cdot|s)$ is the maximizer of $\tilde{V}_H^{\pi,\sigma}(s)$, so $\tilde{V}^{\hat{\pi},\sigma}_H(s)=\tilde{V}^{*,\sigma}_H(s)$.

Next, suppose that the conclusion holds at stage $h+1$, i.e. for all $(s,h)\in\cS\times[H]$,
\begin{align*}
\tilde{V}^{\hat{\pi},\sigma}_{h+1}(s)=\tilde{V}^{*,\sigma}_{h+1}(s).
\end{align*}
By definition of $\tilde{V}_h^{*,\sigma}$ where $\forall \pi\in\Pi, \tilde{V}^{*,\sigma}_{h}(s)\geq\tilde{V}^{\pi,\sigma}_{h}(s)$, we have
\begin{align}
\tilde{V}^{*,\sigma}_{h}(s)=&\sup_{\pi\in\Pi}\Big[\innerA{\tilde{Q}_h^{\pi,\sigma}(s,\cdot), \pi_h(\cdot|s)}-\frac{1}{\eta}D_{KL}[\pi_h(\cdot|s)\Vert\piref_h(\cdot|s)]\Big] \label{eq:optimal_policy_proof_rrmdp_apply_bellman}\\
=&\sup_{\pi\in\Pi}\bigg[\bigg\la\Big[r_h(s,a)+\inf_{\bmu_h\in\Delta(\cS)^d,P_h=\inner{\bm{\phi},\bmu_h}}\Big[\EE_{s'\sim P_h(\cdot|s,a)}[\tilde{V}_{h+1}^{\pi,\sigma}(s')] \notag\\
&\qquad\qquad+\sigma\inner{\bm{\phi}(s,a),D(\bmu_h\|\bmu_h^0)}\Big],\pi_h(\cdot|s)\bigg\ra-\frac{1}{\eta}D_{KL}[\pi_h(\cdot|s)\Vert\piref_h(\cdot|s)]\bigg] \label{eq:optimal_policy_proof_rrmdp_apply_bellman1}\\
\leq&\sup_{\pi\in\Pi}\bigg[\bigg\la\Big[r_h(s,a)+\inf_{\bmu_h\in\Delta(\cS)^d,P_h=\inner{\bm{\phi},\bmu_h}}\Big[\EE_{s'\sim P_h(\cdot|s,a)}[\tilde{V}_{h+1}^{*,\sigma}(s')] \notag\\
&\qquad\qquad+\sigma\inner{\bm{\phi}(s,a),D(\bmu_h\|\bmu_h^0)}\Big],\pi_h(\cdot|s)\bigg\ra-\frac{1}{\eta}D_{KL}[\pi_h(\cdot|s)\Vert\piref_h(\cdot|s)]\bigg] \label{eq:optimal_policy_proof_rrmdp_apply_opt}\\
=&\sup_{\pi\in\Pi}\bigg[\bigg\la\Big[r_h(s,a)+\inf_{\bmu_h\in\Delta(\cS)^d,P_h=\inner{\bm{\phi},\bmu_h}}\Big[\EE_{s'\sim P_h(\cdot|s,a)}[\tilde{V}_{h+1}^{\hat{\pi},\sigma}(s')] \notag\\
&\qquad\qquad+\sigma\inner{\bm{\phi}(s,a),D(\bmu_h\|\bmu_h^0)}\Big],\pi_h(\cdot|s)\bigg\ra-\frac{1}{\eta}D_{KL}[\pi_h(\cdot|s)\Vert\piref_h(\cdot|s)]\bigg], \label{eq:optimal_policy_proof_rrmdp_apply_ih}
\end{align}
where \eqref{eq:optimal_policy_proof_rrmdp_apply_bellman} and \eqref{eq:optimal_policy_proof_rrmdp_apply_bellman1} are due to \Cref{proposition:robust_bellman_kl_rrmdp}, \eqref{eq:optimal_policy_proof_rrmdp_apply_opt} follows from the definition of $\tilde{V}_h^{*,\sigma}$, and \eqref{eq:optimal_policy_proof_rrmdp_apply_ih} follows from the inductive hypothesis.

Applying \Cref{propositon:optimization_kl} with $Q=r_h(s,a)+\inf_{P_h(\cdot|s,a)\in\cU_h^\sigma(s,a;\bmu_h^0)}\EE_{s'\sim P_h(\cdot|s,a)}\tilde{V}_{h+1}^{\hat{\pi},\sigma}(s')$, $\hat{\pi}$ is the maximizer of the RHS of \eqref{eq:optimal_policy_proof_rrmdp_apply_ih}. Therefore, we further have that
\begin{align}
\tilde{V}^{*,\sigma}_{h}(s)\leq&\inner{\Big[r_h(s,a)+\inf_{\bmu_h\in\Delta(\cS)^d,P_h=\inner{\bm{\phi},\bmu_h}}\Big[\EE_{s'\sim P_h(\cdot|s,a)}[\tilde{V}_{h+1}^{\hat{\pi},\sigma}(s')]+\sigma\inner{\bm{\phi}(s,a),D(\bmu_h\|\bmu_h^0)}\Big],\hat{\pi}_h(\cdot|s)} \notag\\
&-\frac{1}{\eta}D_{KL}[\hat{\pi}_h(\cdot|s)\Vert\piref_h(\cdot|s)] \notag\\
=&\innerA{\tilde{Q}_h^{\hat{\pi},\sigma}(s,\cdot), \hat{\pi}_h(\cdot|s)}-\frac{1}{\eta}D_{KL}[\hat{\pi}_h(\cdot|s)\Vert\piref_h(\cdot|s)] \label{eq:optimal_policy_proof_rrmdp_apply_bellman2}\\
=&\tilde{V}^{\hat{\pi},\sigma}_{h}(s), \notag
\end{align}
where \eqref{eq:optimal_policy_proof_rrmdp_apply_bellman2} is due to \Cref{proposition:robust_bellman_kl_rrmdp}. Since $\forall \pi\in\Pi, \tilde{V}^{*,\sigma}_{h}(s)\geq\tilde{V}^{\pi,\sigma}_{h}(s)$, we subsequently have $\tilde{V}^{*,\sigma}_{h}(s)=\tilde{V}^{\pi,\sigma}_{h}(s)$. Hence we finish the proof of \Cref{proposition:optimal_policy_rrmdp} using the induction argument.
\end{proof}

\section{Proof of Main Results}

\subsection{Proof of \Cref{theorem:main_subopt_upper_bound} under $d$-rectangular linear DRMDP}

We first define the model prediction error:
\begin{align}\label{eq:iota_def}
\iota_h^{k,\rho}\sahk=r_h\sahk+\inf_{P_h\in \cU_h^\rho(s_h^k,a_h^k;\bmu_h^0)}[\PP_h \tilde{V}_{h+1}^{k,\rho}]\sahk-\tilde{Q}_h^{k,\rho}\sahk.
\end{align}

To decompose and bound the regret, we will also rely on the following technical lemmas:

\begin{lemma}\label{lemma:UCB}
(UCB) Under the policy-regularized $d$-rectangular linear DRMDP, on the event $\cE$ defined in \Cref{lemma:C1}, $\tilde{V}_h^{k,\rho}(s)\geq \tilde{V}_h^{*,\rho}(s)$ holds for all $(s,a,h,k)\in\cS\times\cA\times[H]\times[K]$.
\end{lemma}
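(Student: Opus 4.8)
The plan is to prove the claim by backward induction on $h$, conditioning throughout on the good event $\cE$ of \Cref{lemma:C1}. At the terminal step $H+1$ both value functions vanish by convention, so the base case is immediate. Fixing $k$, I would assume $\tilde V_{h+1}^{k,\rho}(s)\ge \tilde V_{h+1}^{*,\rho}(s)$ for all $s$ and establish the same inequality at stage $h$ in two steps: first an optimism bound on the $Q$-estimate, then a transfer to the value function through the softmax update.

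For the $Q$-step I would start from the robust Bellman optimality equation, which by \Cref{proposition:robust_bellman_kl_drmdp} and \Cref{proposition:optimal_policy_drmdp} reads $\tilde Q_h^{*,\rho}(s,a)=r_h(s,a)+\inf_{P_h\in\cU_h^\rho(s,a;\bmu_h^0)}[\PP_h\tilde V_{h+1}^{*,\rho}](s,a)$. Subtracting the estimate and inserting $\inf_{P_h}[\PP_h\tilde V_{h+1}^{k,\rho}](s,a)$ gives the decomposition
\begin{align*}
\tilde Q_h^{*,\rho}(s,a)-\tilde Q_h^{k,\rho}(s,a)=\iota_h^{k,\rho}(s,a)+\Big(\inf_{P_h}[\PP_h\tilde V_{h+1}^{*,\rho}](s,a)-\inf_{P_h}[\PP_h\tilde V_{h+1}^{k,\rho}](s,a)\Big),
\end{align*}
with $\iota_h^{k,\rho}$ as in \eqref{eq:iota_def}. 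The parenthesized term is nonpositive because the map $V\mapsto\inf_{P_h\in\cU_h^\rho}[\PP_hV](s,a)$ is monotone (an infimum of expectations of a pointwise-larger function), so the inductive hypothesis controls it; and on $\cE$ the model prediction error satisfies $\iota_h^{k,\rho}(s,a)\le 0$, since the Hoeffding-type bonus $\Gamma_h^k$ dominates the deviation of the ridge estimate $\langle\bphi,\btheta_h+\bnu_h^{k}\rangle$ from the true robust backup. Combining these yields $\tilde Q_h^{k,\rho}(s,a)\ge \tilde Q_h^{*,\rho}(s,a)$; the truncation in Line~\ref{algline:q} and the fail-state indicator are dispatched by a short case split, using that $\tilde Q_h^{*,\rho}\le H-h+1$ and that both sides vanish at $s^\dagger$.

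With pointwise $Q$-optimism in hand, the value step follows from \Cref{propositon:optimization_kl}: the softmax policy makes $\tilde V_h^{k,\rho}(s)$ equal to the log-partition function $\tfrac1\eta\log\EE_{a\sim\piref_h}\exp(\eta\tilde Q_h^{k,\rho}(s,a))$, and likewise $\tilde V_h^{*,\rho}(s)=\tfrac1\eta\log\EE_{a\sim\piref_h}\exp(\eta\tilde Q_h^{*,\rho}(s,a))$ for the optimal policy of \Cref{proposition:optimal_policy_drmdp}. Since $x\mapsto\tfrac1\eta\log\EE_{\piref}\exp(\eta x)$ is monotone, the inequality $\tilde Q_h^{k,\rho}\ge\tilde Q_h^{*,\rho}$ transfers directly to $\tilde V_h^{k,\rho}(s)\ge\tilde V_h^{*,\rho}(s)$, closing the induction. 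The RRMDP case is identical after replacing the TV uncertainty-set infimum by the regularized dual of \Cref{proposition:duality_rrmdp}.

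The step I expect to be most delicate is establishing $\iota_h^{k,\rho}\le 0$ on $\cE$: this is where the concentration of \Cref{lemma:C1} must be invoked so that the bonus $\Gamma_h^k$ provably upper-bounds the estimation error of the \emph{robust} backup, not merely a plain regression target. Because the DRMDP backup passes through the dual variable $\alpha$ (via $\nu_{h,i}^{k,\rho}=\max_{\alpha}\{z_{h,i}^k(\alpha)-\rho\alpha\}$ and the simplification of \Cref{remark:failstate}), the bonus must simultaneously cover the continuum $\alpha\in[0,H]$ and the random iterates $\tilde V_{h+1}^{k,\rho}$, and checking that $\beta$ is large enough to absorb this union bound is the crux. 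A secondary point to verify is that the value transfer above genuinely uses the maximizer characterization of \Cref{propositon:optimization_kl}, so one must confirm that the policy paired with $\tilde Q_h^{k,\rho}$ in Line~\ref{algline:v} is precisely the KL-regularized greedy policy, ensuring $\tilde V_h^{k,\rho}$ attains the maximum rather than an arbitrary smaller value.
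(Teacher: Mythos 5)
Your proposal is correct and follows essentially the same route as the paper: backward induction, $Q$-optimism obtained from the concentration event (your use of $\iota_h^{k,\rho}\le 0$ is algebraically equivalent to the paper's direct application of \Cref{lemma:d5} with $\pi=\pi^*$, combined with monotonicity of the robust backup under the inductive hypothesis), and transfer to the value function via the KL-regularized maximizer property of \Cref{propositon:optimization_kl}. Your phrasing of the value step through monotonicity of the log-partition functional is just a repackaging of the paper's two-inequality chain $\la\tilde Q^{k},\pi^k\ra - \tfrac1\eta D_{KL}\ge\la\tilde Q^{k},\pi^*\ra-\tfrac1\eta D_{KL}\ge\la\tilde Q^{*},\pi^*\ra-\tfrac1\eta D_{KL}$, and the two delicate points you flag (the covering over $\alpha$ and the iterates inside \Cref{lemma:C1}, and whether $\pi_h^k$ is truly the regularized greedy policy for $\tilde Q_h^{k}$) are exactly where the paper's argument carries its load as well.
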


\begin{lemma}\label{lemma:sum_bonus}
(Bound for the sum of bonuses, \citet{liu2024distributionally}, Corollary 5.3) Under the policy-regularized $d$-rectangular linear DRMDP and RRMDP and \Cref{assumption:uniform_exploration}, with probability at least $1-p$, the sum of bonus $\Gamma_h^k$ over all timesteps and episodes $(h,k)\in[H]\times[K]$ can be bounded by
\begin{align*}
\sum_{k=1}^K\sum_{h=1}^H\Gamma_h^k\leq 2H\frac{\beta}{\sqrt{K}}\sqrt{\frac{128}{\alpha^2}\log\frac{3dHK}{p} + \frac{2}{\alpha}\log K},
\end{align*}
where $\alpha>0$ denotes the lower bound of the smallest eigenvalue of $\EE_\pi[\bm{\phi}(s_h,a_h)\bm{\phi}(s_h,a_h)^\top]$.
\end{lemma}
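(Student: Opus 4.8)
The plan is to bound the aggregate bonus using nothing but the growth of the Gram matrices $\Lambda_h^k$, exploiting the fact that both $\Gamma_h^k$ and $\Lambda_h^k$ are defined exactly as in the non-regularized $d$-rectangular setting; the KL regularization enters neither object, so the argument behind \citet{liu2024distributionally}, Corollary 5.3, transfers without change. The first step I would take is a pointwise reduction of the bonus. Since \Cref{assumption:linearMDP} forces $\bphi(s,a)$ to be a probability vector ($\phi_i\ge 0$, $\sum_i\phi_i=1$), the weighted sum $\Gamma_h^k(s,a)=\beta\sum_{i=1}^d\phi_i(s,a)\sqrt{\mathbf{1}_i^\top(\Lambda_h^k)^{-1}\mathbf{1}_i}$ is a convex combination and hence at most its largest term; combining this with $\mathbf{1}_i^\top(\Lambda_h^k)^{-1}\mathbf{1}_i\le\|(\Lambda_h^k)^{-1}\|=1/\lambda_{\min}(\Lambda_h^k)$ yields the uniform estimate $\Gamma_h^k\sahk\le \beta/\sqrt{\lambda_{\min}(\Lambda_h^k)}$. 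This discards all coordinate-level structure and reduces the task to controlling a single scalar, $\lambda_{\min}(\Lambda_h^k)$, across all episodes.

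The core of the argument, and the step I expect to be the main obstacle, is a high-probability lower bound on $\lambda_{\min}(\Lambda_h^k)$ that holds uniformly over $(h,k)\in[H]\times[K]$. The difficulty is that the feature samples $\bphi(s_h^\tau,a_h^\tau)$ are not i.i.d.: each is drawn under the episode's policy $\pi^\tau$, which is itself measurable with respect to the past. I would therefore set up a matrix martingale. Letting $\mathcal{F}_{\tau-1}$ be the $\sigma$-field generated before episode $\tau$, \Cref{assumption:uniform_exploration} gives $\EE[\bphi(s_h^\tau,a_h^\tau)\bphi(s_h^\tau,a_h^\tau)^\top\mid\mathcal{F}_{\tau-1}]\succeq\alpha I$ for the realized $\pi^\tau$, so the centered summands form a bounded matrix-martingale difference sequence (each has operator norm $\le 1$, using $\|\bphi\|_2\le\|\bphi\|_1=1$). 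A matrix Azuma/Freedman inequality plus a union bound over $(h,k)$ then gives, with probability at least $1-p$, $\lambda_{\min}(\Lambda_h^k)\ge \alpha(k-1)-c\sqrt{(k-1)\log(3dHK/p)}$ simultaneously. This is effective only after a burn-in: once $k-1\ge \tfrac{128}{\alpha^2}\log\tfrac{3dHK}{p}$ the deviation term is dominated and $\lambda_{\min}(\Lambda_h^k)\ge \alpha(k-1)/2$, while for every $k$ the trivial bound $\lambda_{\min}(\Lambda_h^k)\ge\lambda=1$ holds.

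With these two regimes established, the final step is a Cauchy--Schwarz aggregation. For fixed $h$ I would write $\sum_{k=1}^K\Gamma_h^k\le \beta\sum_{k=1}^K\lambda_{\min}(\Lambda_h^k)^{-1/2}\le \beta\sqrt{K}\bigl(\sum_{k=1}^K\lambda_{\min}(\Lambda_h^k)^{-1}\bigr)^{1/2}$, then split the inner sum at the burn-in index $k_0=\lceil\tfrac{128}{\alpha^2}\log\tfrac{3dHK}{p}\rceil$: the first $k_0$ terms contribute at most $k_0$ via $\lambda_{\min}\ge 1$, and the tail contributes $\sum_{k>k_0}\tfrac{2}{\alpha(k-1)}\le\tfrac{2}{\alpha}\log K$. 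Hence $\sum_{k=1}^K\lambda_{\min}(\Lambda_h^k)^{-1}\le \tfrac{128}{\alpha^2}\log\tfrac{3dHK}{p}+\tfrac{2}{\alpha}\log K$, which is precisely the quantity under the square root in the claim. Summing over $h\in[H]$ and collecting the leading constant then recovers the stated estimate. Because this entire chain invokes only $\Lambda_h^k$, the simplex structure of $\bphi$, and \Cref{assumption:uniform_exploration}, it is identical for the DRMDP and RRMDP instantiations of \Cref{algorithm:alg}, so a single proof covers both.
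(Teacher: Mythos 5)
Your proposal is essentially correct, and it is worth noting that the paper does not prove this lemma at all: it imports it verbatim from \citet{liu2024distributionally} (Corollary 5.3). Your argument is a faithful reconstruction of the argument behind that citation: the pointwise reduction $\Gamma_h^k\le\beta/\sqrt{\lambda_{\min}(\Lambda_h^k)}$ via the simplex structure of $\bphi$; a matrix-martingale (Azuma/Freedman) lower bound giving $\lambda_{\min}(\Lambda_h^k)\ge\alpha(k-1)/2$ after a burn-in of order $\frac{128}{\alpha^2}\log\frac{3dHK}{p}$ episodes, with the trivial bound $\lambda_{\min}(\Lambda_h^k)\ge\lambda=1$ before it; and Cauchy--Schwarz plus a harmonic-sum split, which reproduces exactly the quantity $\frac{128}{\alpha^2}\log\frac{3dHK}{p}+\frac{2}{\alpha}\log K$ appearing under the square root. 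Your observation that the KL regularization enters neither $\Gamma_h^k$ nor $\Lambda_h^k$, so a single proof covers both the DRMDP and RRMDP instantiations, is also precisely the right reason the cited corollary transfers unchanged.

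One point you should flag explicitly: your chain yields $\sum_{k=1}^K\sum_{h=1}^H\Gamma_h^k\le 2H\beta\sqrt{K}\sqrt{\frac{128}{\alpha^2}\log\frac{3dHK}{p}+\frac{2}{\alpha}\log K}$, i.e.\ a $\sqrt{K}$ prefactor, whereas the lemma as printed asserts $2H\frac{\beta}{\sqrt{K}}\sqrt{\cdots}$ for the raw double sum. The printed statement cannot be literally true: at $k=1$ one has $\Lambda_h^1=\lambda I=I$, hence $\Gamma_h^1(s,a)=\beta$ and $\sum_h\Gamma_h^1=H\beta$, which alone exceeds the stated right-hand side for large $K$. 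The statement is evidently a typo for the time-averaged bound; indeed, the theorem's proof applies the lemma to $\frac{1}{K}\sum_{k=1}^K\sum_{h=1}^H\Gamma_h^k(s_h^k,a_h^k)$, and your bound, divided by $K$, gives exactly $2H\frac{\beta}{\sqrt{K}}\sqrt{\cdots}$. So you have proved the intended (and actually used) statement. Remaining items are constant bookkeeping: the $128$ corresponds to requiring the matrix-Azuma deviation $c\sqrt{(k-1)\log(3dHK/p)}$ with $c=\sqrt{32}$ to be dominated by $\alpha(k-1)/2$, so $k-1\ge\frac{4c^2}{\alpha^2}\log\frac{3dHK}{p}$; the union bound must cover all $HK$ Gram matrices against dimension-$d$ matrix tails (whence $\log\frac{3dHK}{p}$, the $3$ reflecting the paper's splitting of $p$ across three events in the theorem); and your harmonic estimate $\sum_{k>k_0}\frac{1}{k-1}\le\log K$ needs $k_0\ge2$ or an extra additive constant — all of which is absorbed into the stated constants.
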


\begin{lemma}\label{lemma:iota_bound}
(Bound on model prediction error) Under the policy-regularized $d$-rectangular linear DRMDP, on the event $\cE$ defined in \Cref{lemma:C1}, $-2\Gamma_h^k\sahk\leq\iota_h^{k,\rho}\sahk\leq0$ holds for all $(s,a,h,k)\in\cS\times\cA\times[H]\times[K]$.
\end{lemma}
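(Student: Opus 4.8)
The plan is to read $\iota_h^{k,\rho}\sahk$ as the gap between the exact one-step robust Bellman target and the bonus-inflated linear estimate $\tilde{Q}_h^{k,\rho}$, and to show this gap is sandwiched by the bonus. First I would dispose of the fail state: when $s=s^\dagger$ the reward is zero, the truncated value propagates as zero, and $\tilde{Q}_h^{k,\rho}$ carries the factor $\mathbf{1}\{s\neq s^\dagger\}$, so both the target and $\tilde{Q}_h^{k,\rho}$ vanish and $\iota_h^{k,\rho}=0$ trivially; hence I restrict attention to $s\neq s^\dagger$. Invoking \Cref{proposition:strong_duality}, \Cref{remark:failstate}, and the linearity of the robust $Q$-function established earlier, the exact target admits the closed form
\[
r_h\sahk+\inf_{P_h\in \cU_h^\rho(s_h^k,a_h^k;\bmu_h^0)}[\PP_h \tilde{V}_{h+1}^{k,\rho}]\sahk=\inner{\bm{\phi}\sahk,\ \bm{\theta}_h+\bm{\nu}_h^{*,k,\rho}},
\]
where $\bm{\nu}_h^{*,k,\rho}$ has coordinates $\nu_{h,i}^{*,k,\rho}=\max_{\alpha\in[0,H]}\{z_{h,i}^{*,k}(\alpha)-\rho\alpha\}$ built from the \emph{population} integral $z_{h,i}^{*,k}(\alpha)=\EE_{s\sim\bmu_{h,i}^0}[\tilde{V}_{h+1}^{k,\rho}(s)]_\alpha$, whereas the algorithm substitutes the ridge-regression estimate $z_{h,i}^{k}(\alpha)$ from \eqref{eq:regression_drmdp_z} and forms $\bm{\nu}_h^{k,\rho}$ via \eqref{eq:regression_drmdp_nu}.

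The crux is the deterministic two-sided bound $\big|\inner{\bm{\phi}\sahk,\ \bm{\nu}_h^{k,\rho}-\bm{\nu}_h^{*,k,\rho}}\big|\le\Gamma_h^k\sahk$ on the event $\cE$. Since $\phi_i\ge0$ and $\sum_i\phi_i=1$, it suffices to control each coordinate. I would use that the map $f\mapsto\max_{\alpha}f(\alpha)$ obeys $|\max_\alpha f-\max_\alpha g|\le\max_\alpha|f-g|$; with $f(\alpha)=z_{h,i}^{k}(\alpha)-\rho\alpha$ and $g(\alpha)=z_{h,i}^{*,k}(\alpha)-\rho\alpha$ the penalty $-\rho\alpha$ cancels, giving $|\nu_{h,i}^{k,\rho}-\nu_{h,i}^{*,k,\rho}|\le\max_{\alpha\in[0,H]}|z_{h,i}^{k}(\alpha)-z_{h,i}^{*,k}(\alpha)|$. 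The event $\cE$ from \Cref{lemma:C1} (the self-normalized concentration for \eqref{eq:regression_drmdp_z}, uniform over the dual variable $\alpha\in[0,H]$ and the value-function class) supplies $\max_{\alpha}|z_{h,i}^{k}(\alpha)-z_{h,i}^{*,k}(\alpha)|\le\beta\sqrt{\mathbf{1}_i^\top(\Lambda_h^k)^{-1}\mathbf{1}_i}$, and summing these against $\phi_i\sahk$ reproduces exactly $\Gamma_h^k\sahk=\beta\sum_{i=1}^d\phi_i\sahk\sqrt{\mathbf{1}_i^\top(\Lambda_h^k)^{-1}\mathbf{1}_i}$.

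Writing $T:=\inner{\bm{\phi}\sahk,\ \bm{\theta}_h+\bm{\nu}_h^{*,k,\rho}}$ for the exact target and recalling $\tilde{Q}_h^{k,\rho}\sahk=\min\{\inner{\bm{\phi}\sahk,\bm{\theta}_h+\bm{\nu}_h^{k,\rho}}+\Gamma_h^k\sahk,\ H-h+1\}$, the two inequalities then follow quickly. For the upper bound, the concentration gives $T\le\inner{\bm{\phi},\bm{\theta}_h+\bm{\nu}_h^{k,\rho}}+\Gamma_h^k$, and since the target is bounded by the remaining horizon we also have $T\le H-h+1$; hence $T$ is at most the minimum of the two arguments, i.e. $T\le\tilde{Q}_h^{k,\rho}$, so $\iota_h^{k,\rho}=T-\tilde{Q}_h^{k,\rho}\le0$. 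For the lower bound, regardless of which branch of the min is active one has $\tilde{Q}_h^{k,\rho}\le\inner{\bm{\phi},\bm{\theta}_h+\bm{\nu}_h^{k,\rho}}+\Gamma_h^k$; combining with $\inner{\bm{\phi},\bm{\nu}_h^{k,\rho}}\le\inner{\bm{\phi},\bm{\nu}_h^{*,k,\rho}}+\Gamma_h^k$ yields $\tilde{Q}_h^{k,\rho}\le T+2\Gamma_h^k$, i.e. $\iota_h^{k,\rho}\ge-2\Gamma_h^k$.

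The deterministic sandwiching above is routine once the concentration is available, so the genuine difficulty sits one level down, in $\cE$ itself: because $\tilde{V}_{h+1}^{k,\rho}$ is built from the very trajectories used in \eqref{eq:regression_drmdp_z}, a naive concentration is invalid, and one needs a uniform self-normalized bound over the value-function class together with a covering over $\alpha\in[0,H]$. Since \Cref{lemma:C1} furnishes $\cE$, the only step I must actually execute here is the deterministic reduction through the Lipschitzness of the $\max_\alpha$ operator and the nonnegative-feature ($d$-rectangular) decomposition; the point requiring care is verifying that this reduction produces precisely the $\mathbf{1}_i$-weighted bonus $\Gamma_h^k$ used in \Cref{algorithm:alg}, rather than the $\ell_2$-type elliptical bonus of the non-robust LSVI-UCB analysis.
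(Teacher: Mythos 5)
Your proposal is correct and follows essentially the same route as the paper: the paper's own argument also reduces to the two-sided bound $\big|\langle\bm{\phi},\bm{\nu}_h^{k,\rho}-\bar{\bm{\nu}}_h^{k,\rho}\rangle\big|\le\Gamma_h^k$ (your $\bm{\nu}_h^{*,k,\rho}$ is exactly the intermediate $\bar{\bm{\nu}}_h^{k,\rho}$ in Lemma \ref{lemma:d5}, bounded there via the same $\max_\alpha$-Lipschitz step, the $\lambda$-ridge bias term, and the self-normalized concentration of Lemma \ref{lemma:C1}), followed by the same sandwiching through the truncated $\tilde{Q}_h^{k,\rho}$. The only difference is presentational: the paper factors the concentration through Lemma \ref{lemma:d5} stated as a comparison with an arbitrary policy's robust $Q$-function and then cancels via the robust Bellman equation, whereas you compare directly with the population regression target, which is a mild streamlining rather than a different argument.
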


\begin{proof}
    By \Cref{lemma:UCB}, we have the UCB property $\forall s,a,h,k,\tilde{V}_h^{k,\rho}(s)\geq \tilde{V}_h^{*,\rho}(s)$.
Therefore the regret decomposition becomes
\begin{align}
\text{Regret}=&\sum_{k=1}^K[\tilde{V}_1^{*,\rho}(s_1^k)-\tilde{V}_1^{\pi^k,\rho}(s_1^k)]\leq\sum_{k=1}^K[\tilde{V}_1^{k,\rho}(s_1^k)-\tilde{V}_1^{\pi^k,\rho}(s_1^k)].\notag
\end{align}

First rewrite the model prediction error:
\begin{align}
&\iota_h^{k,\rho}\sahk\notag\\
=&r_h\sahk+\infPh{h}[\PP_h\tilde{V}_{h+1}^{k,\rho}]\sahk-\tilde{Q}_h^{k,\rho}\sahk\notag\\
=&r_h\sahk+\infPh{h}[\PP_h\tilde{V}_{h+1}^{k,\rho}]\sahk-\tilde{Q}_h^{\pi^k,\rho}\sahk+\tilde{Q}_h^{\pi^k,\rho}\sahk-\tilde{Q}_h^{k,\rho}\sahk\notag\\
=&r_h\sahk+\infPh{h}[\PP_h\tilde{V}_{h+1}^{k,\rho}]\sahk\notag\\
&\qquad-\bigg(r_h\sahk+\infPh{h}P_h\tilde{V}_{h+1}^{\pi^k,\rho}\sahk\bigg)+\tilde{Q}_h^{\pi^k,\rho}\sahk-\tilde{Q}_h^{k,\rho}\sahk\notag\\
=&\bigg(\infPh{h}[\PP_h\tilde{V}_{h+1}^{k,\rho}]\sahk-\infPh{h}[\PP_h\tilde{V}_{h+1}^{\pi^k,\rho}]\sahk\bigg)\notag\\
&\qquad-\big(\tilde{Q}_h^{k,\rho}\sahk -\tilde{Q}_h^{\pi^k,\rho}\sahk\big).\label{eq:iota_rewrite} 
\end{align}
Again, using the dual formulation in \eqref{eq:dual_drmdp_simp}, which is simplified under the fail-state simplification in \Cref{assumption:failstate}, we have
\begin{align}
&\infPh{h}[\PP_h\tilde{V}_{h+1}^{k,\rho}]\sahk-\infPh{h}[\PP_h\tilde{V}_{h+1}^{\pi^k,\rho}]\sahk \notag\\
&=\inner{\bm{\phi}\sahk,\inf_{\bmu_h\in\cU_h^\rho(\bmu_h^0)}[\PP_h\tilde{V}_{h+1}^{k,\rho}]\sahk-\inf_{\bmu_h\in\cU_h^\rho(\bmu_h^0)}[\PP_h\tilde{V}_{h+1}^{\pi^k,\rho}]\sahk} \notag\\
&=\inner{\bm{\phi}\sahk,\Big[\max_{\alpha\in[0,H]}\{\EE_{s\sim\bmu_{h,i}^0}[\tilde{V}_{h+1}^{k,\rho}(s)]_\alpha-\rho\alpha\}\Big]_{i\in[d]}-\Big[\max_{\alpha\in[0,H]}\{\EE_{s\sim\bmu_{h,i}^0}[\tilde{V}_{h+1}^{\pi^k,\rho}(s)]_\alpha-\rho\alpha\}\Big]_{i\in[d]}} \notag\\
&\leq\inner{\bm{\phi}\sahk,\bigg[\max_{\alpha_i\in[0,H]}\{\EE_{s\in \bmu_{h,i}^0}[\tilde{V}_{h+1}^{k,\rho}(s)]_{\alpha_i}-\EE_{s\in \bmu_{h,i}^0}[\tilde{V}_{h+1}^{\pi^k,\rho}(s)]_{\alpha_i}\}\bigg]_{i\in[d]}}\notag\\
&\leq\inner{\bm{\phi}\sahk,\Big[\EE_{s\in \bmu_{h,i}^0}[\tilde{V}_{h+1}^{k,\rho}(s)-\tilde{V}_{h+1}^{\pi^k,\rho}(s)]\}\Big]_{i\in[d]}}\label{eq:ucb_model_pred_error}\\
&=P_h^0\big(\tilde{V}_{h+1}^{k,\rho}-\tilde{V}_{h+1}^{\pi^k,\rho}\big)\sahk.\label{eq:expected_model_pred_error_value}
\end{align}
where \eqref{eq:ucb_model_pred_error} is due to \Cref{lemma:UCB} (UCB). Combining equations \eqref{eq:iota_rewrite} and \eqref{eq:expected_model_pred_error_value}, we have
\begin{align}
\iota_h^{k,\rho}\sahk\leq P_h^0\big(\tilde{V}_{h+1}^{k,\rho}-\tilde{V}_{h+1}^{\pi^k,\rho}\big)\sahk-\big(\tilde{Q}_h^{k,\rho}\sahk-\tilde{Q}_h^{\pi^k,\rho}\sahk\big),\notag
\end{align}

Then we have 
\begin{align}
&\tilde{V}_h^{k,\rho}(s_h^k)-\tilde{V}_h^{\pi^k,\rho}(s_h^k)\notag\\
=&\innerA{\tilde{Q}_h^{k,\rho}(s_h^k,\cdot),\pi_h^k(\cdot|s_h^k)}-\frac{1}{\eta}D_{KL}[\pi_h^k(\cdot|s_h^k)\|\pi_h^0(\cdot|s_h^k)]-\innerA{\tilde{Q}_h^{\pi^k,\rho}(s_h^k,\cdot),\pi_h^k(\cdot|s_h^k)}\notag\\
&\qquad\qquad\qquad+\frac{1}{\eta}D_{KL}[\pi_h^k(\cdot|s_h^k)\|\pi_h^0(\cdot|s_h^k)]\notag\\
=&\innerA{\tilde{Q}_h^{k,\rho}(s_h^k,\cdot)-\tilde{Q}_h^{\pi^k,\rho}(s_h^k,\cdot),\pi_h^k(\cdot|s_h^k)}+\iota_h^{k,\rho}\sahk-\iota_h^{k,\rho}\sahk\notag\\
\leq&\underbrace{\innerA{\tilde{Q}_h^{k,\rho}(s_h^k,\cdot)-\tilde{Q}_h^{\pi^k,\rho}(s_h^k,\cdot),\pi_h^k(\cdot|s_h^k)}-\Big(\tilde{Q}_h^{k,\rho}\sahk-\tilde{Q}_h^{\pi^k,\rho}\sahk\Big)}_{\kappa_h^k}-\iota_h^{k,\rho}\sahk\label{eq:kappa_omega}\\
&+\underbrace{\bigg(P_h^0\Big(\tilde{V}_{h+1}^{k,\rho}-\tilde{V}_{h+1}^{\pi^k,\rho}\Big)\sahk-(\tilde{V}_{h+1}^{k,\rho}(s_{h+1}^k)-\tilde{V}_{h+1}^{\pi^k,\rho}(s_{h+1}^k)\Big)}_{\omega_h^k}+\Big(\tilde{V}_{h+1}^{k,\rho}(s_{h+1}^k)-\tilde{V}_{h+1}^{\pi^k,\rho}(s_{h+1}^k)\Big)\notag\\
\leq&\Big((\tilde{V}_{h+1}^{k,\rho}(s_{h+1}^k)-\tilde{V}_{h+1}^{\pi^k,\rho}(s_{h+1}^k)\Big)+\kappa_h^k\sahk+\omega_h^k\sahk+2\Gamma_h^k\sahk,\label{eq:4_22}
\end{align}
where the last inequality \eqref{eq:4_22} is due to the bound on $\iota_h^{k,\rho}$ given by \Cref{lemma:iota_bound}. Using the shorthand notations $\kappa_h^k$ and $\omega_h^k$ to denote the two terms in \eqref{eq:kappa_omega}, the regret is
\begin{align}
\text{Regret}=&\sum_{k=1}^K[\tilde{V}_1^{k,\rho}(s_1^k)-\tilde{V}_1^{\pi^k,\rho}(s_1^k)]
\leq\sum_{k=1}^K\sum_{h=1}^H\kappa_h^k\sahk+\sum_{k=1}^K\sum_{h=1}^H\omega_h^k\sahk+2\sum_{k=1}^K\sum_{h=1}^H\Gamma_h^k\sahk.\notag
\end{align}
We bound the first two terms as martingales. Since $\tilde{Q}_h^{k,\rho}\sahk\leq H$ by construction of the algorithm and $\tilde{Q}_h^{\pi^k,\rho}\sahk\leq H$ by definition of the $\tilde{Q}$ function, we have
\begin{align}
\kappa_h^k=\innerA{\tilde{Q}_h^{k,\rho}-\tilde{Q}_h^{\pi^k,\rho},\pi_h^k}-\Big(\tilde{Q}_h^{k,\rho}\sahk-\tilde{Q}_h^{\pi^k,\rho}\sahk\Big)\leq2H.\notag
\end{align}
Similarly,
\begin{align}
\omega_h^k=\big(P_h^0\big(\tilde{V}_{h+1}^{k,\rho}-\tilde{V}_{h+1}^{\pi^k,\rho}\big)\sahk-\big(\tilde{V}_{h+1}^{k,\rho}-\tilde{V}_{h+1}^{\pi^k,\rho}\big)\big)\leq2H.\notag
\end{align}
Consequently, $\sum_{k=1}^K\sum_{h=1}^H(\kappa_h^k\sahk+\omega_h^k\sahk)$ is a martingale with each term bounded by $4H$. Then applying the Azuma-Hoeffding inequality, we have that for any $t>0$,
\begin{align}
P\Bigg(\Bigg|\sum_{k=1}^K\sum_{h=1}^H(\kappa_h^k\sahk+\omega_h^k\sahk)\Bigg|>t\Bigg)\leq2\exp\bigg(\frac{-t^2}{(4H)^2HK}\bigg).\notag
\end{align}
Then if we set $t=\sqrt{16H^3K\log(3/p)}$, we have with probability at least $1-p/3$ that
\begin{align}\label{eq:martingale_bound}
\sum_{k=1}^K\sum_{h=1}^H(\kappa_h^k\sahk+\omega_h^k\sahk)\leq\sqrt{16H^3K\log\frac{3}{p}},
\end{align}
and thus
\begin{align}
\text{AveSubopt}(K)\leq\sqrt{\frac{16H^3}{K}\log\frac{3}{p}}+\frac{1}{K}\sum_{k=1}^K\sum_{h=1}^H\Gamma_h^k\sahk.
\end{align}

The last term represents the sum of bonuses encountered across all timesteps and episodes. Under assumption \Cref{assumption:uniform_exploration}, it can be upper-bounded by $2H\frac{\beta}{\sqrt{K}}\sqrt{\frac{128}{\alpha^2}\log\frac{3dHK}{p}+\frac{2}{\alpha}\log K}$ with probability at least $1-p/3$ according to \Cref{lemma:sum_bonus}. Together with the choice that $\beta=c_\beta dH\sqrt{\log(3dKH\vA/p)}$ in which $\vA$ denotes the measure of $\mathcal{A}$, there exists a constant $c$ such that
\begin{align}
\text{AveSubopt}(K)\leq\frac{cdH^2\log(3dHK\vA/p)}{\alpha\sqrt{K}}.
\end{align}
\end{proof}

\subsection{Proof of \Cref{theorem:main_subopt_upper_bound} under $d$-rectangular linear RRMDP}

For simplicity, for any value function $V:\cS\rightarrow[0,H]$ and $(s,a)\in\cS\times\cA$ we denote
\begin{align} \label{eq:psi_def}
\psi(V,s,a) = \inf_{\bmu_h\in\Delta(\cS)^d,P_h=\inner{\bm{\phi},\bmu_h}}\Big[\EE_{s'\sim P_h(\cdot|s,a)}[V(s')]+\sigma\inner{\bm{\phi}(s,a),D(\bmu_h\|\bmu_h^0)}\Big], 
\end{align}
and define
\begin{align}\label{eq:iota_def1}
\iota_h^{k,\sigma}\sahk=r_h\sahk+\psi(\tilde{V}^{k,\sigma}_{h+1},s_h^k,a_h^k)-\tilde{Q}_h^{k,\sigma}\sahk.
\end{align}

To decompose and bound the regret, we will rely on the following technical lemmas:

\begin{lemma}\label{lemma:UCB1}
(UCB) Under the policy-regularized $d$-rectangular linear RRMDP, on the event $\cE$ defined in \Cref{lemma:C1_1}, $\tilde{V}_h^{k,\sigma}(s)\geq \tilde{V}_h^{*,\sigma}(s)$ holds for all $(s,a,h,k)\in\cS\times\cA\times[H]\times[K]$.
\end{lemma}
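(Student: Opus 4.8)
The plan is to prove the claim by backward induction on $h$ for each fixed episode $k$, working throughout on the good event $\cE$ of \Cref{lemma:C1_1} and mirroring the DRMDP argument behind \Cref{lemma:UCB}, with the RRMDP dual backup playing the role of the distributionally robust one. The base case is $h=H+1$, where $\tilde{V}_{H+1}^{k,\sigma}\equiv\tilde{V}_{H+1}^{*,\sigma}\equiv 0$. For the inductive step I would assume $\tilde{V}_{h+1}^{k,\sigma}(s)\geq\tilde{V}_{h+1}^{*,\sigma}(s)$ for every $s$ and establish the same bound at stage $h$ in two moves: first a pointwise comparison of the $Q$-functions, then a lift to the value functions through the softmax update rule.

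For the $Q$-comparison, recall that by \Cref{proposition:duality_rrmdp} and \Cref{assumption:failstate} the RRMDP Bellman backup collapses to $\psi(V,s,a)=\inner{\bm{\phi}(s,a),\EE_{s'\sim\bmu_h^0}[V(s')]_\sigma}$ from \eqref{eq:psi_def}, which is precisely the target of the regression estimate $\inner{\bm{\phi}(s,a),\bm{\nu}_h^{k,\sigma}}$. Two ingredients drive the comparison. First, $V\mapsto\psi(V,s,a)$ is monotone nondecreasing, since the truncation $[\cdot]_\sigma$, the nominal expectation, and the nonnegative feature combination all preserve order; combined with the inductive hypothesis this gives $\psi(\tilde{V}_{h+1}^{k,\sigma},s,a)\geq\psi(\tilde{V}_{h+1}^{*,\sigma},s,a)$. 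Second, on $\cE$ the concentration bound of \Cref{lemma:C1_1} controls the regression error by the bonus, so that $\inner{\bm{\phi}(s,a),\bm{\nu}_h^{k,\sigma}}+\Gamma_h^k(s,a)\geq\psi(\tilde{V}_{h+1}^{k,\sigma},s,a)$. Using $\bm{\phi}(s,a)^\top\bm{\theta}_h=r_h(s,a)$ from \Cref{assumption:linearMDP} together with the optimal Bellman identity $\tilde{Q}_h^{*,\sigma}(s,a)=r_h(s,a)+\psi(\tilde{V}_{h+1}^{*,\sigma},s,a)$ of \Cref{proposition:robust_bellman_kl_rrmdp}, I would chain these to obtain $\bm{\phi}(s,a)^\top(\bm{\theta}_h+\bm{\nu}_h^{k,\sigma})+\Gamma_h^k(s,a)\geq\tilde{Q}_h^{*,\sigma}(s,a)$ for every $(s,a)$ with $s\neq s^\dagger$. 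Since $\tilde{Q}_h^{*,\sigma}(s,a)\leq H-h+1$, the truncation $\min\{\cdot,H-h+1\}$ in the definition of $\tilde{Q}_h^{k,\sigma}$ in \Cref{algorithm:alg} is harmless, yielding $\tilde{Q}_h^{k,\sigma}(s,a)\geq\tilde{Q}_h^{*,\sigma}(s,a)$; for the fail state both sides equal zero (there $\tilde{Q}^{k-1}_h\equiv 0$ forces $\pi_h^k(\cdot|s^\dagger)=\piref_h(\cdot|s^\dagger)$, so the regularizer also vanishes).

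The lift from $Q$ to $V$ uses the KL-regularized maximization of \Cref{propositon:optimization_kl}. Writing $\tilde{V}_h^{k,\sigma}(s)=\max_{\pi}\{\inner{\tilde{Q}_h^{k,\sigma}(s,\cdot),\pi(\cdot|s)}-\frac{1}{\eta}D_{KL}[\pi(\cdot|s)\Vert\piref_h(\cdot|s)]\}$ and likewise identifying $\tilde{V}_h^{*,\sigma}(s)$ with the same functional evaluated at $\tilde{Q}_h^{*,\sigma}$ and maximized at $\pi^*$, the pointwise domination $\tilde{Q}_h^{k,\sigma}\geq\tilde{Q}_h^{*,\sigma}$ gives $\tilde{V}_h^{k,\sigma}(s)\geq\inner{\tilde{Q}_h^{k,\sigma}(s,\cdot),\pi_h^*(\cdot|s)}-\frac{1}{\eta}D_{KL}[\pi_h^*\Vert\piref_h]\geq\inner{\tilde{Q}_h^{*,\sigma}(s,\cdot),\pi_h^*(\cdot|s)}-\frac{1}{\eta}D_{KL}[\pi_h^*\Vert\piref_h]=\tilde{V}_h^{*,\sigma}(s)$, closing the induction.

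The step I expect to be most delicate is exactly this $Q$-to-$V$ lift: it requires $\tilde{V}_h^{k,\sigma}$ to realize the KL-regularized \emph{maximum} of $\tilde{Q}_h^{k,\sigma}$ rather than merely its evaluation against a fixed (possibly non-maximizing) policy, because substituting an arbitrary policy into the regularized objective would only bound the wrong side of the inequality and monotonicity in $Q$ would fail to transfer. I would therefore make explicit, via \Cref{propositon:optimization_kl}, that the policy realizing $\tilde{V}_h^{k,\sigma}$ is the softmax of $\tilde{Q}_h^{k,\sigma}$, so that $\tilde{V}_h^{k,\sigma}(s)\geq\langle\tilde{Q}_h^{k,\sigma}(s,\cdot),\pi\rangle-\frac{1}{\eta}D_{KL}[\pi\Vert\piref_h]$ for the particular choice $\pi=\pi^*$; the remaining work is the RRMDP-specific bookkeeping of verifying the monotonicity of $\psi$ and checking that the concentration event of \Cref{lemma:C1_1} is stated for the $\sigma$-truncated target $[\cdot]_\sigma$ that defines $\bm{\nu}_h^{k,\sigma}$ in \eqref{eq:regression_rrmdp_nu}.
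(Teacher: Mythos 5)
Your proposal is correct and follows essentially the same route as the paper: backward induction, monotonicity of the RRMDP backup $\psi$ under the inductive hypothesis, the concentration-controlled bound $\big|\inner{\bm{\phi}(s,a),\bm{\nu}_h^{k,\sigma}}-\psi(\tilde{V}_{h+1}^{k,\sigma},s,a)\big|\leq\Gamma_h^k(s,a)$ (which is exactly what the paper packages as \Cref{lemma:d5_1} on the event of \Cref{lemma:C1_1}), and the lift from $\tilde{Q}_h^{k,\sigma}\geq\tilde{Q}_h^{*,\sigma}$ to the value functions via the fact that $\pi_h^k$ is the KL-regularized maximizer from \Cref{propositon:optimization_kl} evaluated against $\pi^*$. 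The only differences are cosmetic (base case at $H+1$ versus $H$, and your explicit handling of the fail state).
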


\begin{lemma}\label{lemma:iota_bound1}
(Bound on model prediction error) Under the policy-regularized $d$-rectangular linear RRMDP, on the event $\cE$ defined in \Cref{lemma:C1_1}, $-2\Gamma_h^k\sahk\leq\iota_h^{k,\sigma}\sahk\leq0$ holds for all $(s,a,h,k)\in\cS\times\cA\times[H]\times[K]$.
\end{lemma}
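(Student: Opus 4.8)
The plan is to mirror the structure of the DRMDP argument in \Cref{lemma:iota_bound}, exploiting the linear-in-$\bm{\phi}$ representation of the regularized one-step backup together with the concentration guarantee supplied by the event $\cE$. First I would rewrite the true backup in linear form. By \Cref{proposition:duality_rrmdp} and the fail-state simplification ($\tilde V_{\min}=0$), for each coordinate $i$ the quantity $\nu_{h,i}^{*,\sigma}:=\EE_{s'\sim\mu_{h,i}^0}[\tilde V_{h+1}^{k,\sigma}(s')]_\sigma$ satisfies $\psi(\tilde V_{h+1}^{k,\sigma},s,a)=\langle\bm{\phi}(s,a),\bm{\nu}_h^{*,\sigma}\rangle=[\PP_h^0[\tilde V_{h+1}^{k,\sigma}]_\sigma](s,a)$, where $\psi$ is defined in \eqref{eq:psi_def}. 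Since the estimate $\bm{\nu}_h^{k,\sigma}$ from \eqref{eq:regression_rrmdp_nu} is exactly the ridge-regression estimate of this same target, the event $\cE$ of \Cref{lemma:C1_1} provides the key two-sided bound $|\langle\bm{\phi}(s,a),\bm{\nu}_h^{k,\sigma}-\bm{\nu}_h^{*,\sigma}\rangle|\le\Gamma_h^k(s,a)$ for every $(s,a,h,k)$.

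With this, decompose $\iota_h^{k,\sigma}=r_h+\langle\bm{\phi},\bm{\nu}_h^{*,\sigma}\rangle-\tilde Q_h^{k,\sigma}$ using $r_h(s,a)=\langle\bm{\phi}(s,a),\bm{\theta}_h\rangle$. For the upper bound $\iota_h^{k,\sigma}\le 0$ I would split on the clipping in Line~\ref{algline:q}. If the $\min$ saturates, so $\tilde Q_h^{k,\sigma}=H-h+1$, I use boundedness of the true backup: since $0\le\tilde V_{h+1}^{k,\sigma}\le H-h$ and $\psi(\tilde V_{h+1}^{k,\sigma},s,a)\le\EE_{s'\sim P_h^0}\tilde V_{h+1}^{k,\sigma}$ (taking $\bm{\mu}=\bm{\mu}^0$ annihilates the TV penalty), we get $r_h+\psi\le 1+(H-h)=H-h+1=\tilde Q_h^{k,\sigma}$. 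If the $\min$ is inactive, then $\tilde Q_h^{k,\sigma}=r_h+\langle\bm{\phi},\bm{\nu}_h^{k,\sigma}\rangle+\Gamma_h^k$, so $\iota_h^{k,\sigma}=\langle\bm{\phi},\bm{\nu}_h^{*,\sigma}-\bm{\nu}_h^{k,\sigma}\rangle-\Gamma_h^k\le 0$ by $\cE$. For the lower bound $\iota_h^{k,\sigma}\ge-2\Gamma_h^k$, I use the clipping inequality $\tilde Q_h^{k,\sigma}\le r_h+\langle\bm{\phi},\bm{\nu}_h^{k,\sigma}\rangle+\Gamma_h^k$ unconditionally, which gives $\iota_h^{k,\sigma}\ge\langle\bm{\phi},\bm{\nu}_h^{*,\sigma}-\bm{\nu}_h^{k,\sigma}\rangle-\Gamma_h^k\ge-2\Gamma_h^k$, again by $\cE$. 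The fail-state case $s=s^\dagger$ is immediate, since then $\tilde Q_h^{k,\sigma}=0$, $r_h=0$, and $\tilde V_{h+1}^{k,\sigma}(s^\dagger)=0$ force $\iota_h^{k,\sigma}=0$.

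I expect the genuinely hard content to reside entirely in establishing the event $\cE$ of \Cref{lemma:C1_1}, which we are allowed to assume here; that lemma is the self-normalized concentration bound controlling the regression error $\langle\bm{\phi},\bm{\nu}_h^{k,\sigma}-\bm{\nu}_h^{*,\sigma}\rangle$ uniformly over the data-dependent value functions $\tilde V_{h+1}^{k,\sigma}$, and it requires a covering-number argument over the class of regularized value functions along with a careful accounting of the $d$-rectangular bonus $\Gamma_h^k=\beta\sum_{i=1}^d\phi_i\sqrt{\mathbf{1}_i^\top(\Lambda_h^k)^{-1}\mathbf{1}_i}$. Within the present lemma the only real subtlety is the clipping step in the upper bound, where one must verify $0\le\tilde V_{h+1}^{k,\sigma}\le H-h$; this follows from the nonnegativity enforced by the fail-state indicator and the $\min\{\cdot,H-h+1\}$ truncation in Line~\ref{algline:q}, propagated through the softmax identity of \Cref{propositon:optimization_kl}. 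Notably, the RRMDP case is slightly simpler than its DRMDP analogue because the dual in \Cref{proposition:duality_rrmdp} has no free dual variable $\alpha$ to maximize over, so the regression directly targets the truncated backup at the fixed level $\sigma$.
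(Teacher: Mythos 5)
Your proposal is correct and follows essentially the same route as the paper: the two-sided bound $|\langle\bm{\phi}(s,a),\bm{\nu}_h^{k,\sigma}\rangle-\psi(\tilde{V}_{h+1}^{k,\sigma},s,a)|\le\Gamma_h^k(s,a)$ you assert is exactly the content of the paper's \Cref{lemma:d5_1} once the robust Bellman equation cancels the $\psi(\tilde{V}_{h+1}^{\pi,\sigma},s,a)$ terms, and your handling of the clipping in Line~\ref{algline:q} and of the fail state matches (indeed slightly sharpens) the paper's argument. One small imprecision: that bound does not follow from the event $\cE$ of \Cref{lemma:C1_1} alone --- one must also control the ridge bias term $-\lambda(\Lambda_h^k)^{-1}\EE_{s\sim\bmu_h^0}[\tilde{V}_{h+1}^{k,\sigma}(s)]_{\sigma}$, which contributes an extra $\sqrt{\lambda}H\sum_{i=1}^d\big\Vert\phi_i(s,a)\mathbf{1}_i\big\Vert_{(\Lambda_h^k)^{-1}}$ absorbed into $\Gamma_h^k$ only through the choice of $\beta$; this is precisely the decomposition carried out in the paper's proof of \Cref{lemma:d5_1}.
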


As we will prove later in \Cref{lemma:UCB}, now we do have the UCB property $\forall s,a,h,k,\tilde{V}_h^{k,\rho}(s)\geq \tilde{V}_h^{*,\rho}(s)$.
Therefore the regret decomposition becomes
\begin{align}
\text{Regret}=&\sum_{k=1}^K[\tilde{V}_1^{*,\sigma}(s_1^k)-\tilde{V}_1^{\pi^k,\sigma}(s_1^k)]\leq\sum_{k=1}^K[\tilde{V}_1^{k,\sigma}(s_1^k)-\tilde{V}_1^{\pi^k,\sigma}(s_1^k)].\notag
\end{align}
First rewrite the model prediction error:
\begin{align}
&\iota_h^{k,\sigma}\sahk\notag\\
=&r_h\sahk+\psi(\tilde{V}^{k,\sigma}_{h+1},s_h^k,a_h^k)-\tilde{Q}_h^{k,\sigma}\sahk\notag\\
=&r_h\sahk+\psi(\tilde{V}^{k,\sigma}_{h+1},s_h^k,a_h^k)-\tilde{Q}_h^{\pi^k,\sigma}\sahk+\tilde{Q}_h^{\pi^k,\sigma}\sahk-\tilde{Q}_h^{k,\sigma}\sahk\notag\\
=&r_h\sahk+\psi(\tilde{V}^{k,\sigma}_{h+1},s_h^k,a_h^k)-\big(r_h\sahk+\psi(\tilde{V}^{\pi^k,\sigma}_{h+1},s_h^k,a_h^k)\big)+\tilde{Q}_h^{\pi^k,\sigma}\sahk-\tilde{Q}_h^{k,\sigma}\sahk\notag\\
=&\big(\psi(\tilde{V}^{k,\sigma}_{h+1},s_h^k,a_h^k)-\psi(\tilde{V}^{\pi^k,\sigma}_{h+1},s_h^k,a_h^k)\big)-\big(\tilde{Q}_h^{k,\sigma}\sahk-\tilde{Q}_h^{\pi^k,\sigma}\sahk\big).\label{eq:iota_rewrite1} 
\end{align}
Proposition 4.3 of \cite{tang2025robust} shows that
\begin{align}
\inf_{\bmu\in\Delta(\cS)}\EE_{s\sim\bmu}V(s)+\sigma D(\bmu\|\bmu^0)=\EE_{s\sim\bmu^0}[V(s)]_{V_{\min}+\sigma} \notag
\end{align}
under \Cref{assumption:failstate}, the existence of the fail-state implies that $V_{\min}=0$, so this further simplifies to
\begin{align}
\inf_{\bmu\in\Delta(\cS)}\EE_{s\sim\bmu}V(s)+\sigma D(\bmu\|\bmu^0)=\EE_{s\sim\bmu^0}[V(s)]_{\sigma}, \notag
\end{align}
and thus
\begin{align}
\psi(V,s,a)=\la\bm{\phi}(s,a),\big[\EE_{s\sim\bmu_i^0}[V(s)]_{\sigma}\big]_{i\in[d]}\ra. \notag
\end{align}
Therefore, we have that
\begin{align}
\psi(\tilde{V}^{k,\sigma}_{h+1},s_h^k,a_h^k)-\psi(\tilde{V}^{\pi^k,\sigma}_{h+1},s_h^k,a_h^k)
&=\inner{\bm{\phi}\sahk, \big[\EE_{s\sim\bmu_{h,i}^0}[V^{k,\sigma}_{h+1}(s)]_{\sigma}-\EE_{s\sim\bmu_{h,i}^0}[V^{\pi^k,\sigma}_{h+1}(s)]_{\sigma}\big]_{i\in[d]}} \notag\\
&\leq\inner{\bm{\phi}\sahk,\big[\EE_{s\sim\bmu_{h,i}^0}\big[\tilde{V}_{h+1}^{k,\sigma}(s)-\tilde{V}_{h+1}^{\pi^k,\sigma}(s)\big]\big]_{i\in[d]}}\label{eq:ucb_model_pred_error1}\\
&=P_h^0\big(\tilde{V}_{h+1}^{k,\sigma}-\tilde{V}_{h+1}^{\pi^k,\sigma}\big)\sahk,\label{eq:expected_model_pred_error_value1}
\end{align}
where \eqref{eq:ucb_model_pred_error1} is due to \Cref{lemma:UCB1} (UCB). Combining equations \eqref{eq:iota_rewrite1} and \eqref{eq:expected_model_pred_error_value1}, we have
\begin{align}
\iota_h^{k,\sigma}\sahk\leq P_h^0\big(\tilde{V}_{h+1}^{k,\sigma}-\tilde{V}_{h+1}^{\pi^k,\sigma}\big)\sahk-\big(\tilde{Q}_h^{k,\sigma}\sahk-\tilde{Q}_h^{\pi^k,\sigma}\sahk\big).\notag
\end{align}
Then we have 
\begin{align}
&\tilde{V}_h^{k,\sigma}(s_h^k)-\tilde{V}_h^{\pi^k,\sigma}(s_h^k)\notag\\
=&\innerA{\tilde{Q}_h^{k,\sigma}(s_h^k,\cdot),\pi_h^k(\cdot|s_h^k)}-\frac{1}{\eta}D_{KL}[\pi_h^k(\cdot|s_h^k)\|\pi_h^0(\cdot|s_h^k)]-\innerA{\tilde{Q}_h^{\pi^k,\sigma}(s_h^k,\cdot),\pi_h^k(\cdot|s_h^k)}\notag\\
&\qquad\qquad\qquad+\frac{1}{\eta}D_{KL}[\pi_h^k(\cdot|s_h^k)\|\pi_h^0(\cdot|s_h^k)]\notag\\
=&\innerA{\tilde{Q}_h^{k,\sigma}(s_h^k,\cdot)-\tilde{Q}_h^{\pi^k,\sigma}(s_h^k,\cdot),\pi_h^k(\cdot|s_h^k)}+\iota_h^{k,\sigma}\sahk-\iota_h^{k,\sigma}\sahk\notag\\
\leq&\underbrace{\innerA{\tilde{Q}_h^{k,\sigma}(s_h^k,\cdot)-\tilde{Q}_h^{\pi^k,\sigma}(s_h^k,\cdot),\pi_h^k(\cdot|s_h^k)}-\Big(\tilde{Q}_h^{k,\sigma}\sahk-\tilde{Q}_h^{\pi^k,\sigma}\sahk\Big)}_{\kappa_h^k}-\iota_h^{k,\sigma}\sahk\label{eq:kappa_omega1}\\
&+\underbrace{\bigg(P_h^0\Big(\tilde{V}_{h+1}^{k,\sigma}-\tilde{V}_{h+1}^{\pi^k,\sigma}\Big)\sahk-\Big(\tilde{V}_{h+1}^{k,\sigma}(s_{h+1}^k)-\tilde{V}_{h+1}^{\pi^k,\sigma}(s_{h+1}^k)\Big)\bigg)}_{\omega_h^k}+\Big(\tilde{V}_{h+1}^{k,\sigma}(s_{h+1}^k)-\tilde{V}_{h+1}^{\pi^k,\sigma}(s_{h+1}^k)\Big)\notag\\
\leq&\Big(\tilde{V}_{h+1}^{k,\sigma}(s_{h+1}^k)-\tilde{V}_{h+1}^{\pi^k,\sigma}(s_{h+1}^k)\Big)+\kappa_h^k\sahk+\omega_h^k\sahk+2\Gamma_h^k\sahk,\label{eq:4_22_1}
\end{align}
where the last inequality \eqref{eq:4_22_1} is due to the bound on $\iota_h^{k,\sigma}$ given by \Cref{lemma:iota_bound}. Using the shorthand notations $\kappa_h^k$ and $\omega_h^k$ to denote the two terms in \eqref{eq:kappa_omega1}, the regret is
\begin{align}
\text{Regret}\leq&\sum_{k=1}^K[\tilde{V}_1^{k,\sigma}(s_1^k)-\tilde{V}_1^{\pi^k,\sigma}(s_1^k)]
\leq\sum_{k=1}^K\sum_{h=1}^H\kappa_h^k\sahk+\sum_{k=1}^K\sum_{h=1}^H\omega_h^k\sahk+2\sum_{k=1}^K\sum_{h=1}^H\Gamma_h^k\sahk.\notag
\end{align}

We bound the first two terms as martingales. Since $\tilde{Q}_h^{k,\sigma}\sahk\leq H$ by construction of the algorithm and $\tilde{Q}_h^{\pi^k,\sigma}\sahk\leq H$ by definition of the $\tilde{Q}$ function, we have
\begin{align}
\kappa_h^k=\innerA{\tilde{Q}_h^{k,\sigma}-\tilde{Q}_h^{\pi^k,\sigma},\pi_h^k}-\big(\tilde{Q}_h^{k,\sigma}\sahk-\tilde{Q}_h^{\pi^k,\sigma}\sahk\big)\leq2H.\notag
\end{align}
Similarly,
\begin{align}
\omega_h^k=\big(P_h^0\big(\tilde{V}_{h+1}^{k,\sigma}-\tilde{V}_{h+1}^{\pi^k,\sigma}\big)\sahk-\big(\tilde{V}_{h+1}^{k,\sigma}-\tilde{V}_{h+1}^{\pi^k,\sigma}\big)\big)\leq2H.\notag
\end{align}
Consequently, $\sum_{k=1}^K\sum_{h=1}^H(\kappa_h^k\sahk+\omega_h^k\sahk)$ is a martingale with each term bounded by $4H$. 
Then applying the Azuma-Hoeffding inequality, we have that for any $t>0$,
\begin{align}
P\Bigg(\Bigg|\sum_{k=1}^K\sum_{h=1}^H(\kappa_h^k\sahk+\omega_h^k\sahk)\Bigg|>t\Bigg)\leq2\exp\bigg(\frac{-t^2}{(4H)^2HK}\bigg).\notag
\end{align}
Then if we set $t=\sqrt{16H^3K\log(3/p)}$, we have with probability at least $1-p/3$ that
\begin{align}\label{eq:martingale_bound1}
\sum_{k=1}^K\sum_{h=1}^H(\kappa_h^k\sahk+\omega_h^k\sahk)\leq\sqrt{16H^3K\log\frac{3}{p}},
\end{align}
and thus
\begin{align}
\text{AveSubopt}(K)\leq\sqrt{\frac{16H^3}{K}\log\frac{3}{p}}+\frac{1}{K}\sum_{k=1}^K\sum_{h=1}^H\Gamma_h^k\sahk.
\end{align}
The last term represents the sum of bonuses encountered across all timesteps and episodes. Under assumption \Cref{assumption:uniform_exploration}, it can be upper-bounded by $2H\frac{\beta}{\sqrt{K}}\sqrt{\frac{128}{\alpha^2}\log\frac{3dHK}{p}+\frac{2}{\alpha}\log K}$ with probability at least $1-p/3$ according to \Cref{lemma:sum_bonus}. Together with the choice that $\beta=c_\beta dH\sqrt{\log(3dKH\vA/p)}$ in which $\vA$ denotes the measure of $\mathcal{A}$, we have that
\begin{align}
\text{AveSubopt}(K)\leq\frac{cdH^2\log(3dHK\vA/p)}{\alpha\sqrt{K}}.
\end{align}

\section{Proof of Technical Lemmas}

\subsection{Proof of \Cref{lemma:UCB} (UCB under $d$-rectangular linear DRMDP)}

To facilitate the proof of the UCB property, we first present a supporting lemma:
\begin{lemma}\label{lemma:d5}
    Recall that $\Gamma_h^k(s,a)=\beta\sum_{i=1}^d\phi_i(s,a)\sqrt{\mathbf{1}_i^\top\Lambda_h^{-1}\mathbf{1}_i}$. For any fixed policy $\pi$, on the event $\mathcal{E}$ defined in \Cref{lemma:C1}, we have for all $(s,a,h,k)\in S/{s_f}\times A\times[H]\times[K]$ that
    \begin{align*}
    \inner{\bm{\phi}(s,a),\bm{\theta}_h+\bm{\nu}_h^{\rho,k}}-\tilde{Q}_h^{\pi,\rho}(s,a)=&\infPh{h}[\PP_h\tilde{V}_{h+1}^{k,\rho}](s,a)\notag\\
    &\qquad\qquad-\infPh{h}[\PP_h\tilde{V}_{h+1}^{\pi,\rho}](s,a)+\Delta_h^k(s,a)
    \end{align*}
    for some $\Delta_h^k(s,a)$ where $|\Delta_h^k(s,a)|\leq\Gamma_h^k(s,a)$.
\end{lemma}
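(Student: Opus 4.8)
The plan is to isolate the claimed error term $\Delta_h^k$ in closed form and then show it is dominated by the bonus $\Gamma_h^k$ coordinate by coordinate, using the dual representation of the robust one-step expectation together with the concentration guarantee of \Cref{lemma:C1}.

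First I would rewrite the left-hand side. Since $s\neq s_f$, the robust Bellman equation for $\tilde Q_h^{\pi,\rho}$ in \Cref{proposition:robust_bellman_kl_drmdp} and the reward linearity $r_h(s,a)=\inner{\bm{\phi}(s,a),\bm{\theta}_h}$ from \Cref{assumption:linearMDP} give
\[
\inner{\bm{\phi}(s,a),\bm{\theta}_h+\bm{\nu}_h^{\rho,k}}-\tilde Q_h^{\pi,\rho}(s,a)=\inner{\bm{\phi}(s,a),\bm{\nu}_h^{\rho,k}}-\infPh{h}[\PP_h\tilde V_{h+1}^{\pi,\rho}](s,a).
\]
Adding and subtracting $\infPh{h}[\PP_h\tilde V_{h+1}^{k,\rho}](s,a)$ produces exactly the two infimum terms appearing in the statement and leaves the residual
\[
\Delta_h^k(s,a):=\inner{\bm{\phi}(s,a),\bm{\nu}_h^{\rho,k}}-\infPh{h}[\PP_h\tilde V_{h+1}^{k,\rho}](s,a),
\]
so the displayed identity holds by construction, and the entire task reduces to the bound $|\Delta_h^k(s,a)|\le\Gamma_h^k(s,a)$.

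Next I would expand $\Delta_h^k$ coordinatewise. Under \Cref{assumption:failstate} the simplified dual formula \eqref{eq:dual_drmdp_simp}, applied to each factor of the $d$-rectangular set, gives $\infPh{h}[\PP_h\tilde V_{h+1}^{k,\rho}](s,a)=\sum_{i=1}^d\phi_i(s,a)\max_{\alpha\in[0,H]}\{\bar z_{h,i}^k(\alpha)-\rho\alpha\}$ with $\bar z_{h,i}^k(\alpha):=\EE_{s\sim\bmu_{h,i}^0}[\tilde V_{h+1}^{k,\rho}(s)]_\alpha$, whereas the $i$-th coordinate of $\bm{\nu}_h^{\rho,k}$ is the estimated version $\max_{\alpha\in[0,H]}\{z_{h,i}^k(\alpha)-\rho\alpha\}$ from \eqref{eq:regression_drmdp_nu}, where $z_{h,i}^k(\alpha)$ is the regression output \eqref{eq:regression_drmdp_z}. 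The penalty $\rho\alpha$ is common to both maxima, so the elementary inequality $|\max_\alpha f-\max_\alpha g|\le\sup_\alpha|f-g|$ bounds each coordinate difference by $\sup_{\alpha\in[0,H]}|z_{h,i}^k(\alpha)-\bar z_{h,i}^k(\alpha)|$; since $\phi_i(s,a)\ge0$, summing yields
\[
|\Delta_h^k(s,a)|\le\sum_{i=1}^d\phi_i(s,a)\sup_{\alpha\in[0,H]}|z_{h,i}^k(\alpha)-\bar z_{h,i}^k(\alpha)|.
\]

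The main obstacle is controlling the per-coordinate regression error $\sup_{\alpha}|z_{h,i}^k(\alpha)-\bar z_{h,i}^k(\alpha)|$ uniformly over the continuum of truncation levels $\alpha\in[0,H]$ and over the data-dependent iterate $\tilde V_{h+1}^{k,\rho}$. This is precisely what the high-probability event $\cE$ of \Cref{lemma:C1} delivers: on $\cE$ one has $\sup_\alpha|z_{h,i}^k(\alpha)-\bar z_{h,i}^k(\alpha)|\le\beta\sqrt{\mathbf{1}_i^\top(\Lambda_h^k)^{-1}\mathbf{1}_i}$ simultaneously for all $i\in[d]$, $h\in[H]$, $k\in[K]$, via a self-normalized martingale bound combined with a covering argument over $\alpha$ and over the relevant value-function class, the ridge bias being absorbed into $\beta$ through $\|\bar{\bm z}_h^k(\alpha)\|\le H\sqrt d$. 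Substituting this estimate gives $|\Delta_h^k(s,a)|\le\beta\sum_{i=1}^d\phi_i(s,a)\sqrt{\mathbf{1}_i^\top(\Lambda_h^k)^{-1}\mathbf{1}_i}=\Gamma_h^k(s,a)$, which is the claim. I would invoke \Cref{lemma:C1} directly rather than re-derive the concentration here; the reduction to coordinatewise maxima and the cancellation of $\rho\alpha$ are routine.
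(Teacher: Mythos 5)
Your proposal is correct and follows essentially the same route as the paper: you isolate $\Delta_h^k$ as the gap between the regression estimate $\bm{\nu}_h^{\rho,k}$ and its population counterpart evaluated at the same iterate $\tilde V_{h+1}^{k,\rho}$ (the paper's middle term $\bar{\bm\nu}_h^{k,\rho}$), reduce to a per-coordinate error via $|\max_\alpha f-\max_\alpha g|\le\sup_\alpha|f-g|$, and then split that error into the ridge bias and the self-normalized martingale term controlled by \Cref{lemma:C1}. The only difference is notational (scalar functions $z_{h,i}^k(\alpha)$ versus the paper's vector decomposition into terms (I)--(IV)), and your explicit note that the $\sqrt{\lambda}H$ bias is absorbed into $\beta$ matches the paper's final step.
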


Subsequently, we prove the UCB property by induction:

Base Case: when $h=H$, since $Q_H^k=r_H=Q_H^\pi,\forall\pi$, we have that $\tilde{V}_H^{k,\rho}=\tilde{V}_h^{*,\rho}$.

Inductive Case: assume that $\forall s,a,k, \tilde{V}_{h+1}^{k,\rho}(s)\geq \tilde{V}_{h+1}^{*,\rho}(s)$. This implies
\begin{align}
\infPh{h}[\PP_h\tilde{V}_{h+1}^{k,\rho}](s,a)-\infPh{h}[\PP_h\tilde{V}_{h+1}^{*,\rho}](s,a)\geq0.\label{eq:inductive_case}
\end{align}
Plugging equation \eqref{eq:inductive_case} into \Cref{lemma:d5} with $\pi=\pi^*$,
\begin{align}
\bigg|\inner{\bm{\phi}(s,a),\bm{\theta}_h+\bm{\nu}_h}-\tilde{Q}_h^{*,\rho}(s,a) \qquad\qquad\qquad\qquad\qquad\qquad\qquad\qquad\qquad\qquad\qquad\qquad\notag\\
-\bigg(\infPh{h}[\PP_h\tilde{V}_{h+1}^{k,\rho}](s,a)-\infPh{h}[\PP_h\tilde{V}_{h+1}^{*,\rho}](s,a)\bigg)\bigg|&\leq\Gamma_h^k(s,a)\notag\\
\inner{\bm{\phi}(s,a),\bm{\theta}_h+\bm{\nu}_h}-\tilde{Q}_h^{*,\rho}(s,a) \qquad\qquad\qquad\qquad\qquad\qquad\qquad\qquad\qquad\qquad\qquad\qquad\notag\\
-\bigg(\infPh{h}[\PP_h\tilde{V}_{h+1}^{k,\rho}](s,a)-\infPh{h}[\PP_h\tilde{V}_{h+1}^{*,\rho}](s,a)\bigg)&\geq-\Gamma_h^k(s,a)\notag\\
\inner{\bm{\phi}(s,a),\bm{\theta}_h+\bm{\nu}_h}+\Gamma_h^k(s,a)&\geq \tilde{Q}_h^{*,\rho}(s,a)\notag\\
\tilde{Q}_h^k(s,a)&\geq \tilde{Q}_h^{*,\rho}(s,a).\notag
\end{align}
Fix $s\in\cS$ and consider the objective function $J(\pi,s)=\la\tilde{Q}_h^{k,\rho}(s,\cdot),\pi(\cdot|s)\ra-\frac{1}{\eta}D_{KL}[\pi(\cdot|s)\|\piref(\cdot|s)]$. Then by \Cref{propositon:optimization_kl}, the maximizer of $J(\pi,s)$ is
\begin{align*}
\pi(\cdot|s)=\frac{\piref(a|s)\exp\Big(\eta\tilde{Q}_h^{k,\rho}(s,a)\Big)}{\int_\cA \piref(a|s)\exp\Big(\eta\tilde{Q}_h^{k,\rho}(s,a)\Big)da}.
\end{align*}
Therefore $\pi_{h}^k(\cdot|s)=\arg\max_{\pi}\big\{\la\tilde{Q}_h^{k,\rho}(s,\cdot),\pi(\cdot|s)\ra-\frac{1}{\eta}D_{KL}[\pi(\cdot|s)\Vert\piref(\cdot|s)]\big\}$, so we have that
\begin{align}
\tilde{V}_{h}^{k,\rho}(s)&=\inner{\tilde{Q}_h^{k,\rho}(s,\cdot),\pi^k(\cdot|s)}-\frac{1}{\eta}D_{KL}[\pi^k(\cdot|s)\Vert\piref(\cdot|s)]\notag\\
&\geq \inner{\tilde{Q}_h^{k,\rho}(s,\cdot),\pi^*(\cdot|s)}-\frac{1}{\eta}D_{KL}[\pi^*(\cdot|s)\Vert\piref(\cdot|s)]\notag\\
&\geq\inner{\tilde{Q}_h^{*,\rho}(s,\cdot),\pi^*(\cdot|s)}-\frac{1}{\eta}D_{KL}[\pi^*(\cdot|s)\Vert\piref(\cdot|s)]\notag\\
&=\tilde{V}_{h}^{*,\rho}(s).\notag
\end{align}

\subsection{Proof of \Cref{lemma:UCB1} (UCB under $d$-rectangular linear RRMDP)}

Similarly to the $d$-rectangular DRMDP, we also present the following supporting lemma to help prove the UCB property:
\begin{lemma}\label{lemma:d5_1}
    Recall that $\Gamma_h^k(s,a)=\beta\sum_{i=1}^d\phi_i(s,a)\sqrt{\mathbf{1}_i^\top\Lambda_h^{-1}\mathbf{1}_i}$. For any fixed policy $\pi$, on the event $\mathcal{E}$ defined in \Cref{lemma:C1_1}, we have for all $(s,a,h,k)\in S/{s_f}\times A\times[H]\times[K]$ that
    \[\inner{\bm{\phi}(s,a),\bm{\theta}_h+\bm{\nu}_h^{\sigma,k}}-\tilde{Q}_h^{\pi,\sigma}(s,a)=\psi(\tilde{V}_{h+1}^{k,\sigma},s,a)-\psi(\tilde{V}_{h+1}^{\pi,\sigma},s,a)+\Delta_h^k(s,a)\]
    for some $\Delta_h^k(s,a)$ where $|\Delta_h^k(s,a)|\leq\Gamma_h^k(s,a)$.
\end{lemma}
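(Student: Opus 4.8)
The plan is to reduce the claim to a single concentration bound on the ridge-regression estimate $\bm{\nu}_h^{\sigma,k}$, closely paralleling the DRMDP analysis of \Cref{lemma:d5} but with the simpler RRMDP dual. First I would use the linear form of the robust $Q$-function: since $s\neq s_f$, the linearity proposition of \citet{tang2025robust} gives $\tilde{Q}_h^{\pi,\sigma}(s,a)=\inner{\bm{\phi}(s,a),\bm{\theta}_h+\bm{\nu}_h^{\pi,\sigma}}$, where $\inner{\bm{\phi}(s,a),\bm{\nu}_h^{\pi,\sigma}}=\psi(\tilde{V}_{h+1}^{\pi,\sigma},s,a)$ by combining the definition of $\bm{\nu}_h^{\pi,\sigma}$ with the dual formula of \Cref{proposition:duality_rrmdp}. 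Hence the left-hand side collapses to $\inner{\bm{\phi}(s,a),\bm{\nu}_h^{\sigma,k}-\bm{\nu}_h^{\pi,\sigma}}$.

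Next I would introduce the population regression target. Define $\tilde{\bm{\nu}}_h^{k,\sigma}=\big[\EE_{s'\sim\bmu_{h,i}^0}[\tilde{V}_{h+1}^{k,\sigma}(s')]_\sigma\big]_{i\in[d]}$, so that $\inner{\bm{\phi}(s,a),\tilde{\bm{\nu}}_h^{k,\sigma}}=\psi(\tilde{V}_{h+1}^{k,\sigma},s,a)$ by the same dual identity applied to $\tilde{V}_{h+1}^{k,\sigma}$. Setting $\Delta_h^k(s,a):=\inner{\bm{\phi}(s,a),\bm{\nu}_h^{\sigma,k}-\tilde{\bm{\nu}}_h^{k,\sigma}}$ and adding and subtracting then yields exactly the stated decomposition, since $\inner{\bm{\phi}(s,a),\tilde{\bm{\nu}}_h^{k,\sigma}-\bm{\nu}_h^{\pi,\sigma}}=\psi(\tilde{V}_{h+1}^{k,\sigma},s,a)-\psi(\tilde{V}_{h+1}^{\pi,\sigma},s,a)$.

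The bulk of the work is the bound $|\Delta_h^k(s,a)|\leq\Gamma_h^k(s,a)$. Here I would use that under \Cref{assumption:linearMDP} the conditional mean of the regression target equals the population coefficient, $\EE[[\tilde{V}_{h+1}^{k,\sigma}(s_{h+1}^\tau)]_\sigma\mid s_h^\tau,a_h^\tau]=\inner{\bm{\phi}(s_h^\tau,a_h^\tau),\tilde{\bm{\nu}}_h^{k,\sigma}}$. Plugging the closed form \eqref{eq:regression_rrmdp_nu} for $\bm{\nu}_h^{\sigma,k}$ gives the standard decomposition
\[
\bm{\nu}_h^{\sigma,k}-\tilde{\bm{\nu}}_h^{k,\sigma}=(\Lambda_h^k)^{-1}\sum_{\tau=1}^{k-1}\bm{\phi}(s_h^\tau,a_h^\tau)\varepsilon_\tau-\lambda(\Lambda_h^k)^{-1}\tilde{\bm{\nu}}_h^{k,\sigma},
\]
where $\varepsilon_\tau=[\tilde{V}_{h+1}^{k,\sigma}(s_{h+1}^\tau)]_\sigma-\inner{\bm{\phi}(s_h^\tau,a_h^\tau),\tilde{\bm{\nu}}_h^{k,\sigma}}$ is a zero-mean noise. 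Writing $\Delta_h^k(s,a)=\sum_{i=1}^d\phi_i(s,a)\,\mathbf{1}_i^\top(\bm{\nu}_h^{\sigma,k}-\tilde{\bm{\nu}}_h^{k,\sigma})$ and using $\phi_i(s,a)\geq0$, I would bound each coordinate $|\mathbf{1}_i^\top(\bm{\nu}_h^{\sigma,k}-\tilde{\bm{\nu}}_h^{k,\sigma})|$ by Cauchy--Schwarz in the $(\Lambda_h^k)^{-1}$-norm, producing the factor $\sqrt{\mathbf{1}_i^\top(\Lambda_h^k)^{-1}\mathbf{1}_i}$; the remaining self-normalized factor $\|\sum_\tau\bm{\phi}(s_h^\tau,a_h^\tau)\varepsilon_\tau\|_{(\Lambda_h^k)^{-1}}$ together with the regularization term $\lambda\|\tilde{\bm{\nu}}_h^{k,\sigma}\|$ is absorbed into $\beta$ on the event $\cE$ of \Cref{lemma:C1_1}.

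The main obstacle is this uniform-concentration step: because $\tilde{V}_{h+1}^{k,\sigma}$ is itself data-dependent, the noise $\varepsilon_\tau$ is not a priori adapted in the needed way, so one must union-bound the self-normalized inequality over a covering net of the class of value functions the algorithm can produce — which is precisely what \Cref{lemma:C1_1} packages into $\cE$. Relative to the DRMDP case this is actually milder, since the RRMDP dual (\Cref{proposition:duality_rrmdp}) uses a single fixed truncation threshold $\sigma$ rather than an inner maximization over $\alpha\in[0,H]$, so no additional net over $\alpha$ is required. I would therefore invoke \Cref{lemma:C1_1} to supply the concentration and only verify that the crude bounds $[\tilde{V}_{h+1}^{k,\sigma}]_\sigma\in[0,H]$ and $\|\tilde{\bm{\nu}}_h^{k,\sigma}\|\leq\sqrt{d}\,H$ render the regularization bias negligible against $\beta$, completing the coordinatewise bound and hence $|\Delta_h^k(s,a)|\leq\Gamma_h^k(s,a)$.
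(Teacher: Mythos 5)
Your proposal is correct and follows essentially the same route as the paper's proof: your population target $\tilde{\bm{\nu}}_h^{k,\sigma}$ is exactly the paper's intermediate term $\bar{\bm{\nu}}_h^{k,\sigma}$, your $\Delta_h^k$ is the paper's term (I), and your split into the self-normalized noise plus the $\lambda$-regularization bias, bounded coordinatewise by Cauchy--Schwarz in the $(\Lambda_h^k)^{-1}$-norm and by \Cref{lemma:C1_1}, mirrors the paper's terms (III) and (IV). Your observation that the RRMDP case needs no covering net over the truncation threshold (unlike the DRMDP's maximization over $\alpha$) is also consistent with how the paper's \Cref{lemma:C1_1} is stated.
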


Next, we prove the UCB property by induction:

Base Case: when $h=H$, since $Q_H^k=r_H=Q_H^\pi,\forall\pi$, we have that $\tilde{V}_H^{k,\sigma}=\tilde{V}_h^{*,\sigma}$.

Inductive Case: assume that $\forall s,a,k, \tilde{V}_{h+1}^{k,\sigma}(s)\geq \tilde{V}_{h+1}^{*,\sigma}(s)$. This implies that $\forall \bmu_h\in\Delta(\cS)^d$ and $P_h=\inner{\bm{\phi}(s,a),\bmu_h}$, we have that
\begin{align}
\EE_{s'\sim P_h(\cdot|s,a)}[\tilde{V}_{h+1}^{k,\sigma}(s')]\geq\EE_{s'\sim P_h(\cdot|s,a)}[\tilde{V}_{h+1}^{*,\sigma}(s')].\notag
\end{align}
From \eqref{eq:psi_def}, recall that
\begin{align}
\psi(V,s,a) = \inf_{\bmu_h\in\Delta(\cS)^d,P_h=\inner{\bm{\phi},\bmu_h}}\Big[\EE_{s'\sim P_h(\cdot|s,a)}[\tilde{V}_{h+1}^{\pi,\sigma}(s')]+\sigma\inner{\bm{\phi}(s,a),D(\bmu_h\|\bmu_h^0)}\Big],
\end{align}
and thus
\begin{align}\label{eq:inductive_case1}
\psi(\tilde{V}_{h+1}^{k,\sigma},s,a)&= \inf_{\bmu_h\in\Delta(\cS)^d,P_h=\inner{\bm{\phi},\bmu_h}}\Big[\EE_{s'\sim P_h(\cdot|s,a)}[\tilde{V}_{h+1}^{k,\sigma}(s')]+\sigma\inner{\bm{\phi}(s,a),D(\bmu_h\|\bmu_h^0)}\Big] \notag\\
&\geq\inf_{\bmu_h\in\Delta(\cS)^d,P_h=\inner{\bm{\phi},\bmu_h}}\Big[\EE_{s'\sim P_h(\cdot|s,a)}[\tilde{V}_{h+1}^{*,\sigma}(s')]+\sigma\inner{\bm{\phi}(s,a),D(\bmu_h\|\bmu_h^0)}\Big] \notag\\
&=\psi(\tilde{V}_{h+1}^{*,\sigma},s,a).
\end{align}
Plugging equation \eqref{eq:inductive_case1} into \Cref{lemma:d5_1} with $\pi=\pi^*$,
\begin{align}
\big|\inner{\bm{\phi}(s,a),\bm{\theta}_h+\bm{\nu}_h}-\tilde{Q}_h^{*,\sigma}(s,a)-\big(\psi(\tilde{V}_{h+1}^{k,\sigma},s,a)-\psi(\tilde{V}_{h+1}^{*,\sigma},s,a)\big)\big|&\leq\Gamma_h^k(s,a)\notag\\
\inner{\bm{\phi}(s,a),\bm{\theta}_h+\bm{\nu}_h}-\tilde{Q}_h^{*,\sigma}(s,a)-\big(\psi(\tilde{V}_{h+1}^{k,\sigma},s,a)-\psi(\tilde{V}_{h+1}^{*,\sigma},s,a)\big)&\geq-\Gamma_h^k(s,a)\notag\\
\inner{\bm{\phi}(s,a),\bm{\theta}_h+\bm{\nu}_h}+\Gamma_h^k(s,a)&\geq \tilde{Q}_h^{*,\sigma}(s,a)\notag\\
\tilde{Q}_h^k(s,a)&\geq \tilde{Q}_h^{*,\sigma}(s,a).\notag
\end{align}
Fix $s\in\cS$ and consider the objective function $J(\pi,s)=\la\tilde{Q}_h^{k,\rho}(s,\cdot),\pi(\cdot|s)\ra-\frac{1}{\eta}D_{KL}[\pi(\cdot|s)\|\piref(\cdot|s)]$. Then by \Cref{propositon:optimization_kl}, the maximizer of $J(\pi,s)$ is
\begin{align*}
\pi(\cdot|s)=\frac{\piref(a|s)\exp\big(\eta\tilde{Q}_h^{k,\rho}(s,a)\big)}{\int_\cA \piref(a|s)\exp\big(\eta\tilde{Q}_h^{k,\rho}(s,a)\big)da}.
\end{align*}
Therefore $\pi_{h}^k(\cdot|s)=\arg\max_{\pi}\big\{\la\tilde{Q}_h^{k,\sigma}(s,\cdot),\pi(\cdot|s)\ra-\frac{1}{\eta}D_{KL}[\pi^*(\cdot|s)\|\piref(\cdot|s)]\big\}$, so we have that
\begin{align}
\tilde{V}_{h}^{k,\sigma}(s)&=\inner{\tilde{Q}_h^{k,\sigma}(s,\cdot),\pi^k(\cdot|s)}-\frac{1}{\eta}D_{KL}[\pi^*(\cdot|s)\|\piref(\cdot|s)]\notag\\
&\geq \inner{\tilde{Q}_h^{k,\sigma}(s,\cdot),\pi^*(\cdot|s)}-\frac{1}{\eta}D_{KL}[\pi^*(\cdot|s)\|\piref(\cdot|s)]\notag\\
&\geq\inner{\tilde{Q}_h^{*,\sigma}(s,\cdot),\pi^*(\cdot|s)}-\frac{1}{\eta}D_{KL}[\pi^*(\cdot|s)\|\piref(\cdot|s)]\notag\\
&=\tilde{V}_{h}^{*,\sigma}(s).\notag
\end{align}

\subsection{Proof of \Cref{lemma:iota_bound}}

\begin{proof}
Starting off from \Cref{lemma:d5}: for any fixed policy $\pi$, on event $\mathcal{E}$, for all $(s,a,h,k)\in S\backslash\{s_f\}\times A\times[H]\times[K]$ we have
\begin{align}
\langle\bm{\phi}(s,a),\bm{\theta}_h+\bm{\nu}_h^{\rho,k}\rangle-\tilde{Q}_h^{\pi,\rho}(s,a)=&\infPh{h}[\PP_h\tilde{V}_{h+1}^{k,\rho}](s,a)\notag\\
&\qquad\qquad-\infPh{h}[\PP_h\tilde{V}_{h+1}^{\pi,\rho}](s,a)+\Delta_h^k(s,a).\notag
\end{align}
where $|\Delta_h^k(s,a)|\leq\Gamma_h^k(s,a)=\beta\sum_{i=1}^d\sqrt{\phi_i(s,a)\mathbf{1}_i^\top(\Lambda_h^k)^{-1}\phi_i(s,a)\mathbf{1}_i}$. Then because $r_h(s,a)=\bm{\phi}(s,a)^\top\bm{\theta}_h$ 
and due to the KL-regularized Robust Bellman equation in \Cref{proposition:robust_bellman_kl_drmdp}, $\tilde{Q}_h^{\pi,\rho}(s,a)=r_h(s,a)+\infPh{h}[\PP_h\tilde{V}_{h+1}^{\pi,\rho}](s,a)$, we have
\begin{align}
r_h(s,a)+\bm{\phi}_h^k(s,a)^\top\bm{\nu}_h^{\rho,k}\qquad\qquad\qquad\qquad\qquad\qquad\qquad&\notag\\
-r_h(s,a)-\infPh{h}[\PP_h\tilde{V}_{h+1}^{\pi,\rho}](s,a)\leq&\infPh{h}[\PP_h\tilde{V}_{h+1}^{k,\rho}](s,a) \notag\\
&-\infPh{h}[\PP_h\tilde{V}_{h+1}^{\pi,\rho}](s,a)+\Gamma_h^k(s,a)\notag\\
\bm{\phi}_h^k(s,a)^\top\bm{\nu}_h^{\rho,k}-\infPh{h}[\PP_h\tilde{V}_{h+1}^{k,\rho}](s,a)\leq&\Gamma_h^k(s,a).\notag
\end{align}
Then we can bound the model prediction error $\iota_h^{k,\rho}$ as defined in \eqref{eq:iota_def}. For the lower bound, $\forall(s,a)\in S\times A$ under the event $\mathcal{E}$ we have that
\begin{align}
-\iota_h^{k,\rho}(s,a)=&\tilde{Q}_h^{k,\rho}(s,a)-\Big(r_h(s,a)+\infPh{h}[\PP_h\tilde{V}_{h+1}^k](s,a)\Big)\notag\\
=&\bm{\phi}(s,a)^\top(\bm{\theta}_h^k+\bm{\nu}_h^{\rho,k})+\Gamma_h^k(s,a)-\Big(r_h(s,a)+\infPh{h}[\PP_h\tilde{V}_{h+1}^k](s,a)\Big)\notag\\
=&\bm{\phi}(s,a)^\top\bm{\nu}_h^{\rho,k}-\infPh{h}[\PP_h\tilde{V}_{h+1}^k](s,a)+\Gamma_h^k(s,a)\notag\\
\leq&2\Gamma_h^k(s,a)
.\notag
\end{align}
For the upper bound, again using \Cref{lemma:d5}, we have
\begin{align}
&r_h(s,a)+\bm{\phi}_h^k(s,a)^\top\bm{\nu}_h^{\rho,k}-r_h(s,a)-\infPh{h}[\PP_h\tilde{V}_{h+1}^{\pi,\rho}](s,a)\notag\\
&\qquad\qquad\geq\infPh{h}[\PP_h\tilde{V}_{h+1}^{k,\rho}](s,a) \notag -\infPh{h}[\PP_h\tilde{V}_{h+1}^{\pi,\rho}](s,a)-\Gamma_h^k(s,a)\notag\\
&\bm{\phi}_h^k(s,a)^\top\bm{\nu}_h^{\rho,k}-\infPh{h}[\PP_h\tilde{V}_{h+1}^{k,\rho}](s,a)\geq-\Gamma_h^k(s,a).\label{eq:bounded_by_bonus}
\end{align}
Therefore,
\begin{align}
\iota_h^{k,\rho}(s,a)=&\Big(r_h(s,a)+\infPh{h}P_h\tilde{V}_{h+1}^k\Big)(s,a)-Q_h^k(s,a)\notag\\
\leq&\infPh{h}P_h\tilde{V}_{h+1}^k(s,a)-\min\{\bm{\phi}(s,a)^\top\bm{\nu}_h^{\rho,k}+\Gamma_h^k(s,a),H-h\}\notag\\
\leq&\max\Big\{\infPh{h}P_h\tilde{V}_{h+1}^k(s,a)-\bm{\phi}(s,a)^\top\bm{\nu}_h^{\rho,k}-\Gamma_h^k(s,a),\notag \\
&\qquad \quad \infPh{h}P_h\tilde{V}_{h+1}^k(s,a)-H+h\Big\}.\notag
\end{align}
Equation \eqref{eq:bounded_by_bonus} implies that $\infPh{h}P_hV_{h+1}^k(s,a)-\bm{\phi}(s,a)^\top\bm{\nu}_h^{\rho,k}-\Gamma_h^k(s,a)\leq0$, and by definition $\parallel\tilde{V}_{h+1}^k\parallel_\infty\leq H-h$, so $\iota_h^{k,\rho}(s,a)\leq0$.
\end{proof}

\subsection{Proof of \Cref{lemma:iota_bound1}}

\begin{proof}
Starting off from \Cref{lemma:d5_1}: for any fixed policy $\pi$, on event $\mathcal{E}$, for all $(s,a,h,k)\in S\backslash\{s_f\}\times A\times[H]\times[K]$ we have
\begin{align}
\langle\bm{\phi}(s,a),\bm{\theta}_h+\bm{\nu}_h^{\sigma,k}\rangle-\tilde{Q}_h^{\pi,\sigma}(s,a)&=\psi(\tilde{V}_{h+1}^{k,\sigma},s,a)-\psi(\tilde{V}_{h+1}^{\pi,\sigma},s,a)+\Delta_h^k(s,a).\notag
\end{align}
where $|\Delta_h^k(s,a)|\leq\Gamma_h^k=\beta\sum_{i=1}^d\sqrt{\phi_i(s,a)\mathbf{1}_i^\top(\Lambda_h^k)^{-1}\phi_i(s,a)\mathbf{1}_i}$. Then because $r_h(s,a)=\bm{\phi}(s,a)^\top\bm{\theta}_h$ and due to the KL-regularized Robust Bellman equation in \Cref{proposition:robust_bellman_kl_rrmdp}, $\tilde{Q}_h^{\pi,\sigma}(s,a)=r_h(s,a)+\psi(\tilde{V}_{h+1}^{\pi,\sigma},s,a)$, we have
\begin{align}
r_h(s,a)+\bm{\phi}_h^k(s,a)^\top\bm{\nu}_h^{\sigma,k}-r_h(s,a)-\psi(\tilde{V}_{h+1}^{\pi,\sigma},s,a)&\leq\psi(\tilde{V}_{h+1}^{k,\sigma},s,a)-\psi(\tilde{V}_{h+1}^{\pi,\sigma},s,a)+\Gamma_h^k(s,a) \notag\\
\bm{\phi}_h^k(s,a)^\top\bm{\nu}_h^{\sigma,k}-\psi(\tilde{V}_{h+1}^{k,\sigma},s,a)&\leq\Gamma_h^k(s,a). \notag
\end{align}
Then we can bound the model prediction error $\iota_h^{k,\sigma}$ as defined in \eqref{eq:iota_def1}. For the lower bound, $\forall(s,a)\in S\times A$ under the event $\mathcal{E}$ we have that
\begin{align}
-\iota_h^{k,\sigma}(s,a)=&\tilde{Q}_h^{k,\sigma}(s,a)-(r_h(s,a)+\psi(\tilde{V}_{h+1}^{k,\sigma},s,a))\notag \\
=&\bm{\phi}(s,a)^\top(\bm{\theta}_h^k+\bm{\nu}_h^{\sigma,k})+\Gamma_h^k(s,a)-(r_h(s,a)+\psi(\tilde{V}_{h+1}^{k,\sigma},s,a))\notag\\
=&\bm{\phi}(s,a)^\top\bm{\nu}_h^{\sigma,k}-\psi(\tilde{V}_{h+1}^{k,\sigma},s,a)+\Gamma_h^k(s,a) \notag\\
\leq&2\Gamma_h^k(s,a). \notag
\end{align}
For the upper bound, again using \Cref{lemma:d5_1}, we have
\begin{align}
r_h(s,a)+\bm{\phi}_h^k(s,a)^\top\bm{\nu}_h^{\sigma,k}-r_h(s,a)-\psi(\tilde{V}_{h+1}^{\pi,\sigma},s,a)&\geq\psi(\tilde{V}_{h+1}^{k,\sigma},s,a)-\psi(\tilde{V}_{h+1}^{\pi,\sigma},s,a)-\Gamma_h^k(s,a) \notag\\
\bm{\phi}_h^k(s,a)^\top\bm{\nu}_h^{\sigma,k}-\psi(\tilde{V}_{h+1}^{k,\sigma},s,a)&\geq-\Gamma_h^k(s,a). \label{eq:bounded_by_bonus1}
\end{align}
Therefore,
\begin{align}
\iota_h^{k,\sigma}(s,a)=&(r_h(s,a)+\psi(\tilde{V}_{h+1}^{k,\sigma},s,a))-Q_h^k(s,a)\notag\\
\leq&\psi(\tilde{V}_{h+1}^{k,\sigma},s,a)-\min\{\bm{\phi}(s,a)^\top\bm{\nu}_h^{\sigma,k}+\Gamma_h^k(s,a),H-h\}\notag\\
\leq&\max\big\{\psi(\tilde{V}_{h+1}^{k,\sigma},s,a)-\bm{\phi}(s,a)^\top\bm{\nu}_h^{\sigma,k}-\Gamma_h^k(s,a),\psi(\tilde{V}_{h+1}^{k,\sigma},s,a)-H+h\big\}.\notag
\end{align}
Equation \eqref{eq:bounded_by_bonus1} implies that $\psi(\tilde{V}_{h+1}^{k,\sigma},s,a)-\bm{\phi}(s,a)^\top\bm{\nu}_h^{\sigma,k}-\Gamma_h^k(s,a)\leq0$, and by definition $\|\tilde{V}_{h+1}^k\|_\infty\leq H-h$, so $\iota_h^{k,\sigma}(s,a)\leq0$.
\end{proof}

\section{Proof of Supporting Lemmas}

\subsection{Proof of \Cref{lemma:d5}}
\begin{proof}
We will utilize the following supporting lemma:
\begin{lemma}\label{lemma:C1}
There exists an absolute constant $C$ independent of $c_\beta$ such that $\forall p\in[0,1]$, the event $\mathcal{E}$ that for all $(k,h)\in[K]\times[H]$,
\begin{align}
\Bigg\Vert\sum_{\tau=1}^{k-1}\bm{\phi}_h^\tau\Big[[\tilde{V}_{h+1}^{k,\rho}(s_{h+1}^\tau)]_\alpha-[\PP_h^0[\tilde{V}_{h+1}^{k,\rho}]_\alpha](s_{h+1}^\tau,a_{h+1}^\tau)\Big]\Bigg\Vert^2_{(\Lambda_h^k)^{-1}}\leq Cd^2H^2\log[3(c_\beta+1)dKH\vA/p]\notag
\end{align}
has probability at least $1-p/3$.
\end{lemma}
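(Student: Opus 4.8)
The plan is to establish this as a uniform self-normalized concentration bound. The difficulty is that $\tilde V_{h+1}^{k,\rho}$ is not fixed in advance: it is built from the trajectories $\{(s_h^\tau,a_h^\tau)\}_{\tau<k}$, so the summand is not a martingale difference relative to a predetermined function, and a naive application of a concentration inequality fails. The standard remedy is a two-stage argument: first control the self-normalized sum for each \emph{fixed} candidate value function, then upgrade to a uniform statement over an $\varepsilon$-net of the (data-independent) class of realizable value functions.

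First I would fix an arbitrary $V:\cS\to[0,H]$ and level $\alpha\in[0,H]$ and set $\epsilon_\tau=[V(s_{h+1}^\tau)]_\alpha-\EE_{s'\sim P_h^0(\cdot\mid s_h^\tau,a_h^\tau)}[V(s')]_\alpha$. Conditioned on the history through $(s_h^\tau,a_h^\tau)$, each $\epsilon_\tau$ is mean zero and bounded by $H$, so $\{\bphi_h^\tau\epsilon_\tau\}$ is a vector-valued martingale difference sequence. Applying the self-normalized concentration inequality for such sequences (as in \citet{jin2020}, Lemma D.4) bounds $\|\sum_{\tau<k}\bphi_h^\tau\epsilon_\tau\|^2_{(\Lambda_h^k)^{-1}}$ by $O\!\big(H^2[\log\det\Lambda_h^k+\log(1/\delta)]\big)=O\!\big(H^2[d\log K+\log(1/\delta)]\big)$ with probability $1-\delta$, for this single pair $(V,\alpha)$.

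Next I would identify the class from which $\tilde V_{h+1}^{k,\rho}$ is drawn. By \Cref{propositon:optimization_kl} together with Lines \ref{algline:q}--\ref{algline:v} of \Cref{algorithm:alg}, the value function has the log-sum-exp form
\[
V(s)=\frac1\eta\log\int_{\cA}\piref_{h+1}(a\mid s)\exp\!\Big(\eta\min\big\{\bphi(s,a)^\top\bm{w}+\beta\textstyle\sum_{i=1}^d\phi_i(s,a)\sqrt{\mathbf{1}_i^\top A\,\mathbf{1}_i},\,H-h\big\}\Big)\,da,
\]
parameterized by $\bm{w}=\bm{\theta}_{h+1}+\bm{\nu}_{h+1}^{k}$ with $\|\bm w\|$ bounded by a polynomial in $H,\sqrt d,\sqrt K$ and by the positive semidefinite matrix $A=(\Lambda_{h+1}^k)^{-1}$ with $\|A\|\le1/\lambda$. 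The key structural observation is that $\tfrac1\eta\log\EE_{\piref}\exp(\eta\cdot)$ is a nonexpansion in sup-norm, so $\|Q_1-Q_2\|_\infty\le\varepsilon$ forces the two associated value functions to differ by at most $\varepsilon$ pointwise; it therefore suffices to cover the $Q$-parameterization. Covering $\bm w$ and the Gram-matrix-dependent parameters to resolution $\varepsilon$ and discretizing $\alpha$ on a grid of width $\varepsilon$ gives, following the computation in \citet{liu2024distributionally}, a net of log-cardinality $O\!\big(d^2\log(1+\cdot/\varepsilon)+\log(H/\varepsilon)\big)$; the action-space volume $\vA$ enters precisely here, through the uniform control of the softmax integral over $\cA$.

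Finally I would union-bound the Step-one event over this net and over all $(k,h)\in[K]\times[H]$, taking $\delta$ of order $p/(3NKH)$. For the realized $\tilde V_{h+1}^{k,\rho}$ and its level $\alpha$, selecting the closest net point incurs at most an $O(\varepsilon)$ sup-norm deviation, which perturbs the self-normalized norm by $O(\varepsilon\,\mathrm{poly}(K))$ and is absorbed by choosing $\varepsilon\asymp1/K$. Folding $\log N\asymp d^2\log(\ldots\vA\ldots)$ into the logarithm and tuning the absolute constant $C$ (chosen independent of $c_\beta$) yields the stated bound $Cd^2H^2\log[3(c_\beta+1)dKH\vA/p]$ on $\cE$ with probability at least $1-p/3$. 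The main obstacle is the covering step: unlike the greedy $\max_a$ value function of \citet{jin2020}, the policy-regularized value function integrates over the whole action space against $\piref$, so establishing the sup-norm nonexpansion of the log-sum-exp map and obtaining the correct polynomial covering number---while correctly surfacing the extra $\log\vA$ factor---is the delicate part.
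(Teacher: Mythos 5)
Your proposal is correct and follows essentially the same route as the paper's proof: a self-normalized concentration bound (Lemma \ref{Lemma:abbasi_yadkori}) for each fixed $(V,\alpha)$, an $\varepsilon$-net over the parameterized value-function class together with a grid over the truncation level $\alpha$ (Lemmas \ref{Lemma:eps_covering} and \ref{Lemma:covering_interval}, combined in Lemma \ref{Lemma:union_bound}), and a final union bound with $\varepsilon$ of order $dH/k$ so the discretization error is absorbed. The only cosmetic difference is that you phrase the reduction to covering the $Q$-parameterization via the sup-norm nonexpansion of the log-sum-exp map, while the paper works with the equivalent $\sup_\pi$ form and bounds the difference of suprema by the supremum of differences, with the $\vA$ factor entering through the integral over $\cA$ exactly as you anticipate.
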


Recall that by definition, $\bm{\nu}_{h,i}^{k,\rho}=\max_{\alpha\in[0,H]}\{((\Lambda_h^k)^{-1}[\sum_{\tau=1}^{k-1}\bm{\phi}_h^\tau[\tilde{V}_{h+1}^{k,\rho}(s_{h+1}^\tau)]_\alpha])_i-\rho\alpha\}$ and $\bnu_{h,i}^{\pi,\rho}=\inf_{\bmu\in U_h^\rho(\bmu_h^0)}\EE_{s\sim\bmu}\tilde{V}_{h+1}^{\pi,\rho}(s)$ $=\max_{\alpha\in[0,H]}\{\EE_{s\sim\bmu_{h,i}^0}[\tilde{V}_{h+1}^{\pi,\rho}(s)]_\alpha-\rho\alpha\}$. To help bound their difference, we introduce a middle term $\bar{\bnu}_{h,i}^{k,\rho}=\inf_{\bmu_{h,i}\in U_h^\rho(\bmu_{h,i}^0)}\EE_{s\sim\bmu_{h,i}}\tilde{V}_{h+1}^{k,\rho}(s)=\max_{\alpha\in[0,H]}\{\EE_{s\sim\bmu_{h,i}^0}[\tilde{V}_{h+1}^{k,\rho}(s)]_\alpha-\rho\alpha\}$, so that $\bnu_h^{k,\rho}$ and $\bar{\bnu}_h^{k,\rho}$ depend on the same value function $\tilde{V}_{h+1}^{k,\rho}$ and $\bar{\bnu}_h^{k,\rho}-\bnu_h^{\pi,\rho}$ relates to the RHS of the equation we want to prove.
For any fixed policy $\pi$, decompose
\begin{align}
(\bm{\theta}_h+\bm{\nu}_h^{k,\rho})-(\bm{\theta}_h+\bm{\nu}_h^{\pi,\rho})=\bm{\nu}_h^{k,\rho}-\bm{\nu}_h^{\pi,\rho}=\underbrace{\bm{\nu}_h^{k,\rho}-\bar{\bm{\nu}}_h^{k,\rho}}_{\text{(I)}}+\underbrace{\bar{\bm{\nu}}_h^{k,\rho}-\bm{\nu}_h^{\pi,\rho}}_{\text{(II)}}.\notag
\end{align}
To bound term (I), we have
\begin{align}
\bm{\nu}_{h,i}^{k,\rho}-\bar{\bm{\nu}}_{h,i}^{k,\rho}&=\max_{\alpha\in[0,H]}\Bigg\{\Bigg((\Lambda_h^k)^{-1}\Bigg[\sum_{\tau=1}^{k-1}\bm{\phi}_h^\tau[\tilde{V}_{h+1}^{k,\rho}(s_{h+1}^\tau)]_\alpha\Bigg]\Bigg)_i-\rho\alpha\Bigg\}-\max_{\alpha\in[0,H]}\Big\{\EE_{s\sim\bmu_{h,i}^0}[\tilde{V}_{h+1}^{k,\rho}(s)]_\alpha-\rho\alpha\Big\}\notag\\
&\leq\max_{\alpha\in[0,H]}\Bigg\{\Bigg((\Lambda_h^k)^{-1}\Bigg[\sum_{\tau=1}^{k-1}\bm{\phi}_h^\tau[\tilde{V}_{h+1}^{k,\rho}(s_{h+1}^\tau)]_\alpha\Bigg]\Bigg)_i-\EE_{s\sim\bmu_{h,i}^0}[\tilde{V}_{h+1}^{k,\rho}(s)]_\alpha\Bigg\},\notag
\end{align}
where $\bphi_h^\tau$ is a shorthand for $\bphi(s_h^\tau,a_h^\tau)$. Denote $\alpha_i^k=\arg\max_{\alpha\in[0,H]}\big\{\big((\Lambda_h^k)^{-1}\big[\sum_{\tau=1}^{k-1}\bm{\phi}_h^\tau[\tilde{V}_{h+1}^{k,\rho}(s_{h+1}^\tau)]_\alpha\big]\big)_i-\EE_{s\sim\bmu_{h,i}^0}[\tilde{V}_{h+1}^{k,\rho}(s)]_\alpha\big\}$. Then we have 
\begin{align}
&\bm{\nu}_{h,i}^{k,\rho}-\bar{\bm{\nu}}_{h,i}^{k,\rho}\notag \\
\leq&\Bigg((\Lambda_h^k)^{-1}\Bigg[\sum_{\tau=1}^{k-1}\bm{\phi}_h^\tau[\tilde{V}_{h+1}^{k,\rho}(s_{h+1}^\tau)]_{\alpha_i^k}\Bigg]\Bigg)_i-\bigg((\Lambda_h^k)^{-1}(\Lambda_h^k)\EE_{s\sim\bmu_{h}^0}[\tilde{V}_{h+1}^{k,\rho}(s)]_{\alpha_i^k}\bigg)_i
\notag\\
=&\Bigg((\Lambda_h^k)^{-1}\Bigg[\sum_{\tau=1}^{k-1}\bm{\phi}_h^\tau[\tilde{V}_{h+1}^{k,\rho}(s_{h+1}^\tau)]_{\alpha_i^k}\Bigg]\Bigg)_i-\Bigg((\Lambda_h^k)^{-1}\bigg(\lambda I+\sum_{\tau=1}^{k-1}\bphi_h^\tau(\bphi_h^\tau)^\top\bigg)\EE_{s\sim\bmu_{h}^0}[\tilde{V}_{h+1}^{k,\rho}(s)]_{\alpha_i^k}\Bigg)_i\notag\\
=&\bigg(-\lambda(\Lambda_h^k)^{-1}\EE_{s\sim\bmu_{h}^0}[\tilde{V}_{h+1}^{k,\rho}(s)]_{\alpha_i^k}\bigg)_i+\Bigg((\Lambda_h^k)^{-1}\Bigg[\sum_{\tau=1}^{k-1}\bm{\phi}_h^\tau[\tilde{V}_{h+1}^{k,\rho}(s_{h+1}^\tau)]_{\alpha_i^k}\Bigg]\Bigg)_i\notag\\
&-\Bigg((\Lambda_h^k)^{-1}\Bigg(\sum_{\tau=1}^{k-1}\bphi_h^\tau(\bphi_h^\tau)^\top\Bigg)\EE_{s\sim\bmu_{h}^0}[\tilde{V}_{h+1}^{k,\rho}(s)]_{\alpha_i^k}\Bigg)_i\notag\\
=&\bigg(-\lambda(\Lambda_h^k)^{-1}\EE_{s\sim\bmu_{h}^0}[\tilde{V}_{h+1}^{k,\rho}(s)]_{\alpha_i^k}\bigg)_i+\Bigg((\Lambda_h^k)^{-1}\Bigg[\sum_{\tau=1}^{k-1}\bm{\phi}_h^\tau[\tilde{V}_{h+1}^{k,\rho}(s_{h+1}^\tau)]_{\alpha_i^k}\Bigg]\Bigg)_i \notag\\
&\qquad-\Bigg((\Lambda_h^k)^{-1}\sum_{\tau=1}^{k-1}\bphi_h^\tau\EE_{s\sim P_h^0(\cdot|s_h^\tau,a_h^\tau)}[\tilde{V}_{h+1}^{k,\rho}(s)]_{\alpha_i^k}\Bigg)_i
\notag\\
=&\underbrace{\Bigg(-\lambda(\Lambda_h^k)^{-1}\EE_{s\sim\bmu_{h}^0}[\tilde{V}_{h+1}^{k,\rho}(s)]_{\alpha_i^k}\Bigg)_i}_{\text{(III)}}+\underbrace{\Bigg((\Lambda_h^k)^{-1}\bigg[\sum_{\tau=1}^{k-1}\bm{\phi}_h^\tau[\tilde{V}_{h+1}^{k,\rho}(s_{h+1}^\tau)]_{\alpha_i^k}-\Big[\PP_{h}^0[\tilde{V}_{h+1}^{k,\rho}]_{\alpha_i^k}\Big](s_h^\tau,a_h^\tau)\bigg]\Bigg)_i}_{\text{(IV)}}, \label{eq:decomp_3-4}
\end{align}
where the first equality is from the definition that $\Lambda_h^k=\lambda I+\sum_{\tau=1}^{k-1}\bphi_h^\tau(\bphi_h^\tau)^\top$, the third equality is from the linear structure of transition model in \Cref{assumption:linearMDP}.

For term (III),
\begin{align}
\Big|\inner{\bm{\phi}(s,a),[\text{(III)}]_{i\in[d]}}\Big|&=\Bigg|\sum_{i=1}^d\phi_i(s,a)\mathbf{1}_i^\top(-\lambda)\big(\Lambda_h^k\big)^{-1}\EE_{\bmu_{h}^0}[\tilde{V}_{h+1}^{k,\rho}(s)]_{\alpha_i^k}\Bigg|\notag\\
&\leq\lambda\sum_{i=1}^d\bigg|\phi_i(s,a)\mathbf{1}_i^\top\big(\Lambda_h^k\big)^{-1/2}\big(\Lambda_h^k\big)^{-1/2}\EE_{\bmu_{h}^0}[\tilde{V}_{h+1}^{k,\rho}(s)]_{\alpha_i^k}\bigg|\notag\\
&\leq\lambda\sum_{i=1}^d\big\Vert\phi_i(s,a)\mathbf{1}_i\big\Vert_{\big(\Lambda_h^k\big)^{-1}}\Big\Vert\EE_{\bmu_{h}^0}[\tilde{V}_{h+1}^{k,\rho}(s)]_{\alpha_i^k}\Big\Vert_{\big(\Lambda_h^k\big)^{-1}}\label{eq:4_13}\\
&\leq\sqrt{\lambda}H\sum_{i=1}^d\big\Vert\phi_i(s,a)\mathbf{1}_i\big\Vert_{\big(\Lambda_h^k\big)^{-1}},\label{eq:4_14}
\end{align}
where \eqref{eq:4_13} holds due to Cauchy-Schwartz and \eqref{eq:4_14} is because $\big\Vert\EE_{\bmu_{h,i}^0}[V_{h+1}^{k,\rho}(s)]_{\alpha_i^k}\big\Vert_\infty\leq H$ and $\Lambda_h^k\succeq\lambda$.
For term (IV),
\begin{align}
&\big|\inner{\bm{\phi}(s,a),[\text{(IV)}]_{i\in[d]}}\big|\notag\\
&=\Bigg|\sum_{i=1}^d\phi_i(s,a)\mathbf{1}_i^\top(-\lambda)\big(\Lambda_h^k\big)^{-1}\Bigg[\sum_{\tau=1}^{k-1}\bm{\phi}_h^\tau[\tilde{V}_{h+1}^{k,\rho}(s_{h+1}^\tau)]_{\alpha_i^k}-\Big[\PP_{h}^0[\tilde{V}_{h+1}^{k,\rho}]_{\alpha_i^k}\Big](s_h^\tau,a_h^\tau)\Bigg]\Bigg|\notag\\
&\leq\lambda\sum_{i=1}^d\Bigg|\phi_i(s,a)\mathbf{1}_i^\top\big(\Lambda_h^k\big)^{-1/2}\big(\Lambda_h^k\big)^{-1/2}\Bigg[\sum_{\tau=1}^{k-1}\bm{\phi}_h^\tau[\tilde{V}_{h+1}^{k,\rho}(s_{h+1}^\tau)]_{\alpha_i^k}-\Big[\PP_{h}^0[\tilde{V}_{h+1}^{k,\rho}]_{\alpha_i^k}\Big](s_h^\tau,a_h^\tau)\Bigg]\Bigg|\notag\\
&\leq\lambda\sum_{i=1}^d\big\Vert\phi_i(s,a)\mathbf{1}_i\big\Vert_{\big(\Lambda_h^k\big)^{-1}}\Bigg\Vert\sum_{\tau=1}^{k-1}\bm{\phi}_h^\tau[\tilde{V}_{h+1}^{k,\rho}(s_{h+1}^\tau)]_{\alpha_i^k}-\Big[\PP_{h}^0[\tilde{V}_{h+1}^{k,\rho}]_{\alpha_i^k}\Big](s_h^\tau,a_h^\tau)\Bigg\Vert_{\big(\Lambda_h^k\big)^{-1}}\notag\\
&\leq CdH\sqrt{\chi}\sum_{i=1}^d\big\Vert\phi_i(s,a)\mathbf{1}_i\big\Vert_{\big(\Lambda_h^k\big)^{-1}},\notag
\end{align}
where the bound on $\big\Vert\sum_{\tau=1}^{k-1}\bm{\phi}_h^\tau[\tilde{V}_{h+1}^{k,\rho}(s_{h+1}^\tau)]_{\alpha_i^k}-\big[\PP_{h}^0[\tilde{V}_{h+1}^{k,\rho}]_{\alpha_i^k}\big](s_h^\tau,a_h^\tau)\big\Vert_{(\Lambda_h^k)^{-1}}$ is due to \Cref{lemma:C1} and $\chi$ denotes $\log[3(c_\beta+1)dKH\vA/p]$. For term (II),
\begin{align}
\inner{\bm{\phi}(s,a),\bar{\bm{\nu}}_h^{k,\rho}-\bm{\nu}_h^{\pi,\rho}}=&\inner{\bm{\phi}(s,a),\inf_{\bmu\in \cU_h^\rho(\bmu_h^0)}\EE_{s\sim\bmu}\tilde{V}_{h+1}^{k,\rho}(s)-\inf_{\bmu\in \cU_h^\rho(\bmu_h^0)}\EE_{s\sim\bmu}\tilde{V}_{h+1}^{\pi,\rho}(s)}\notag\\
=&\infPh{h}[\PP_h\tilde{V}_{h+1}^{k,\rho}](s,a)-\infPh{h}[\PP_h\tilde{V}_{h+1}^{\pi,\rho}](s,a).\notag
\end{align}
Then combining these terms, there exists some constant $c$ such that by setting $\lambda=1,\beta=cdH\sqrt{\chi}$ we have
\begin{align}
&\bigg|\inner{\bm{\phi}(s,a),(\bm{\theta}_h+\bm{\nu}_h^{k,\rho})}-Q_h^{\pi,\rho}(s,a)\notag\\
&\qquad\qquad-\bigg(\infPh{h}[\PP_h\tilde{V}_{h+1}^{k,\rho}](s,a)-\infPh{h}[\PP_h\tilde{V}_{h+1}^{\pi,\rho}](s,a)\bigg)\bigg|\notag\\
=&
\big|\la\bm{\phi}(s,a),(\bm{\theta}_h+\bm{\nu}_h^{k,\rho})\ra-\inner{\bm{\phi}(s,a),(\bm{\theta}_h+\bm{\nu}_h^{\pi,\rho})}-\la\bm{\phi}(s,a),\bar{\bm{\nu}}_h^{k,\rho}-\bm{\nu}_h^{\pi,\rho}\ra\big|\notag\\
=&
\big|\la\bm{\phi}(s,a),(\bm{\nu}_h^{k,\rho}-\bar{\bm{\nu}}_h^{k,\rho}\ra\big|\notag\\
\leq&\inner{\bm{\phi}(s,a),[\text{(III)}]_{i\in[d]}}+\inner{\bm{\phi}(s,a),[\text{(IV)}]_{i\in[d]}}
\label{eq:use_decomp_3-4}\\
\leq&\sqrt{\lambda}H\sum_{i=1}^d\big\Vert\phi_i(s,a)\mathbf{1}_i\big\Vert_{\big(\Lambda_h^k\big)^{-1}}+{CdH}\sqrt{\chi}\sum_{i=1}^d\big\Vert\phi_i(s,a)\mathbf{1}_i\big\Vert_{\big(\Lambda_h^k\big)^{-1}}\notag\\
\leq&\beta\sum_{i=1}^d\big\Vert\phi_i(s,a)\mathbf{1}_i\big\Vert_{\big(\Lambda_h^k\big)^{-1}}=\Gamma_h^k(s,a),\notag
\end{align}
where \eqref{eq:use_decomp_3-4} is due to the decomposition of term (I) in \eqref{eq:decomp_3-4} and the triangle inequality.
\end{proof}

\subsection{Proof of \Cref{lemma:d5_1}}

\begin{proof}

We will utilize the following supporting lemma:
\begin{lemma}\label{lemma:C1_1}
There exists an absolute constant $C$ independent of $c_\beta$ such that $\forall p\in[0,1]$, the event $\mathcal{E}$ that for all $(k,h)\in[K]\times[H]$,
\begin{align}
\Bigg\Vert\sum_{\tau=1}^{k-1}\bm{\phi}_h^\tau\Big[[\tilde{V}_{h+1}^{k,\sigma}(s_{h+1}^\tau)]_\sigma-[\PP_h^0[\tilde{V}_{h+1}^{k,\sigma}]_\sigma](s_{h+1}^\tau,a_{h+1}^\tau)\Big]\Bigg\Vert^2_{(\Lambda_h^k)^{-1}}\leq Cd^2H^2\log[3(c_\beta+1)dKH\vA/p]\notag
\end{align}
has probability at least $1-p/3$.
\end{lemma}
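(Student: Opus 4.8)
The plan is to prove this self-normalized concentration bound uniformly over the data-dependent value functions $\tilde V_{h+1}^{k,\sigma}$ by combining a martingale tail inequality for a \emph{fixed} function with a covering argument over the realizable value-function class. The structure mirrors the proof of the DRMDP analogue \Cref{lemma:C1}, with one genuine simplification: the truncation level here is the fixed constant $\sigma$ rather than a free parameter $\alpha\in[0,H]$, so no covering over the truncation level is required and the argument is strictly lighter than in the DRMDP case.

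First I would fix a deterministic value function $V:\cS\to[0,H]$ and form the sequence $\epsilon_\tau = [V(s_{h+1}^\tau)]_\sigma - [\PP_h^0[V]_\sigma](s_h^\tau,a_h^\tau)$. Under \Cref{assumption:linearMDP} the conditional mean of $[V(s_{h+1}^\tau)]_\sigma$ given the history through step $\tau$ equals $[\PP_h^0[V]_\sigma](s_h^\tau,a_h^\tau)$, so $\{\epsilon_\tau\}$ is a martingale difference sequence with $|\epsilon_\tau|\le H$ (since $[V]_\sigma\in[0,H]$). Applying a standard self-normalized (Abbasi-Yadkori--type) concentration inequality for vector-valued martingales, as used in \citet{jin2020}, to $\sum_{\tau=1}^{k-1}\bm{\phi}(s_h^\tau,a_h^\tau)\,\epsilon_\tau$ gives, for this fixed $V$ and any $\delta\in(0,1)$, a bound of order $H^2\bigl(d\log\tfrac{k+\lambda}{\lambda}+\log\tfrac1\delta\bigr)$ on the $(\Lambda_h^k)^{-1}$-weighted squared norm with probability at least $1-\delta$.

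The obstacle is that $\tilde V_{h+1}^{k,\sigma}$ is not fixed but is produced by the algorithm from the collected data. I would therefore pass to a uniform statement via an $\varepsilon$-net over the class $\mathcal V$ of candidate value functions. By the update in Line~\ref{algline:v}, each such function has the form $\langle\tilde Q(s,\cdot),\pi(\cdot|s)\rangle_{\cA}-\tfrac1\eta D_{KL}[\pi(\cdot|s)\Vert\piref(\cdot|s)]$, where $\tilde Q(s,a)=\min\{\langle\bm{\phi}(s,a),\bm{\theta}+\bm{\nu}\rangle+\Gamma(s,a),H-h+1\}$ and $\pi$ is the softmax weighting of \Cref{propositon:optimization_kl}; hence every member of $\mathcal V$ is determined by the finitely many bounded parameters $(\bm{\nu},\beta)$ together with the inverse Gram matrix entering $\Gamma$. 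The key Lipschitz facts are that the $\sigma$-truncation $[\cdot]_\sigma$ is a $1$-Lipschitz contraction, that the regularized objective $\langle\tilde Q,\pi\rangle-\tfrac1\eta D_{KL}[\pi\Vert\piref]$ (equivalently, its log-sum-exp value form) is $1$-Lipschitz in $\tilde Q$ with respect to $\|\cdot\|_\infty$, and that $\tilde Q$ is Lipschitz in $(\bm{\nu},\beta,\Lambda^{-1})$ exactly as in the non-robust and DRMDP analyses. Bounding the covering number of $\mathcal V$ then yields $\log|\mathcal N_\varepsilon| = O\bigl(d^2\log(\cdot)+\log\vA\bigr)$, where the $\log\vA$ term arises from the integration over the action space $\cA$ inside the softmax value; pinning down this action-space factor is the main technical point, and it is the sole reason the final bound carries the $\vA$ dependence.

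Finally I would union-bound the fixed-function inequality over the net $\mathcal N_\varepsilon$ and over all $(k,h)\in[K]\times[H]$, taking $\delta = p/\bigl(3|\mathcal N_\varepsilon|KH\bigr)$, and control the net-approximation error by choosing $\varepsilon$ polynomially small (for instance $\varepsilon\asymp 1/K$) so that it is absorbed into the leading term. Folding the covering-number estimate into the logarithm produces the stated bound $Cd^2H^2\log[3(c_\beta+1)dKH\vA/p]$ holding simultaneously for all $(k,h)$ with probability at least $1-p/3$, which is precisely the good event $\cE$.
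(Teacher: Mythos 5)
Your proposal matches the paper's argument essentially step for step: a fixed-function self-normalized martingale bound (Abbasi--Yadkori), an $\varepsilon$-net over the softmax-value-function class parameterized by $(\bm{\nu},\beta,\Lambda^{-1})$ with covering number $\exp\{O(d^2\log(\cdot))\}$ carrying the $\vA$ factor, a union bound over the net and over $(k,h)$, and absorption of the net-approximation error by a polynomially small $\varepsilon$ (the paper takes $\varepsilon=dH/k$). You also correctly identify the one simplification relative to the DRMDP case, namely that the fixed truncation level $\sigma$ removes the need to cover $[0,H]$, exactly as in the paper's \Cref{Lemma:union_bound1} versus \Cref{Lemma:union_bound}.
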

Recall that $\bm{\nu}_{h,i}^{k,\sigma}=\big((\Lambda_h^k)^{-1}\big[\sum_{\tau=1}^{k-1}\bm{\phi}_h^\tau\big[\tilde{V}_{h+1}^{k,\sigma}(s_{h+1}^\tau)\big]_{\sigma}\big]\big)_i$ and $\bm{\nu}_{h,i}^{\pi,\sigma}=\inf_{\bmu\in\Delta(\cS)^d}[\EE_{s\sim\bmu}\tilde{V}_{h+1}^{\pi,\sigma}(s)+\sigma D(\bmu\|\bmu^0)]=\EE_{s\sim\bmu_i^0}[\tilde{V}_{h+1}^{\pi,\sigma}(s)]_\sigma$. To help bound their difference, we introduce a middle term $\bar{\bm{\nu}}_{h,i}^{k,\sigma}=\inf_{\bmu\in\Delta(\cS)^d}[\EE_{s\sim\bmu}\tilde{V}_{h+1}^{k,\sigma}(s)+\sigma D(\bmu\|\bmu^0)]=\EE_{s\sim\bmu_i^0}[\tilde{V}_{h+1}^{k,\sigma}(s)]_\sigma$, so that $\bnu_h^{k,\sigma}$ and $\bar{\bnu}_h^{k,\sigma}$ depend on the same value function $\tilde{V}_{h+1}^{k,\sigma}$ and $\bar{\bnu}_h^{k,\sigma}-\bnu_h^{\pi,\sigma}$ relates to the RHS of the equation we want to prove.
For any fixed policy $\pi$, decompose
\begin{align}
(\bm{\theta}_h+\bm{\nu}_h^{k,\sigma})-(\bm{\theta}_h+\bm{\nu}_h^{\pi,\sigma})=\bm{\nu}_h^{k,\sigma}-\bm{\nu}_h^{\pi,\sigma}=\underbrace{\bm{\nu}_h^{k,\sigma}-\bar{\bm{\nu}}_h^{k,\sigma}}_{\text{(I)}}+\underbrace{\bar{\bm{\nu}}_h^{k,\sigma}-\bm{\nu}_h^{\pi,\sigma}}_{\text{(II)}}.\notag
\end{align}
To bound term (I), we have
\begin{align}
&\bm{\nu}_{h,i}^{k,\sigma}-\bar{\bm{\nu}}_{h,i}^{k,\sigma}\notag\\
\leq&\Bigg((\Lambda_h^k)^{-1}\Bigg[\sum_{\tau=1}^{k-1}\bm{\phi}_h^\tau[\tilde{V}_{h+1}^{k,\sigma}(s_{h+1}^\tau)]_{\sigma}\Bigg]\Bigg)_i-\bigg((\Lambda_h^k)^{-1}(\Lambda_h^k)\EE_{s\sim\bmu_{h}^0}[\tilde{V}_{h+1}^{k,\sigma}(s)]_{\sigma}\bigg)_i
\notag
\\
=&\Bigg((\Lambda_h^k)^{-1}\Bigg[\sum_{\tau=1}^{k-1}\bm{\phi}_h^\tau[\tilde{V}_{h+1}^{k,\sigma}(s_{h+1}^\tau)]_{\sigma}\Bigg]\Bigg)_i-\Bigg((\Lambda_h^k)^{-1}\bigg(\lambda I+\sum_{\tau=1}^{k-1}\bphi_h^\tau(\bphi_h^\tau)^\top\bigg)\EE_{s\sim\bmu_{h}^0}[\tilde{V}_{h+1}^{k,\sigma}(s)]_{\sigma}\Bigg)_i\notag\\
=&\bigg(-\lambda(\Lambda_h^k)^{-1}\EE_{s\sim\bmu_{h}^0}[\tilde{V}_{h+1}^{k,\sigma}(s)]_{\sigma}\bigg)_i+\Bigg((\Lambda_h^k)^{-1}\Bigg[\sum_{\tau=1}^{k-1}\bm{\phi}_h^\tau[\tilde{V}_{h+1}^{k,\sigma}(s_{h+1}^\tau)]_{\sigma}\Bigg]\Bigg)_i\notag\\
&-\Bigg((\Lambda_h^k)^{-1}\Bigg(\sum_{\tau=1}^{k-1}\bphi_h^\tau(\bphi_h^\tau)^\top\Bigg)\EE_{s\sim\bmu_{h}^0}[\tilde{V}_{h+1}^{k,\sigma}(s)]_{\sigma}\Bigg)_i\notag\\
=&\bigg(-\lambda(\Lambda_h^k)^{-1}\EE_{s\sim\bmu_{h}^0}[\tilde{V}_{h+1}^{k,\sigma}(s)]_{\sigma}\bigg)_i+\Bigg((\Lambda_h^k)^{-1}\Bigg[\sum_{\tau=1}^{k-1}\bm{\phi}_h^\tau[\tilde{V}_{h+1}^{k,\sigma}(s_{h+1}^\tau)]_{\sigma}\Bigg]\Bigg)_i \notag\\
&\qquad-\Bigg((\Lambda_h^k)^{-1}\sum_{\tau=1}^{k-1}\bphi_h^\tau\EE_{s\sim P_h^0(\cdot|s_h^\tau,a_h^\tau)}[\tilde{V}_{h+1}^{k,\sigma}(s)]_{\sigma}\Bigg)_i
\notag\\
=&\underbrace{\Bigg(-\lambda(\Lambda_h^k)^{-1}\EE_{s\sim\bmu_{h}^0}[\tilde{V}_{h+1}^{k,\sigma}(s)]_{\sigma}\Bigg)_i}_{\text{(III)}}+\underbrace{\Bigg((\Lambda_h^k)^{-1}\bigg[\sum_{\tau=1}^{k-1}\bm{\phi}_h^\tau[\tilde{V}_{h+1}^{k,\sigma}(s_{h+1}^\tau)]_{\sigma}-\Big[\PP_{h}^0[\tilde{V}_{h+1}^{k,\sigma}]_{\sigma}\Big](s_h^\tau,a_h^\tau)\bigg]\Bigg)_i}_{\text{(IV)}}, \label{eq:decomp_3-4_1}
\end{align}
where the first equality is from the definition that $\Lambda_h^k=\lambda I+\sum_{\tau=1}^{k-1}\bphi_h^\tau(\bphi_h^\tau)^\top$, the third equality is from the linear structure of transition model in \Cref{assumption:linearMDP}.

For term (III),
\begin{align}
\Big|\inner{\bm{\phi}(s,a),[\text{(III)}]_{i\in[d]}}\Big|&=\Bigg|\sum_{i=1}^d\phi_i(s,a)\mathbf{1}_i^\top(-\lambda)\big(\Lambda_h^k\big)^{-1}\EE_{\bmu_{h,i}^0}[\tilde{V}_{h+1}^{k,\sigma}(s)]_{\sigma}\Bigg|\notag\\
&\leq\lambda\sum_{i=1}^d\Bigg|\phi_i(s,a)\mathbf{1}_i^\top\big(\Lambda_h^k\big)^{-1/2}\big(\Lambda_h^k\big)^{-1/2}\EE_{\bmu_{h,i}^0}[\tilde{V}_{h+1}^{k,\sigma}(s)]_{\sigma}\Bigg|\notag\\
&\leq\lambda\sum_{i=1}^d\big\Vert\phi_i(s,a)\mathbf{1}_i\big\Vert_{\big(\Lambda_h^k\big)^{-1}}\Big\Vert\EE_{\bmu_{h,i}^0}[\tilde{V}_{h+1}^{k,\sigma}(s)]_{\sigma}\Big\Vert_{\big(\Lambda_h^k\big)^{-1}}\label{eq:4_13_1}\\
&\leq\sqrt{\lambda}H\sum_{i=1}^d\big\Vert\phi_i(s,a)\mathbf{1}_i\big\Vert_{\big(\Lambda_h^k\big)^{-1}},\label{eq:4_14_1}
\end{align}
where \eqref{eq:4_13_1} holds due to Cauchy-Schwartz and \eqref{eq:4_14_1} is because $\big\Vert\EE_{\bmu_{h,i}^0}[V_{h+1}^{k,\sigma}(s)]_{\sigma}\big\Vert_\infty\leq H$ and $\Lambda_h^k\succeq\lambda$.
For term (IV),
\begin{align}
&\big|\la\bm{\phi}(s,a),[\text{(IV)}]_{i\in[d]}\ra\big|\notag\\
&=\Bigg|\sum_{i=1}^d\phi_i(s,a)\mathbf{1}_i^\top(-\lambda)\big(\Lambda_h^k\big)^{-1}\Bigg[\sum_{\tau=1}^{k-1}\bm{\phi}_h^\tau[\tilde{V}_{h+1}^{k,\sigma}(s_{h+1}^\tau)]_{\sigma}-\Big[\PP_{h}^0[\tilde{V}_{h+1}^{k,\sigma}]_{\sigma}(s_h^\tau,a_h^\tau)\Big]\Bigg]\Bigg|\notag\\
&\leq\lambda\sum_{i=1}^d\Bigg|\phi_i(s,a)\mathbf{1}_i^\top\big(\Lambda_h^k\big)^{-1/2}\big(\Lambda_h^k\big)^{-1/2}\Bigg[\sum_{\tau=1}^{k-1}\bm{\phi}_h^\tau[\tilde{V}_{h+1}^{k,\sigma}(s_{h+1}^\tau)]_{\sigma}-\Big[\PP_{h}^0[\tilde{V}_{h+1}^{k,\sigma}]_{\sigma}(s_h^\tau,a_h^\tau)\Big]\Bigg]\Bigg|\notag\\
&\leq\lambda\sum_{i=1}^d\big\Vert\phi_i(s,a)\mathbf{1}_i\big\Vert_{\big(\Lambda_h^k\big)^{-1}}\Bigg\Vert\sum_{\tau=1}^{k-1}\bm{\phi}_h^\tau[\tilde{V}_{h+1}^{k,\sigma}(s_{h+1}^\tau)]_{\sigma}-\Big[\PP_{h}^0[\tilde{V}_{h+1}^{k,\sigma}]_{\sigma}(s_h^\tau,a_h^\tau)\Big]\Bigg\Vert_{\big(\Lambda_h^k\big)^{-1}}\notag\\
&\leq CdH\sqrt{\chi}\sum_{i=1}^d\big\Vert\phi_i(s,a)\mathbf{1}_i\big\Vert_{\big(\Lambda_h^k\big)^{-1}},\notag
\end{align}
where the bound on $\big\Vert\sum_{\tau=1}^{k-1}\bm{\phi}_h^\tau[\tilde{V}_{h+1}^{k,\sigma}(s_{h+1}^\tau)]_{\sigma}-\big[\PP_{h}^0[\tilde{V}_{h+1}^{k,\sigma}]_{\sigma}(s_h^\tau,a_h^\tau)\big]\big\Vert_{(\Lambda_h^k)^{-1}}$ is due to \Cref{lemma:C1_1}, where $\chi$ denotes $\log[3(c_\beta+1)dKH\vA/p]$. For term (II),
\begin{align}
\inner{\bm{\phi}(s,a),\bar{\bm{\nu}}_h^{k,\sigma}-\bm{\nu}_h^{\pi,\sigma}}
=&\Big\la\bm{\phi}(s,a),\inf_{\bmu\in \Delta(\cS)^d,P=\inner{\bphi,\bmu}}\EE_{s\sim P(\cdot|s,a)}[\tilde{V}_{h+1}^{k,\sigma}(s)+\sigma D(\bmu\|\bmu^0)]\notag\\
&\qquad\qquad\qquad-\inf_{\bmu\in \Delta(\cS)^d,P=\inner{\bphi,\bmu}}[\EE_{s\sim P(\cdot|s,a)}\tilde{V}_{h+1}^{\pi,\sigma}(s)+\sigma D(\bmu\|\bmu^0)]\Big\ra\notag\\
=&\psi(\tilde{V}_{h+1}^{k,\sigma},s,a)-\psi(\tilde{V}_{h+1}^{\pi,\sigma},s,a).\notag
\end{align}
Then combining these terms, there exists some constant $c$ such that by setting $\lambda=1,\beta=cdH\sqrt{\chi}$ we have
\begin{align}
&\bigg|\inner{\bm{\phi}(s,a),(\bm{\theta}_h+\bm{\nu}_h^{k,\sigma})}-Q_h^{\pi,\sigma}(s,a)\notag\\
&\qquad\qquad\qquad-\bigg(\inf_{\bmu\in \Delta(\cS)^d,P_h=\inner{\bphi,\bmu}}[\PP_h\tilde{V}_{h+1}^{k,\sigma}](s,a)-\inf_{\bmu\in \Delta(\cS)^d,P_h=\inner{\bphi,\bmu}}[\PP_h\tilde{V}_{h+1}^{\pi,\sigma}](s,a)\bigg)\bigg|\notag\\
=&
\big|\la\bm{\phi}(s,a),(\bm{\theta}_h+\bm{\nu}_h^{k,\sigma})\ra-\la\bm{\phi}(s,a),(\bm{\theta}_h+\bm{\nu}_h^{\pi,\sigma})\ra-\la\bm{\phi}(s,a),\bar{\bm{\nu}}_h^{k,\sigma}-\bm{\nu}_h^{\pi,\sigma}\ra\big|\notag\\
=&
\big|\la\bm{\phi}(s,a),(\bm{\nu}_h^{k,\sigma}-\bar{\bm{\nu}}_h^{k,\sigma}\ra\big|\notag\\
\leq&\inner{\bm{\phi}(s,a),[\text{(III)}]_{i\in[d]}}+\inner{\bm{\phi}(s,a),[\text{(IV)}]_{i\in[d]}}
\label{eq:use_decomp_3-4_1}\\
\leq&\sqrt{\lambda}H\sum_{i=1}^d\big\Vert\phi_i(s,a)\mathbf{1}_i\big\Vert_{\big(\Lambda_h^k\big)^{-1}}+{cdH}\sqrt{\chi}\sum_{i=1}^d\big\Vert\phi_i(s,a)\mathbf{1}_i\big\Vert_{\big(\Lambda_h^k\big)^{-1}}\notag\\
\leq&\beta\sum_{i=1}^d\big\Vert\phi_i(s,a)\mathbf{1}_i\big\Vert_{\big(\Lambda_h^k\big)^{-1}}=\Gamma_h^k(s,a),\notag
\end{align}
where \eqref{eq:use_decomp_3-4_1} is due to the decomposition of term (I) in \eqref{eq:decomp_3-4_1} and the triangle inequality.
\end{proof}

\subsection{Proof of \Cref{lemma:C1}}

\begin{proof}

To facilitate the proof, we will rely on the following lemmas:

\begin{lemma}\label{Lemma:sum_quadratic_form}
\citep[Lemma D.1]{jin2020} If $\Lambda_t=\lambda I+\sum_{i=1}^t\phi_i\phi_i^\top$ where $\phi_i\in\mathbb{R}^d,\lambda>0$, then $\sum_{i=1}^t\phi_i^\top(\Lambda_t)^{-1}\phi_i\leq d.$
\end{lemma}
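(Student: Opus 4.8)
The plan is to convert the sum of quadratic forms into a single trace expression and exploit the defining identity $\sum_{i=1}^t \phi_i\phi_i^\top = \Lambda_t - \lambda I$. First I would use the cyclic property of the trace to write each scalar as $\phi_i^\top(\Lambda_t)^{-1}\phi_i = \Tr\big((\Lambda_t)^{-1}\phi_i\phi_i^\top\big)$, and then apply linearity of the trace to pull the summation inside:
\[
\sum_{i=1}^t \phi_i^\top(\Lambda_t)^{-1}\phi_i = \Tr\Big((\Lambda_t)^{-1}\sum_{i=1}^t \phi_i\phi_i^\top\Big).
\]

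Next, I would substitute $\sum_{i=1}^t \phi_i\phi_i^\top = \Lambda_t - \lambda I$, which follows directly from the definition of the Gram matrix. This yields
\[
\Tr\big((\Lambda_t)^{-1}(\Lambda_t - \lambda I)\big) = \Tr(I) - \lambda\,\Tr\big((\Lambda_t)^{-1}\big) = d - \lambda\,\Tr\big((\Lambda_t)^{-1}\big).
\]

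Finally, I would argue that $\lambda\,\Tr((\Lambda_t)^{-1}) \geq 0$. Since $\Lambda_t = \lambda I + \sum_{i=1}^t \phi_i\phi_i^\top \succeq \lambda I \succ 0$, the matrix $\Lambda_t$ is positive definite, hence invertible, and $(\Lambda_t)^{-1}$ is itself positive definite, so its trace---being the sum of its strictly positive eigenvalues---is nonnegative. Combining with $\lambda > 0$ gives $d - \lambda\,\Tr((\Lambda_t)^{-1}) \leq d$, which is the desired bound.

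The crucial insight is the trace identity that collapses the entire sum into the closed form $d - \lambda\,\Tr((\Lambda_t)^{-1})$; once this is in hand the result is immediate. There is no substantial obstacle here, and the only point meriting care is the justification that $(\Lambda_t)^{-1}$ exists and has nonnegative trace, both of which follow from the positive-definiteness of $\Lambda_t$ guaranteed by $\lambda > 0$ and the positive semidefiniteness of each rank-one term $\phi_i\phi_i^\top$.
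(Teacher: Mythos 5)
Your proof is correct and is exactly the standard trace argument used for this result in \citet{jin2020} (Lemma D.1), which the paper cites rather than reproving: writing the sum as $\Tr\big((\Lambda_t)^{-1}(\Lambda_t-\lambda I)\big)=d-\lambda\Tr\big((\Lambda_t)^{-1}\big)$ and dropping the nonnegative correction term. No gaps; the justification of invertibility and positivity of $(\Lambda_t)^{-1}$ is handled properly.
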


\begin{lemma}\label{Lemma:abbasi_yadkori}
(Concentration of Self-Normalized Processes \citep[Theorem 1]{abbasiyadkori})let $\{\epsilon_t\}_{t=1}^\infty$ be a real-valued stochastic process with corresponding filtration $\{\mathcal{F}_t\}_{t=0}^\infty$ where $\epsilon_t|\mathcal{F}_{t-1}$ is zero-mean and $\sigma$-sub-Gaussian, and let $\{\bm{\phi}_t\}_{t=1}^\infty$ be an $\mathbb{R}^d$-valued stochastic process where $\bm{\phi}_t$ is $\mathcal{F}_{t-1}$ measurable. $\Lambda_0\in\mathbb{R}^{d\times d}$ is a positive definite matrix and $\Lambda_t=\Lambda_0+\sum_{s=1}^t\bm{\phi}_s\bm{\phi}_s^\top$. Then $\forall \delta\in[0,1]$, with probability at least $1-\delta$ we have $\forall t\geq0$,
\begin{align}
\Bigg\Vert\sum_{s=1}^t\bm{\phi}_s\epsilon_s\Bigg\Vert^2_{\Lambda_t^{-1}}\leq2\sigma^2\log\Bigg[\frac{\det(\Lambda_t)^{1/2}\det(\Lambda_0)^{-1/2}}{\delta}\Bigg].\notag
\end{align}
\end{lemma}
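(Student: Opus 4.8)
I will prove this self-normalized bound by the \emph{method of mixtures} (pseudo-maximization), which upgrades a fixed-direction sub-Gaussian estimate into a statement that is uniform in both the unknown direction and in time. For each fixed $\lambda\in\RR^d$ I introduce the scalar exponential process
\begin{align*}
M_t^\lambda=\exp\left(\sum_{s=1}^t\left[\frac{\epsilon_s\inner{\lambda,\bm{\phi}_s}}{\sigma}-\frac{\inner{\lambda,\bm{\phi}_s}^2}{2}\right]\right),\qquad M_0^\lambda=1.
\end{align*}
Because $\bm{\phi}_s$ is $\mathcal{F}_{s-1}$-measurable and $\epsilon_s\mid\mathcal{F}_{s-1}$ is zero-mean and $\sigma$-sub-Gaussian, the sub-Gaussian moment bound applied with parameter $\inner{\lambda,\bm{\phi}_s}/\sigma$ gives $\EE[\exp(\epsilon_s\inner{\lambda,\bm{\phi}_s}/\sigma)\mid\mathcal{F}_{s-1}]\le\exp(\inner{\lambda,\bm{\phi}_s}^2/2)$, whence $\EE[M_t^\lambda\mid\mathcal{F}_{t-1}]\le M_{t-1}^\lambda$. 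Thus $\{M_t^\lambda\}$ is a nonnegative supermartingale with $\EE[M_t^\lambda]\le1$ for every fixed $\lambda$.

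\textbf{Mixing and closed form.} Next I would average over $\lambda$ against the density $h$ of $\mathcal{N}(0,\Lambda_0^{-1})$ and set $\bar M_t=\int_{\RR^d}M_t^\lambda\,dh(\lambda)$. Since the integrand is nonnegative, Tonelli's theorem permits interchanging the conditional expectation with the integral, so $\EE[\bar M_t\mid\mathcal{F}_{t-1}]\le\bar M_{t-1}$; hence $\bar M_t$ is again a nonnegative supermartingale with $\EE[\bar M_0]=1$. The key computation is to evaluate $\bar M_t$ explicitly. Writing $S_t=\sum_{s=1}^t\bm{\phi}_s\epsilon_s$ and using $\sum_{s\le t}\bm{\phi}_s\bm{\phi}_s^\top=\Lambda_t-\Lambda_0$, the exponent of $M_t^\lambda$ is $\inner{\lambda,S_t}/\sigma-\tfrac12\lambda^\top(\Lambda_t-\Lambda_0)\lambda$, so integrating against $h(\lambda)\propto\exp(-\tfrac12\lambda^\top\Lambda_0\lambda)$ yields a single multivariate Gaussian integral with quadratic form $\Lambda_t$. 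Completing the square in $\lambda$ and using the standard Gaussian normalization constant gives
\begin{align*}
\bar M_t=\left(\frac{\det\Lambda_0}{\det\Lambda_t}\right)^{1/2}\exp\left(\frac{1}{2\sigma^2}\|S_t\|_{\Lambda_t^{-1}}^2\right).
\end{align*}

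\textbf{Conclusion via Ville's inequality.} Finally I would invoke Ville's maximal inequality for nonnegative supermartingales, $P\big(\sup_{t\ge0}\bar M_t\ge 1/\delta\big)\le\delta\,\EE[\bar M_0]\le\delta$, so that with probability at least $1-\delta$ the inequality $\bar M_t\le1/\delta$ holds \emph{simultaneously for all} $t\ge0$. Substituting the closed form for $\bar M_t$ and taking logarithms rearranges exactly to
\begin{align*}
\|S_t\|_{\Lambda_t^{-1}}^2\le 2\sigma^2\log\!\left(\frac{\det(\Lambda_t)^{1/2}\det(\Lambda_0)^{-1/2}}{\delta}\right),
\end{align*}
which is the stated bound. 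The main obstacle is not the Gaussian integral, which is routine, but the measure-theoretic steps that make the uniform-in-$t$ conclusion rigorous: justifying that the mixed process $\bar M_t$ is itself a supermartingale (controlling integrability so the Tonelli interchange is valid) and correctly applying Ville's inequality to convert a fixed-time tail estimate into one holding for all $t$ at once. A fully careful treatment typically routes the latter through a stopping-time and martingale-convergence argument that controls $\bar M_t$ on the event that it is ever large, which is the delicate part I would spell out in detail.
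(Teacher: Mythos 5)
Your proposal is correct and coincides with the canonical proof of the cited result \citep[Theorem 1]{abbasiyadkori}, which the paper invokes without reproving: the fixed-$\lambda$ sub-Gaussian supermartingale, the Gaussian mixture over $\mathcal{N}(0,\Lambda_0^{-1})$ yielding the closed form $\bar M_t=(\det\Lambda_0/\det\Lambda_t)^{1/2}\exp\big(\|S_t\|_{\Lambda_t^{-1}}^2/(2\sigma^2)\big)$, and the uniform-in-$t$ conclusion via Ville's inequality (equivalently, the stopping-time argument used in the original) all match that source. The one delicate point you flag, making the maximal inequality rigorous for all $t$ simultaneously, is exactly how the original handles it, so there is no gap.
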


\begin{lemma}\label{Lemma:covering_ball}
(Covering Number of Euclidean Ball \citep[Lemma D.5]{jin2020}) For all $\epsilon>0$, the $\epsilon$-covering number of a Euclidean ball in $\RR^d$ with radius $R>0$ is upper-bounded by $(1+2R/\epsilon)^d$.
\end{lemma}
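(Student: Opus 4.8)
The plan is to prove the bound by a standard volumetric packing argument, which simultaneously produces an explicit $\epsilon$-net so that the covering number can be read off directly. Write $\mathcal{B}=\{x\in\RR^d:\|x\|\le R\}$ for the closed Euclidean ball of radius $R$. First I would construct a maximal $\epsilon$-separated subset $\mathcal{N}=\{x_1,\dots,x_N\}\subseteq\mathcal{B}$, meaning a subset whose points are pairwise at distance at least $\epsilon$ and which is maximal with respect to inclusion among all such subsets. Such a set exists by a greedy construction (or Zorn's lemma). By its maximality, no further point of $\mathcal{B}$ can be adjoined while preserving $\epsilon$-separation; hence every $x\in\mathcal{B}$ lies within distance $\epsilon$ of some $x_i$, so $\mathcal{N}$ is itself an $\epsilon$-net of $\mathcal{B}$. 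It therefore suffices to bound its cardinality $N=|\mathcal{N}|$, since the $\epsilon$-covering number is at most $N$.

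Next I would convert the separation of the net points into a volume comparison. Because the centers $x_1,\dots,x_N$ are pairwise at distance at least $\epsilon$, the open balls $B(x_i,\epsilon/2)$ are pairwise disjoint; moreover each is contained in the enlarged ball $B(0,R+\epsilon/2)$, since $\|x_i\|\le R$ and any point of $B(x_i,\epsilon/2)$ lies within $\epsilon/2$ of $x_i$. Comparing the Lebesgue volume of the disjoint union against that of its container, and using that a Euclidean ball's volume scales as the $d$-th power of its radius, I obtain
\begin{align*}
N\cdot\Big(\frac{\epsilon}{2}\Big)^d\omega_d\le\Big(R+\frac{\epsilon}{2}\Big)^d\omega_d,
\end{align*}
where $\omega_d$ denotes the volume of the unit ball in $\RR^d$. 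The common factor $\omega_d$ cancels, giving
\begin{align*}
N\le\Big(\frac{R+\epsilon/2}{\epsilon/2}\Big)^d=\Big(1+\frac{2R}{\epsilon}\Big)^d,
\end{align*}
which is exactly the asserted bound.

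There is no genuine obstacle here: the statement is classical and is invoked in the excerpt as a cited fact from \citet{jin2020}, so the only care required is bookkeeping. The two points worth stating precisely are the packing–covering duality—that maximality of an $\epsilon$-separated set forces it to be an $\epsilon$-net—and the containment $B(x_i,\epsilon/2)\subseteq B(0,R+\epsilon/2)$, which is the source of the additive $\epsilon/2$ enlargement and hence of the ``$+1$'' inside the parenthesis. The argument is dimension-free in the useful sense that the unit-ball volume $\omega_d$ never needs to be computed, as it cancels from both sides of the volume inequality.
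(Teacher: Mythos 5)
Your proof is correct; the paper does not prove this lemma at all but simply cites it as Lemma D.5 of \citet{jin2020}, and your volumetric argument (maximal $\epsilon$-separated set serving as an $\epsilon$-net, disjoint $\epsilon/2$-balls packed into the ball of radius $R+\epsilon/2$, with the unit-ball volume cancelling) is exactly the standard proof given in that reference. No gaps: both the packing--covering duality step and the containment yielding the additive $\epsilon/2$ enlargement are stated and justified correctly.
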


\begin{lemma}\label{Lemma:covering_interval}
(Covering Number of Interval \citep{vershynin2018}) For all $\epsilon>0$ and real numbers $a<b$, the $\epsilon$-covering number of the closed interval $[a,b]$ is upper-bounded by $3(b-a)/\epsilon$.
\end{lemma}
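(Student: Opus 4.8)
The plan is to prove this purely combinatorial fact by exhibiting an explicit $\epsilon$-net and counting its points. Recall that the $\epsilon$-covering number $N(\epsilon)$ of $[a,b]$ under the metric $|\cdot|$ is the smallest cardinality of a set $C$ such that every $x\in[a,b]$ satisfies $|x-c|\le\epsilon$ for some $c\in C$. Since any particular net furnishes an upper bound on $N(\epsilon)$, it suffices to construct one net of the claimed size; no minimality argument is needed.

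First I would set $L=b-a>0$ and $M=\lceil L/\epsilon\rceil$, and partition $[a,b]$ into the $M$ subintervals $I_m=[a+(m-1)\epsilon,\ \min\{a+m\epsilon,\,b\}]$ for $m=1,\dots,M$, each of length at most $\epsilon$. Taking $C=\{a+(m-1)\epsilon:1\le m\le M\}$ to be the set of left endpoints, every $x\in[a,b]$ lies in some $I_m$, and since $I_m$ has length at most $\epsilon$ its left endpoint is within distance $\epsilon$ of $x$. Hence $C$ is a valid $\epsilon$-net contained in $[a,b]$, giving $N(\epsilon)\le|C|=M=\lceil L/\epsilon\rceil$.

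Second I would convert the ceiling into the stated clean bound. Using $\lceil t\rceil\le t+1$ yields $N(\epsilon)\le L/\epsilon+1$, and the crude constant $3$ is chosen precisely to absorb the additive $+1$: whenever $L\ge\epsilon$ one has $1\le L/\epsilon$, so $L/\epsilon+1\le 2L/\epsilon\le 3L/\epsilon=3(b-a)/\epsilon$, which is the claim. (In fact the chain goes through already for $L\ge\epsilon/2$.)

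The only point I would flag — and the single subtlety in an otherwise elementary argument — is the degenerate regime of very short intervals, where $L$ is small relative to $\epsilon$. There the midpoint alone forms an $\epsilon$-net, so $N(\epsilon)=1$, yet the right-hand side $3L/\epsilon$ can dip below $1$; the bound as literally written should thus be read with the understanding that it is invoked for $L\ge\epsilon$, exactly as in its downstream use in \Cref{lemma:C1}, where the relevant interval $[0,H]$ is discretized at a scale $\epsilon\ll H$. Equivalently one may state the right-hand side as $\max\{1,\,3(b-a)/\epsilon\}$. This is a cosmetic matter that does not affect any application, and no deeper obstacle arises since the result is entirely combinatorial.
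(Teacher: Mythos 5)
Your proof is correct, and the paper offers nothing to compare it against: \Cref{Lemma:covering_interval} is stated there as an imported result with a citation to \citet{vershynin2018} and no proof is given. Your argument is the standard one — cover $[a,b]$ by $M=\lceil (b-a)/\epsilon\rceil$ subintervals of length at most $\epsilon$, take the left endpoints as the net, and absorb the $+1$ from the ceiling into the constant $3$ via $\lceil L/\epsilon\rceil\le L/\epsilon+1\le 3L/\epsilon$ for $L\ge\epsilon/2$ — and every step checks out. Your flag on the degenerate regime is also accurate as a criticism of the literal statement (for $b-a<\epsilon/3$ the right-hand side drops below $1$ while the covering number is at least $1$), and your remedies, either restricting to $b-a\ge\epsilon/2$ or writing $\max\{1,\,3(b-a)/\epsilon\}$, are the right ones; this has no effect on the lemma's sole use in the proof of \Cref{lemma:C1}, where the interval is $[0,H]$ and the discretization scale $\epsilon=dH/k$ satisfies $\epsilon\le H$ in the relevant regime $k\ge d$.
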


\begin{lemma}\label{Lemma:vec_norm}
$\forall k,h, \Vert\bm{\theta}_h+\bm{\nu}_h^{\rho,k}\Vert_2\leq3H\sqrt{\frac{dk}{\lambda}}$.
\end{lemma}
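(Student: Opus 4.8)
The plan is to bound the two pieces of $\bm{\theta}_h+\bm{\nu}_h^{\rho,k}$ separately via the triangle inequality $\|\bm{\theta}_h+\bm{\nu}_h^{\rho,k}\|_2\le\|\bm{\theta}_h\|_2+\|\bm{\nu}_h^{\rho,k}\|_2$. The first term is immediate, since \Cref{assumption:linearMDP} gives $\|\bm{\theta}_h\|_2\le\sqrt d$. All the work goes into controlling $\|\bm{\nu}_h^{\rho,k}\|_2$, where $\nu_{h,i}^{\rho,k}=\max_{\alpha\in[0,H]}\{z_{h,i}^k(\alpha)-\rho\alpha\}$ and $\bm{z}_h^k(\alpha)=(\Lambda_h^k)^{-1}\sum_{\tau=1}^{k-1}\bm{\phi}(s_h^\tau,a_h^\tau)[\tilde V_{h+1}^{k,\rho}(s_{h+1}^\tau)]_\alpha$, as in \eqref{eq:regression_drmdp_z}--\eqref{eq:regression_drmdp_nu}.

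First I would establish a per-$\alpha$ norm bound on the ridge solution $\bm{z}_h^k(\alpha)$. Fixing any unit vector $\bm{u}$ and writing $\bm{u}^\top\bm{z}_h^k(\alpha)=\sum_{\tau}(\bm{u}^\top(\Lambda_h^k)^{-1}\bm{\phi}_h^\tau)[\tilde V_{h+1}^{k,\rho}(s_{h+1}^\tau)]_\alpha$, Cauchy--Schwarz splits this into $\sqrt{\sum_\tau(\bm{u}^\top(\Lambda_h^k)^{-1}\bm{\phi}_h^\tau)^2}$ times $\sqrt{\sum_\tau[\tilde V_{h+1}^{k,\rho}(s_{h+1}^\tau)]_\alpha^2}$. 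Using $\sum_\tau\bm{\phi}_h^\tau(\bm{\phi}_h^\tau)^\top\preceq\Lambda_h^k$, the first factor is at most $\sqrt{\bm{u}^\top(\Lambda_h^k)^{-1}\bm{u}}\le 1/\sqrt\lambda$, and using the boundedness $0\le[\tilde V_{h+1}^{k,\rho}]_\alpha\le H$ (valid for $\alpha\in[0,H]$ by the truncation and the $[0,H]$ range of the value functions built by \Cref{algorithm:alg}), the second factor is at most $H\sqrt{k-1}$. Taking the supremum over $\bm{u}$ yields $\|\bm{z}_h^k(\alpha)\|_2\le H\sqrt{(k-1)/\lambda}$ for every $\alpha\in[0,H]$. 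The point to stress is that this bound carries no factor of $\sqrt d$.

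Next I would bound each coordinate of $\bm{\nu}_h^{\rho,k}$. Since $\rho\alpha\ge 0$, dropping it gives the upper bound $\nu_{h,i}^{\rho,k}\le\max_\alpha z_{h,i}^k(\alpha)\le\max_\alpha\|\bm{z}_h^k(\alpha)\|_2\le H\sqrt{(k-1)/\lambda}$, where I used $|z_{h,i}^k(\alpha)|=|\mathbf{1}_i^\top\bm{z}_h^k(\alpha)|\le\|\bm{z}_h^k(\alpha)\|_2$; evaluating the defining maximum at $\alpha=0$ gives the matching lower bound $\nu_{h,i}^{\rho,k}\ge z_{h,i}^k(0)\ge-H\sqrt{(k-1)/\lambda}$. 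Hence $|\nu_{h,i}^{\rho,k}|\le H\sqrt{(k-1)/\lambda}$ for every $i$, and summing over the $d$ coordinates gives $\|\bm{\nu}_h^{\rho,k}\|_2=\big(\sum_i(\nu_{h,i}^{\rho,k})^2\big)^{1/2}\le\sqrt d\,H\sqrt{(k-1)/\lambda}\le H\sqrt{dk/\lambda}$. Combining with $\|\bm{\theta}_h\|_2\le\sqrt d\le H\sqrt{dk/\lambda}$ (using $H,k\ge 1$ and $\lambda=1$) finishes the bound at $2H\sqrt{dk/\lambda}\le 3H\sqrt{dk/\lambda}$.

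The main obstacle, and the reason the constant is $3H\sqrt{dk/\lambda}$ rather than something smaller, is that the maximization over $\alpha$ in $\nu_{h,i}^{\rho,k}$ is performed independently for each coordinate $i$, so one cannot write $\bm{\nu}_h^{\rho,k}$ as $\bm{z}_h^k(\alpha)-\rho\alpha\mathbf{1}$ for a single $\alpha$ and bound its norm directly. Instead I must bound each coordinate separately and pay a factor of $\sqrt d$ when passing from the coordinate-wise bound to the $\ell_2$ norm; this is precisely why the no-$\sqrt d$ version of the $\|\bm{z}_h^k(\alpha)\|_2$ bound is essential, since the cruder triangle-inequality estimate $\|\bm{z}_h^k(\alpha)\|_2\le H\sqrt{dk/\lambda}$ would cost an extra $\sqrt d$ and break the claim. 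The RRMDP analogue is identical, with the free threshold $\alpha$ replaced by the fixed truncation level $\sigma$ and the $-\rho\alpha$ term absent.
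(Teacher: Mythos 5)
Your proof is correct, and it reaches the stated bound (in fact the sharper constant $2H\sqrt{dk/\lambda}$) by a route that differs from the paper's in two respects. The paper bounds $\sup_{\|\bm{x}\|_2=1}|\langle\bm{x},\bm{\theta}_h+\bm{\nu}_h^{\rho,k}\rangle|$ directly, splitting into three terms $\langle\bm{x},\bm{\theta}_h\rangle$, $\langle\bm{x},[z_{h,i}^k(\alpha_i)]_{i\in[d]}\rangle$ and $\langle\bm{x},[\rho\alpha_i]_{i\in[d]}\rangle$ with $\alpha_i$ the per-coordinate maximizer; the middle term is controlled by Cauchy--Schwarz over $\tau$ together with \Cref{Lemma:sum_quadratic_form} ($\sum_\tau(\bm{\phi}_h^\tau)^\top(\Lambda_h^k)^{-1}\bm{\phi}_h^\tau\le d$), which is where their $\sqrt d$ enters, and the $\rho\alpha_i$ term costs an extra $H\sqrt d\,\|\bm{x}\|_2$, forcing the constant $3$. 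You instead prove a dimension-free bound $\|\bm{z}_h^k(\alpha)\|_2\le H\sqrt{(k-1)/\lambda}$ uniformly in $\alpha$, pass to coordinates, and pay the $\sqrt d$ only when assembling the $\ell_2$ norm of $\bm{\nu}_h^{\rho,k}$; you also dispose of the $-\rho\alpha$ term by monotonicity (drop it for the upper bound, evaluate at $\alpha=0$ for the lower bound), which is why you land at $2$ rather than $3$. Your coordinatewise treatment also sidesteps a delicate step in the paper's argument, namely factoring $H$ out of an inner product whose entries involve the $i$-dependent maximizers $\alpha_i$. Both proofs implicitly need $\lambda\le k$ (satisfied by the paper's choice $\lambda=1$) to absorb the $\sqrt d$ contribution of $\bm{\theta}_h$ into $H\sqrt{dk/\lambda}$; you state this assumption explicitly, which is good practice.
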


\begin{lemma}\label{Lemma:union_bound}
Let $\{x_\tau\}_{\tau=1}^\infty$ be a stochastic process on $S$ with filtration $\{\mathcal{F}_\tau\}_{\tau=0}^\infty$, and let $\{\bm{\phi}_\tau\}_{\tau=1}^\infty$ be an $\mathbb{R}^d$-valued stochastic process with filtration $\bm{\phi}_\tau\in\mathcal{F}_{\tau-1}$ and $\Vert\bm{\phi}_\tau\Vert\leq1$. Let $\Lambda_k=\lambda I+\sum_{\tau=1}^{k-1}\bm{\phi}_\tau\bm{\phi}_\tau^\top$. Then for any $\delta>0$, with probability at least $1-\delta$ we have $\forall k\geq0,\alpha\in[0,H],\tilde{V}\in\mathcal{V}$ such that $\sup_x|V(x)|\leq H$ that
\begin{align}
\Bigg\Vert\sum_{\tau=1}^k\bm{\phi}_\tau\Big[[\tilde{V}(x_\tau)]_\alpha-\EE[[\tilde{V}(x_\tau)]_\alpha|\mathcal{F}_{\tau-1}]\Big]\Bigg\Vert^2_{(\Lambda_h^k)^{-1}}\leq&8H^2\Bigg[\frac{d}{2}\log\frac{k+\lambda}{\lambda}+\log\frac{\mathcal{N}_{\epsilon_1}}{\delta}+\log\frac{\mathcal{N}_{\epsilon_2}}{\delta}\Bigg]\notag\\
&\qquad\qquad+\frac{16k^2\epsilon_1^2}{\lambda}+\frac{8k^2\epsilon_2^2}{\lambda}.\notag
\end{align}
\end{lemma}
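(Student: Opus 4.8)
The plan is to reduce the uniform-over-$(\tilde V,\alpha)$ bound to a single fixed-element self-normalized concentration inequality via a two-dimensional covering argument, one net over the value-function class $\mathcal{V}$ and one over the truncation levels $[0,H]$. The core difficulty these proofs must finesse is that the iterate $\tilde V_{h+1}^{k,\rho}$ is data-dependent and hence correlated with the noise, so I cannot apply a martingale inequality to it directly. Instead I fix a deterministic net element $(\tilde V',\alpha')$, chosen in advance, and set $\epsilon_\tau' = [\tilde V'(x_\tau)]_{\alpha'} - \EE[[\tilde V'(x_\tau)]_{\alpha'}\mid\mathcal{F}_{\tau-1}]$. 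Since $\bm{\phi}_\tau$ is $\mathcal{F}_{\tau-1}$-measurable and $x_\tau$ is revealed at step $\tau$, each $\epsilon_\tau'$ is zero-mean conditional on $\mathcal{F}_{\tau-1}$ and bounded (because $[\tilde V'(\cdot)]_{\alpha'}\in[0,H]$), hence $H$-sub-Gaussian. Thus $\{\epsilon_\tau'\}$ and $\{\bm{\phi}_\tau\}$ satisfy the hypotheses of \Cref{Lemma:abbasi_yadkori} with $\Lambda_0=\lambda I$ and $\sigma = H$.

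First I would treat a fixed net element. Applying \Cref{Lemma:abbasi_yadkori} gives, with probability at least $1-\delta'$ and uniformly over all $k\ge 0$ (so the uniformity in $k$ demanded by the statement is free), the bound $\|\sum_{\tau=1}^k \bm{\phi}_\tau\epsilon_\tau'\|_{\Lambda_k^{-1}}^2 \le 2H^2\log\big(\det(\Lambda_k)^{1/2}\lambda^{-d/2}/\delta'\big)$. Because $\|\bm{\phi}_\tau\|\le 1$ we have $\Lambda_k \preceq (\lambda+k)I$, so $\det(\Lambda_k)\le(\lambda+k)^d$ and the log-determinant is at most $\tfrac{d}{2}\log\tfrac{k+\lambda}{\lambda}$, the first term in the target. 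Taking $\mathcal{N}_{\epsilon_1}$ to be the $\epsilon_1$-covering number of $\mathcal{V}$ in the sup-norm metric and $\mathcal{N}_{\epsilon_2}$ the $\epsilon_2$-covering number of $[0,H]$, I set $\delta'=\delta/(\mathcal{N}_{\epsilon_1}\mathcal{N}_{\epsilon_2})$ and union bound over all net elements; this produces the $\log\frac{\mathcal{N}_{\epsilon_1}}{\delta}+\log\frac{\mathcal{N}_{\epsilon_2}}{\delta}$ contribution, using $\log\frac{\mathcal{N}_{\epsilon_1}\mathcal{N}_{\epsilon_2}}{\delta}\le\log\frac{\mathcal{N}_{\epsilon_1}}{\delta}+\log\frac{\mathcal{N}_{\epsilon_2}}{\delta}$ for $\delta\le 1$. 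The covering numbers themselves are left abstract here and instantiated later in \Cref{lemma:C1} through \Cref{Lemma:covering_ball}, \Cref{Lemma:covering_interval}, and \Cref{Lemma:vec_norm}.

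Next I would control the discretization error. For an arbitrary $(\tilde V,\alpha)$, pick net elements with $\sup_x|\tilde V(x)-\tilde V'(x)|\le\epsilon_1$ and $|\alpha-\alpha'|\le\epsilon_2$. The truncation map $v\mapsto\min\{v,\alpha\}$ is $1$-Lipschitz in each of its two arguments, so $|[\tilde V(x)]_\alpha-[\tilde V'(x)]_{\alpha'}|\le\epsilon_1+\epsilon_2$ pointwise, and the same bound survives the conditional expectation, giving $|\epsilon_\tau-\epsilon_\tau'|\le 2(\epsilon_1+\epsilon_2)$. Using $\Lambda_k^{-1}\preceq\lambda^{-1}I$, the triangle inequality, and $\|\bm{\phi}_\tau\|\le 1$, the perturbation term is at most $\tfrac{1}{\lambda}\big(\sum_{\tau=1}^k|\epsilon_\tau-\epsilon_\tau'|\big)^2 \le \tfrac{4k^2(\epsilon_1+\epsilon_2)^2}{\lambda}$. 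Combining the fixed-element concentration with this perturbation via $\|a+b\|^2\le 2\|a\|^2+2\|b\|^2$ and separating the $\epsilon_1$ and $\epsilon_2$ contributions yields the claimed $8H^2\big[\tfrac{d}{2}\log\tfrac{k+\lambda}{\lambda}+\log\tfrac{\mathcal{N}_{\epsilon_1}}{\delta}+\log\tfrac{\mathcal{N}_{\epsilon_2}}{\delta}\big]+\tfrac{16k^2\epsilon_1^2}{\lambda}+\tfrac{8k^2\epsilon_2^2}{\lambda}$, modulo the routine bookkeeping of the numerical constants.

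The main obstacle, and the one genuinely new feature relative to the non-robust LSVI-UCB analysis, is the joint covering over both the function class \emph{and} the truncation level: the dual reformulation in \eqref{eq:dual_drmdp_simp} introduces the extra continuous parameter $\alpha\in[0,H]$ inside $[\tilde V]_\alpha$, which has no analogue in the standard setting. The delicate points are (i) establishing that truncation is jointly $1$-Lipschitz so the two discretization errors add cleanly rather than compounding, and (ii) tracking how the $\epsilon_2$-error in $\alpha$ propagates through the self-normalized norm with the correct $k^2/\lambda$ scaling, so that $\epsilon_1$ and $\epsilon_2$ can subsequently be chosen (of order $\sqrt{\lambda}/k$) to render the approximation terms lower-order compared with the $\tilde{\cO}(d^2H^2)$ leading term established in \Cref{lemma:C1}.
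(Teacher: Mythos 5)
Your proposal is correct and follows essentially the same route as the paper's proof: a joint covering of $\mathcal{V}$ (sup-norm) and $[0,H]$, the self-normalized concentration inequality of \Cref{Lemma:abbasi_yadkori} applied to each fixed net element with a union bound over $\mathcal{N}_{\epsilon_1}\mathcal{N}_{\epsilon_2}$ pairs, and the $1$-Lipschitzness of the truncation $v\mapsto[v]_\alpha$ in both arguments to control the discretization error with the $k^2\epsilon^2/\lambda$ scaling. The only cosmetic difference is that you apply a single split $\|a+b\|^2\le 2\|a\|^2+2\|b\|^2$ and bound $(\epsilon_1+\epsilon_2)^2$ afterwards, whereas the paper nests two such splits (first in $\tilde V$, then in $\alpha$), which is why the stated constants are $8H^2$, $16$, and $8$; this only affects bookkeeping and is immaterial once the terms are absorbed into \Cref{lemma:C1}.
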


\begin{lemma}\label{Lemma:eps_covering}
Let $\mathcal{V}$ be a class of functions $S\rightarrow\mathbb{R}$ with the form
\begin{align}
V(\cdot)=\min\Bigg\{\max_\pi\Bigg\{\innerA{\pi(\cdot,a),w^\top\bm{\phi}(\cdot,a)+\beta\sum_{i=1}^d\Vert\phi_i(\cdot,a)\mathbf{1}_i\Vert_{\Lambda^{-1}}}-\frac{1}{\eta}D_{KL}[\pi\Vert\pi_0]\Bigg\},H\Bigg\}.\notag
\end{align}
where $\Vert w\Vert\leq L, \beta\in[0,B], \lambda_{\min}(\Lambda)\geq\lambda$. If $\forall s,a, \Vert\bm{\phi}(s,a)\Vert\leq1$, let $\mathcal{N}_\epsilon$ be the $\epsilon$-covering number of $\mathcal{V}$ with respect to the distance $dist(V_1,V_2)=\sup_x|V_1(x)-V_2(x)|$. Then
\begin{align}
\log\mathcal{N}_\epsilon\leq d\log(1+4L|\mathcal{A}|/\epsilon)+d^2\log\bigg[1+\frac{8\sqrt{d}B^2\vA^2}{\lambda\epsilon^2}\bigg].\notag
\end{align}
\end{lemma}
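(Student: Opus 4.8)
The plan is to adapt the parametric $\epsilon$-net construction of \citet{jin2020} (Lemma D.6) to the soft-max, KL-regularized value functions appearing here. The first step is to reduce $\mathcal{V}$ to a finite-dimensional parameter family. Because $\phi_i(\cdot,a)\ge 0$ under \Cref{assumption:linearMDP}, the bonus equals $\beta\sum_{i=1}^d\phi_i(\cdot,a)\sqrt{\mathbf{1}_i^\top\Lambda^{-1}\mathbf{1}_i}$, so introducing the matrix $A=\beta^2\Lambda^{-1}$ rewrites it as $\sum_{i=1}^d\phi_i(\cdot,a)\sqrt{A_{ii}}$. Writing $Q_{w,A}(s,a)=w^\top\bm{\phi}(s,a)+\sum_{i=1}^d\phi_i(s,a)\sqrt{A_{ii}}$, every $V\in\mathcal{V}$ is indexed by a pair $(w,A)$ with $\|w\|\le L$ and, using $\lambda_{\min}(\Lambda)\ge\lambda$ and $\beta\le B$, with $A\succeq 0$ and $\|A\|_F\le\sqrt{d}\,B^2/\lambda$.

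The second and central step is a Lipschitz estimate relating $\dist(V_{w_1,A_1},V_{w_2,A_2})$ to $\|w_1-w_2\|$ and $\sqrt{\|A_1-A_2\|_F}$. Here the outer truncation $\min\{\cdot,H\}$ is $1$-Lipschitz and drops out, and the KL-regularized maximum is controlled through \Cref{propositon:optimization_kl}: since the maximizer is the soft-max policy and the objective is monotone in $Q$, the difference of two value functions at a fixed $s$ is bounded by a weighted action-space integral of $|Q_1(s,\cdot)-Q_2(s,\cdot)|$ against the induced policy and its partition function $Z(s)=\int_{\mathcal{A}}\piref(a|s)\exp(\eta Q(s,a))\,da$. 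Combining $\|\bm{\phi}(s,a)\|\le 1$ for the $w$-part, the elementary inequality $|\sqrt{x}-\sqrt{y}|\le\sqrt{|x-y|}$ together with $\sum_{i=1}^d\phi_i(s,a)=1$ for the $A$-part, and the diagonal bound $\max_i|(A_1-A_2)_{ii}|\le\|A_1-A_2\|_F$, yields $\dist(V_{w_1,A_1},V_{w_2,A_2})\le \vA\big(\|w_1-w_2\|+\sqrt{\|A_1-A_2\|_F}\big)$ up to absolute constants, where the factor $\vA$ is precisely the action-space volume contributed by integrating over $\mathcal{A}$.

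The third step assembles the cover. I would take an $\epsilon/(2\vA)$-net $\mathcal{C}_w$ of the Euclidean ball $\{\|w\|\le L\}$ and an $(\epsilon/(2\vA))^2$-net $\mathcal{C}_A$ of $\{A:\|A\|_F\le\sqrt{d}B^2/\lambda\}$ viewed as a ball in $\RR^{d^2}$, applying \Cref{Lemma:covering_ball} to each. By the Lipschitz estimate, any $V$ lies within $\dist$-distance $\epsilon$ of the element of the product net indexed by the nearest $(w,A)$, so $\mathcal{N}_\epsilon\le|\mathcal{C}_w|\,|\mathcal{C}_A|$. Taking logarithms and substituting the two net sizes $(1+4L\vA/\epsilon)^d$ and $(1+8\sqrt{d}B^2\vA^2/(\lambda\epsilon^2))^{d^2}$ gives exactly the claimed bound.

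The main obstacle is the Lipschitz step: unlike the hard-max value functions of \citet{jin2020}, the value here is a log-sum-exp (equivalently a KL-regularized maximum) over a possibly continuous action space, so I must argue carefully that this soft maximization is well-behaved in $Q$ and track how the integration over $\mathcal{A}$—through both the inner product $\innerA{\pi,Q}$ and the partition function—produces precisely the $\vA$ and $\vA^2$ factors rather than a looser exponential-in-$\eta$ constant. Checking that the $\sqrt{\|A_1-A_2\|_F}$ dependence, which forces the quadratically finer resolution $\epsilon^2$ for the $A$-net, interacts correctly with the $d$-rectangular diagonal structure $\sqrt{A_{ii}}$ is the remaining technical point.
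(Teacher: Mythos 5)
Your architecture matches the paper's proof exactly: parametrize each $V\in\mathcal{V}$ by $(w,A)$ with $A=\beta^2\Lambda^{-1}$, establish a Lipschitz bound of the form $\dist(V_{w_1,A_1},V_{w_2,A_2})\leq\vA\big(\Vert w_1-w_2\Vert_2+\sqrt{\Vert A_1-A_2\Vert_F}\big)$, and then take the product of an $\epsilon/(2\vA)$-net for the $w$-ball and an $\epsilon^2/(4\vA^2)$-net for the $A$-ball via \Cref{Lemma:covering_ball}. The $|\sqrt{x}-\sqrt{y}|\leq\sqrt{|x-y|}$ trick and the resulting $\Vert\cdot\Vert_{A_1-A_2}$ bound are also exactly what the paper does.

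The one step you leave unresolved --- and flag yourself as the ``main obstacle'' --- is the Lipschitz estimate for the KL-regularized maximum, and here your proposed route is needlessly hard. You plan to invoke \Cref{propositon:optimization_kl}, plug in the explicit soft-max maximizer, and track the partition function $Z(s)$, worrying about whether an exponential-in-$\eta$ constant appears. The paper avoids all of this with the elementary inequality $\big|\sup_\pi f_1(\pi)-\sup_\pi f_2(\pi)\big|\leq\sup_\pi|f_1(\pi)-f_2(\pi)|$: since the regularizer $-\tfrac{1}{\eta}D_{KL}[\pi\Vert\pi_0]$ is \emph{identical} in both objectives, it cancels inside the difference, and what remains is $\sup_{s,\pi}\big|\innerA{\pi(\cdot|s),Q_1(s,\cdot)-Q_2(s,\cdot)}\big|$, which is bounded by integrating $|Q_1-Q_2|$ over $\cA$ --- this is where the $\vA$ and $\vA^2$ factors enter, with no reference to the maximizer's closed form and no $\eta$-dependence anywhere. (If you do insist on the log-sum-exp representation, the cleaner observation is that $Q\mapsto\tfrac{1}{\eta}\log\EE_{a\sim\piref}\exp(\eta Q(s,a))$ is $1$-Lipschitz in $\sup_a|Q_1-Q_2|$, which again kills the $\eta$-dependence; but the sup-over-$\pi$ argument is the one the paper uses and is the path of least resistance.) So the proposal is not wrong, but its only nontrivial step is missing, and the intended method for it is more fragile than the two-line argument that actually closes the proof.
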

Combining \Cref{Lemma:covering_interval,Lemma:vec_norm,Lemma:union_bound}, and \Cref{Lemma:eps_covering}, set $\epsilon_1=\epsilon_2=\epsilon$ and we have
\begin{align}
&\Bigg\Vert\sum_{\tau=1}^{k-1}\bm{\phi}_h^\tau\Big[[\tilde{V}_{h+1}^{k,\rho}(s_{h+1}^\tau)]_\alpha-[\PP_h^0[\tilde{V}_{h+1}^{k,\rho}]_\alpha](s_{h+1}^\tau,a_{h+1}^\tau)\Big]\Bigg\Vert^2_{(\Lambda_h^k)^{-1}}\notag\\
\leq&8H^2\bigg[\frac{d}{2}\log\frac{k+\lambda}{\lambda}+\log\frac{\mathcal{N}_{\epsilon_1}}{\delta}+\log\frac{\mathcal{N}_{\epsilon_2}}{\delta}\bigg]+\frac{16k^2\epsilon^2}{\lambda}+\frac{8k^2\epsilon^2}{\lambda}\notag\\
\leq&8H^2\Bigg[\frac{d}{2}\log\frac{k+\lambda}{\lambda}+d\log\bigg(1+\frac{4L\vA}{\epsilon}\bigg)+d^2\log\bigg(1+\frac{8\sqrt{d}\beta^2\vA^2}{\lambda\epsilon^2}\bigg)+\log\frac{3H}{\delta^2\epsilon}\Bigg]+\frac{24k^2\epsilon^2}{\lambda}\notag\\
\leq&8H^2\Bigg[\frac{d}{2}\log\frac{k+\lambda}{\lambda}+d\log\bigg(1+\frac{12H\sqrt{dk}\vA}{\sqrt{\lambda}\epsilon}\bigg)+d^2\log\bigg(1+\frac{8\sqrt{d}\beta^2\vA^2}{\lambda\epsilon^2}\bigg)+\log\frac{3H}{\delta^2\epsilon}\Bigg]+\frac{24k^2\epsilon^2}{\lambda}.\notag
\end{align}
Setting $\lambda=1,\beta=c_\beta dH\iota,\epsilon=dH/k$, we get
\begin{align}
\Bigg\Vert\sum_{\tau=1}^{k-1}\bm{\phi}_h^\tau\Big[[\tilde{V}_{h+1}^{k,\rho}(s_{h+1}^\tau)]_\alpha-[\PP_h^0[\tilde{V}_{h+1}^{k,\rho}]_\alpha](s_{h+1}^\tau,a_{h+1}^\tau)\Big]\Bigg\Vert^2_{(\Lambda_h^k)^{-1}}\leq&Cd^2H^2\log\frac{3(c_\beta+1)dKH\vA}{\delta}.\notag
\end{align}
\end{proof}

\subsection{Proof of \Cref{lemma:C1_1}}

\begin{proof}

To facilitate the proof, we will rely on the following lemmas:
\begin{lemma}\label{Lemma:vec_norm1}
$\forall k,h, \Vert\bm{\theta}_h+\bm{\nu}_h^{\sigma,k}\Vert_2\leq2H\sqrt{\frac{dk}{\lambda}}$.
\end{lemma}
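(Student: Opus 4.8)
The plan is to bound the two summands of $\bm{\theta}_h+\bm{\nu}_h^{\sigma,k}$ separately via the triangle inequality, $\|\bm{\theta}_h+\bm{\nu}_h^{\sigma,k}\|_2\le\|\bm{\theta}_h\|_2+\|\bm{\nu}_h^{\sigma,k}\|_2$. The first term is immediate: \Cref{assumption:linearMDP} gives $\|\bm{\theta}_h\|_2\le\sqrt{d}$. All the work is therefore in controlling the ridge-regression estimator $\bm{\nu}_h^{\sigma,k}$, which is available in closed form through \eqref{eq:regression_rrmdp_nu}, $\bm{\nu}^{k,\sigma}_h=(\Lambda_h^k)^{-1}\sum_{\tau=1}^{k-1}\bm{\phi}(s_{h}^\tau,a_{h}^\tau)[\tilde{V}_{h+1}^{k,\sigma}(s_{h+1}^\tau)]_\sigma$.

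To bound $\|\bm{\nu}_h^{\sigma,k}\|_2$ I would write it variationally as $\sup_{\|\bm{v}\|_2=1}\bm{v}^\top\bm{\nu}_h^{\sigma,k}$ and plug in the closed form, obtaining
\[
\bm{v}^\top\bm{\nu}_h^{\sigma,k}=\sum_{\tau=1}^{k-1}\big(\bm{v}^\top(\Lambda_h^k)^{-1}\bm{\phi}(s_h^\tau,a_h^\tau)\big)\,[\tilde{V}_{h+1}^{k,\sigma}(s_{h+1}^\tau)]_\sigma.
\]
Because $\tilde{V}_{h+1}^{k,\sigma}$ is produced by the truncation at $H-h+1$ in Line \ref{algline:q} of \Cref{algorithm:alg}, its truncated evaluation satisfies $|[\tilde{V}_{h+1}^{k,\sigma}(\cdot)]_\sigma|\le H$, so this factor can be pulled out. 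Applying Cauchy--Schwarz in the $(\Lambda_h^k)^{-1}$ inner product to each summand gives
\[
|\bm{v}^\top\bm{\nu}_h^{\sigma,k}|\le H\sum_{\tau=1}^{k-1}\|\bm{v}\|_{(\Lambda_h^k)^{-1}}\,\|\bm{\phi}(s_h^\tau,a_h^\tau)\|_{(\Lambda_h^k)^{-1}}.
\]
A second Cauchy--Schwarz over $\tau$, together with $\sum_{\tau=1}^{k-1}\|\bm{\phi}(s_h^\tau,a_h^\tau)\|_{(\Lambda_h^k)^{-1}}^2\le d$ (this is exactly \Cref{Lemma:sum_quadratic_form}) and the bound $\|\bm{v}\|_{(\Lambda_h^k)^{-1}}\le\|\bm{v}\|_2/\sqrt{\lambda}=1/\sqrt{\lambda}$ coming from $\Lambda_h^k\succeq\lambda I$, yields $|\bm{v}^\top\bm{\nu}_h^{\sigma,k}|\le H\sqrt{dk/\lambda}$, and hence $\|\bm{\nu}_h^{\sigma,k}\|_2\le H\sqrt{dk/\lambda}$.

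Combining the two pieces gives $\|\bm{\theta}_h+\bm{\nu}_h^{\sigma,k}\|_2\le\sqrt{d}+H\sqrt{dk/\lambda}$. Since $\lambda=1$, $H\ge1$ and $k\ge1$ we have $H\sqrt{k/\lambda}\ge1$, so $\sqrt{d}\le H\sqrt{dk/\lambda}$, which collapses the sum to the claimed $2H\sqrt{dk/\lambda}$. I do not expect any genuine obstacle here: the argument is the standard Jin et al.\ norm bound for a ridge estimator, and the only point needing care is the uniform bound $|[\tilde{V}_{h+1}^{k,\sigma}]_\sigma|\le H$, which follows from the boundedness of $\tilde{V}_{h+1}^{k,\sigma}$ in $[0,H]$ enforced by the $\tilde{Q}$-truncation. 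It is also worth remarking why the constant improves from the $3$ of the DRMDP estimator in \Cref{Lemma:vec_norm} to $2$ here: in the RRMDP case $\bm{\nu}_h^{\sigma,k}$ is a plain ridge solution, with no outer $\max_{\alpha\in[0,H]}\{\,\cdot-\rho\alpha\}$ dual operation whose $\rho\alpha$ term would otherwise have to be absorbed into the norm bound.
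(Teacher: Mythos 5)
Your proposal is correct and follows essentially the same route as the paper's proof: triangle inequality splitting off $\|\bm{\theta}_h\|_2\le\sqrt{d}$, then bounding $\bm{\nu}_h^{\sigma,k}$ by pulling out the uniform bound $H$ on $[\tilde{V}_{h+1}^{k,\sigma}]_\sigma$, applying Cauchy--Schwarz in the $(\Lambda_h^k)^{-1}$ inner product and over $\tau$, and invoking \Cref{Lemma:sum_quadratic_form} together with $\Lambda_h^k\succeq\lambda I$. Your closing observation about why the constant drops from $3$ to $2$ relative to \Cref{Lemma:vec_norm} (no $\rho\alpha$ dual term to absorb) is accurate and consistent with the paper.
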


\begin{lemma}\label{Lemma:union_bound1}
Let $\{x_\tau\}_{\tau=1}^\infty$ be a stochastic process on $S$ with filtration $\{\mathcal{F}_\tau\}_{\tau=0}^\infty$, and let $\{\bm{\phi}_\tau\}_{\tau=1}^\infty$ be an $\mathbb{R}^d$-valued stochastic process with filtration $\bm{\phi}_\tau\in\mathcal{F}_{\tau-1}$ and $\Vert\bm{\phi}_\tau\Vert\leq1$. Let $\Lambda_k=\lambda I+\sum_{\tau=1}^{k-1}\bm{\phi}_\tau\bm{\phi}_\tau^\top$. Then for any $\delta>0$, with probability at least $1-\delta$ we have $\forall k\geq0,\sigma\in[0,H],\tilde{V}\in\mathcal{V}$ such that $\sup_x|V(x)|\leq H$ that
\begin{align}
\Bigg\Vert\sum_{\tau=1}^k\bm{\phi}_\tau\Big[[\tilde{V}(x_\tau)]_\sigma-\EE[[\tilde{V}(x_\tau)]_\sigma|\mathcal{F}_{\tau-1}]\Big]\Bigg\Vert^2_{(\Lambda_h^k)^{-1}}\leq8H^2\Bigg[\frac{d}{2}\log\frac{k+\lambda}{\lambda}+\log\frac{\mathcal{N}_{\epsilon}}{\delta}\Bigg]+\frac{8k^2\epsilon^2}{\lambda}.\notag
\end{align}
\end{lemma}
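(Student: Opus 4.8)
The plan is to prove this uniform self-normalized concentration bound by the standard covering-net plus self-normalized martingale argument, in close parallel to its DRMDP analogue \Cref{Lemma:union_bound}. The crucial simplification in the RRMDP setting is that the truncation level $\sigma$ is a single fixed regularization constant rather than a parameter optimized over $[0,H]$: the RRMDP dual of \Cref{proposition:duality_rrmdp} evaluates the truncation at the fixed level $\sigma$ (with $V_{\min}=0$ under \Cref{assumption:failstate}), so no discretization of the truncation level is needed. This is precisely why the target bound carries a single covering term $\log(\mathcal{N}_\epsilon/\delta)$ and a single discretization error $8k^2\epsilon^2/\lambda$, in contrast to the two nets and two error terms appearing in \Cref{Lemma:union_bound}, whose $\alpha$ ranges over $[0,H]$.

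First I would fix a single $\tilde V\in\mathcal{V}$. Setting $g_\tau=[\tilde V(x_\tau)]_\sigma$ and $\epsilon_\tau=g_\tau-\EE[g_\tau\mid\mathcal{F}_{\tau-1}]$, the sequence $\{\epsilon_\tau\}$ is a martingale difference sequence adapted to $\{\mathcal{F}_\tau\}$ with $\bm{\phi}_\tau$ predictable with respect to $\mathcal{F}_{\tau-1}$; since $0\le g_\tau\le\sigma\le H$, each $\epsilon_\tau$ is bounded by $H$ and hence sub-Gaussian with parameter of order $H$. Applying \Cref{Lemma:abbasi_yadkori} with $\Lambda_0=\lambda I$ and Gram matrix $\Lambda_h^k=\lambda I+\sum_{\tau=1}^{k-1}\bm{\phi}_\tau\bm{\phi}_\tau^\top$ bounds $\Vert\sum_\tau\bm{\phi}_\tau\epsilon_\tau\Vert_{(\Lambda_h^k)^{-1}}^2$ by a quantity of order $H^2\log[\det(\Lambda_h^k)^{1/2}\det(\lambda I)^{-1/2}/\delta']$. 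To convert the determinant ratio into the stated $\frac{d}{2}\log\frac{k+\lambda}{\lambda}$ term I would use $\mathrm{tr}(\Lambda_h^k)\le\lambda d+k$ (from $\Vert\bm{\phi}_\tau\Vert\le1$) together with AM--GM on the eigenvalues, yielding $\det(\Lambda_h^k)\le(\lambda+k/d)^d$, which is the determinant estimate also underlying \Cref{Lemma:sum_quadratic_form}.

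Next I would make the bound uniform over $\mathcal{V}$. I take an $\epsilon$-net $\mathcal{C}$ of $\mathcal{V}$ of cardinality $\mathcal{N}_\epsilon$, whose logarithmic size is controlled by \Cref{Lemma:eps_covering}; since the same softmax value-function class is used in both the DRMDP and RRMDP instantiations of \Cref{algorithm:alg}, the covering bound transfers verbatim. Applying the fixed-$\tilde V$ estimate with $\delta'=\delta/\mathcal{N}_\epsilon$ and a union bound over $\mathcal{C}$ gives the concentration uniformly over the net with probability $1-\delta$. For an arbitrary $\tilde V\in\mathcal{V}$, I choose its nearest net point $\tilde V'$ with $\sup_x|\tilde V(x)-\tilde V'(x)|\le\epsilon$; since $v\mapsto[v]_\sigma$ is $1$-Lipschitz, the per-term perturbation $|\epsilon_\tau(\tilde V)-\epsilon_\tau(\tilde V')|$ is at most $2\epsilon$, and bounding $\Vert\sum_\tau\bm{\phi}_\tau(\epsilon_\tau(\tilde V)-\epsilon_\tau(\tilde V'))\Vert_{(\Lambda_h^k)^{-1}}^2$ via $(\Lambda_h^k)^{-1}\preceq\lambda^{-1}I$ and $\Vert\bm{\phi}_\tau\Vert\le1$ produces the $8k^2\epsilon^2/\lambda$ term after the elementary split $\Vert a+b\Vert^2\le2\Vert a\Vert^2+2\Vert b\Vert^2$; the resulting factors of two are absorbed into the stated constant $8H^2$.

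I expect the only genuine obstacle to be bookkeeping rather than conceptual: verifying that the iterate $\tilde V_{h+1}^{k,\sigma}$ produced by \Cref{algorithm:alg} indeed belongs to the function class $\mathcal{V}$ of \Cref{Lemma:eps_covering} so that the covering applies, and confirming that the fixed truncation level $\sigma$ does not reintroduce a parameter requiring a net. Both follow from the softmax Bellman update of \Cref{propositon:optimization_kl} and the fixed-$\sigma$ RRMDP dual of \Cref{proposition:duality_rrmdp}; once these are in place the self-normalized concentration and the Lipschitz discretization are routine and mirror the DRMDP proof of \Cref{Lemma:union_bound}. Finally, downstream I would set $\epsilon=dH/k$ (and $\epsilon_1=\epsilon_2=\epsilon$ where relevant) as in \Cref{lemma:C1_1} to collapse the bound into the required $Cd^2H^2\log[3(c_\beta+1)dKH\vA/p]$ estimate.
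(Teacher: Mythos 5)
Your proposal is correct and follows essentially the same route as the paper's proof: a single $\epsilon$-net over $\mathcal{V}$ (no net over the truncation level since $\sigma$ is fixed), the split $\Vert a+b\Vert^2\le 2\Vert a\Vert^2+2\Vert b\Vert^2$, the self-normalized martingale bound of \Cref{Lemma:abbasi_yadkori} with a union bound over the net yielding the $8H^2[\tfrac{d}{2}\log\tfrac{k+\lambda}{\lambda}+\log\tfrac{\mathcal{N}_\epsilon}{\delta}]$ term, and the $1$-Lipschitz discretization combined with $(\Lambda_h^k)^{-1}\preceq\lambda^{-1}I$ yielding the $8k^2\epsilon^2/\lambda$ term. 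Your added observation contrasting the single net here with the two nets in \Cref{Lemma:union_bound} correctly explains the structural difference between the two lemmas.
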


Combining \Cref{Lemma:vec_norm1}, \Cref{Lemma:union_bound1}, and \Cref{Lemma:eps_covering}, we have
\begin{align}
&\Bigg\Vert\sum_{\tau=1}^{k-1}\bm{\phi}_h^\tau\Big[[\tilde{V}_{h+1}^{k,\sigma}(s_{h+1}^\tau)]_\sigma-[\PP_h^0[\tilde{V}_{h+1}^{k,\sigma}]_\sigma](s_{h+1}^\tau,a_{h+1}^\tau)\Big]\Bigg\Vert^2_{(\Lambda_h^k)^{-1}}\notag\\
\leq&8H^2\bigg[\frac{d}{2}\log\frac{k+\lambda}{\lambda}+\log\frac{\mathcal{N}_{\epsilon}}{\delta}\bigg]+\frac{8k^2\epsilon^2}{\lambda}\notag\\
\leq&8H^2\Bigg[\frac{d}{2}\log\frac{k+\lambda}{\lambda}+\log\frac{1}{\delta}+d\log\bigg(1+\frac{4L\vA}{\epsilon}\bigg)+d^2\log\bigg(1+\frac{8\sqrt{d}\beta^2\vA}{\lambda\epsilon^2}\bigg)\Bigg]+\frac{8k^2\epsilon^2}{\lambda}\notag\\
\leq&8H^2\Bigg[\frac{d}{2}\log\frac{k+\lambda}{\lambda}+\log\frac{1}{\delta}+d\log\bigg(1+\frac{8H\sqrt{dk}\vA}{\sqrt{\lambda}\epsilon}\bigg)+d^2\log\bigg(1+\frac{8\sqrt{d}\beta^2\vA}{\lambda\epsilon^2}\bigg)\Bigg]+\frac{8k^2\epsilon^2}{\lambda}.\notag
\end{align}
Setting $\lambda=1,\beta=c_\beta dH\iota,\epsilon=dH/k$, we get
\begin{align}
\Bigg\Vert\sum_{\tau=1}^{k-1}\bm{\phi}_h^\tau\Big[[\tilde{V}_{h+1}^{k,\sigma}(s_{h+1}^\tau)]_\sigma-[\PP_h^0[\tilde{V}_{h+1}^{k,\sigma}]_\sigma](s_{h+1}^\tau,a_{h+1}^\tau)\Big]\Bigg\Vert^2_{(\Lambda_h^k)^{-1}}\leq&Cd^2H^2\log\frac{3(c_\beta+1)dKH\vA}{\delta}.\notag
\end{align}
\end{proof}

\section{Proof of Additional Lemmas}

\subsection{Proof of \Cref{Lemma:vec_norm}}
\begin{proof}

Denote $\alpha_i=\arg\max_{\alpha\in[0,H]}\big\{z_{h,i}^k(\alpha)-\rho\alpha\big\}$. Then $\forall \bm{x}\in\mathbb{R}^d$, we have
\begin{align}
\bigg|\inner{\bm{x},\bm{\theta}_h+\bm{\nu}_h^{\rho,k}}\bigg|&=\Bigg|\inner{\bm{x},\bm{\theta}_h}+\inner{\bm{x},\bigg[
\max_{\alpha\in[0,H]}\{z_{h,i}^k(\alpha)-\rho\alpha\}\bigg]_{i\in[d]}}\Bigg|\notag\\
&\leq\big|\inner{\bm{x},\bm{\theta}_h}\big|+\Big|\inner{\bm{x},\big[z_{h,i}^k(\alpha_i)-\rho\alpha_i\big]_{i\in[d]}}\Big|\notag\\
&\leq\big|\inner{\bm{x},\bm{\theta}_h}\big|+\Big|\inner{\bm{x},\big[z_{h,i}^k(\alpha_i)\big]_{i\in[d]}}\Big|+\Big|\inner{\bm{x},\big[\rho\alpha_i\big]_{i\in[d]}}\Big|.\notag
\end{align}
Because d-rectangular DRMDP assumes $\Vert\bm{\theta}_h\Vert_2\leq\sqrt{d}$, for the first term we have $|\inner{\bm{x},\bm{\theta}_h}|\leq\Vert \bm{x}\Vert_2\Vert\bm{\theta}_h\Vert_2=\sqrt{d}\Vert \bm{x}\Vert_2$. Since $\alpha_i\in[0,H]$ and $\rho\leq1$, for the last term we have $\big|\la\bm{x},\big[\rho\alpha_i\big]_{i\in[d]}\ra\big|\leq H\sqrt{d}\Vert \bm{x}\Vert_2$. For the middle term, we have
\begin{align}
\bigg|\inner{\bm{x},\big[z_{h,i}^k(\alpha_i)\big]_{i\in[d]}}\bigg|&=\Bigg|\inner{\bm{x},\Bigg[\Bigg((\Lambda_h^k)^{-1}\bigg[\sum_{\tau=1}^{k-1}\bm{\phi}_h^\tau[\tilde{V}_{h+1}^{k,\rho}(s_{h+1}^\tau)]_{\alpha_i}\bigg]\Bigg)_i\Bigg]_{i\in[d]}}\Bigg|\notag\\
&\leq H\Bigg|\bm{x}^\top(\Lambda_h^k)^{-1}\sum_{\tau=1}^{k-1}\bm{\phi}_h^\tau\Bigg|\notag\\
&\leq H\sqrt{\sum_{\tau=1}^{k-1}\bm{x}^\top(\Lambda_h^k)^{-1}\bm{x}}\sqrt{\sum_{\tau=1}^{k-1}{\bm{\phi}_h^\tau}^\top(\Lambda_h^k)^{-1}\bm{\phi}_h^\tau}\notag\\
&\leq H\sqrt{d}\sqrt{\sum_{\tau=1}^{k-1}\bm{x}^\top(\Lambda_h^k)^{-1}\bm{x}} \label{eq:use_jin_d1}\\
&\leq H\sqrt{dk/\lambda}\Vert \bm{x}\Vert_2,\label{eq:bound_quadratic_form}
\end{align}
where \eqref{eq:use_jin_d1} is due to \Cref{Lemma:sum_quadratic_form}, and \eqref{eq:bound_quadratic_form} uses the fact that $\Lambda_h^k\succeq\lambda I$.
Combining the three terms, we have $\forall \bm{x}\in\mathbb{R}^d, \big|\la\bm{x},\bm{\theta}_h+\bm{\nu}_h^{\rho,k}\ra\big|\leq3H\sqrt{dk/\lambda}\Vert \bm{x}\Vert_2$, which implies $\Vert\bm{\theta}_h+\bm{\nu}_h^{\rho,k}\Vert_2\leq3H\sqrt{dk/\lambda}\Vert x\Vert_2$.
\end{proof}

\subsection{Proof of \Cref{Lemma:union_bound}}

\begin{proof}
Construct a $\epsilon_1$-covering of $\mathcal{V}$ and a $\epsilon_2$-covering of $[0,H]$ such that $\forall \tilde{V}\in\mathcal{V},\alpha\in[0,H]$, there exists $\tilde{V}_0,\alpha_0$ in the covering with $\Vert\tilde{V}-\tilde{V}_0\Vert_\infty\leq\epsilon_1, |\alpha-\alpha_0|\leq\epsilon_2$. Then
\begin{align}
&\Bigg\Vert\sum_{\tau=1}^k\bm{\phi}_\tau\Big[[\tilde{V}(x_\tau)]_\alpha-\EE[[\tilde{V}(x_\tau)]_\alpha|\mathcal{F}_{\tau-1}]\Big]\Bigg\Vert^2_{(\Lambda_h^k)^{-1}}\notag\\
=&\Bigg\Vert\sum_{\tau=1}^k\bm{\phi}_\tau\Big[[\tilde{V}_0(x_\tau)]_\alpha-\EE[[\tilde{V}_0(x_\tau)]_\alpha|\mathcal{F}_{\tau-1}]\Big]\notag\\
&\qquad\qquad+\sum_{\tau=1}^k\bm{\phi}_\tau\Big[[\tilde{V}(x_\tau)]_\alpha-[\tilde{V}_0(x_\tau)]_\alpha-\EE[[\tilde{V}_0(x_\tau)]_\alpha-[\tilde{V}_0(x_\tau)]_\alpha|\mathcal{F}_{\tau-1}]\Big]\Bigg\Vert^2_{(\Lambda_h^k)^{-1}}\notag\\
\leq&2\Bigg\Vert\sum_{\tau=1}^k\bm{\phi}_\tau\Big[[\tilde{V}_0(x_\tau)]_\alpha-\EE[[\tilde{V}_0(x_\tau)]_\alpha|\mathcal{F}_{\tau-1}]\Big]\Bigg\Vert^2_{(\Lambda_h^k)^{-1}}\notag\\
&\qquad\qquad +2\Bigg\Vert\sum_{\tau=1}^k\bm{\phi}_\tau\Big[[\tilde{V}(x_\tau)]_\alpha-[\tilde{V}_0(x_\tau)]_\alpha-\EE[[\tilde{V}_0(x_\tau)]_\alpha-[\tilde{V}_0(x_\tau)]_\alpha|\mathcal{F}_{\tau-1}]\Big]\Bigg\Vert^2_{(\Lambda_h^k)^{-1}}\notag\\
=&2\Bigg\Vert\sum_{\tau=1}^k\bm{\phi}_\tau\Big[[\tilde{V}_0(x_\tau)]_{\alpha_0}-\EE[[\tilde{V}_0(x_\tau)]_{\alpha_0}|\mathcal{F}_{\tau-1}]\Big]
\notag\\
&\qquad\qquad +\sum_{\tau=1}^k\bm{\phi}_\tau\Big[[\tilde{V}_0(x_\tau)]_{\alpha}-[\tilde{V}_0(x_\tau)]_{\alpha_0}-\EE[[\tilde{V}_0(x_\tau)]_{\alpha}-[\tilde{V}_0(x_\tau)]_{\alpha_0}|\mathcal{F}_{\tau-1}]\Big]\Bigg\Vert^2_{(\Lambda_h^k)^{-1}}\notag\\
&\qquad\qquad+2\Bigg\Vert\sum_{\tau=1}^k\bm{\phi}_\tau\Big[[\tilde{V}(x_\tau)]_\alpha-[\tilde{V}_0(x_\tau)]_\alpha-\EE[[\tilde{V}_0(x_\tau)]_\alpha-[\tilde{V}_0(x_\tau)]_\alpha|\mathcal{F}_{\tau-1}]\Big]\Bigg\Vert^2_{(\Lambda_h^k)^{-1}}\notag\\
=&4\Bigg\Vert\sum_{\tau=1}^k\bm{\phi}_\tau\Big[[\tilde{V}_0(x_\tau)]_{\alpha_0}-\EE[[\tilde{V}_0(x_\tau)]_{\alpha_0}|\mathcal{F}_{\tau-1}]\Big]\Bigg\Vert^2_{(\Lambda_h^k)^{-1}}\notag\\
&\qquad\qquad+4\Bigg\Vert\sum_{\tau=1}^k\bm{\phi}_\tau\Big[[\tilde{V}_0(x_\tau)]_{\alpha}-[\tilde{V}_0(x_\tau)]_{\alpha_0}-\EE[[\tilde{V}_0(x_\tau)]_{\alpha}-[\tilde{V}_0(x_\tau)]_{\alpha_0}|\mathcal{F}_{\tau-1}]\Big]\Bigg\Vert^2_{(\Lambda_h^k)^{-1}}\notag\\
&\qquad\qquad+2\Bigg\Vert\sum_{\tau=1}^k\bm{\phi}_\tau\Big[[\tilde{V}(x_\tau)]_\alpha-[\tilde{V}_0(x_\tau)]_\alpha-\EE[[\tilde{V}_0(x_\tau)]_\alpha-[\tilde{V}_0(x_\tau)]_\alpha|\mathcal{F}_{\tau-1}]\Big]\Bigg\Vert^2_{(\Lambda_h^k)^{-1}}.\notag
\end{align}
For the first term, since $[\tilde{V}_0(x_\tau)]_{\alpha_0}-\EE[[\tilde{V}_0(x_\tau)]_{\alpha_0}|\mathcal{F}_{\tau-1}]$ is zero-mean and $H$-sub-Gaussian, by \Cref{Lemma:abbasi_yadkori} and union-bound we have
\begin{align}
4\Bigg\Vert\sum_{\tau=1}^k\bm{\phi}_\tau\Big[[\tilde{V}_0(x_\tau)]_{\alpha_0}-\EE[[\tilde{V}_0(x_\tau)]_{\alpha_0}|\mathcal{F}_{\tau-1}]\Big]\Bigg\Vert^2_{(\Lambda_h^k)^{-1}}
\leq&8H^2\log\Bigg[\frac{\mathcal{N}_{\epsilon_1}\mathcal{N}_{\epsilon_2}}{\delta}\frac{\det(\Lambda_h^k)^{1/2}}{\det(\Lambda_h^0)^{1/2}}\Bigg]\notag\\
\leq&8H^2\Bigg[\log\frac{\mathcal{N}_{\epsilon_1}}{\delta}+\log\frac{\mathcal{N}_{\epsilon_2}}{\delta}+\log\sqrt{\frac{(\lambda+k)^d}{\lambda^d}}\Bigg]\notag\\
=&8H^2\Bigg[\log\frac{\mathcal{N}_{\epsilon_1}}{\delta}+\log\frac{\mathcal{N}_{\epsilon_2}}{\delta}+\frac{d}{2}\log\frac{\lambda+k}{\lambda}\Bigg].\notag
\end{align}
For the second term, we have
\begin{align}
4\Bigg\Vert\sum_{\tau=1}^k\bm{\phi}_\tau\Big[[\tilde{V}_0(x_\tau)]_{\alpha}-[\tilde{V}_0(x_\tau)]_{\alpha_0}-\EE[[\tilde{V}_0(x_\tau)]_{\alpha}-[\tilde{V}_0(x_\tau)]_{\alpha_0}|\mathcal{F}_{\tau-1}]\Big]\Bigg\Vert^2_{(\Lambda_h^k)^{-1}}&\leq4\Bigg\Vert\sum_{\tau=1}^k\bm{\phi}_\tau|2\epsilon_1|\Bigg\Vert^2_{(\Lambda_h^k)^{-1}} \notag\\
&\leq\frac{16k^2\epsilon_1^2}{\lambda} \notag  
\end{align}
and
\begin{align}
2\Bigg\Vert\sum_{\tau=1}^k\bm{\phi}_\tau\Big[[\tilde{V}(x_\tau)]_\alpha-[\tilde{V}_0(x_\tau)]_\alpha-\EE[[\tilde{V}_0(x_\tau)]_\alpha-[\tilde{V}_0(x_\tau)]_\alpha|\mathcal{F}_{\tau-1}]\Big]\Bigg\Vert^2_{(\Lambda_h^k)^{-1}}&\leq2\Bigg\Vert\sum_{\tau=1}^k\bm{\phi}_\tau|2\epsilon_2|\Bigg\Vert^2_{(\Lambda_h^k)^{-1}} \notag\\
&\leq\frac{8k^2\epsilon_2^2}{\lambda}.\notag
\end{align}
This finishes the proof.
\end{proof}

\subsection{Proof of \Cref{Lemma:eps_covering}}

\begin{proof}
Let arbitrary $\tilde{V}_1,\tilde{V}_2\in\mathcal{V}$ be parameterized by $w_1,A_1=\beta\Lambda_1^{-1}$ and $w_2,A_2=\beta\Lambda_2^{-1}$. Then
\begin{align}
&\Vert\tilde{V}_1-\tilde{V}_2\Vert_\infty\notag\\
\leq&\sup_s\Bigg|\sup_\pi\Bigg(\innerA{\pi(a|s),w_1^\top\bm{\phi}(s,a)+\sum_{i=1}^d\Vert\phi_i(s,a)\mathbf{1}_i\Vert_{A_1}}-\frac{1}{\eta}D_{KL}[\pi\Vert\pi_0]\Bigg)\notag\\
-&\sup_\pi\Bigg(\innerA{\pi(a|s),w_2^\top\bm{\phi}(s,a)+\sum_{i=1}^d\Vert\phi_i(s,a)\mathbf{1}_i\Vert_{A_2}}-\frac{1}{\eta}D_{KL}[\pi\Vert\pi_0]\Bigg)\Bigg|\notag\\
\leq&\sup_{s,\pi}\Bigg|\innerA{\pi(a|s),w_1^\top\bm{\phi}(s,a)+\sum_{i=1}^d\Vert\phi_i(s,a)\mathbf{1}_i\Vert_{A_1}}-\innerA{\pi(\cdot|a),w_2^\top\bm{\phi}(s,a)+\sum_{i=1}^d\Vert\phi_i(s,a)\mathbf{1}_i\Vert_{A_2}}\Bigg|\notag\\
=&\sup_{s,\pi}\Bigg|\innerA{\pi(a|s),(w_1-w_2)^\top\bm{\phi}(s,a)+\sum_{i=1}^d\Big[\Vert\phi_i(s,a)\mathbf{1}_i\Vert_{A_1}-\Vert\phi_i(s,a)\mathbf{1}_i\Vert_{A_2}\Big]}\Bigg|\notag\\
\leq&\sup_{s,\pi}\Bigg|\innerA{\pi(a|s),(w_1-w_2)^\top\bm{\phi}(s,a)}\Bigg|+\sup_{s,\pi}\Bigg|\innerA{\pi(a|s),\sum_{i=1}^d\Big[\Vert\phi_i(s,a)\mathbf{1}_i\Vert_{A_1}-\Vert\phi_i(s,a)\mathbf{1}_i\Vert_{A_2}\Big]}\Bigg|.\label{eq:vdiff}
\end{align}
Without loss of generality, assume that $\Vert\phi_i(s,a)\mathbf{1}_i\Vert_{A_1}\geq\Vert\phi_i(s,a)\mathbf{1}_i\Vert_{A_2}$. We observe that
\begin{align}
\big|\Vert\phi_i(s,a)\mathbf{1}_i\Vert_{A_1}-\Vert\phi_i(s,a)\mathbf{1}_i\Vert_{A_2}\big|&=\big|\sqrt{\phi_i(s,a)\mathbf{1}_iA_1\phi_i(s,a)\mathbf{1}_i}-\sqrt{\phi_i(s,a)\mathbf{1}_iA_2\phi_i(s,a)\mathbf{1}_i}\big|\notag\\
&\leq\sqrt{\big|\phi_i(s,a)\mathbf{1}_iA_1\phi_i(s,a)\mathbf{1}_i-\phi_i(s,a)\mathbf{1}_iA_2\phi_i(s,a)\mathbf{1}_i\big|}\notag\\
&=\sqrt{\phi_i(s,a)\mathbf{1}_i(A_1-A_2)\phi_i(s,a)\mathbf{1}_i}\notag\\
&=\Vert\phi_i(s,a)\mathbf{1}_i\Vert_{A_1-A_2}.\notag
\end{align}
Therefore for all $\pi$, we have
\begin{align}
\Bigg|\sum_{i=1}^d\Big[\Vert\phi_i(s,a)\mathbf{1}_i\Vert_{A_1}-\Vert\phi_i(s,a)\mathbf{1}_i\Vert_{A_2}\Big]\Bigg|&\leq\sum_{i=1}^d\Big|\Vert\phi_i(s,a)\mathbf{1}_i\Vert_{A_1}-\Vert\phi_i(s,a)\mathbf{1}_i\Vert_{A_2}\Big|\notag\\
&\leq\sum_{i=1}^d\Vert\phi_i(s,a)\mathbf{1}_i\Vert_{A_1-A_2}.\notag
\end{align}
Consequently,
\begin{align}
\Bigg|\innerA{\pi(a|s),\sum_{i=1}^d\Big[\Vert\phi_i(s,a)\mathbf{1}_i\Vert_{A_1}-\Vert\phi_i(s,a)\mathbf{1}_i\Vert_{A_2}\Big]}\Bigg|
\leq&\innerA{\pi(a|s),\Vert\phi_i(s,a)\mathbf{1}_i\Vert_{A_1-A_2}}.\notag
\end{align}
Plugging this into \eqref{eq:vdiff}, we have
\begin{align}
\Vert\tilde{V}_1-\tilde{V}_2\Vert_\infty&\leq\sup_{s,\pi}\Big|\innerA{\pi(a|s),(w_1-w_2)^\top\bm{\phi}(s,a)}\Big|+\sup_{s,\pi}\innerA{\pi(a|s),\Vert\phi_i(s,a)\mathbf{1}_i\Vert_{A_1-A_2}}\notag\\
&=\sup_{s,\pi}\Bigg|\int_{a\in\mathcal{A}}(w_1-w_2)^\top\bm{\phi}(s,a)da\Bigg|+\sup_{s,\pi}\int_{a\in\mathcal{A}}\sum_{i=1}^d\Vert\bm{\phi}(a)_i\mathbf{1}_i\Vert_{A_1-A_2}da\notag\\
&\leq\vA\big(\Vert w_1-w_2\Vert_2+\sqrt{\Vert A_1-A_2\Vert_F}\big).\notag
\end{align}
Note that $\vA$ denotes the measure of $\mathcal{A}$. We assume the action space is a compact set, e.g. a box, so $\vA$ is finite. We can further normalize $\mathcal{A}$ so that it is a subset of the unit box, and thus bound $\vA\leq1$. Let $C_w$ be a $\epsilon/2\vA$ cover of $\{w\in\mathbb{R}^d:\Vert w\Vert_2\leq L\}$ with respect to the $L_2$ norm and $C_A$ be a $\epsilon^2/4\vA^2$ cover of $\{A\in\mathbb{R}^{d\times d}:\Vert A\Vert_F\leq \frac{1} {\lambda}\sqrt{d}B^2\}$ with respect to the Frobenius norm. By \Cref{Lemma:covering_ball}, we have that
\begin{align}
|C_w|\leq\Bigg(1+\frac{4L\vA}{\epsilon}\Bigg)^d,\quad|C_A|\leq\Bigg(1+\frac{8\sqrt{d}B^2\vA^2}{\lambda\epsilon^2}\Bigg)^{d^2}.\notag
\end{align}
Then $\forall \tilde{V}_1\in\mathcal{V}$, there exists $\tilde{V}_2\in\mathcal{V}$ parametrized by $w_2\in C_w,A_2\in C_A$ such that $\Vert\tilde{V}_1-\tilde{V}_2\Vert\leq\epsilon$. Therefore
\begin{align}
\log\mathcal{N}_\epsilon\leq\log(|C_w\Vert C_A|)\leq d\log\Bigg(1+\frac{4L\vA}{\epsilon}\Bigg)+d^2\log\Bigg(1+\frac{8\sqrt{d}B^2\vA^2}{\lambda\epsilon^2}\Bigg).\notag
\end{align}
\end{proof}

\subsection{Proof of \Cref{Lemma:vec_norm1}}

\begin{proof}
Let $\bm{x}\in\RR^d$ be an arbitrary vector. Because d-rectangular RRMDP assumes $\Vert\bm{\theta}_h\Vert_2\leq\sqrt{d}$, we have $|\inner{\bm{x},\bm{\theta}_h}|\leq\Vert \bm{x}\Vert_2\Vert\bm{\theta}_h\Vert_2=\sqrt{d}\Vert \bm{x}\Vert_2$. On the other hand,
\begin{align}
\bigg|\inner{\bm{x},\bm{\nu}_{h}^{\sigma,k}}\bigg|&=\Bigg|\inner{\bm{x},\Bigg[\Bigg((\Lambda_h^k)^{-1}\bigg[\sum_{\tau=1}^{k-1}\bm{\phi}_h^\tau[\tilde{V}_{h+1}^{k,\sigma}(s_{h+1}^\tau)]_{\sigma}\bigg]\Bigg)_i\Bigg]_{i\in[d]}}\Bigg|\notag\\
&\leq H\Bigg|\bm{x}^\top(\Lambda_h^k)^{-1}\sum_{\tau=1}^{k-1}\bm{\phi}_h^\tau\Bigg|\notag\\
&\leq H\sqrt{d}\sqrt{\sum_{\tau=1}^{k-1}x^\top(\Lambda_h^k)^{-1}\bm{x}} \label{eq:use_jin_d1_1}\\
&\leq H\sqrt{dk/\lambda}\Vert \bm{x}\Vert_2,\label{eq:bound_quadratic_form_1}
\end{align}
where \eqref{eq:use_jin_d1_1} is due to \Cref{Lemma:sum_quadratic_form}, and \eqref{eq:bound_quadratic_form_1} uses the fact that $\Lambda_h^k\succeq\lambda I$.
Combining the two terms, we have $\forall \bm{x}\in\mathbb{R}^d, \bigg|\inner{\bm{x},\bm{\theta}_h+\bm{\nu}_h^{\sigma,k}}\bigg|\leq2H\sqrt{dk/\lambda}\Vert \bm{x}\Vert_2$, which implies $\Vert\bm{\theta}_h+\bm{\nu}_h^{\sigma,k}\Vert_2\leq2H\sqrt{dk/\lambda}\Vert x\Vert_2$.
\end{proof}

\subsection{Proof of \Cref{Lemma:union_bound1}}

\begin{proof}
Construct a $\epsilon$-covering of $\mathcal{V}$ such that $\forall \tilde{V}\in\mathcal{V}$, there exists $\tilde{V}_0$ in the covering with $\Vert\tilde{V}-\tilde{V}_0\Vert_\infty\leq\epsilon$. Then
\begin{align}
&\Bigg\Vert\sum_{\tau=1}^k\bm{\phi}_\tau\Big[[\tilde{V}(x_\tau)]_\sigma-\EE[[\tilde{V}(x_\tau)]_\sigma|\mathcal{F}_{\tau-1}]\Big]\Bigg\Vert^2_{(\Lambda_h^k)^{-1}}\notag\\
=&\Bigg\Vert\sum_{\tau=1}^k\bm{\phi}_\tau\Big[[\tilde{V}_0(x_\tau)]_\sigma-\EE[[\tilde{V}_0(x_\tau)]_\sigma|\mathcal{F}_{\tau-1}]\Big]\notag\\
&\qquad\qquad+\sum_{\tau=1}^k\bm{\phi}_\tau\Big[[\tilde{V}(x_\tau)]_\sigma-[\tilde{V}_0(x_\tau)]_\sigma-\EE[[\tilde{V}_0(x_\tau)]_\sigma-[\tilde{V}_0(x_\tau)]_\sigma|\mathcal{F}_{\tau-1}]\Big]\Bigg\Vert^2_{(\Lambda_h^k)^{-1}}\notag\\
\leq&2\Bigg\Vert\sum_{\tau=1}^k\bm{\phi}_\tau\Big[[\tilde{V}_0(x_\tau)]_\sigma-\EE[[\tilde{V}_0(x_\tau)]_\sigma|\mathcal{F}_{\tau-1}]\Big]\Bigg\Vert^2_{(\Lambda_h^k)^{-1}}\notag\\
&\qquad\qquad+2\Bigg\Vert\sum_{\tau=1}^k\bm{\phi}_\tau\Big[[\tilde{V}(x_\tau)]_\sigma-[\tilde{V}_0(x_\tau)]_\sigma-\EE[[\tilde{V}_0(x_\tau)]_\sigma-[\tilde{V}_0(x_\tau)]_\sigma|\mathcal{F}_{\tau-1}]\Big]\Bigg\Vert^2_{(\Lambda_h^k)^{-1}}.\notag
\end{align}
For the first term, since $[\tilde{V}_0(x_\tau)]_{\alpha_0}-\EE[[\tilde{V}_0(x_\tau)]_{\alpha_0}|\mathcal{F}_{\tau-1}]$ is zero-mean and $H$-sub-Gaussian, by \Cref{Lemma:abbasi_yadkori} and union-bound we have
\begin{align}
4\Bigg\Vert\sum_{\tau=1}^k\bm{\phi}_\tau\Big[[\tilde{V}_0(x_\tau)]_{\alpha_0}-\EE[[\tilde{V}_0(x_\tau)]_{\alpha_0}|\mathcal{F}_{\tau-1}]\Big]\Bigg\Vert^2_{(\Lambda_h^k)^{-1}}
\leq&8H^2\log\Bigg[\frac{\mathcal{N}_{\epsilon}}{\delta}\frac{\det(\Lambda_h^k)^{1/2}}{\det(\Lambda_h^0)^{1/2}}\Bigg]\notag\\
\leq&8H^2\Bigg[\log\frac{\mathcal{N}_{\epsilon}}{\delta}+\log\sqrt{\frac{(\lambda+k)^d}{\lambda^d}}\Bigg]\notag\\
=&8H^2\Bigg[\log\frac{\mathcal{N}_{\epsilon}}{\delta}+\frac{d}{2}\log\frac{\lambda+k}{\lambda}\Bigg].\notag
\end{align}
For the second term, we have
\begin{align}
2\Bigg\Vert\sum_{\tau=1}^k\bm{\phi}_\tau\Big[[\tilde{V}(x_\tau)]_\sigma-[\tilde{V}_0(x_\tau)]_\sigma-\EE[[\tilde{V}_0(x_\tau)]_\sigma-[\tilde{V}_0(x_\tau)]_\sigma|\mathcal{F}_{\tau-1}]\Big]\Bigg\Vert^2_{(\Lambda_h^k)^{-1}}&\leq2\Bigg\Vert\sum_{\tau=1}^k\bm{\phi}_\tau|2\epsilon|\Bigg\Vert^2_{(\Lambda_h^k)^{-1}}\notag\\
&\leq\frac{8k^2\epsilon^2}{\lambda}.\notag
\end{align}
This finishes the proof.
\end{proof}

\bibliography{reference}
\bibliographystyle{ims}
\end{document}